\numberwithin{equation}{section} 
\newcommand{\heavy}{\theta} 
\newcommand{\Grho}{G^*}
\newcommand*{\Scale}[2][4]{\scalebox{#1}{$#2$}}%
\def\nU{n_{\Scale[0.4]{\mathcal{U}}}}
\def\nX{n_{\Scale[0.4]{\mathcal{X}}}}
\definecolor{cite_color}{rgb}{0, 0.3, 0.6}
\newcommand{\mycite}[1]{\cite{#1}} 
\newcommand{\mycitep}[1]{\citep{#1}}
\title{Optimal Convergence Rates for Neural Operators}
\author{Mike Nguyen\footnote{Corresponding Author} \\
Technical University  of Braunschweig \\
\texttt{mike.nguyen@tu-braunschweig.de} 
\and
Nicole M\"ucke\\
Technical University  of Braunschweig  \\ 
\texttt{nicole.muecke@tu-braunschweig.de}
}
\date{\today}
\begin{document}
\maketitle


\begin{abstract}
We introduce the neural tangent kernel (NTK) regime for two-layer neural operators and analyze their generalization properties. 
For early-stopped gradient descent (GD), we derive fast convergence rates that are known to be minimax optimal within the framework of non-parametric 
regression in reproducing kernel Hilbert spaces (RKHS). We provide bounds on the number of hidden neurons and the number of second-stage samples necessary 
for generalization. To justify our NTK regime, we additionally show that any operator approximable by a neural operator can also be approximated by an operator 
from the RKHS. A key application of neural operators is learning surrogate maps for the solution operators of partial differential equations (PDEs). We consider 
the standard Poisson equation to illustrate our theoretical findings with simulations.
\end{abstract}


\section{Introduction}

Operator learning has emerged as a transformative methodology in machine learning, focusing on learning mappings 
between spaces of functions. This approach is particularly powerful for surrogate modeling tasks in fields 
like uncertainty quantification, inverse problems, and design optimization, where the goal often involves 
approximating complex operators, such as the solution operator of partial differential equations (PDEs).

{\bf Learning linear operators.} A classical field where learning linear operators arises is functional data analysis (FDA), particularly 
in functional linear regression with functional responses, also known as function-to-function regression. 
FDA focuses on understanding and modeling data best described as functions rather than discrete points. 
Unlike traditional data, where observations might be scalar or vector-valued, functional data are represented 
by smooth curves, surfaces, or other continuous objects. These functions are typically observed over a continuum, 
such as time, space, or wavelength, and are inherently infinite-dimensional 
\mycitep{febrero2024functional,morris2015functional}. The model can be seen as a generalization of 
multivariate regression, where the regression coefficient is now an unknown operator. A standard approach 
to this problem involves applying Tikhonov regularization (see \mycite{benatia2017functional}), with classical 
rates of convergence resembling a typical bias-variance trade-off.
\\
The framework of learning operators extends naturally to nonparametric approaches, such as conditional mean 
embeddings (CMEs) in reproducing kernel Hilbert spaces (RKHS). In these settings, the goal is to represent 
conditional distributions as operators acting on functions, enabling the integration of probabilistic 
modeling with functional data analysis \mycitep{klebanov2020rigorous, park2020measure, grunewalder2012conditional}. 
Conditional mean embeddings generalize the idea of representing conditional expectations as operators in RKHS, bridging 
the gap between statistical learning of distributions and operator learning. In \mycite{li2022optimal} 
classical Tikhonov regularization has also been 
analyzed in this setting, and optimal convergence rates have been established. 
\\
These approaches have inspired further research, as highlighted in \mycite{mollenhauer2022learning}, where the 
problem of learning linear operators is placed  into the broader framework of supervised learning in random design 
and analyzed through the family of spectral 
regularization methods, encompassing the earlier examples as special cases. This line of work is advanced 
in \mycite{meunier2024optimal} and \mycite{li2024towards}, which establish upper bounds for the finite-sample 
risk of general vector-valued spectral algorithms. These results are applicable to both well-specified and 
misspecified scenarios, where the true regression function may lie outside the hypothesis space, and achieve 
minimax optimality across various regimes.

{\bf Learning non-linear operators.} The extension of linear techniques for operator learning into non-linear contexts has been realized through 
innovative approaches incorporating neural networks, enabling greater flexibility and efficiency in handling 
complex functional relationships.  In \mycite{shimizu2024neural}, the authors introduce a method that integrates 
deep learning into CME by leveraging an end-to-end neural network (NN) optimization framework. The approach 
replaces the computationally expensive Gram matrix inversion required in traditional CME methods with a 
kernel-based objective, significantly improving scalability and computational efficiency.
For non-linear function-on-function regression, \mycite{rao2023modern} proposes a novel framework featuring a 
continuous hidden layer with continuous neurons to model functional responses. Two strategies, function-on-function 
direct neural networks and function-on-function basis neural networks, are developed to enhance flexibility in 
modeling complex functional relationships. While demonstrating strong empirical performance through simulations 
and real data applications, this method lacks theoretical learning guarantees. 
\mycite{shi2024nonlinear} presents a functional deep neural network architecture with a fully data-dependent 
dimension reduction strategy. The architecture comprises three steps: a kernel embedding for integral transformation 
using a smooth, data-driven kernel; a projection step using eigenfunction bases for dimension reduction; and a deep 
ReLU network for prediction. This approach is accompanied by theoretical analyses, including approximation and 
generalization error bounds, providing robust theoretical support for the method. These developments highlight 
the significant advancements in non-linear operator learning, driven by neural network-based frameworks that 
blend theoretical rigor with practical flexibility and scalability.


Another key application area for learning non-linear operators mapping between function spaces is solving 
partial differential equations (PDEs), where the solution operators that connect initial or boundary conditions 
to the PDE's solution are of particular interest \mycitep{boulle}. Understanding and approximating these operators 
is essential, as they encapsulate the underlying dynamics of the system being 
studied \mycitep{gonon2024overviewmachinelearningmethods, booksdf}.
\\
Traditional approaches for approximating operators, especially for solving PDEs, have been 
well-established for decades. Techniques such as the Finite Element Method (FEM), Finite Difference 
Method (FDM), and Finite Volume Method (FVM) are classical methods employed to estimate these 
operators \mycitep{hughes2003finite}. These methods discretize the problem space into manageable 
subdomains, enabling numerical approximations of the solution. While highly effective, they have 
notable limitations. A primary drawback is their computational cost, which increases exponentially 
with the problem's dimensionality, a phenomenon known as the curse of dimensionality. Additionally, 
these methods require detailed discretization of the underlying function space, which can be challenging 
to achieve for high-dimensional or complex geometries. Another significant challenge arises when dealing 
with inverse problems or noisy data, where classical methods often struggle with stability and 
robustness \mycitep{strang1973analysis}.
\\
As a result, there has been a growing interest in novel operator estimation techniques that incorporate 
machine learning, and more specifically, neural networks. Neural networks offer the capability to learn 
complex patterns and relationships from data, making them well-suited for operator learning tasks that 
might be intractable via traditional methods \mycite{goodfellow2016deep}. A particularly intriguing 
development is the emergence of neural network-based operator learning techniques, which can be broadly 
categorized into "encoder/decoder" architectures and neural operators. Both categories of methods have 
demonstrated remarkable success in operator learning, showcasing their ability to handle high-dimensional 
inputs, complex geometries, and noisy data.

{\bf Encoder/Decoder Operators.}
This framework typically involves three key stages: encoding the input function into a finite-dimensional 
parameter space, mapping these parameters using neural network architectures, and subsequently decoding 
the parameters back into a functional representation.
The encoding phase often employs parameterization techniques where the function is projected onto a lower-dimensional 
basis, allowing for computationally feasible manipulation. Spectral Neural Operators represent a prominent 
example utilize transformations such as Fourier or wavelet transforms to convert input functions into spectral 
coefficients, delivering notable robustness and efficiency, particularly in scenarios featuring periodic 
behaviors \mycite{fanaskov2024spectralneuraloperators}.
In parallel, PCA Operators utilize Principal Component Analysis to transform input functions into scores 
corresponding to principal components. This transformation highlights significant variance patterns within 
the data, thereby facilitating effective learning and retention of crucial functional 
characteristics \mycite{JMLR:v24:23-0478, kovachki2024operator}. 
Further notable examples include the DeepONet architecture, which comprises a branch network that encodes 
the discrete input function space and a trunk network that encodes the domain of the output 
functions \mycite{articleasdsad}. Additionally, the application of Variational 
Autoencoders \mycite{kingma2022autoencodingvariationalbayes} and Convolutional 
Neural Networks (CNNs) is prominent; the latter utilizes encoder-decoder architectures for 
addressing image-to-image translation problems, where the translation can be interpreted as 
an operator task \mycite{7803544}.

{\bf Neural Operators.}
On the other hand, neural operators constitute an innovative extension of classical neural networks 
designed to directly learn mappings between function spaces. The architecture of neural operators 
resembles that of traditional neural networks but with a crucial distinction: each layer incorporates 
a kernel integral operator to leverage the entire functional information. This design enables them to 
effectively process and map the complex relationships inherent in function spaces. 
One prominent example is the Fourier Neural Operator (FNO), which employs fast Fourier transforms to 
model the kernel integral transformations central to many operators related to partial differential equations (PDEs). 
This approach offers a robust framework capable of efficiently managing high-dimensional input spaces 
with reduced computational complexity \mycite{li2021fourier, qin2024betterunderstandingfourierneural, Kovachki2023NeuralOL}. 
Another example are Low Rank Neural Operators simplify computations by approximating the kernel integral 
operators with low-rank representations, thus reducing the computational complexity while retaining essential 
information \mycite{Kovachki2023NeuralOL}. 
Physics-Informed Neural Operators (PINOs) integrate physical laws and constraints directly into the learning 
process via modified loss functions, which helps in ensuring that the learned mappings adhere to known 
physical principles. This hybrid approach combines data-driven methodologies with theoretical insights, 
enhancing the model's generalization capabilities across diverse scenarios from fluid dynamics to material 
sciences and many others \mycite{doi:10.1126/sciadv.abi8605, RAISSI2019686}. 
Graph Neural Operators extend the neural operator framework to graph-structured data, efficiently capturing 
spatial and topological information intrinsic to many real-world problems. By adapting neural operators to 
graph settings, these models skillfully navigate complex data relationships, making them suitable for 
applications in network analysis and beyond 
\mycite{li2020neuraloperatorgraphkernel, Kovachki2023NeuralOL, sharma2024graph}.

\vspace{0.2cm}

{\bf Contribution.} We aim to learn potential non-linear operators, mapping between normed function spaces. 
We cast this problem into the framework of supervised learning and focus on neural operators that 
encompasses all the 
aforementioned examples. We assume our first-stage data are functions from some possibly 
infinite dimensional normed space, drawn independently and identically from some unknown 
probability distribution. Since we deal with infinite dimensional objects, further discretization is necessary 
for any practical implementation. We suppose that some evaluations of the data functions are available, 
which we call the second-stage samples, see Section 2. 
Our primary objective is to establish fast convergence rates for the mean 
squared error. Following the paradigm in \mycite{kovachki2024data}, we investigate the data complexity 
of learning non-linear operators within the framework of the vector-valued neural tangent kernel (vvNTK). 
In doing so, we meticulously account for the precise number of neurons and the second-stage samples required 
to achieve these rates.
\\
To attain fast convergence rates, however, it is necessary to restrict the class of operators to be estimated. 
More precisely, we impose an a priori smoothness assumption defined in terms of powers of the integral 
operator associated with the vvNTK, commonly referred to as a H\"older source condition in classical RKHS 
methods. Specifically, we limit our scope to learning operators that reside in the range of powers of the 
integral operator, with the well-specified case included. 
To justify this assumption, we demonstrate that any operator approximable by 
a neural operator can also be approximated by an operator from the vvRKHS.
\\
Additionally, we derive convergence rates that account for the assumed eigenvalue decay of the associated 
vvNTK integral operator. Our bounds match those from traditional nonparametric methods in RKHS, which are 
known to be minimax optimal; see also \mycite{nguyen2023random}.

\vspace{0.2cm}

{\bf Organization.} 
The rest of the paper is organized as follows. In Section 2, we present our setting and review relevant results
on learning with kernels, and learning with random features. In Section 3, we
present and discuss our main results. Finally, numerical experiments are presented in Section 4. 
All proofs are deferred to the Appendix.


\section{Setup}
\label{sec:setting}

In this section we provide the mathematical framework for our analysis. 

We let $\cX$ $ \subseteq \mathbb{R}^{d_x}$ be the input space of our function spaces.  The unknown data 
probability measure on the input data space $\cX$ is denoted by $\rho$.
Further we let $\cU$ be a normed function input space, mapping from $\cX \to \cY \subset \mathbb{R}^{d_y}$ 
and $\cV$ be a normed function target space, mapping from $\mathcal{X} \to \tilde{\mathcal{Y}} \subset \mathbb{R}$, 
to be specified in Assumption \ref{ass:input}.   
The unknown data distribution on the data space $\mathcal{W}=\mathcal{U} \times \mathcal{V}$ is denoted 
by $\mu$, while the marginal distribution on $\mathcal{U}$ is denoted as $\mu_{u}$ and the regular 
conditional distribution on $\mathcal{V}$ given $u \in \mathcal{U}$ is denoted by $\mu(\cdot | u)$, see 
e.g. \mycite{Shao_2003_book}. 

Given an operator $G: \mathcal{U} \to \mathcal{V}$  we further define the expected risk as 
\begin{equation}
\label{eq:expected-risk}
\mathcal{E}(G) := \mathbb{E}_{\mu}[ \|G(u)- v\|_{L^2(\rho_x)}^2 ]\;.
\end{equation}  
It is known that the global minimizer of $\cE$ over the set of all measurable operators is 
given by the regression operator $\Grho$, defined as 
\begin{align}
G^*(u):= \int_{\mathcal{V}} v(.) \rho(dv|u).  \label{targetfct}
\end{align}


\subsection{Two-Layer Neural Operator}

The hypothesis class considered in this paper is given by the following set of two-layer neural operators: 
Let $M$ $ \in \mathbb{N}$ be the network width. 
Given an activation function $\sigma: \mathbb{R} \to \mathbb{R}$ acting point wise and some continuous 
operator $A:\mathcal{U}\rightarrow \mathcal{F}(\mathcal{X},\mathbb{R}^{d_k}) $  we consider the class 

\begin{align}
\label{Oclass}
\mathcal{F}_{M} &:=\left\{ G_\theta :\mathcal{U} \to \mathcal{V}  \mid \; G_\theta(u)(x) =  
\frac{1}{\sqrt M} \left\langle a, \sigma\left( B_1 A(u)(x) + B_2u(x)+B_3c(x) \right)\right \rangle\;, \right. \nonumber \\
& \quad \left. \theta =(a, B_1,B_2,B_3 ) \in \mathbb{R}^M \times \mathbb{R}^{ M \times d_k}\times \mathbb{R}^{ M \times d_y}  \times \mathbb{R}^{ M \times d_b}\right\}\;, 
\end{align}

where we denote with $\langle \cdot , \cdot \rangle$ the euclidean vector inner product. 
We condense all parameters in $\theta = (a, B_1,B_2,B_3 )=(a, B )\in \Theta$, with 
$B=(B_1,B_2,B_3)\in\mathbb{R}^{M\times\tilde{d}}$, $\tilde{d}:=d_k+d_y+d_b$ and equip the 
parameter space $\Theta$ with the euclidean 
vector norm $||\cdot||_\Theta$
\begin{align}
\|\theta \|_\Theta^2 = ||a||_2^2 + ||B||_F^2   = \|a\|_2^2 + \sum_{m=1}^M\|b_m\|_2^2 ,   \label{tnorm}
\end{align}

for any $\theta \in \Theta$, where we used the notation $B=(b_1,\dots,b_M)^\top\in\mathbb{R}^{M\times\tilde{d}}$ .

\vspace{0.2cm}
For the activation function $\sigma$, we impose the following assumption.

\vspace{0.2cm}

\begin{assumption}[Activation Function]
\label{ass:neurons} 
There exists  $C_\sigma>0$  such that $\left\|\sigma^{\prime \prime}\right\|_{\infty} \leq C_\sigma,\left\|\sigma^{\prime}\right\|_{\infty} \leq C_\sigma$ , $|\sigma(u)| \leq 1+|u| \text { for } \forall u \in \mathbb{R}$.
In addition we assume that the second derivative $\sigma''$ is Lipschitz continuous.  
\end{assumption}

\vspace{0.2cm}

In our operator class \ref{Oclass}, we first apply an operator transformation $B_1A(u)$, where $B_1$ 
defines a matrix and typically, $A$ defines a kernel integral operator. Depending on the choice of $A$, our 
operator class covers a wide range of practically well-studied neural operators, such as Fourier Neural Operators, 
Graph Neural Operators, and Low-rank Neural 
Operators \mycite{Kovachki2023NeuralOL, huang2024operator, kovachki2024operator}. The kernel integral 
operation $A(u)$, can be seen as an intuitive extension of matrix multiplication in traditional neural 
networks to an infinite-dimensional setting. Inspired by Residual Neural Networks, the second 
transformation, $B_2 u(\,.)$ with $B_2\in\mathbb{R}^{M\times d_y}$ is just a matrix multiplication 
and keeps information of the original input $u(\,.)$. The final component, $B_3 c(\,.)$ with 
$c: \mathcal{X}\to \mathbb{R}^{d_b}$ defines the bias. Unlike standard neural networks, the bias 
here is typically not a constant but a function. This function can be a linear transformation or a shallow 
neural network with trainable parameters. However, to keep our proofs succinct, we assume that $c$ is given 
and instead multiply the bias with a trainable matrix $B_3\in\mathbb{R}^{M\times d_b}$, see Figure \ref{op:class}. 

\vspace{0.3cm}

\begin{figure}[t]
    \centering
    \includegraphics[width=0.5\linewidth]{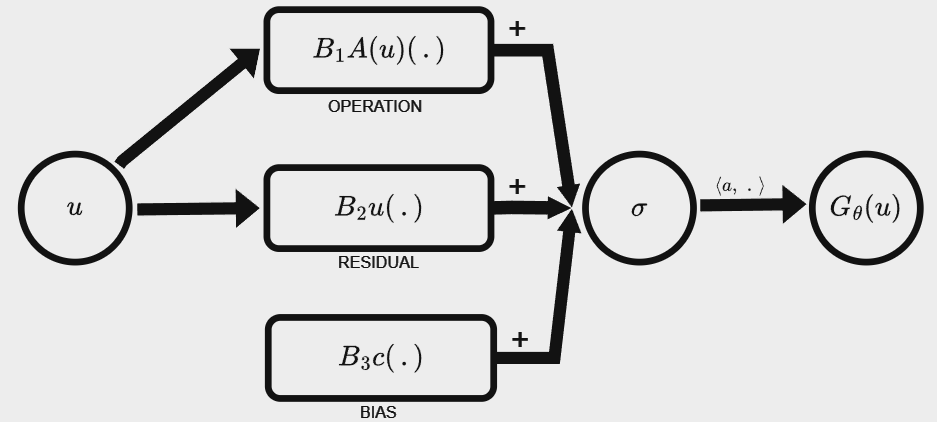}
\caption{Depiction of the architecture of our operator class.}
\label{op:class}    
\end{figure}

\vspace{0.3cm}

In practice, one typically wraps the neural operator between a lifting operator $L$ and a projection operator 
$P$, forming $P\circ G_\theta  \circ L$. Here $L$ and $P$ act pointwise, meaning $L(u)(x)=L(u(x))$, and they are 
typically defined via linear transformations or shallow neural networks  \mycite{Kovachki2023NeuralOL} . However, to 
keep the proofs concise, we assume in our work that $L$ and $P$ are the identity operators. 

Note that our operator class requires the function spaces  $\mathcal{U}$ and $\mathcal{V}$  to have the same 
input space $\mathcal{X}\subset\mathbb{R}^{d_x}$. However, if their input spaces differ, they are typically uplifted 
to a common higher-dimensional space \mycite{Kovachki2023NeuralOL}. For example, let $\mathcal{U}$ contain functions 
mapping from $\tilde{\mathcal{X}}$ and $\mathcal{V}$ contain functions mapping from $\mathcal{X}$ , 
for $\text{dim}(\tilde{\mathcal{X}}) \neq \text{dim}(\mathcal{X})$. In this case, one could consider 
the operator class 
\begin{eqnarray*}
\mathcal{F}_{M} &:=\left\{ G_\theta :\mathcal{U} \to \mathcal{V}  \mid \; G_\theta(u)(x) =  
\frac{1}{\sqrt M} \left\langle a, \sigma\left( B_1 A(u)(x) + B_2u(\xi (x))+B_3c(x) \right)\right \rangle\; \right\},  
\end{eqnarray*}
for a lifting or projection map $\xi: \mathcal{X} \to \tilde{\mathcal{X}}$. This generalization can be 
easily adapted to our analytical results.

\vspace{0.3cm}

Our goal is to minimize the expected risk \eqref{eq:expected-risk} over the set $\cF_M$, i.e. 
\begin{equation*}
\min_{F_\theta \in \cF_M} \cE(G_\theta)\;.
\end{equation*}
Here,  the distributions $\mu$, $\mu_x$ and $\rho_X$ are known only through $\nu$ i.i.d. 
samples $((u_1, v_1), ..., (u_{n_{\Scale[0.4]{\mathcal{U}}}}, v_{n_{\Scale[0.4]{\mathcal{U}}}})) \in (\cU \times \cV)^n$, 
evaluated at $\nX$ points $(x_1, \dots x_{\nX})\in\mathcal{X}^{\nX}$. 
Hence, we seek for a solution for the empirical risk minimization problem 
\[  \min_{G_\theta \in \cF_M} \hat \cE(G_\theta) \;, \quad \hat \cE(G_\theta) = \frac{1}{n_{\Scale[0.4]{\mathcal{U}}}}\sum_{j=1}^{n_{\Scale[0.4]{\mathcal{U}}}} \|G_\theta(u_j)- v_j)\|_{\nX}^2 \;, \] 
where $||\cdot||_{\nX}$ denotes the empirical norm, defined as $||f||_{\nX}^2:=  \frac{1}{\nX}\sum_{j=1}^{\nX} |f(x_j)|^2$.

{\bf Gradient Descent.} We aim at analyzing the generalization properties of gradient descent, 
whose basic iterations are given by   
\begin{align}
\heavy_{t+1}^j &= \heavy_t^j - \alpha \partial_{\theta^j} \hat \cE(G_{\theta_t}) \nonumber  \\
&= \heavy_t^j - \frac{\alpha}{n_{\Scale[0.4]{\mathcal{U}}}}\sum_{i=1}^{n_{\Scale[0.4]{\mathcal{U}}}} 
\left \langle (G_{\theta_t}(u_i) - v_i) , \partial_{\theta^j} G_{\theta_t}(u_i)\right\rangle_{\nX},  
\label{GDalgor}
\end{align}
with $\alpha > 0$ being the stepsize, for some initialization $\theta_0 \in \Theta$ 
and  $\langle u, u'\rangle_{\nX} :=\frac{1}{\nX}\sum_{i=1}^{\nX}u(x_i)u'(x_i)$ denotes the empirical 
inner product. \\


{\bf Initialization.} 
\label{initiali}
Similar as in \mycite{nguyen2023neuronsneed, nitanda2020optimal}, we assume a symmetric initialization. 
The parameters for the output layer are initialized as 
$a_{m}^{(0)}=\tau$ for $m \in\left\{1, \ldots, {M}/{2}\right\}$ and 
$a_{m}^{(0)}=-\tau$ for $m \in\{{M}/{2}+1, \ldots, M\}$, for some $\tau >0$. 
Let $\pi_0$ be a uniform distribution on the sphere 
$\mathbb{S}^{\tilde{d}-1}_1=\{b \in \mathbb{R}^{\tilde{d}} \mid\|b\|_{2}=1\} \subset \mathbb{R}^{\tilde{d}}$. 
The parameters for the input layer are initialized as $b_{m}^{(0)}=b_{m+{M}/{2}}^{(0)}$ 
for $m \in\{1, \ldots, {M}/{2}\}$, where $(b_{m}^{(0)})_{m=1}^{{M}/{2}}$
are independently drawn from the distribution $\pi_{0}$. 
The aim of the symmetric initialization is to make 
an initial operator $G_{\theta_0}\equiv 0$, 
where $\theta_{0}=\left(a^{(0)}, B^{(0)}\right)$. 
Note that this symmetric trick does not have an impact on the limiting NTK, see \mycite{zhang2020type},  
and is just for theoretical simplicity. Indeed, we can relax the 
symmetric initialization by slightly changing our source condition in Assumption \ref{ass:source} to 
\begin{align}
G^* -G_{\theta_0} = \mathcal{L}_\infty^r H^* \;,
\end{align}
for some $H^* \in L^2_\mu$, satisfying $\|H^*\|_{L^2_\mu} \leq R$. The proofs can then be easily adapted.


\subsection{Neural Tangent Kernel (NTK)}

In the more classical setting of nonparametric regression with neural networks, employing techniques 
involving the NTK has been well analyzed \mycitep{nguyen2023neuronsneed, bietti2019inductive, 
chen2020generalized, golikov2022neural, xu2024convergence}. We extend this approach to the setting 
of operator learning, which involves a vector-valued kernel. We therefore recall the definitions and 
properties of vector-valued kernels. For more insights into vector-valued kernels and their RKHSs, we 
refer to \mycite{vvk1,vvk2}.

\begin{definition}
\label{def-vvk}
 
\begin{enumerate}
\item For two function spaces  $\mathcal{U}, \mathcal{V}$ we denote by $\mathcal{F}(\mathcal{U},\mathcal{V})$ the space of operators $F:\mathcal{U}\rightarrow \mathcal{V}$ and by  $\mathcal{C}(\mathcal{U},\mathcal{V})$ the subspace of continuous operators. $\mathcal{L}(\mathcal{U},\mathcal{V})$ denotes the space of linear operators from $\mathcal{U}\to\mathcal{V}$ and $\mathcal{L}(\mathcal{V})$ denotes the space of linear operators from $\mathcal{V}\to\mathcal{V}$.
For $F: \mathcal{U} \to \mathcal{V}$, we write $F^*: \mathcal{U} \to \mathcal{V}$ to denote the adjoint operator. 
\item Given a topological space $\mathcal{U}$ and a separable Hilbert space $\mathcal{V}$, a map $K: \mathcal{U} \times \mathcal{U} \longrightarrow \mathcal{L}(\mathcal{V})$ is
called a $\mathcal{V}$-reproducing kernel on $\mathcal{U}$ if
$$
\sum_{i, j=1}^N\left\langle K\left(u_i, u_j\right) v_j, v_i\right\rangle_{\mathcal{V}} \geq 0
$$
for any $u_1, \ldots, u_N$ in $\mathcal{U}, v_1, \ldots, v_N$ in $\mathcal{V}$ and $N \geq 1$.

\item Given $u \in \mathcal{U}, K_u: \mathcal{V} \rightarrow$ $\mathcal{F}(\mathcal{U} ; \mathcal{V})$ denotes the linear operator whose action on a vector $y \in \mathcal{V}$ is the function $K_u v \in \mathcal{F}(\mathcal{U},\mathcal{V})$ defined by
$$
\left(K_u v\right)(t)=K(t, u) v \quad t \in \mathcal{U} .
$$

Given a $\mathcal{V}$-reproducing kernel $K$, there is a unique Hilbert space $\mathcal{H}_K \subset \mathcal{F}(\mathcal{U} ; \mathcal{V})$ satisfying
$$
\begin{array}{ll}
K_u \in \mathcal{L}\left(\mathcal{V}, \mathcal{H}_K\right) & \forall \,\,\,u \in \mathcal{U} \\
F(u)=K_u^* F & \forall \,\,\,u \in \mathcal{U}, F \in \mathcal{H}_K,
\end{array}
$$
where $K_u^*: \mathcal{H}_K \rightarrow \mathcal{V}$ is the adjoint of $K_u$. Note that from the second property we have that $K(u,t)=K^*_uK_t$. The space $\mathcal{H}_K$ is called the reproducing kernel Hilbert space associated with $K$, given by
$$
 \mathcal{H}_K=\overline{\operatorname{span}}\left\{K_u v \mid u \in \mathcal{U}, v \in \mathcal{V}\right\} .
$$
\item A reproducing kernel $K: \mathcal{U} \times\mathcal{U} \rightarrow \mathcal{L}(\mathcal{V})$ is called Mercer provided that $\mathcal{H}_K$ is a subspace of $\mathcal{C}(\mathcal{U}; \mathcal{V})$.

\end{enumerate}
\end{definition}

The connection between kernel methods and standard neural networks is established via the NTK, 
see \mycite{jacot2018neural, lee2019wide}.  For standard fully connected neural networks, the feature map of the NTK 
is defined via the gradient of the neural network at initialization. Similar we can define a feature map for neural operators by 
$\Phi^M: \cU \to \mathcal{F}(L^2(\mathcal{X},\rho_x),\Theta);\, \Phi^M(u) := \Phi^M_u,$


\[  \Phi_u^M(v) := \nabla_\theta\left\langle G_{\theta_0}(u),\,v\right\rangle_{L^2(\rho_x)} \;. \]


This map defines a $L^2(\mathcal{X},\rho_x)$-reproducing kernel in the sense of Definition \ref{def-vvk}, for 
any $u , u' \in \cU$ via 
\begin{align*}
 K_M(u, u') &=  (\Phi^M_u)^*  \Phi^M_{u'} = \sum_{p=1}^{P} \partial_pG_{\theta_{0}}(u)\otimes\partial_p G_{\theta_{0}}(u')\;\\
&=\frac{1}{M} \sum_{m=1}^{M} \sigma\left(\left\langle b_{m}^{(0) }, J(u)\right\rangle \right) \otimes\sigma\left(\left\langle b_{m}^{(0) }, J(u')\right\rangle\right)\,+\\
&\,\,\,\,\,\,\,\,\,\frac{\tau^2}{M} \sum_{m=1}^{M} \sum_{j=1}^{\tilde{d}}\sigma^{\prime}\left(\left\langle b_{m}^{(0) }, J(u)\right\rangle \right)J(u)^{(j)}\otimes \sigma^{\prime}\left(\left\langle b_{m}^{(0)}, J(u')\right \rangle \right)J(u')^{(j)},
\end{align*}
where we used for the parameter matrix at initialization the notation 
$B^{(0)}=(b_1^{(0)},\dots,b_M^{(0)})^\top\in\mathbb{R}^{M\times\tilde{d}}$, $P=M\cdot(\tilde{d}+1)$ denotes 
the amount of parameters and for any $u\in\mathcal{U}$ we set $J(u)\in \mathcal{F}(\mathcal{X},\mathbb{R}^{d_k+d_y+d_b})$ with $J(u)(x):=(A(u)(x),u(x),c(x))^\top$. 
Further we define the tensor product $\otimes$ with respect to the $L^2(\rho_x)$ inner product 
$\langle \cdot , \cdot \rangle_{L^2_{\rho_x}}$, i.e. 
$(u\otimes v)(f):= u\cdot\langle v\,,\,f\rangle_{L^2_{\rho_x}}$ . The corresponding RKHS is given by 
\[ \mathcal{H}_{M}=
\left\{ H \in\, \mathcal{F}( L^2(\mathcal{X},\rho_x); L^2(\mathcal{X},\rho_x))\,\mid \,H (u)\,= \, 
\langle \tilde{\theta},  \nabla_{\theta} G_{\theta_{0}}(u)\tilde{\theta}\rangle,  \tilde{\theta} \in \Theta\right\},  \]
(\mycite{vvk2} Example 4).


Given the Assumptions \ref{ass:neurons} and \ref{ass:input}  we have for all 
$u\in\mathcal{U}$, $\|J(u)(x)\|_1:= \sum_{j=1}^{\tilde{d}}|(J(u)(x))^{(j)}|\leq  1$ and for all $M$,  
\begin{align*}
&\sup_{u,u' \in\mathcal{U}}\|K_M(u,u')\|\leq 4 +\tau^2C_\sigma^2 :=\kappa^2.
\end{align*}


Further we have that $\nabla_{\theta} G_{\theta_{0}}(.)$  is continuous, therefore 
$K_M$ is mercer. In the following result we show that for all 
$u, u'\in\mathcal{U}$ , $K_M(u,u')$ converges for $M \to \infty$ to 
$K_\infty(u,u')\in\mathcal{C}(L^2(\rho_x),L^2(\rho_x))$  in Hilbert-Schmidt norm, with
\begin{align*}
K_\infty(u,u') &=\mathbb{E}_{\theta_{0}} \sigma\left(\left\langle b^{(0) }, 
J(u)\right\rangle \right) \otimes\sigma\left(\left\langle b^{(0) }, J(u')\right\rangle\right)\,+\\
&\,\,\,\,\,\,\,\,\,\tau^2\sum_{j=1}^{\tilde{d}}
\mathbb{E}_{\theta_{0}}\sigma^{\prime}\left(\left\langle b^{(0) }, J(u) 
\right\rangle \right)J(u)^{(j)}\otimes \sigma^{\prime}\left(\left\langle b^{(0)}, J(u')\right \rangle 
\right)J(u')^{(j)} \,.
\end{align*}

The proof is provided in Appendix \ref{app:conc-inequ}. 

\begin{proposition}
\label{OPbound2}
Given the Assumptions \ref{ass:neurons}, \ref{ass:input} we have for any $u,u'\in \mathcal{U}$  
with probability at least $1-\delta $, where $\delta \in( 0,1)$,
\begin{align*}
\|K_M(u,u')-K_\infty(u,u')\|_{HS}\leq \frac{4\kappa^2}{\sqrt{M}} \log \frac{2}{\delta} .
\end{align*}
\end{proposition}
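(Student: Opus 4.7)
The starting observation is that
$K_M(u,u') = \frac{1}{M}\sum_{m=1}^M X_m$, where the summands
\[
X_m = \sigma(\langle b_m^{(0)}, J(u)\rangle) \otimes \sigma(\langle b_m^{(0)}, J(u')\rangle) + \tau^2 \sum_{j=1}^{\tilde d} \sigma'(\langle b_m^{(0)}, J(u)\rangle) J(u)^{(j)} \otimes \sigma'(\langle b_m^{(0)}, J(u')\rangle) J(u')^{(j)}
\]
are i.i.d. random elements of the Hilbert space $HS(L^2(\rho_x))$ of Hilbert-Schmidt operators on $L^2(\rho_x)$, since they depend only on the i.i.d.\ draws $b_m^{(0)}\sim \pi_0$. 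By construction $\mathbb{E}_{\theta_0}[X_m] = K_\infty(u,u')$, so the quantity to bound is the deviation of a Hilbert-space-valued empirical mean from its expectation.

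The plan is to apply a Bernstein-type inequality for i.i.d.\ random variables with values in a separable Hilbert space (e.g.\ Pinelis / Caponnetto--De~Vito), which in bounded form reads
\[
\Big\| \tfrac{1}{M}\sum_{m=1}^M (X_m - \mathbb{E} X_m) \Big\|_{HS} \leq \frac{2L \log(2/\delta)}{M} + \sqrt{\frac{2\sigma^2 \log(2/\delta)}{M}}
\]
with probability at least $1-\delta$, provided $\|X_m\|_{HS}\leq L$ almost surely and $\mathbb{E}\|X_m\|_{HS}^2 \leq \sigma^2$. The first step is therefore to produce an almost-sure bound on $\|X_m\|_{HS}$. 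Using $\|f \otimes g\|_{HS} = \|f\|_{L^2(\rho_x)} \|g\|_{L^2(\rho_x)}$ for rank-one operators, the growth bound $|\sigma(u)|\leq 1+|u|$ together with $|\sigma'|\leq C_\sigma$ and the assumed pointwise bound $\|J(u)(x)\|_1 \leq 1$ yield $\|\sigma(\langle b_m^{(0)},J(u)\rangle)\|_{L^2(\rho_x)} \leq 2$ and $\|\sigma'(\langle b_m^{(0)},J(u)\rangle) J(u)^{(j)}\|_{L^2(\rho_x)} \leq C_\sigma$, so $\|X_m\|_{HS} \leq 4 + \tau^2 C_\sigma^2 \tilde d \cdot (\text{tracking constants}) \leq \kappa^2$, matching the operator-norm bound already stated for $K_M(u,u')$. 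For the variance we simply estimate $\sigma^2 \leq L^2 \leq \kappa^4$.

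Plugging $L=\sigma^2=\kappa^2$ into the Bernstein bound and using $\log(2/\delta) \geq \log 2$ to absorb the faster-decaying $1/M$ term into the $1/\sqrt M$ term yields
\[
\|K_M(u,u')-K_\infty(u,u')\|_{HS} \leq \frac{4\kappa^2}{\sqrt M}\log\frac{2}{\delta},
\]
which is the claimed inequality (the constant $4$ is loose and accommodates the absorption). The technical obstacles are essentially bookkeeping: (i) verifying that each $X_m$ genuinely lies in $HS(L^2(\rho_x))$ (it is a finite sum of rank-one operators between $L^2$-functions, so immediate), and (ii) controlling the constants in the Bernstein bound carefully enough to produce the clean form $4\kappa^2/\sqrt M \cdot \log(2/\delta)$. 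No delicate step is required beyond a Hilbert-space concentration inequality; the only genuinely problem-specific input is the uniform $\kappa^2$ bound, which is already provided by Assumption~\ref{ass:neurons} together with $\|J(u)(x)\|_1 \leq 1$.
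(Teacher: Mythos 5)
Your proposal is correct and follows essentially the same route as the paper: write $K_M(u,u')-K_\infty(u,u')$ as a centered i.i.d.\ average of Hilbert--Schmidt operators $w_m$ indexed by the draws $b_m^{(0)}\sim\pi_0$, bound $\|w_m\|_{HS}\leq\kappa^2$ and $\mathbb{E}\|w_m\|_{HS}^2\leq\kappa^4$ using the growth/derivative bounds on $\sigma$ and $\|J(u)(x)\|_1\leq 1$, and invoke the Bernstein inequality for Hilbert-space-valued random variables (the paper's Proposition~\ref{concentrationineq0}, i.e.\ the Caponnetto--De~Vito bound you cite). The only differences are cosmetic bookkeeping of constants (and a harmless notational slip where you write $\sigma^2=\kappa^2$ instead of $\sigma^2=\kappa^4$), which do not affect the argument.
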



\section{Main Results}
\label{sec:main-results}

In this section we state all of our results and give an outline of the proof.

\subsection{Assumptions and Main Results}

In this section we formulate our assumptions and state our main results. 
\vspace{0.2cm}

\begin{assumption}[Data Distribution and Operatorclass]
\label{ass:input}
We assume that the input space $\mathcal{U}$ denotes a Banach space, consisting of functions that map from $\mathcal{X}$ to $\mathcal{Y}$. 
For example we could choose $\mathcal{U}=\mathcal{C}(\mathcal{X},\mathcal{Y})$. Similar we assume for our output space $\mathcal{V}$ to be a separable  function Hilbert space, that is naturally embedded in $L^2(\mathcal{X},\mathbb{R},\rho_x)$. 
Further we assume that $\mathcal{U}$, $\mathcal{V}$, the operator $A:\mathcal{U}\rightarrow \mathcal{F}(\mathcal{X},\mathbb{R}^{d_k}) $ and the bias function $c:\mathcal{X}\to\mathbb{R}^{d_b}$ is bounded, i.e. for any $(u,v) \in  supp(\mu)$ and $x\in supp(\rho_x)$ we have $|v(x)|\leq 1$ and 
 $\|J(u)(x)\|_1\leq  1$, where 
$J(u)(x):=(A(u)(x),u(x),c(x))^\top \in \mathbb{R}^{\tilde{d}}$. 
\end{assumption}
\vspace{0.2cm}

We impose these assumptions on the input and output spaces to facilitate the application of the theory 
of random feature approximation for vector valued kernels as developed in \mycite{nguyen2023random} and 
to justify the point evaluations in our gradient descent algorithm \eqref{GDalgor}.\\ 
We bound $\|J(u)(x)\|_1$ in order to keep the NTK finite and limit the bound of $\|J(u)(x)\|_1$ and $|v(x)|$ 
to 1, to keep this paper more concise, but the expressions could also be bounded by any arbitrary constants 
$C_v,C_J>0$.

\vspace{0.2cm}

{\bf Approximation property.} Inspired by the approximation properties of neural operators, our first theorem demonstrates that the 
vvRKHS associated with the vvNTK can approximate all operators that can also be approximated by 
neural operators, whose parameters do not deviate significantly from their initialization. 
The proof can be found in Appendix \ref{app:proof-error-bounds}.

\vspace{0.2cm}

\begin{theorem}[Approximation Property]
\label{approxtheo}
Let $\theta_{0}=(a^{(0)},B^{(0)}) \in \Theta$ be a random initialization, such that 
$\{(a^{(0)}_m,B^{(0)}_m)\}_{m=1}^{M}$ is iid with respect to some measure $\pi_0$. 
Let $K_\infty$ and $\mathcal{H}_\infty$ denote as before the NTK and the RKHS with respect to $\pi_0$.
Suppose Assumptions \ref{ass:input} and \ref{ass:neurons} are satisfied. Further, let $\varepsilon>0$ such 
that there exists some $G_{\theta^*}\in\mathcal{F}_M$ for some $M>0$, with 
$$
\|G_{\theta^*}-G^*\|_{L^2_{\mu_u}}\leq \varepsilon.
$$
Then there exists some $H_\infty\in\mathcal{H}_\infty$ such that with probability at least 
$1-\delta$ (over the initialization),
$$
\| H_\infty-G^*\|_{L^2_{\mu_u}}\leq \varepsilon+\frac{2\kappa \sqrt{\log(2/\delta)}}{M^{\frac{1}{4}}}\left(\|a^{(0)}\|+R\right)+\frac{C R^3}{\sqrt{M}},
$$
with $C >0,R:=\|\theta^*-\theta_{0}\|$ .
\end{theorem}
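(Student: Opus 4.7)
The argument naturally splits into two steps. Starting from the hypothesis $\|G_{\theta^*}-G^*\|_{L^2_{\mu_u}}\leq \varepsilon$, the plan is first to replace $G_{\theta^*}$ by the linearization of the network around $\theta_0$, which lies in the empirical vvRKHS $\mathcal{H}_M$, and then to replace that linearization by a nearby element of $\mathcal{H}_\infty$ using the kernel concentration stated in Proposition \ref{OPbound2}. A triangle inequality then assembles the three pieces into the announced bound.

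For the linearization step, I set
\begin{equation*}
H_M(u) := G_{\theta_0}(u) + \nabla_\theta G_{\theta_0}(u)(\theta^*-\theta_0),
\end{equation*}
which belongs to $\mathcal{H}_M$ by the explicit description of the NTK RKHS and reduces to its linear part since the symmetric initialization enforces $G_{\theta_0}\equiv 0$. Expanding $G_{\theta^*}(u)-H_M(u)$ as a second-order Taylor remainder in $\theta$, I would use the $1/\sqrt{M}$ prefactor of the architecture together with the $\sigma'$, $\sigma''$ bounds and the Lipschitz property of $\sigma''$ from Assumption \ref{ass:neurons}, and bound $|a_m|\leq |a_m^{(0)}|+\|\theta^*-\theta_0\|$ along the segment $[\theta_0,\theta^*]$. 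A component-wise control of the Hessian followed by Cauchy--Schwarz across the $M$ neurons yields, after absorbing lower-order terms in $R$,
\begin{equation*}
\|G_{\theta^*}-H_M\|_{L^2_{\mu_u}}\leq \frac{CR^3}{\sqrt{M}}.
\end{equation*}

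For the transport step, I would write out $H_M$ explicitly as a random-feature sum over the initialized neurons,
\begin{equation*}
H_M(u)(x) = \frac{1}{\sqrt M}\sum_{m=1}^M \bigl[\sigma(\langle b_m^{(0)},J(u)(x)\rangle)(a_m^*-a_m^{(0)}) + a_m^{(0)}\sigma'(\langle b_m^{(0)},J(u)(x)\rangle)\langle J(u)(x),b_m^*-b_m^{(0)}\rangle\bigr],
\end{equation*}
and define $H_\infty$ via the spectral calculus of the integral operator $L_{K_\infty}$ of the limiting kernel: pick $f$ with $H_M=L_{K_M}^{1/2}f$ and set $H_\infty := L_{K_\infty}^{1/2}f$, so that by operator monotonicity of the square root,
\begin{equation*}
\|H_M-H_\infty\|_{L^2_{\mu_u}}\leq \bigl\| L_{K_\infty}^{1/2}-L_{K_M}^{1/2}\bigr\|_{\mathrm{op}}\,\|f\|\leq \bigl\| L_{K_\infty}-L_{K_M}\bigr\|_{\mathrm{op}}^{1/2}\,\|f\|.
\end{equation*}
Proposition \ref{OPbound2}, integrated against $\mu_u\otimes\mu_u$, controls $\|L_{K_\infty}-L_{K_M}\|_{\mathrm{op}}$ at rate $1/\sqrt{M}$, so the square root converts this into the $M^{-1/4}$ rate, while $\|f\|$ is dominated by a weighted norm of $\theta^*-\theta_0$ that picks up the $\|a^{(0)}\|+R$ prefactor, since the gradient with respect to $b_m$ carries a factor of $a_m^{(0)}$ and along the segment $|a_m|\leq |a_m^{(0)}|+R$.

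The main technical obstacle is this second step in the vector-valued RKHS setting: making the construction of $H_\infty$ rigorous, verifying that $f$ is well-defined with the claimed norm bound, and simultaneously capturing the square-root rate from the operator monotonicity argument. Once both pieces are in place, the estimate $\|H_\infty-G^*\|_{L^2_{\mu_u}}\leq \|H_\infty-H_M\|_{L^2_{\mu_u}}+\|H_M-G_{\theta^*}\|_{L^2_{\mu_u}}+\|G_{\theta^*}-G^*\|_{L^2_{\mu_u}}$ immediately delivers the three-term bound claimed in the theorem.
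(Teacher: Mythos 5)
Your proposal is correct and follows essentially the same route as the paper: linearize $G_{\theta^*}$ at $\theta_0$ to obtain $H_M\in\mathcal{H}_M$ with Taylor remainder $O(R^3/\sqrt{M})$, write $H_M=\mathcal{L}_M^{1/2}F$ with $\|F\|\leq\|a^{(0)}\|+R$, transport to $H_\infty:=\mathcal{L}_\infty^{1/2}F$ via $\|\mathcal{L}_\infty^{1/2}-\mathcal{L}_M^{1/2}\|\leq\|\mathcal{L}_\infty-\mathcal{L}_M\|^{1/2}$, and conclude by the triangle inequality. The only caveat is that the bound on $\|\mathcal{L}_\infty-\mathcal{L}_M\|$ cannot be obtained by integrating the pointwise, fixed-$(u,u')$ bound of Proposition \ref{OPbound2} (that event depends on the pair); one needs a Bernstein-type concentration applied directly to the Hilbert--Schmidt-valued integral operators, which is what the paper invokes from \cite{nguyen2023random}.
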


\vspace{0.3cm}

{\bf Rates of convergence.} In the following, the establish fast convergence rates for the excess risk and 
thus need to impose 
refined apriori assumptions in terms of a source condition and eigenvalue decay.

We let $\cL_\infty: L^2(\mathcal{U},\mathcal{V}, \mu_u) \to L^2(\mathcal{U} ,\mathcal{V}, \mu_u),\,\, \mathcal{L}_\infty G:= \int_{\mathcal{U}} K_{\infty}(u,\cdot)G(u) \mu_u(d u)$ denote the kernel integral operator associated to the NTK $K_\infty$. Here the integral is defined in the sense of Bochner integration (see, e.g., \mycite{Dashti2017}, Sec. A.2). Note  that $\cL_\infty$ is bounded and  self-adjoint. Moreover, it is compact 
and thus has discrete spectrum $\{ \mu_j\}_j$, with $\mu_j \to 0$ as $j \to \infty$. By our assumptions, 
$\cL_\infty$ is of trace-class, i.e., has summable eigenvalues (see e.g. \mycite{Ingo}).

\vspace{0.2cm}

\begin{assumption}[Source Condition]
\label{ass:source}
We assume \footnote{Note that the power of $\mathcal{L}_\infty$ is defined via its SVD: 
$\mathcal{L}_\infty^r  := \sum_{i=1}^\infty \mu_j^r \langle \phi_j, \cdot \rangle \phi_j$.} 
\begin{align}
G^*  = \mathcal{L}_\infty^r H^* \;, \label{hsource}
\end{align}
for some $H_\rho \in L^2(\mathcal{U} , \mu_u)$, satisfying $\|H^*\|_{L^2} \leq R$ with $R>0$, $r\geq 0$. 
\end{assumption} 

This assumption characterizes the hypothesis space and relates to the regularity of the regression 
operator $G^*$. The bigger $r$ is, the smaller the hypothesis space is, the stronger the assumption is, 
and the easier the learning problem is, as 
$\cL_\infty^{r_{1}}\left(L^{2}_{\mu_u}\right) \subseteq \mathcal{L}_\infty^{r_{2}}\left(L^{2}_{\mu_u}\right)$ 
if $r_{1} \geq r_{2}$. Note that $G^* \in \cH_\infty$ holds for all $r\geq \frac{1}{2}$, in fact we have 
$ran(\mathcal{L}^{\frac{1}{2}})=\cH_\infty$ (see e.g. \mycite{Ingo}). 

\vspace{0.2cm}

The next assumption relates to the capacity of the hypothesis space.

\vspace{0.2cm}

\begin{assumption}[Effective Dimension]
\label{ass:dim}
For any $\lambda >0$ we assume 
\begin{align}
\cN_{\cL_\infty}(\lam):= \operatorname{tr}\left(\cL_\infty(\cL_\infty+\lambda I)^{-1}\right) 
\leq c_{b} \lambda^{-b}, \label {effecDim}
\end{align}
for some $b \in[0,1]$ and $c_{b}>0$. 
\end{assumption}

The number $\cN_{\cL_\infty}(\lam)$ is called {\it effective dimension} or {\it degrees of 
freedom} \mycite{Caponetto}. It is related to covering/entropy number conditions, 
see \mycite{Ingo}. The condition (\ref{effecDim}) is naturally satisfied with $b=1$, 
since $\cL_\infty$ is a trace class operator which implies that its eigenvalues $\left\{\mu_{i}\right\}_{i}$ 
satisfy $\mu_{i} \lesssim i^{-1}$. Moreover, if the eigenvalues of $\cL_\infty$ satisfy a polynomial 
decaying condition $\mu_{i} \sim i^{-c}$ for some $c>1$, or if $\cL_\infty$ is of finite rank, then the 
condition (\ref{effecDim}) holds with $b=1 / c$, or with $b = 0$. The case $b = 1$ is referred to as the 
capacity independent case. A smaller $b$ allows deriving faster convergence rates for the studied algorithms.

\vspace{0.3cm}

Our next general result establishes an upper bound for the excess risk in terms of the 
stopping time $T$ and the number of neurons $M$ of our gradient descent algorithm \eqref{GDalgor}, 
under the assumption that the weights remain in a 
vicinity of the initialization. The proof is outlined in Section \ref{outline-of-proof} and 
further detailed in Appendix \ref{app:proof-error-bounds}.

\vspace{0.2cm}

\begin{theorem}
\label{maintheo}
Suppose Assumptions \ref{ass:neurons}, \ref{ass:input},   \ref{ass:source} and \ref{ass:dim} are satisfied. 
Assume further that $\alpha\in (0,\kappa^{-2})$,  $T \leq C\nU^{\frac{1}{2r+b}}$, $2r + b >1$, $\nU\geq n_0:= e^{\frac{2r+b}{2r+b-1}}$, $\nX\geq \tilde{C}B_\tau^2 T^{2r} \log^2(T)$,
\begin{align}
M\geq \tilde{C} B_\tau^6\log^2(\nU) \cdot \begin{cases}
T&: r\in\left(0,\frac{1}{2}\right)\\
T^{1+b(2r-1)}  &: r\in\left[\frac{1}{2},1\right] \\
T^{2r} &: r \in(1,\infty)\,\\
\end{cases}\,,
\end{align}
and
\begin{equation}
\label{eq:ball}
   \forall \;\; t \in [T]\;:\; \;\; \| \theta_t -\theta_0 \|_\Theta \leq B_\tau \;,  
\end{equation}   
for some $B_\tau\geq 1+\tau$.
Then we have with probability at least $1-\delta$,\vspace{0.1cm}
\begin{align}
\label{eq:final-bound}
\| G_{\theta_T}- G^*\|_{L_2(\mu_u)}
&\leq \;  \bar{C} \; T^{-r }\; \log^3(2/\delta ) \;,
\end{align}
\vspace{0.2cm}
with $C, \tilde{C}, \bar{C}>0$ independent of $\nU,\nX, M, T,B_\tau$.
\end{theorem}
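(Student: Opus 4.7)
The plan is to bound $\|G_{\theta_T}-G^*\|_{L^2(\mu_u)}$ by reducing the GD trajectory of the nonlinear neural operator to a GD trajectory in the RKHS $\mathcal{H}_\infty$, and then invoking the standard bias--variance analysis for early-stopped GD in RKHS with source condition $r$ and effective dimension $b$, which yields the bias rate $T^{-r}$ once $T$ is balanced against the variance.

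First, I would exploit the hypothesis $\|\theta_t-\theta_0\|_\Theta \leq B_\tau$ and the smoothness of $\sigma$ (Assumption \ref{ass:neurons}) to linearize the network around $\theta_0$. Writing
\[
\tilde{G}_{\theta}(u) := G_{\theta_0}(u) + \langle \theta-\theta_0,\, \nabla_\theta G_{\theta_0}(u)\rangle,
\]
the symmetric initialization guarantees $G_{\theta_0}\equiv 0$ and, by a second-order Taylor expansion, $\|G_\theta - \tilde{G}_\theta\|_{L^2(\mu_u)} \lesssim B_\tau^2/\sqrt{M}$, uniformly over $\|\theta-\theta_0\|_\Theta \leq B_\tau$. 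Moreover $\tilde{G}_\theta \in \mathcal{H}_M$ with RKHS norm controlled by $\|\theta-\theta_0\|_\Theta$. Consequently, the GD updates \eqref{GDalgor} can be coupled with the GD iterates $\tilde{G}_t$ of the linearized (kernel) problem with empirical NTK $K_M$ on the data $((u_j,v_j))_{j=1}^{\nU}$ evaluated on the $\nX$-point grid.

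Next, I would decompose
\[
G_{\theta_T} - G^* \;=\; \underbrace{(G_{\theta_T}-\tilde{G}_T)}_{\text{linearization}} \;+\; \underbrace{(\tilde{G}_T - \bar{G}_T^{\infty})}_{\text{features + sampling}} \;+\; \underbrace{(\bar{G}_T^{\infty}- G^*)}_{\text{bias}},
\]
where $\bar{G}_T^\infty$ denotes the GD iterate with the population kernel $K_\infty$ and exact population risk. The bias term is the standard filter-function bias for gradient descent in $\mathcal{H}_\infty$ under Assumption \ref{ass:source} and equals $O(R\, T^{-r})$, using that GD corresponds to the filter $g_T(x)=\alpha\sum_{i=0}^{T-1}(1-\alpha x)^i$ with qualification exceeding any $r\geq 0$. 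For the middle term I would bound $\|K_M - K_\infty\|$ via Proposition \ref{OPbound2} and combine it with vector-valued Bernstein-type inequalities for the empirical covariance $\hat{\mathcal{L}}_M$ (whose concentration around $\mathcal{L}_\infty$ is controlled by $\mathcal{N}_{\mathcal{L}_\infty}(\lambda)\leq c_b\lambda^{-b}$), plus a concentration argument for the empirical inner product $\langle \cdot,\cdot\rangle_{\nX}$ to absorb the point-evaluation error. Tracking powers of $\hat{\mathcal{L}}_M+\lambda I$ with $\lambda\asymp 1/T$ and using the warm-up lemmas for operator ratios $\|(\hat{\mathcal{L}}_M+\lambda)^{1/2}(\mathcal{L}_\infty+\lambda)^{-1/2}\|$, this term contributes at most $\sqrt{\mathcal{N}_{\mathcal{L}_\infty}(1/T)/\nU} + T^r/\sqrt{\nX} + T^{\alpha(r,b)}/\sqrt{M}$ for an exponent $\alpha(r,b)$ that reflects whether qualification saturates.

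The main obstacle will be handling the three perturbations (network $\to$ linearization, random features $M$, and point-evaluation discretization $\nX$) simultaneously while maintaining a uniform control over $t\leq T$, since the error propagates through $T$ GD steps and each step involves the empirical gradient of the non-linear loss. The scaling conditions $\nX\gtrsim B_\tau^2 T^{2r}\log^2 T$ and the three-regime bound on $M$ are precisely what is needed so that each of these three perturbations is of smaller order than the bias $T^{-r}$; the case split $r<1/2$, $r\in[1/2,1]$, $r>1$ reflects the usual saturation phenomenon in the integral operator calculus ($\|\mathcal{L}_\infty^{r-1/2}\|$ vs.\ $\|\mathcal{L}_\infty^r\|$). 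Finally, substituting the choice $T\asymp \nU^{1/(2r+b)}$ balances bias and variance, and the factor $\log^3(2/\delta)$ arises from combining three high-probability concentration statements (one for $K_M$--$K_\infty$, one for $\hat{\mathcal{L}}_M$--$\mathcal{L}_\infty$ on the $\nU$ functional samples, and one for the empirical norm on the $\nX$ grid points) via the union bound.
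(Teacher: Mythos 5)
Your plan is correct in outline and rests on the same three pillars as the paper's proof: Taylor linearization around $\theta_0$ with remainder $O(B_\tau^2/\sqrt{M})$, a coupling of the network trajectory to a kernel gradient descent trajectory, and the filter-function bias--variance analysis under the source condition and effective-dimension assumption. The one genuine difference is where you draw the boundary in the three-term decomposition. The paper's middle iterate $F_T^M$ in \eqref{errordecomp} is the \emph{empirical} kernel GD iterate for the finite-width kernel $K_M$ (with exact $L^2(\rho_x)$ inner products), so its third term $\|\mathcal{S}_M F_T^M - G^*\|$ already contains the random-feature error in $M$, the statistical error in $\nU$, and the bias, and is dispatched in one stroke by citing the random-feature KGD result (Proposition \ref{prop:random-feature-result}); the middle term $\|\mathcal{S}_M(H_T-F_T^M)\|$ then only has to control the drift caused by the Taylor remainder, the gradient mismatch $\nabla G_{\theta_t}-\nabla G_{\theta_0}$, and the $\nX$-point discretization of the inner product, which is done via the explicit error recursion of Lemma \ref{lem:general-recursion}. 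You instead compare the linearized iterate directly to the \emph{population} $K_\infty$-GD iterate, so your last term is pure bias $O(RT^{-r})$ but your middle term must simultaneously absorb the $K_M\to K_\infty$ feature approximation, the $\nU$-sample empirical process, and the $\nX$ discretization across $T$ propagated steps --- in effect you would be re-deriving the random-feature generalization bound of \cite{nguyen2023random} inside that term, and in practice you would need to reintroduce the intermediate iterates (empirical $K_M$-GD, population $K_M$-GD) that the paper's decomposition already builds in. Both routes lead to the same conditions on $M$, $\nX$ and the same $T^{-r}$ rate; the paper's buys modularity by outsourcing the hardest block to prior work, while yours keeps the bias term cleaner at the cost of a heavier middle term. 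One small quantitative slip: your middle-term contribution from discretization should read roughly $B_\tau\log(T)/\sqrt{\nX}$ (to be dominated by $T^{-r}$, whence $\nX\gtrsim B_\tau^2T^{2r}\log^2 T$), not $T^{r}/\sqrt{\nX}$.
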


\vspace{0.2cm}
Finally, the above result directly leads to rates of convergence for operator learning in the vvNTK regime.

\vspace{0.2cm}

\begin{corollary}
\label{cor:rates}
Suppose all assumptions of Theorem \ref{maintheo} are satisfied. Choosing $T_{\nU}=\nU^{\frac{1}{2r+b}}$ 
gives with probability at least $1-\delta$
\[
\| G_{\theta_{T_{\nU}}}- G^*\|_{L_2(\mu_u)} \leq \;  
\bar{C} \; \log^3(2/\delta )\cdot \left( \frac{1}{\nU} \right)^{\frac{r}{2r+b}} \;,
\]
for some $\bar C >0$.
\end{corollary}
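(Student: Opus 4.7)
The plan is to obtain Corollary \ref{cor:rates} as an immediate consequence of Theorem \ref{maintheo} by optimizing the bound in \eqref{eq:final-bound} over the stopping time $T$. Since Theorem \ref{maintheo} provides a bound of the form $\bar{C}\, T^{-r}\,\log^3(2/\delta)$ valid for any admissible $T \leq C\nU^{1/(2r+b)}$, the only thing to do is choose $T$ as large as the hypothesis permits, because the upper bound $T^{-r}$ is monotonically decreasing in $T$.

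First, I would verify that the choice $T_{\nU} := \nU^{1/(2r+b)}$ satisfies the admissibility constraint $T \leq C\nU^{1/(2r+b)}$ of Theorem \ref{maintheo} (trivially, taking the constant to be at least $1$). The assumption $2r+b>1$ together with $\nU \geq n_0 = e^{(2r+b)/(2r+b-1)}$ ensures $T_\nU \geq e > 1$, so the bound is non-trivial. The conditions on $\nX$ and $M$ are inherited directly, since they are assumed to scale appropriately with $T$; substituting $T = T_\nU$ turns them into concrete polynomial requirements in $\nU$, and the bound on $\|\theta_t-\theta_0\|_\Theta$ is carried over as the standing hypothesis of the theorem.

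Second, I would substitute $T = T_\nU$ directly into \eqref{eq:final-bound}, which yields
\begin{equation*}
\|G_{\theta_{T_\nU}} - G^*\|_{L^2(\mu_u)} \;\leq\; \bar{C}\,\log^3(2/\delta)\, T_\nU^{-r} \;=\; \bar{C}\,\log^3(2/\delta)\,\left(\frac{1}{\nU}\right)^{\frac{r}{2r+b}},
\end{equation*}
completing the proof. Since the corollary is a direct specialization of the main theorem with an optimized stopping time, there is no genuine obstacle; the only subtlety lies in confirming that the balancing choice $T_\nU$ is indeed the one saturating the admissible range in Theorem \ref{maintheo}, which is precisely what justifies early stopping as the implicit regularization mechanism producing the minimax-optimal rate $\nU^{-r/(2r+b)}$ familiar from non-parametric regression in RKHS.
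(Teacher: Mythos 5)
Your proposal is correct and matches the paper's (implicit) argument: Corollary \ref{cor:rates} is obtained simply by substituting $T_{\nU}=\nU^{\frac{1}{2r+b}}$ into the bound \eqref{eq:final-bound} of Theorem \ref{maintheo}, after noting that this choice saturates the admissible range $T\leq C\nU^{\frac{1}{2r+b}}$. No further comment is needed.
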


\vspace{0.2cm}

We comment on the above result. Theorem \ref{maintheo} provides an upper bound for the excess risk 
in estimating nonlinear operators using a shallow neural operator network trained with gradient descent. 
The risk bound is expressed in terms of the stopping time and demonstrates that gradient descent is, 
in fact, a descent algorithm, decreasing the risk at each step with high probability, provided the 
algorithm is stopped early. We also specify the minimum number of neurons required to achieve this 
bound, which depends on our a priori assumptions about the model (source condition and eigenvalue decay). 
Notably, in the misspecified case $r \in (0,1/2)$, this minimum number increases and matches the results 
found in \mycite{nguyen2023random} for nonparametric regression in the NTK regime.
In addition, our result provides a minimum number of second stage samples from the discretization to 
achieve this bound.

\vspace{0.2cm}
We can reformulate Theorem \ref{maintheo} to obtain the sample complexity required to achieve a certain 
$\varepsilon$-accuracy. On order for the excess risk to have accuracy of $\varepsilon >0$, the algorithm 
requires $\cO(\varepsilon^{-\frac{2r+b}{r}})$ first stage samples and 
$\cO(\varepsilon^{-2}\log^2(\varepsilon^{-1/r}))$ second stage samples.

\vspace{0.2cm}
Choosing $T=O(\nU^{\frac{1}{2r+b}})$ in Theorem \ref{maintheo} leads to the convergence rate 
$O(\nU^{-\frac{r}{2r+b}})$, which is known to be minimax optimal 
in the RKHS framework \mycite{Caponetto,Muecke2017op.rates}.

\vspace{0.2cm}

For the well-specified case $b=1$, $r=\frac{1}{2}$, where we can only assume that 
$G^* \in \mathcal{H}_\infty$, we obtain from Theorem \ref{maintheo}: 
\[ \| G_{\theta_{T_n}}- G^*\|_{L_2(\mu_u)} =  \mathcal{O}\left(n^{\frac{1}{4}}\right). \] 
Note that in this case we only need $M \geq \mathcal{O}\left(\sqrt{\nU}\log^2\nU\right)$ many neurons and $\nX \geq \mathcal{O}\left(\sqrt{\nU}\log^2\nU\right)$ many second stage samples.

\vspace{0.2cm}

In the kernel learning framework, the assumption $2r+b>1$ refers to easy learning problems and 
if $2r+b\leq 1$ one speaks of hard learning problems \mycite{pillaudvivien2018statistical}. 
In this paper we only investigate easy learning problems and leave the question, how many 
neurons $M$ and second stage samples are needed to obtain optimal rates in hard learning 
problems \mycite{Lin_2020}, open for future work.

\vspace{0.2cm}

We finally want to point out that all bounds given in Theorem \ref{maintheo} and Corollary \ref{cor:rates} 
require the iterates $\theta_t$ to stay close 
to the initialization. We show below in Theorem \ref{weighttheo} that this assumption is indeed satisfied.

\vspace{0.3cm}

{\bf The weights barely move.} 
Our next result shows that the Assumption \eqref{eq:ball} is indeed satisfied and the weights remain 
in a vicinity of the initialization $\theta_0$. 
The proof is provided in Appendix \ref{app:weight-bounds}.

\vspace{0.2cm}

\begin{theorem}[Bound for the Weights]
\label{weighttheo}
Suppose the Assumptions \ref{ass:neurons}, \ref{ass:input},   \ref{ass:source} and \ref{ass:dim} are satisfied.
Let $\delta \in (0,1]$ and $T \geq 3$ and $M\geq M_0$, $\nU,\nX \geq n_0$, where $M_0,n_0$ are defined in \eqref{Mnfinbounds}.
With probability at least $1-10\delta$ it holds 
\[
 \forall \;\; t \in [T]\;:\; \;\; \| \theta_t -\theta_0 \|_\Theta \leq B_\tau\;,  
\] 
where 
\[ B_\tau := C_{R,r,\kappa,\alpha} \log(T) T^{(\frac{1}{2}-r)^{+}}\cdot \cB_\delta\left(\frac{1}{\alpha T}\right)\;,  \]

$$
 \cB_\delta\left(\frac{1}{\alpha T} \right) :=  1+ 10\kappa\sqrt{\cN_{\mathcal{L}_{\infty}}((\alpha T)^{-1})}\left(\sqrt{\frac{\alpha T}{\nU}}+\sqrt{\frac{\alpha T}{\nX}}\right)\log ^{3/2}\left(\frac{12}{\delta} \right),
$$
and $C_{R,r,\kappa,\alpha}>0$ defined in \eqref{eq:def-Bdelta}.
\end{theorem}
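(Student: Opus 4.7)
The plan is to bound $\|\theta_t - \theta_0\|_\Theta$ by a telescoping-plus-induction argument, in which the already-established proximity of the iterates to initialization justifies treating the neural operator as (approximately) a kernel method, so that the risk bounds of Theorem~\ref{maintheo} apply at each intermediate time. From the gradient descent recursion \eqref{GDalgor},
\[
\|\theta_{t+1}-\theta_t\|_\Theta
\leq \frac{\alpha}{\nU}\sum_{i=1}^{\nU}\|G_{\theta_t}(u_i)-v_i\|_{\nX}\cdot \|\nabla_\theta G_{\theta_t}(u_i)\|_\Theta,
\]
and Assumptions~\ref{ass:neurons}--\ref{ass:input} combined with Proposition~\ref{OPbound2} give $\|\nabla_\theta G_{\theta_t}(u_i)\|_\Theta \leq \kappa$ up to $O(1/\sqrt{M})$ corrections, so after summing over $t=0,\dots,T-1$ the task reduces to controlling the in-sample residuals $\|G_{\theta_t}(u_i)-v_i\|_{\nX}$ at each intermediate step.

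The core obstacle is circularity: bounding the residuals at time $t$ requires the weights to have stayed in an NTK neighbourhood for all $s<t$, which is precisely what we want to prove. I would resolve this by induction on $t\in[T]$: assuming $\|\theta_s - \theta_0\|_\Theta \leq B_\tau$ for $s<t$, introduce an auxiliary sequence $H_s\in\mathcal{H}_\infty$ obtained by kernel gradient descent against the limiting kernel $K_\infty$, and split
$G_{\theta_s}-G^* = (G_{\theta_s}-H_s) + (H_s-G^*)$. The linearization error $G_{\theta_s}-H_s$ is controlled via Proposition~\ref{OPbound2} together with the Lipschitz continuity of $\sigma''$ (this is where $B_\tau$ enters polynomially and dictates the lower bound on $M$), while the kernel error $H_s-G^*$ is handled by the classical bias-variance decomposition under Assumptions~\ref{ass:source} and~\ref{ass:dim}. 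Empirical-norm concentration ($\|\cdot\|_{\nX}$ against $\|\cdot\|_{L^2(\rho_x)}$ and the $\nU$-sample analogue for $\mu_u$) transfers these population-level bounds into estimates of the in-sample residuals.

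The bias part contributes the deterministic factor $\log(T)\cdot T^{(1/2-r)^+}$, which is flat in $T$ for $r\geq 1/2$ and drifts at rate $T^{1/2-r}$ in the misspecified regime (where the kernel iterates themselves move at this rate), while the variance part, combined with concentration of the empirical integral operator at the $\nU$ first-stage samples and of the evaluations at the $\nX$ design points, produces the factor
$\sqrt{\cN_{\cL_\infty}((\alpha T)^{-1})}\bigl(\sqrt{\alpha T/\nU}+\sqrt{\alpha T/\nX}\bigr)$ inside $\cB_\delta((\alpha T)^{-1})$. The principal difficulty is calibrating $M_0$ and $n_0$ so that the NTK linearization error, the $K_M\!\to\!K_\infty$ approximation gap, and all concentration slacks are strictly dominated by $B_\tau$ at each inductive step; this is what closes the induction. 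A union bound applied to the finitely many probabilistic events invoked, each held at level $\delta$, yields the stated $1-10\delta$ probability, with the $\log T$ factor in $B_\tau$ arising from summing the per-iterate contributions.
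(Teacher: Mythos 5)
Your overall strategy---induction on $t$ combined with a decomposition of the residual into a linearization error and a kernel-gradient-descent error---matches the spirit of the paper, but the first step of your argument contains a gap that prevents it from reaching the stated bound. You propose to bound each increment by
$\|\theta_{t+1}-\theta_t\|_\Theta \leq \frac{\alpha}{\nU}\sum_i \|G_{\theta_t}(u_i)-v_i\|_{\nX}\,\|\nabla_\theta G_{\theta_t}(u_i)\|_\Theta$
and then sum the norms over $t$. This discards the structure of the recursion. The in-sample residual $G_{\theta_t}(u_i)-v_i$ contains the noise $G^*(u_i)-v_i$, which is $O(1)$ and does not decay with $t$, so the telescoped sum is $\Theta(\alpha T)$ in the worst case; even for the signal part, the population risk bound $\|G_{\theta_t}-G^*\|\lesssim t^{-r}$ summed over $t$ gives $T^{1-r}$, which for $r=\tfrac12$ is $\sqrt{T}$ rather than the claimed $\log T$. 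No calibration of $M_0$, $n_0$ can repair this, because the loss occurs before any concentration argument is invoked.

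The paper instead keeps the recursion vector-valued: it unrolls
$\theta_{T+1}-\theta_0=\alpha\sum_{t=0}^{T}(Id-\alpha\widehat{\cC}_M)^t\bigl(\zeta^{(1)}_{T-t}+\dots+\zeta^{(5)}_{T-t}\bigr)$
and bounds the norm of the aggregated sum, exploiting the contraction $(Id-\alpha\widehat{\cC}_M)^t$ through spectral-filter estimates such as $\alpha\sum_t\|(Id-\alpha\widehat{\cC}_M)^t\widehat{\cC}_{M,\lambda}^{1/2}\|\lesssim\sqrt{\alpha T}$ and $\sum_t\|(Id-\alpha\widehat{\cC}_M)^t(\alpha\widehat{\cC}_M)\|\lesssim\log T$ (Proposition \ref{sumbound}), combined with empirical-to-population operator comparisons ($\|\widehat{\cC}_{M,\lambda}^{-1}\cC_{M,\lambda}\|$, $\|\cC_{M,\lambda}^{-1}\cZ_M^*\cL_{M,\lambda}^{1/2}\|$, $\|\cL_{M,\lambda}^{-1/2}\cL_{\infty,\lambda}^{1/2}\|$) and the source condition $\|\cL_{\infty,\lambda}^{-1/2}G^*\|\leq R\kappa^r(\alpha T)^{(\frac12-r)^+}$ at $\lambda=(\alpha T)^{-1}$. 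It is precisely this filtering of the noise and signal terms through $\widehat{\cC}_M$ that produces both the factor $T^{(\frac12-r)^+}\log T$ and the effective-dimension term $\sqrt{\cN_{\cL_\infty}((\alpha T)^{-1})}\bigl(\sqrt{\alpha T/\nU}+\sqrt{\alpha T/\nX}\bigr)$ in $\cB_\delta$; neither is recoverable from a per-step norm bound. Your identification of where the Taylor remainder, the $K_M\to K_\infty$ gap, and the union bound enter is otherwise sound, but the argument must be rebuilt around the unrolled operator recursion.
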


\vspace{0.3cm}

\begin{corollary}[Refined Bounds]
\label{cor:weights}
Suppose the assumptions of Theorem \ref{weighttheo} are satisfied. Let 
$ T  = c \nU^{\frac{1}{2r +b}}$, $2r+b>1$. 

\begin{enumerate}
\item Let $r \geq \frac{1}{2}$ and $\nU \geq n_0$, for some $n_0 \in \mbn$ depending on $\delta, r, b$. 
With probability at least $1-\delta$
\begin{equation}
\label{eq:ball-1}
 B_\tau(\delta , T)  
\leq C_{R,r,\kappa,\alpha} \log(T) \;,
\end{equation}
if
\begin{align*}
 M &\geq C_{\kappa, \alpha ,r,\sigma,\|\mathcal{L}_\infty\|}\; \tilde{d}^2 \log^4(T) T^{2r} \log^2(\tilde{d}/\delta) \;,\,\,\\
 \nX&\geq C_{\alpha,b,\kappa,\|\mathcal{L}_{\infty}\|} T^{1+b}\log^3(1/\delta)  \end{align*}

\item Let $r < \frac{1}{2}$. With probability at least $1-\delta$ ,
\begin{align*}
B_\tau(\delta , T) \leq C_{R,r,\kappa,\alpha,b} \log(T) T^{1-2r}.  
\end{align*} 
This holds if we choose 
\begin{align*}
M &\geq C_{\kappa, \alpha ,r,\sigma,\|\mathcal{L}_\infty\|}\; \tilde{d}^2 \log^4(T) T^{5-8r} \log^2(\tilde{d}/\delta) ,\\
\nX&\geq C_{\alpha,b,\kappa,\|\mathcal{L}_{\infty}\|} T^{2r+b}\log^3(1/\delta).
\end{align*}
\end{enumerate}
For some constants $c,\, C_{R,r,\kappa,\alpha,b},\,C_{\kappa, \alpha ,r,\sigma,\|\mathcal{L}_\infty\|},C_{\alpha,b,\kappa,\|\mathcal{L}_{\infty}\|} >0$.
\end{corollary}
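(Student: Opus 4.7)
The plan is to turn Theorem~\ref{weighttheo} into the stated rate by specializing to $T = c\, \nU^{1/(2r+b)}$ and invoking the polynomial decay from Assumption~\ref{ass:dim}. The latter gives $\cN_{\cL_\infty}((\alpha T)^{-1}) \leq c_b (\alpha T)^b$, so when we substitute into $\cB_\delta((\alpha T)^{-1})$ inside the formula for $B_\tau$ we get
\[
\sqrt{\cN_{\cL_\infty}((\alpha T)^{-1})}\,\sqrt{\alpha T / \nU} \;\leq\; C\, T^{(1+b)/2}\,\nU^{-1/2} \;=\; C'\, T^{(1-2r)/2}
\]
after inserting $\nU = (T/c)^{2r+b}$, and the analogous estimate with $\nX$ in place of $\nU$. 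Everything thus reduces to comparing $T^{(1-2r)/2}$ with the prefactor $T^{(1/2-r)^+}$ coming from Theorem~\ref{weighttheo}.

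In case~1 ($r \geq 1/2$), both exponents are non-positive, so the $\nU$-contribution is $O(1)$; requiring the $\nX$-contribution $T^{(1+b)/2}/\sqrt{\nX}$ to also be $O(1)$ is exactly the condition $\nX \geq C\, T^{1+b}$, and one reads off $B_\tau \leq C\log(T)$. In case~2 ($r < 1/2$), both the prefactor and the $\nU$-contribution grow like $T^{1/2-r}$; the analogous constraint $T^{(1+b)/2}/\sqrt{\nX} \leq C\,T^{1/2-r}$ translates to $\nX \geq C'\, T^{2r+b}$, and multiplying the two $T^{1/2-r}$ factors yields $B_\tau \leq C\log(T)\,T^{1-2r}$. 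The two $\nX$-bounds are exactly those quoted in the corollary.

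The lower bounds on $M$ arise from the implicit constraint $M \geq M_0$ hidden in equation~\eqref{Mnfinbounds} of Theorem~\ref{weighttheo}. This $M_0$ encodes the requirement that the NTK concentration of Proposition~\ref{OPbound2} hold with probability $1 - \delta$ uniformly over the trajectory $\{\theta_t\}_{t \leq T}$ in a ball of radius $B_\tau$ around initialization, and therefore has a polynomial form $M_0 \asymp B_\tau^{\alpha_0}\, T^{\beta_0}\,\mathrm{polylog}(T,\tilde d/\delta)$. Substituting $B_\tau \asymp \log T$ in case~1 and $B_\tau \asymp \log(T)\,T^{1-2r}$ in case~2, and collecting all logarithmic factors into $\tilde d^2 \log^4(T)\log^2(\tilde d/\delta)$, yields the quoted $T^{2r}$ and $T^{5-8r}$ exponents.

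The main obstacle is precisely this last step: tracking the exact exponent of $B_\tau$ inside $M_0$ through the proof of Theorem~\ref{weighttheo}. A naive sixth-order dependence (matching the $M$-condition in Theorem~\ref{maintheo}) would give $T^{6(1-2r)+1} = T^{7-12r}$ in case~2, rather than the tighter $T^{5-8r}$ claimed here; the improvement presumably stems from the fact that the weight-bound proof couples the trajectory to $B_\tau$ only through empirical surrogates of $\cL_\infty$, reducing the effective power of $B_\tau$ from six to four. Verifying this tighter dependence, together with the usual bookkeeping of failure probabilities in the union bound, is where the bulk of the technical work lies.
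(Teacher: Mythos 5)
Your treatment of $\cB_\delta$, the resulting bounds on $B_\tau$, and the two $\nX$-conditions is correct and follows the same route as the paper: substitute $\cN_{\cL_\infty}((\alpha T)^{-1})\lesssim (\alpha T)^b$ and $\nU=(T/c)^{2r+b}$ into the expression from Theorem~\ref{weighttheo}, observe that the $\nU$-contribution scales as $T^{(1-2r)/2}$, and impose the matching constraint on $\nX$. (The paper additionally verifies up front that the standing hypotheses $\nU\geq\max_j \nU(\zeta^{(j)})$ of Theorem~\ref{weighttheo} are compatible with the choice $T=c\,\nU^{1/(2r+b)}$ for $c$ small; you should not omit this consistency check, but it is routine.)

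The genuine gap is in the $M$-conditions. You do not derive the exponents $T^{2r}$ and $T^{5-8r}$; you posit a single threshold $M_0\asymp B_\tau^{\alpha_0}T^{\beta_0}$ attributed to uniform NTK concentration along the trajectory, and then reverse-engineer $\alpha_0=4$ from the claimed answer, flagging the verification as outstanding. This misses both the actual mechanism and the fact that the dominant constraint differs between the two cases. In the paper, $M$ must exceed $\max_j M(\zeta^{(j)})$ from the proof of Theorem~\ref{weighttheo}, where $M(\zeta^{(1)})$ and $M(\zeta^{(3)})$ scale as $B_\tau^4 T$ (this fourth power comes from the Taylor-remainder bounds: $C_{\nabla G}(B_\tau)\cdot\|\theta_t-\theta_0\|_\Theta^2/\sqrt{M}$ must be controlled by $B_\tau/5$ after summing over $t\leq T$, and $C_{\nabla G}(B_\tau)$ itself grows with $B_\tau$ — not from NTK concentration, which only contributes the $\tilde d$-dependent terms $M(\zeta^{(2)})$, $M(\zeta^{(4)})$ of order $B_\tau^2 T\,\tilde d^2$ and $\tilde d\,T\log T$), while $M(\zeta^{(1)})$ also contains $M_{III}\lesssim\log(\nU)\,T^{2r}$ from Proposition~\ref{prop:random-feature-result}. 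For $r\geq\tfrac12$ one has $B_\tau\asymp\log T$, so $B_\tau^4T$ is only $T\,\mathrm{polylog}(T)$ and the binding term is $M_{III}\asymp T^{2r}$; for $r<\tfrac12$ one has $B_\tau\asymp\log(T)\,T^{1-2r}$, so $B_\tau^4T\asymp\log^4(T)\,T^{5-8r}$ dominates $M_{III}\asymp T$. Without this case-by-case bookkeeping of which of the $M(\zeta^{(j)})$ is largest, the quoted $M$-thresholds are asserted rather than proved.
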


\vspace{0.3cm}

\begin{remark}[Analysis of NTK Spectrum.]
It is known that a certain eigenvalue decay of the kernel integral operator $\cL_\infty$ implies 
a bound on the effective dimension. Thus, assumptions on the decay of the effective dimension directly 
relate to the approximation ability of the underlying RKHS, induced by the NTK. 
\\    
Meanwhile, there are some results known that shed light on the RKHS of standard neural networks, that are 
induced by specific NTKs and 
activation functions \mycite{bietti2019inductive}, \mycite{geifman2020similarity}, \mycite{chen2020deep},
\mycite{bietti2020deep},\mycite{fan2020spectra}. 
For neural operators, however, the RKHS has not yet been explored, at least to the best of our knowledge. 
We hope to build on the existing results for neural networks in future research to gain a deeper 
understanding of the RKHS in the context of neural operators. 
\\
However, the approximation properties of neural operators are well-studied \mycite{huang2024operator, 
schwab2021deeplearninghighdimension, Marcati_2023, JMLR:v22:21-0806, kovachki2024operator}. 
For example, \mycite{kovachki2024operator} show that all continuous operators can be arbitrarily well 
approximated by neural operators with a non-polynomial and continuous activation function. 
\end{remark}


\subsection{Outline of Proof} 
\label{outline-of-proof}

Our proof is based on a suitable error decomposition. To this end, 
we further introduce additional linearized iterates in $\cH_M$:

\begin{align}
\label{eq:def-recurions}
F_{t+1}^M(u) &=  F_t^M(u) - \frac{\alpha}{\nU}\sum_{j=1}^{\nU} \ell'(F_t^M(u_j ) , v_j)K_M(u_j , u)   \;, \\ 
H_t(u)(x) &=  \left\langle\nabla (G_{\theta_{0}}(u)(x)), \theta_{t}-\theta_{0}\right\rangle_{\Theta} \;,
\end{align}
with initialization $F^M_0=H_0=0$. The iterative algorithm of $F_t^M$ is known as kernel gradient descent (KGD) with respect to the kernel $K_M$ and $H_t$ describes a linear Taylor approximation of the neural Operator around $\theta_{0}$.  
We may split

\begin{align}
\label{errordecomp}
\|G_{\theta_T} - G^*\|_{L^2_{\mu_u}}
&\leq 
\|G_{\theta_T}  - \mathcal{S}_M H_T\|_{{L^2_{\mu_u}}} + 
\|\mathcal{S}_M (H_T - F_T^M )\|_{L^2_{\mu_u}}  + 
\|\mathcal{S}_MF_T^M-  G^*\|_{L^2_{\mu_u}}  \;,
\end{align}
where $\mathcal{S}_M : \cH_M \hookrightarrow L^2(\cU , \mu_u)$ is the inclusion of $\cH_M$ into $L^2(\cU , \mu_u)$.

\vspace{0.3cm}

The first error in \eqref{errordecomp} describes an Taylor approximation error.  More precisely we use a Taylor expansion in $\theta_{t}$ around the initialization 
$\theta_0$. For any $x \in \mathcal{X}$ and $t \in [T]$, we have
\begin{align}
\label{eq:taylor}
G_{\theta_{t}}(u)(x)&= G_{\theta_{0}}(u)(x) + \mathcal{S}_M\left\langle\nabla (G_{\theta_{0}}(u)(x)), \theta_{t}-\theta_{0}\right\rangle_{\Theta} 
+ r_{(\theta_{t},\theta_{0})}(u)(x) \nonumber \\
&= \mathcal{S}_M H_t(u)(x)+r_{(\theta_{t},\theta_{0})}(u)(x) \; .  
\end{align} 
Here, $r_{(\theta_{t},\theta_{0})}(x)$ denotes the  Taylor remainder and can 
be uniformly bounded by 
\[\|G_{\theta_T}  - \mathcal{S}_M H_T\|_{L^2_{\mu_u}}\leq \| r_{(\theta_{t},\theta_{0})} \|_\infty \lesssim 
B_\tau \; \frac{\|\theta_t-\theta_0\|_\Theta^2}{\sqrt{M}} \;, \] 
as Proposition \ref{prop6} shows. This requires the iterates $\{\theta_t\}_{t \in [T]}$ to stay close to the initialization 
$\theta_0$, i.e. 
\[ \sup_{t \in [T]} \| \theta_t - \theta_0\|_\Theta \leq B_\tau \;,\] 
with high probability, for some $B_\tau < \infty$. 
We show in Theorem \ref{weighttheo} that this is satisfied for sufficiently many neurons.

\vspace{0.3cm}

The second error term in \eqref{errordecomp} depends on the number of neurons $M$ and on the number of second stage samples $\nX$, see Theorem \ref{prop:second-term}. 
More precisely, we obtained 
\[  \|\mathcal{S}_M (H_T - F_T^M )\|_{L^2_{\mu_u}}  \lesssim \log(T)B_\tau^3 \left(\frac{1}{\sqrt{M}}+\frac{1}{\sqrt{\nX}}\right)\;,\] 
\cite{nguyen2023neuronsneed} considered a similar error tern, but for neural networks instead of operators. We used improved spectral regularisation inequalities to obtain rates for second stage samples and a shorter, more concise proof than in \cite{nguyen2023neuronsneed}.

\vspace{0.3cm}

The last error term in \eqref{errordecomp} describes the generalisation error of KGD. We apply the results in \cite{nguyen2023random} and find 
that with high probability, 
\[ \|\mathcal{S}_MF_T^M-  G^*\|_{L^2_{\mu_u}}\lesssim T^{-r} \;, \]
for sufficiently many neurons, see Proposition \ref{prop:random-feature-result}.

As a result, we arrive at an overall bound of Theorem \ref{maintheo}
\[ \| G_{\theta_T}- G^*\|_{L^2_{\mu_u}}
\lesssim \log(T)B_\tau^3 \left(\frac{1}{\sqrt{M}}+\frac{1}{\sqrt{\nX}}\right) +   T^{-r} \;. \]


\section{Numerical Illustration}
\label{sec:numerics}
In this section we support our theoretical investigation that in the well-specified case, the optimal generalization properties can be achieved with a neural operator comprising only $M = O(\sqrt{\nU})$ neurons and $\nX =  O(\sqrt{\nU})$ second stage samples (see Theorem \ref{maintheo}). Remarkably, the empirical evidence presented in Figure \ref{figure1} and \ref{figure2} aligns with this theoretical prediction. More precisely we considered the one dimensional Poisson problem,
\begin{align*}
-\Delta v & =u & \text { in } & (0,1)\\
v & =0 & \text { on } & \{0,1\}.
\end{align*}
We aim to estimate the solution operator that maps the input function $u$ to the solution $v$. In the one-dimensional case with zero boundary conditions, the analytical form of the solution operator is straightforward to obtain. Specifically, we have: 
$$
v(x) = \int_{0}^1 G(x,y) u(y) dy
$$
with $G(x, y)=\frac{1}{2}(x+y-|x-y|)-x y$.

Therefore, we randomly sampled $800$ polynomials $u_i$ and computed their analytical solutions $v_i$. These functions $\{u_i,v_i\}_{i=1}^{400}$ were then used as training data to train our neural operator using GD, and a symmetric initialization (\ref{initiali}). The test set  $\{u_i,v_i\}_{i=400}^{800}$ was used for evaluation.

\begin{figure}[h]
\label{figure0}
\centering \hspace{-1.5cm}
\includegraphics[width=0.45\columnwidth, height=0.22\textheight]{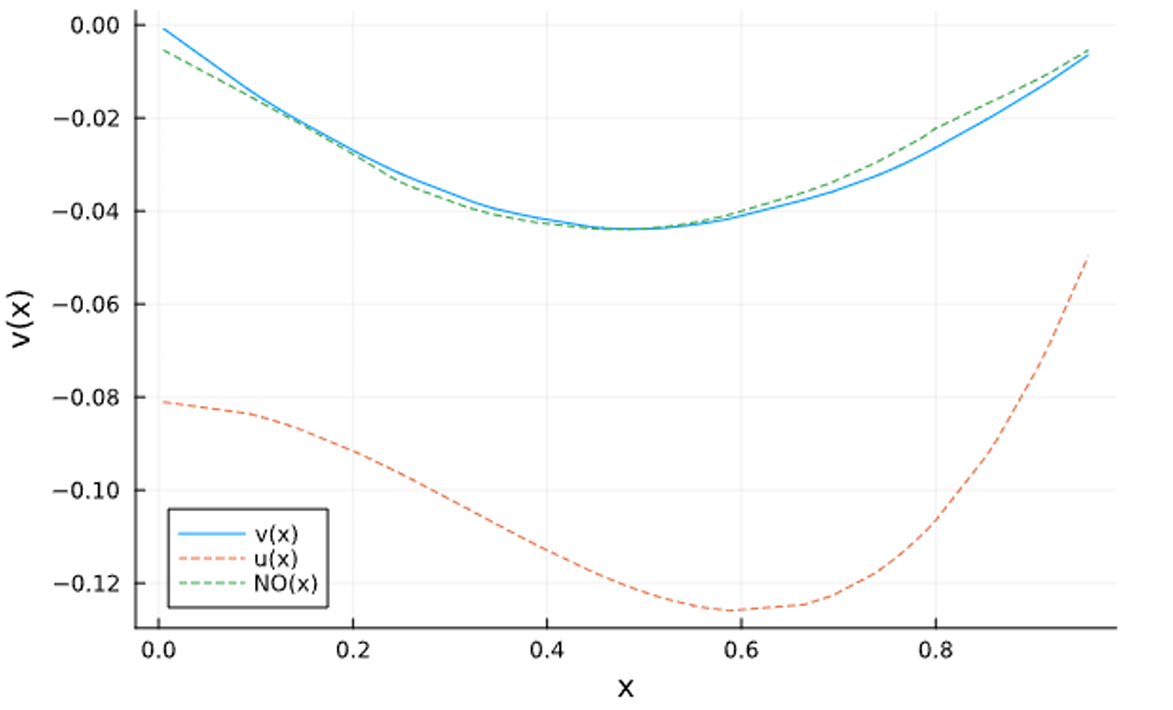}
\caption{A random realization of a polynomial $u$, its solution $v$ and the estimator $NO(u)$.} 

\end{figure}

\begin{figure}[h]
    \begin{minipage}{0.47\textwidth}
        \centering
        \includegraphics[width=\linewidth]{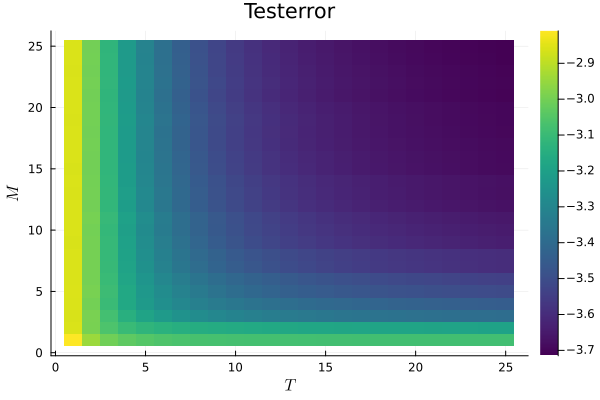}
        \caption{The logarithmic test-error for different choices of neurons $M$ and iterations $T$ and fixed $\nX = 50$.}
        \label{figure1}
    \end{minipage}%
    \hfill
    \begin{minipage}{0.47\textwidth}
        \centering
        \includegraphics[width=\linewidth]{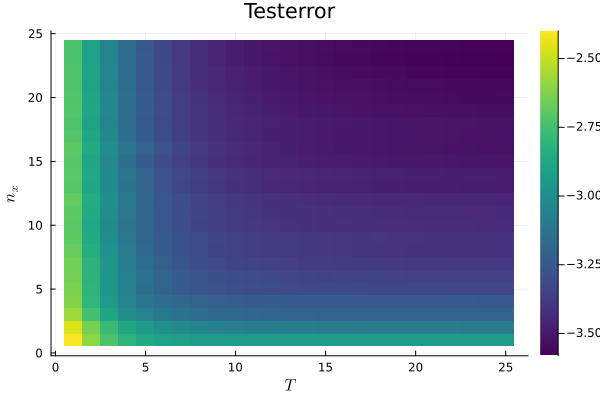}
        \caption{The logarithmic test-error for different choices of $\nX$ and iterations $T$ and fixed $M=50$.}
        \label{figure2}
    \end{minipage}
\end{figure}

\begin{figure}[h]
    \begin{minipage}{0.47\textwidth}
        \centering
        \includegraphics[width=\linewidth]{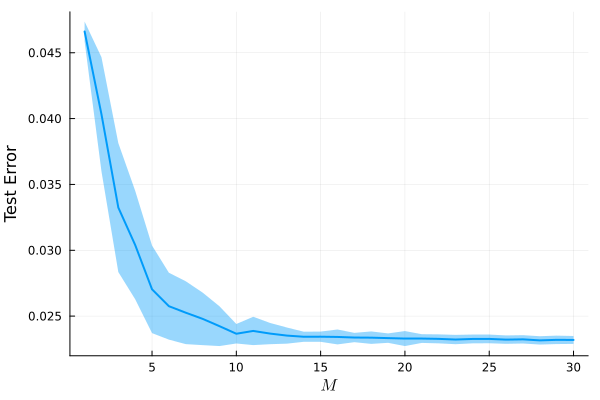}
        \caption{The test-error for different choices of $M$ and fixed $T=50$ and $n_x=50$.}
        \label{figure3}
    \end{minipage}%
    \hfill
    \begin{minipage}{0.47\textwidth}
        \centering
        \includegraphics[width=\linewidth]{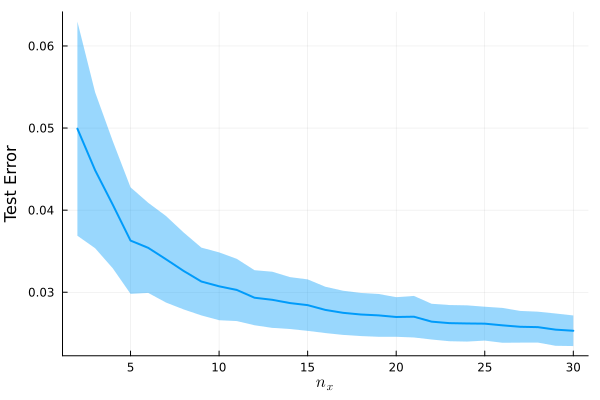}
        \caption{The test-error for different choices of $\nX$ and fixed $T=50$ and $M=50$.}
        \label{figure4}
    \end{minipage}
\end{figure}

As shown in Figures \ref{figure3} and \ref{figure4}, after reaching $\nX = M= T = \sqrt{\nU} = 20$, there is no 
further significant improvement in the neural operator’s performance. For additional numerical simulations that 
underscore the utility and significance of neural operators, see for example 
\mycite{lowery2024kernelneuraloperatorsknos,Kovachki2023NeuralOL,li2021fourier,gin2020deepgreen,10.5555/3648699.3648788,
10.1063/5.0210744,SMAI-JCM_2021__7__121_0}.


\bibliography{bib_iteration}



\appendix

\section{Preliminaries}
\label{prelimss}

For our analysis we need some further notation.  

We denote by $\mathcal{S}_M : \cH_M \hookrightarrow  L^2(\mathcal{U} , \mu_u)$ the inclusion of $\mathcal{H}_M$ into $L^2(\cU , \mu_u)$ for $M \in \mathbb{N}\cup \{\infty\}$.

The adjoint operator $\cS^{*}_M: L^{2}(\mathcal{U}, \rho_u) \longrightarrow \mathcal{H}_{M}$ is identified as
$$
\cS^{*}_M G=\int_{\mathcal{U}}  K_{M,u}G (u)\mu_u(d u)\;,
$$
where $K_{M,u}$ denotes the element of $\mathcal{H}_{M}$ equal to the function $t \mapsto K_M(u, t)$. 
The covariance operator $\Sigma_M: \mathcal{H}_{M} \longrightarrow \mathcal{H}_{M}$ and the kernel 
integral operator $\mathcal{L}_M: L^2(\cU , \mu_u) \to L^2(\cU , \mu_u) $ are given by
\begin{align}
   \Sigma_M G&\coloneqq \cS^*_M\cS_M G = \int_{\mathcal{U}}  K_{M,u} G(u)\mu_u(d u) \;, \\ 
   \mathcal{L}_M G&\coloneqq \cS_M \cS^*_M f = \int_{\mathcal{U}}  K_{M,u} G(u)\mu_u(d u) \;, \label{Intoperator}
\end{align}

which can be shown to be positive, self-adjoint, trace class (and hence is compact) (see e.g. \cite{Ingo}).
The empirical versions of these operators are given by
\begin{center}
\begin{align*}
&\widehat{\cS}_{M}: \mathcal{H}_{M} \longrightarrow \mathcal{V}^{\nU},  &&\left(\widehat{\cS}_{M} G\right)_{j}= K_{M,u_{j}}^*G, \\
&\widehat{\cS}_{M}^{*}: \mathcal{V}^{\nU} \longrightarrow \mathcal{H}_{M}, && \widehat{\cS}_{M}^{*} \mathbf{v}=\frac{1}{\nU} \sum_{j=1}^{\nU}  K_{M,u_{j}}v_{j}, \\
&\widehat{\Sigma}_{M}:=\widehat{\cS}_{M}^{*} \widehat{\cS}_{M}: \mathcal{H}_{M} \longrightarrow \mathcal{H}_{M},&& \widehat{\Sigma}_{M}=\frac{1}{\nU} \sum_{j=1}^{\nU}K_{M,u_{j}}K_{M,u_{j}}^*,
\end{align*}
\end{center}
where $ \mathcal{V}^{\nU}$ is equipped with the norm $\|\mathbf{v}\|_ {{\mathcal{V}}^{\nU}}^2:= \frac{1}{\nU}\sum_{i=1}^{\nU} \|v_i\|^2_{L^2(\rho_x)}$ .
We introduce the following definitions similar to definition 2 of  \cite{rudi2017generalization}.
This framework was originally established to bound the generalization error  $\|f_t^M-g_\rho\|$ where the function $f_t^M$ follows the tikhonov algorithm with respect to an real valued kernel. 
Fortunately these definitions will also be useful to bound the weights of the neural operator.
\begin{center}
\begin{align*}\label{weightops}
&\mathcal{Z}_M: \Theta \rightarrow \mathcal{H}_M,  &&\left(\mathcal{Z}_M \theta\right)(\cdot)=\nabla G_{\theta_{0}}(.)^{\top} \theta, \\[5pt]
&\mathcal{Z}_M^*:\mathcal{H}_M \rightarrow \Theta,
&& \mathcal{Z}_M^* G=\int_\mathcal{U}\int_\mathcal{X} \nabla G_{\theta_{0}}(u) (x)G(u)(x) d\rho_x(x)d \mu_u(u), 
\end{align*}
\end{center}
\begin{center}
\begin{align*}
&\widehat{\mathcal{Z}}_M: \Theta \rightarrow \left(\mathbb{R}^{ \nU \times \nX},\frac{1}{\sqrt{\nU \nX}}\|.\|_F\right),  &&\left(\widehat{\mathcal{Z}}_M \theta \right)_{i,j}=\nabla G_{\theta_{0}}(u_i)(x_j)^{\top} \theta, \\
&\widehat{\mathcal{Z}}_M^*: \left(\mathbb{R}^{\nU \times \nX},\frac{1}{\sqrt{\nU\nX}}\|.\|_F\right) \rightarrow \Theta,
&& \widehat{\mathcal{Z}}_M^* a=\frac{1}{\nU \nX}\sum_{i=1}^{\nU}\sum_{j=1}^{\nX}\nabla G_{\theta_{0}}(u_i)(x_j)a_{i,j} , 
\end{align*}
\end{center}
\begin{center}
\begin{align}
&\mathcal{C}_M: \Theta \rightarrow \Theta,
&& \mathcal{C}_M=\int_\mathcal{U}\int_{\mathcal{X}}\nabla G_{\theta_{0}}(u)(x) \nabla G_{\theta_{0}}(u)(x)^{\top} d \rho_x(x)d\mu_u(u),\\
&\widehat{\mathcal{C}}_M:\Theta \rightarrow \Theta,
&& \widehat{\mathcal{C}}_M=\frac{1}{\nU \nX}\sum_{i=1}^{\nU} \sum_{j=1}^{\nX} \nabla G_{\theta_{0}}(u_i)(x_j) \nabla G_{\theta_{0}}(u_i)(x_j)^{\top}.\label{Zoperators}
\end{align}
\end{center}

\begin{remark}
Note that $\mathcal{C}_M$ and $\widehat{\mathcal{C}}_M$ are self-adjoint and positive operators, 
with spectrum in $[0, \kappa^2]$ and we further have 
$\mathcal{C}_M=\mathcal{Z}_M^*\mathcal{Z}_M$,   
$\widehat{\mathcal{C}}_M=\widehat{\mathcal{Z}}_M^*\widehat{\mathcal{Z}}_M$,  
$\Sigma_M=\mathcal{Z}_M\mathcal{Z}_M^*$.  
\end{remark}

\section{Error Bounds}
\label{app:proof-error-bounds}

Recall the error decomposition from Section \ref{outline-of-proof}:

\begin{align}
\label{errordecomp2}
\|G_{\theta_T} - G^*\|_{L^2_{\mu_u}}
&\leq \underbrace{ \|G_{\theta_T}  - \mathcal{S}_M H_T\|_{L^2_{\mu_u}}}_{\mathcal{I}} + 
\underbrace{  \|\mathcal{S}_M (H_T - F_T^M )\|_{L^2_{\mu_u}}}_{\mathcal{II}}  + 
\underbrace{  \|\mathcal{S}_MF_T^M-  G^*\|_{L^2_{\mu_u}}}_{\mathcal{III}}  \;.
\end{align}

This section is devoted to bounding each term on the right hand side of \eqref{errordecomp2}. 
To this end, we need the following assumptions:

\begin{assumption}
\label{ass:Taylor-is-satisfied}
Let  $T \in \mbn$. Assume we have
\begin{equation}
\label{eq:AT}
 \forall \;\; t \in [T]\;:\; \;\; \| \theta_t -\theta_0 \|_\Theta \leq B_\tau\;,  
\end{equation} 
for some constant $B_\tau \geq 1+ \tau$.
\end{assumption}

We will show in Section \ref{app:weight-bounds} that this assumption is satisfied with high probability, as long as $M$ is large enough.


\subsection{Bounding $\mathcal{I}$} 

In this section we provide an estimate of the first error term 
$ \|G_{\theta_T} - \mathcal{S}_M H_T\|_{L^2_{\mu_u}}$ in \eqref{errordecomp2}.

\vspace{0.3cm}

\begin{proposition}
\label{prop:taylor-remainder} 
Suppose the Assumptions \ref{ass:Taylor-is-satisfied} is satisfied. 

\[   \| G_{\theta_T}  - \mathcal{S}_M H_T\|_{L^2_{\mu_u}} \leq   \frac{ C_{\nabla G}(B_\tau)B_\tau^2 }{\sqrt{M}}  \; , \]
with $C_{\nabla G}(B_\tau)\coloneqq  162 \max \left\{C_\sigma^2,C_\sigma^2\left(B_\tau^2+\tau^2 \right)\right\}.$ 
Recall that the norm is defined as $\|G \|_{L^2_{\mu_u}} ^2:= \int_{\mathcal{U}} \|G(u)\|_{L^2_{\rho_x}}^2d \mu_x $.
\end{proposition}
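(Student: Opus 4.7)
I would bound the Taylor remainder pointwise in $(u,x)$ and then integrate. By the symmetric initialization one has $G_{\theta_0}\equiv 0$, and by definition $\mathcal{S}_M H_T(u)(x)=\langle \nabla_\theta G_{\theta_0}(u)(x),\,\Delta\theta\rangle_\Theta$ with $\Delta\theta:=\theta_T-\theta_0$. Second-order Taylor's theorem therefore yields, for some $\tilde\theta$ on the segment $[\theta_0,\theta_T]$,
\[
G_{\theta_T}(u)(x)-\mathcal{S}_M H_T(u)(x)=\tfrac12\,\Delta\theta^\top\nabla_\theta^2 G_{\tilde\theta}(u)(x)\,\Delta\theta,
\]
so it suffices to bound the Hessian quadratic form uniformly over $\tilde\theta$ in the segment. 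From the parameterization $G_\theta(u)(x)=\frac{1}{\sqrt M}\sum_{m=1}^M a_m\sigma(\phi_m)$ with $\phi_m:=\langle b_m,J(u)(x)\rangle$ one directly computes a block-diagonal Hessian across neurons: $\partial_{a_m}^2 G=0$, the mixed block equals $\frac{1}{\sqrt M}\sigma'(\phi_m)J(u)(x)$, and the $b$-block equals $\frac{1}{\sqrt M}a_m\sigma''(\phi_m)J(u)(x)J(u)(x)^\top$. Writing $\tilde a_m,\tilde\phi_m$ for values at $\tilde\theta$, the quadratic form reduces to
\[
\Delta\theta^\top\nabla_\theta^2 G_{\tilde\theta}(u)(x)\Delta\theta=\frac{2}{\sqrt M}\sum_{m=1}^M \Delta a_m\,\sigma'(\tilde\phi_m)\,\langle J(u)(x),\Delta b_m\rangle+\frac{1}{\sqrt M}\sum_{m=1}^M \tilde a_m\,\sigma''(\tilde\phi_m)\,\langle J(u)(x),\Delta b_m\rangle^2.
\]

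Using $|\sigma'|,|\sigma''|\le C_\sigma$ (Assumption \ref{ass:neurons}), $\|J(u)(x)\|_2\le\|J(u)(x)\|_1\le 1$ (Assumption \ref{ass:input}), and $\|\Delta\theta\|_\Theta\le B_\tau$ (Assumption \ref{ass:Taylor-is-satisfied}), the cross term is routine: by Cauchy-Schwarz and AM-GM it is bounded by $\frac{2C_\sigma}{\sqrt M}\|\Delta a\|_2\|\Delta B\|_F\le C_\sigma B_\tau^2/\sqrt M$. The main obstacle is the $b$-$b$ block, because $\|a^{(0)}\|_2$ is of order $\tau\sqrt M$; a naive bound $\|\tilde a\|_2\|\Delta B\|_F^2$ would leave a non-vanishing $O(\tau B_\tau^2)$ term. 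The resolution is to exploit the symmetric initialization: since $|a_m^{(0)}|=\tau$ uniformly in $m$, splitting $\tilde a_m=a_m^{(0)}+s\Delta a_m$ ($s\in[0,1]$) gives
\[
\frac{C_\sigma}{\sqrt M}\sum_m |a_m^{(0)}|\|\Delta b_m\|^2=\frac{\tau C_\sigma}{\sqrt M}\|\Delta B\|_F^2\le \frac{\tau C_\sigma B_\tau^2}{\sqrt M},
\]
while for the $\Delta a$-piece I would pull out $\max_m\|\Delta b_m\|\le \|\Delta B\|_F\le B_\tau$ and apply Cauchy-Schwarz to obtain $\frac{C_\sigma B_\tau}{\sqrt M}\|\Delta a\|_2\|\Delta B\|_F\le C_\sigma B_\tau^3/\sqrt M$.

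Combining the three contributions gives a pointwise estimate of the form $C_\sigma(1+\tau+B_\tau)B_\tau^2/\sqrt M$, which can be coarsened into the claimed $C_{\nabla G}(B_\tau)B_\tau^2/\sqrt M$ with $C_{\nabla G}(B_\tau)=162\max\{C_\sigma^2,C_\sigma^2(B_\tau^2+\tau^2)\}$ (using $(1+\tau+B_\tau)\lesssim\sqrt{1+\tau^2+B_\tau^2}$ and absorbing absolute constants). Since this pointwise bound is uniform in $(u,x)\in\mathrm{supp}(\mu_u)\times\mathrm{supp}(\rho_x)$ and the probability measures $\mu_u,\rho_x$ integrate to one, the $L^2_{\mu_u}$ norm inherits the same bound, which is the statement. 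The only genuinely subtle step is the obstacle just described; the rest is bookkeeping of Cauchy-Schwarz and AM-GM.
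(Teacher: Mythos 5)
Your proposal is correct and follows essentially the same route as the paper: the paper reaches the identical pointwise quadratic remainder bound via Proposition~\ref{prop:LipschitzGradient} and Proposition~\ref{prop6} (Lipschitz gradient $\Rightarrow$ quadratic remainder), and the blockwise estimates there --- bounding the mixed $a$--$b$ block by $C_\sigma$ via Cauchy--Schwarz and handling the $b$--$b$ block with the per-neuron bound $|a_m^{(0)}|=\tau$ rather than $\|a^{(0)}\|_2=\tau\sqrt{M}$ --- are exactly the ones you perform directly on the Hessian, after which the uniform bound in $(u,x)$ is integrated just as you do. The only caveat is cosmetic: matching your $C_\sigma(1+\tau+B_\tau)$ to the stated constant $162\max\{C_\sigma^2,C_\sigma^2(B_\tau^2+\tau^2)\}$ implicitly requires $C_\sigma\gtrsim 1$, but the paper is equally loose here (it gives two inconsistent definitions of $C_{\nabla G}$).
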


\vspace{0.2cm}

\begin{proof}[Proof of Proposition \ref{prop:taylor-remainder}]
From the Assumption \ref{ass:Taylor-is-satisfied} and  Proposition \ref{prop6}, $a)$ we immediately obtain for all $x \in \cX, u \in\mathcal{U}$
\begin{align}
\label{eq:taylor2}
| G_{\theta_T}(u)(x) - \mathcal{S}_M H_T(u)(x)|  &=  |r_{(\theta_0,\theta_T)}(u)(x) | \nonumber \\
&\leq \frac{C_{\nabla G}(B_\tau)}{\sqrt{M}} \; \|\theta_T-\theta_0\|^2_\Theta \nonumber \\
&\leq  \frac{C_{\nabla G}(B_\tau)B_\tau^2 }{\sqrt{M}} \; .
\end{align}

\end{proof}


\subsection{Bounding $\mathcal{II}$} 

In this section we estimate the second term $ \|\mathcal{S}_M (H_T - F_T^M )\|_{L^2_{\mu_u}}$ in \eqref{errordecomp2}. 
A short calculation proves the following recursion:

\vspace{0.3cm}

\begin{lemma}
\label{lem:general-recursion}
Let $t \in \mbn$, $M \in \mbn$. Define $\hat  u_t:=H_t - F_t^M \in \mathcal{H}_M$. Then $(\hat  u_t)_t$ follows the recursion $\hat  u_0 = 0$ and 
\begin{align*}
 \hat u_{t+1} &= (Id - \alpha \widehat{\Sigma}_M) \hat  u_t - \alpha \mathcal{Z}_M \hat  \xi_t^{(1)}  - \alpha \mathcal{Z}_M \hat  \xi_t^{(2)} - \alpha \widehat{\cS}^*_M  \hat  \xi_t^{(3)}\\
&= \alpha \sum_{s=0}^{t-1} (Id - \alpha \widehat{\Sigma}_M)^s \left(\mathcal{Z}_M \hat  \xi_{t-s}^{(1)} + \mathcal{Z}_M \hat  \xi_{t-s}^{(2)} + \widehat{\cS}^*_M \hat  \xi_{t-s}^{(3)} \right) \;, 
 \end{align*}
where
\begin{align*}
\hat  \xi_t^{(1)} &=   \frac{1}{\nU\nX} \sum_{i=1}^{\nU}\sum_{j=1}^{\nX}(G_{\theta_t}(u_i)(x_j)  -  v_i(x_j))\left(\nabla G_{\theta_t}(u_i)(x_j) - \nabla G_{\theta_0}(u_i)(x_j) \right)\in\Theta \;, \\
\hat  \xi_t^{(2)} &= \frac{1}{\nU} \sum_{i=1}^{\nU}\left(\hat{\mathbf{s}}_{i}-\mathbf{s_i}\right)\in\Theta  \;,\\
\hat  \xi_t^{(3)} &= \bar r_{(\theta_0 , \theta_t)}  \in \cV^{\nU}\;,
\end{align*}
with  $\bar {r}_{(\theta_0 , \theta_t)} = \left(r_{(\theta_0 , \theta_t)}(u_1),\dots, r_{(\theta_0 , \theta_t)}(u_{\nU})\right) $ and  $\,\hat{\mathbf{s}}_{i}\,,\mathbf{s}_i\in \Theta$ , $\hat{\mathbf{s}}^{(m)}_{i}:=\left\langle(G_{\theta_t}(u_i) -  v_i), \partial_m G_{\theta_0}(u_i)  \right\rangle_{\nX},\,\,$ $\mathbf{s}^{(m)}_i:=\left\langle(G_{\theta_t}(u_i) -  v_i), \partial_m G_{\theta_0}(u_i)  \right\rangle_{L^2_{\rho_x}}$ .
\end{lemma}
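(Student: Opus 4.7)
My plan is to derive the one-step recursion by computing $H_{t+1}-H_t$ and $F^M_{t+1}-F^M_t$ separately, subtracting, and decomposing the result into pieces that match the three error types $\hat\xi_t^{(1)},\hat\xi_t^{(2)},\hat\xi_t^{(3)}$. The closed form then follows by iterating and using $\hat u_0 = H_0 - F^M_0 = 0$ (both vanish by definition).

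\textbf{The two updates.} Since $H_t(u)=\nabla G_{\theta_0}(u)^\top(\theta_t-\theta_0)$ is linear in $\theta_t$, the neural-operator GD recursion \eqref{GDalgor} immediately yields
\begin{equation*}
H_{t+1}(u)-H_t(u) \;=\; -\frac{\alpha}{\nU}\sum_{i=1}^{\nU}\nabla G_{\theta_0}(u)^\top\langle G_{\theta_t}(u_i)-v_i,\,\nabla G_{\theta_t}(u_i)\rangle_{\nX},
\end{equation*}
whereas unpacking the kernel identity $K_M(u,u')=\sum_p\partial_p G_{\theta_0}(u)\otimes\partial_p G_{\theta_0}(u')$ in the KGD recursion \eqref{eq:def-recurions} for $F_t^M$ (squared $L^2_{\rho_x}$-loss) gives
\begin{equation*}
F^M_{t+1}(u)-F^M_t(u) \;=\; -\frac{\alpha}{\nU}\sum_{i=1}^{\nU}\nabla G_{\theta_0}(u)^\top\langle F^M_t(u_i)-v_i,\,\nabla G_{\theta_0}(u_i)\rangle_{L^2_{\rho_x}}.
\end{equation*}
Writing $g_i := G_{\theta_t}(u_i)-v_i$, I decompose the summand of the $H$-update as
\begin{equation*}
\langle g_i,\nabla G_{\theta_t}(u_i)\rangle_{\nX} \;=\; \langle g_i,\nabla G_{\theta_t}(u_i)-\nabla G_{\theta_0}(u_i)\rangle_{\nX} + (\hat{\mathbf{s}}_i-\mathbf{s}_i) + \mathbf{s}_i,
\end{equation*}
so that averaging the first two pieces over $i$ and applying $\nabla G_{\theta_0}(u)^\top$ produces exactly $-\alpha\mathcal{Z}_M\hat\xi_t^{(1)}$ and $-\alpha\mathcal{Z}_M\hat\xi_t^{(2)}$.

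\textbf{Taylor insertion and reassembly.} For the remaining $\mathbf{s}_i$-term, I insert the Taylor identity from \eqref{eq:taylor}, $g_i = \hat u_t(u_i) + r_{(\theta_0,\theta_t)}(u_i) + (F^M_t(u_i)-v_i)$, which splits the contribution into three sub-terms that, after averaging over $i$ and applying $\nabla G_{\theta_0}(u)^\top$, equal $-\alpha\widehat{\Sigma}_M\hat u_t$, $-\alpha\widehat{\mathcal{S}}^*_M\hat\xi_t^{(3)}$, and the update $F^M_{t+1}-F^M_t$ itself (using the $F^M_t$-computation above in reverse to recognise the kernel convolution). Subtracting the $F^M$-update from the $H$-update now cancels the last sub-term and yields the claimed one-step recursion; telescoping with $\hat u_0=0$ produces the closed-form expression. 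The only real difficulty is the bookkeeping: keeping straight which inner product ($\langle\cdot,\cdot\rangle_{\nX}$ vs.\ $\langle\cdot,\cdot\rangle_{L^2_{\rho_x}}$) appears where, and matching each residual with the correct operator among $\mathcal{Z}_M,\widehat{\mathcal{S}}^*_M,\widehat{\Sigma}_M$. No new analytic ingredients are needed beyond the Taylor expansion already used to bound $\mathcal{I}$; this lemma is essentially a clean algebraic rewriting of the GD dynamics around the initialization.
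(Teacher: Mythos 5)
Your proof is correct and takes essentially the same route as the paper's: you plug the gradient-descent update into the linear functional $H_{t+1}$, peel off the gradient-mismatch error ($\hat\xi_t^{(1)}$), the discretization error ($\hat\xi_t^{(2)}$), and the Taylor remainder ($\hat\xi_t^{(3)}$), and then subtract the KGD recursion so that the surviving kernel term becomes $-\alpha \widehat{\Sigma}_M \hat u_t$. The only difference is presentational — you make explicit the algebra the paper compresses into its one-line display \eqref{shortcalc}, and your matching of inner products with the operators $\mathcal{Z}_M$, $\widehat{\cS}^*_M$, $\widehat{\Sigma}_M$ is accurate.
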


\begin{proof}[Proof of Lemma \ref{lem:general-recursion}]
Plugging in $H_t$ the recursive definition of $\theta_ t$ leads to
\begin{align}\label{shortcalc}
\nonumber H_{t+1} &= \left \langle \nabla G_{\theta_0}(u) , \theta_{t+1}-\theta_{0}\right\rangle_{\Theta}\\
& = H_t - \alpha \mathcal{Z}_M \hat  \xi_t^{(1)}  - \alpha \mathcal{Z}_M \hat  \xi_t^{(2)} - \alpha  \widehat{\cS}^*_M \hat  \xi_t^{(3)}- \frac{\alpha}{n} \sum_{i=1}^n   K_M(u_i,\,.) (H_{t}(u_i) - v_i).
\end{align}
Here we used the fact that $G_{\theta_{0}}\equiv 0$. Therefore, applying a Taylor expansion again yields $G_{\theta_{t}}=H_t + {r}_{(\theta_0 , \theta_t)}$.
Combining the above recursive formula of $H_t$ with the recursive definition of $F_t^M$ results in the first equation of the statement. The second equation can be derived by iterating the first equation $t$ times.
\end{proof}

\vspace{0.4cm}

\begin{theorem}
\label{prop:second-term}
Let $\delta \in (0,1]$,  $\alpha < 1/\kappa^2$. 
Suppose Assumptions \ref{ass:neurons}, \ref{ass:source}, \ref{ass:Taylor-is-satisfied},  are satisfied. 
Let $M\geq M_0:=(8 \tilde{d} \kappa^2 \beta_\infty \alpha T)\vee 8\kappa^4\|\mathcal{L}_\infty\|^{-1}\log^2 \frac{2}{\delta}$ with $\beta_\infty=\log \frac{4 \kappa^2(\mathcal{N}_{\mathcal{L}_\infty}(\alpha T)+1)}{\delta\|\mathcal{L}_\infty\|} $.
Then we have with probability at least $1-\delta$,
\begin{align*}
  \forall t \in [T]: \;\;\;  ||  \cS_M \hat u_{t}||_{L^2(\mu_u)} &\leq 4\xi(M,\nX,B_\tau,\kappa) \left(\log(T)+6\right) \;, 
\end{align*}
where $u_t:=H_t - F_t^M$ and
\begin{align*}
&\xi(M,\nX,B_\tau,\kappa,\delta)\\
&:=\left(\frac{C_{\nabla G}(B_\tau)B_\tau}{\sqrt{M}}\left(\kappa  B_\tau+ \frac{C_{\nabla G}(B_\tau)}{\sqrt{M}}B_\tau^2+1\right) + \frac{4\kappa^2\left(B_\tau+1\right)}{\sqrt{\nX}} \log \left(\frac{16}{\delta} \right)+ \frac{(2 \kappa+1) C_{\nabla G}(B_\tau)B_\tau^2 }{\sqrt{M}}\right).
\end{align*}
If we further assume $M\geq M_{II}:=\max\{M_0,M_1\}$ with $M_1:= 48C_{\nabla G}(B_\tau)^2B_\tau^4\kappa T^{2r}\left(\log(T)+6\right)^2$ and $\nX\geq 32\kappa^4\left(B_\tau+1\right)^2T^{2r} \left(\log(T)+6\right)^2\log \left(\frac{16}{\delta}\right)$, we have
\begin{align*}
  \forall t \in [T]: \;\;\;  ||  \cS_M \hat u_{t}||_{L^2(\mu_u)} &\leq \frac{1}{T^r} \;. 
\end{align*}
\end{theorem}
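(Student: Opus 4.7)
The plan is to use the closed-form recursion from Lemma \ref{lem:general-recursion} and, after applying the inclusion $\mathcal{S}_M$, split $\|\mathcal{S}_M \hat u_t\|_{L^2_{\mu_u}}$ via the triangle inequality into three sums indexed by the source terms $\hat\xi_s^{(1)}, \hat\xi_s^{(2)}, \hat\xi_s^{(3)}$. Each resulting sum has the form $\alpha \sum_{s=0}^{t-1} \|\mathcal{S}_M (I-\alpha \widehat\Sigma_M)^s \, T_i\| \cdot \|\hat\xi^{(i)}_{t-s}\|$, where $T_1 = T_2 = \mathcal{Z}_M$ and $T_3 = \widehat{\mathcal{S}}_M^{*}$. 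This decouples the analysis into (a) uniform bounds on the source terms and (b) operator-norm bounds on the spectral factors.

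First I would bound the source terms uniformly in $t \in [T]$, using Assumption \ref{ass:Taylor-is-satisfied}. For $\hat\xi_s^{(1)}$, the Lipschitz control of the Hessian $\nabla^2 G_\theta$ (which follows from $\|\sigma''\|_\infty \leq C_\sigma$ in Assumption \ref{ass:neurons}) gives $\|\nabla G_{\theta_s}(u_i)(x_j)-\nabla G_{\theta_0}(u_i)(x_j)\| \lesssim C_{\nabla G}(B_\tau) B_\tau / \sqrt M$, while the residual $G_{\theta_s}(u_i)(x_j)-v_i(x_j)$ is controlled uniformly by $\kappa B_\tau + C_{\nabla G}(B_\tau) B_\tau^2/\sqrt M + 1$ via a Taylor expansion around $\theta_0$ and the boundedness of $v$ from Assumption \ref{ass:input}. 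For $\hat\xi_s^{(2)}$, the difference between the empirical and the $L^2(\rho_x)$ inner products is a centered sum concentrating at the $1/\sqrt{\nX}$ rate by a vector-valued Bernstein inequality, using the uniform bound $\|\partial_m G_{\theta_0}\|_\infty \lesssim \kappa/\sqrt M$ and the residual bound $\|G_{\theta_s}(u_i)-v_i\|_\infty \lesssim \kappa(B_\tau+1)$; this is where the factor $\frac{4\kappa^2(B_\tau+1)}{\sqrt{\nX}}\log(16/\delta)$ is born. For $\hat\xi_s^{(3)}$, I would invoke Proposition \ref{prop:taylor-remainder} directly to obtain $\|\hat\xi_s^{(3)}\|_{\mathcal{V}^{\nU}} \leq C_{\nabla G}(B_\tau)B_\tau^2/\sqrt M$.

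The decisive estimate is the operator-norm bound $\alpha \sum_{s=0}^{t-1} \|\mathcal{S}_M (I-\alpha \widehat\Sigma_M)^s \widehat{\mathcal{S}}_M^{*}\|$, which produces the $\log(T)+6$ factor. Writing $\mathcal{S}_M = \mathcal{S}_M(\widehat\Sigma_M+\lambda)^{-1/2}(\widehat\Sigma_M+\lambda)^{1/2}$ and using spectral calculus on the scalar function $x\mapsto\sqrt{x+\lambda}(1-\alpha x)^s$ on $[0,\kappa^2]$, the summation telescopes against $\alpha \sum_s(1-\alpha\mu)^s\leq(\alpha\mu)^{-1}$ to give a logarithmic-in-$T$ factor. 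To transfer from the empirical $\widehat\Sigma_M$ to the population $\Sigma_M$ and further to $\mathcal{L}_\infty$, I would use the standard ``multiplier'' estimates $\|(\widehat\Sigma_M+\lambda)^{-1/2}(\Sigma_M+\lambda)^{1/2}\|$ and $\|(\Sigma_M+\lambda)^{-1/2}(\mathcal{L}_\infty+\lambda)^{1/2}\|$, both bounded by a constant with high probability provided $M \geq M_0$ and $\nU \geq \mathcal{N}_{\mathcal{L}_\infty}(\lambda)$, through operator Bernstein together with Proposition \ref{OPbound2}. The analogous bound with $\mathcal{Z}_M$ in place of $\widehat{\mathcal{S}}_M^{*}$ is obtained through $\mathcal{C}_M = \mathcal{Z}_M^{*}\mathcal{Z}_M$ and $\|\mathcal{Z}_M\|\leq \kappa$.

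The main obstacle is precisely this transfer from empirical to population covariances at the level of inverse square-root operators: it requires $M\geq M_0$ (with the prescribed $\beta_\infty$) to ensure that the effective dimension $\mathcal{N}_{\mathcal{L}_\infty}(\alpha T)$ governs the constants uniformly and without a dimension-dependent blow-up, and it is the step that dictates both conditions on $M$ and on $\nX$ appearing in the theorem. Combining the bounds on $\|\hat\xi^{(i)}\|$ with the weighted operator-norm sums, one collects the three contributions into $\xi(M,\nX,B_\tau,\kappa,\delta)$ multiplied by $\log(T)+6$, after absorbing a universal factor $4$; this establishes the first inequality. The second inequality then reduces to a direct verification: under $M \geq M_1$ and the stated lower bound on $\nX$, each of the three summands constituting $\xi$ is $\leq \tfrac{1}{4}T^{-r}(\log(T)+6)^{-1}$, so that $4\xi\,(\log(T)+6)\leq T^{-r}$.
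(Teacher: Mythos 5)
Your proposal is correct and follows essentially the same route as the paper: the recursion of Lemma \ref{lem:general-recursion} is unrolled, the three source terms $\hat\xi^{(1)},\hat\xi^{(2)},\hat\xi^{(3)}$ are bounded uniformly exactly as you describe (Lipschitz gradient plus Taylor residual for $\hat\xi^{(1)}$, a $1/\sqrt{\nX}$ Bernstein bound for $\hat\xi^{(2)}$, the Taylor remainder for $\hat\xi^{(3)}$), and the $\log(T)+6$ factor comes from the weighted operator-norm sums $\sum_s\|(\alpha\widehat\Sigma_M)^a(I-\alpha\widehat\Sigma_M)^s\,T_i\|$ combined with the empirical-to-population multiplier estimates under $M\ge M_0$ (the paper packages these as Proposition \ref{sumbound} and the cited Proposition A.15 of \cite{nguyen2023random}). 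The final verification of the $T^{-r}$ bound under $M\ge M_1$ and the stated $\nX$ condition is likewise the same direct computation.
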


\begin{proof}[Proof of Theorem \ref{prop:second-term}]
Applying \cite[Proposition A.15]{nguyen2023random}  gives (for  $\lambda:= (\alpha T)^{-1}$ and $M\geq (8 \tilde{d} \kappa^2 \beta_\infty \alpha T)\vee 8\kappa^4\|\mathcal{L}_\infty\|^{-1}\log^2 \frac{2}{\delta}$) with probability at least $1-\delta$
\begin{align}
\label{eq:from-useful}
||  \cS_M \hat u_{t}||_{L^2} &= ||\Sigma_M^{\frac{1}{2}} \hat u_{t}||_{\cH_M} 
\leq 2 \; || \widehat{\Sigma}_M ^{\frac{1}{2}} \hat u_{t} ||_{\cH_M} + 2 \; \frac{|| \hat u_{t} ||_{\cH_M}}{\sqrt{\alpha T}} \,,
\end{align}

Let $a \in \{0, 1/2\}$. Using Lemma \ref{lem:general-recursion}, we find for any $t\in[T]$,
\begin{align}
|| \widehat{\Sigma}_M ^a \hat u_{t+1} ||_{\cH_M} 
      &=  \alpha \left\|  \sum_{s=0}^{t-1} \widehat{\Sigma}_M ^a (Id - \alpha \widehat{\Sigma}_M)^s \left(\mathcal{Z}_M \hat  \xi_{t-s}^{(1)} + \mathcal{Z}_M \hat  \xi_{t-s}^{(2)} + \widehat{\cS}^*_M \hat  \xi_{t-s}^{(3)} \right)\right\|_{\cH_M} \nonumber  \\
      &\leq \alpha^{1-a}  \sum_{s=0}^{T-1} || (\alpha\widehat{\Sigma}_M )^a (Id - \alpha \widehat{\Sigma}_M)^s \mathcal{Z}_M \hat  \xi_{T-s}^{(1)}||_{\cH_M} + \nonumber \\
          &\hspace{0.2cm} \alpha^{1-a}  \sum_{s=0}^{T-1} || (\alpha\widehat{\Sigma}_M )^a (Id - \alpha \widehat{\Sigma}_M)^s \mathcal{Z}_M \hat  \xi_{T-s}^{(2)}||_{\cH_M} + \nonumber \\
        &\hspace{0.2cm}    \alpha^{1-a}  \sum_{s=0}^{T-1} || (\alpha\widehat{\Sigma}_M )^a (Id - \alpha \widehat{\Sigma}_M)^s  \widehat{\cS}^*_M \hat  \xi_{T-s}^{(3)}||_{\cH_M} \;.
\end{align}

From Proposition \ref{sumbound} we further obtain with probability at least $1-\delta$,

\begin{align}
\label{eq:from-useful2}
\| \widehat{\Sigma}_M ^a \hat u_{t+1} \|_{\cH_M} 
      \leq &\alpha^{1-a} \left(\frac{2\eta_{a+\frac{1}{2}}(T)}{\sqrt{\alpha}}+\frac{2\eta_{a}(T)}{\sqrt{\alpha T}}\right)\max_{s\leq T}\left(\|\hat  \xi_{s}^{(1)}\|_{\Theta} +\|\hat  \xi_{s}^{(2)}\|_{\Theta} +\|  \xi_{s}^{(3)}\|_{\mathcal{V}^{\nU}}\right) ,
\end{align}

\vspace{0.2cm} 
where $\eta_a$ was defined in Lemma \ref{prop1}.

{\bf Bounding $\|\hat  \xi_{t}^{(1)}\|_{\Theta}$:}
We have
\begin{align*}
\|\hat  \xi_{t}^{(1)}\|_{\Theta}& =  \left\| \frac{1}{\nU\nX} \sum_{i=1}^{\nU}\sum_{j=1}^{\nX}(G_{\theta_t}(u_i)(x_j)  -  v_i(x_j))\left(\nabla 
G_{\theta_t}(u_i)(x_j) - \nabla G_{\theta_0}(u_i)(x_j) \right)\right\|_{\Theta}\\
&\leq  \frac{1}{\nU\nX} \sum_{i=1}^{\nU}\sum_{j=1}^{\nX}\left|G_{\theta_t}(u_i)(x_j)  -  v_i(x_j)\right|\left\|\nabla 
G_{\theta_t}(u_i)(x_j) - \nabla G_{\theta_0}(u_i)(x_j) \right\|_{\Theta}.
\end{align*}
From the Assumption \ref{ass:input} we have $|v_i(x_j)|\leq1$ and from Proposition \ref{NNObound} we have  
\begin{align*}
\left|G_{\theta_t} (u)(x)\right| \leq \kappa\|\theta_t-\theta_{0}\|_{\Theta}+ \frac{C_{\nabla G}(B_\tau)}{\sqrt{M}}\|\theta_t-\theta_{0}\|^2_{\Theta},
\end{align*}
with $C_{\nabla G}(B_\tau)\coloneqq  162 \max \left\{C_\sigma^2,C_\sigma^2\left(B_\tau^2+\tau^2 \right)\right\}.$ Using these bounds we obtain 

\begin{align*}
\|\hat  \xi_{t}^{(1)}\|_{\Theta}& \leq\left(\kappa\|\theta_t-\theta_{0}\|_{\Theta}+ \frac{C_{\nabla G}(B_\tau)}{\sqrt{M}}\|\theta_t-\theta_{0}\|^2_{\Theta}+1\right)\sup_{x\in\mathcal{X},u\in\mathcal{U}}\left\|\nabla 
G_{\theta_t}(u)(x) - \nabla G_{\theta_0}(u)(x) \right\|_{\Theta}\\
&\leq \left(\kappa\|\theta_t-\theta_{0}\|_{\Theta}+ \frac{C_{\nabla G}(B_\tau)}{\sqrt{M}}\|\theta_t-\theta_{0}\|^2_{\Theta}+1\right)\left(\frac{C_{\nabla G}(B_\tau)}{\sqrt{M}}\|\theta_t-\theta_{0}\|_\Theta\right)\\
&\leq \frac{C_{\nabla G}(B_\tau)B_\tau}{\sqrt{M}}\left(\kappa B_\tau+ \frac{C_{\nabla G}(B_\tau)}{\sqrt{M}}B_\tau^2+1\right),
\end{align*}
\vspace{0.2cm} 

where we used Proposition \ref{prop:LipschitzGradient} for the second inequality.

{\bf Bounding $\|\hat  \xi_{t}^{(2)}\|_{\Theta}$:}
 By definition of $\hat  \xi_{t}^{(2)}$ we have
\begin{align}\left\|\hat  \xi_{t}^{(2)}\right\|_{\Theta} &= \left\|\frac{1}{\nU\nX} \sum_{i=1}^{\nU}\sum_{j=1}^{\nX} \nabla G_{\theta_{0}}(u_i)(x_j)(G_{\theta_t}(u_i)(x_j) -  v_i(x_j)) - \mathbb{E}_{\rho_x}\nabla G_{\theta_{0}}(u_i)(x)(G_{\theta_t}(u_i)(x) -  v_i(x))\right\|_{\Theta}   \;,\\[5pt]
&\leq \left\|\frac{1}{\nU\nX} \sum_{i=1}^{\nU}\sum_{j=1}^{\nX} \nabla G_{\theta_{0}}(u_i)(x_j)v_i(x_j)  - \mathbb{E}_{\rho_x}\nabla G_{\theta_{0}}(u_i)(x)v_i(x)\right\|_{\Theta} \;+,\\[5pt]
&\,\,\,\,\,\,\,\, \left\|\frac{1}{\nU\nX} \sum_{i=1}^{\nU}\sum_{j=1}^{\nX} \nabla G_{\theta_{0}}(u_i)(x_j)G_{\theta_t}(u_i)(x_j)  - \mathbb{E}_{\rho_x}\nabla G_{\theta_{0}}(u_i)(x)G_{\theta_t}(u_i)(x) \right\|_{\Theta}   \\[5pt]
&:=I+II.
\end{align}
For $I$ we set $W_j:=\frac{1}{\nU}\sum_{i=1}^{\nU} \nabla G_{\theta_{0}}(u_i)(x_j)v_i(x_j)$. Note that $\|W_j\|_2\leq\kappa^2:=B,\, \mathbb{E}\|W_j\|_2^2\leq\kappa^4:=V^2$. Therefore we have from Proposition \ref{concentrationineq0} with probability at least $1-\delta$,
$$
I\leq \frac{4\kappa^2}{\sqrt{\nX}} \log \left(\frac{4}{\delta} \right).
$$
For $II$, we cannot directly apply Proposition \ref{concentrationineq0}, since the summands involve $\theta_t$, which means they are no longer independent. However from Proposition \ref{cilast} we have that with probability at least $1-\delta$,
$$
II\leq \frac{4\kappa^2}{\sqrt{\nX}} \log \left(\frac{4}{\delta} \right)\left\|\theta_t-\theta_0\right\|_{\Theta}+ \frac{2 \kappa C_{\nabla G}(B_\tau)}{\sqrt{M}}\|\theta_t-\theta_{0}\|^2_{\Theta}.
$$
Therefore we have,

\begin{align}
\left\|\hat  \xi_{t}^{(2)}\right\|_{\Theta}&\leq \frac{4\kappa^2}{\sqrt{\nX}} \log \left(\frac{4}{\delta} \right)\left(\left\|\theta_t-\theta_0\right\|_{\Theta}+1\right)+ \frac{2 \kappa C_{\nabla G}(B_\tau)}{\sqrt{M}}\|\theta_t-\theta_{0}\|^2_{\Theta}\\
&\leq \frac{4\kappa^2\left(B_\tau+1\right)}{\sqrt{\nX}} \log \left(\frac{4}{\delta} \right)+ \frac{2 \kappa C_{\nabla G}(B_\tau)B_\tau^2 }{\sqrt{M}}.
\end{align}

{\bf Bounding $\|  \xi_{t}^{(3)}\|_{\mathcal{V}^{\nU}}$:}

Using Proposition \ref{prop6} we have

\begin{align*}
\left\|  \xi_{t}^{(3)}\right\|_{\mathcal{V}^{\nU}} =  \sqrt{\frac{1}{\nU}\sum_{i=1}^{\nU} \|r_{(\theta_t,\theta_{0})}(u)\|^2_{L^2(\rho_x)}}\leq \frac{C_{\nabla G}(B_\tau)B_\tau^2}{\sqrt{M}}.
\end{align*}

{\bf To sum up:}

Plugging the bounds of the errors $  \xi_{t}^{(i)}, \, i \in \{1,2,3\}$ into \ref{eq:from-useful2} leads to
\begin{align}\label{Hafer}
\| \widehat{\Sigma}_M ^a \hat u_{T+1} \|_{\cH_M} 
      &\leq \alpha^{1-a} \left(\frac{2\eta_{a+\frac{1}{2}}(T)}{\sqrt{\alpha}}+\frac{2\eta_{a}(T)}{\sqrt{\alpha T}}\right)\max_{s\leq T}\left(\|\hat  \xi_{s}^{(1)}\|_{\Theta} +\|\hat  \xi_{s}^{(2)}\|_{\Theta} +\|  \xi_{s}^{(3)}\|_{\mathcal{V}^{\nU}}\right) \\
     &\leq  \alpha^{1-a} \left(\frac{2\eta_{a+\frac{1}{2}}(T)}{\sqrt{\alpha}}+\frac{2\eta_{a}(T)}{\sqrt{\alpha T}}\right) \cdot \xi(M,\nX,B_\tau,\kappa,\delta),
\end{align}

with $$\xi(M,\nX,B_\tau,\kappa,\delta):=\left(\frac{C_{\nabla G}(B_\tau)B_\tau}{\sqrt{M}}\left(\kappa B_\tau+ \frac{C_{\nabla G}(B_\tau)}{\sqrt{M}}B_\tau^2+1\right) + \frac{4\kappa^2\left(B_\tau+1\right)}{\sqrt{\nX}} \log \left(\frac{4}{\delta} \right)+ \frac{(2 \kappa+1) C_{\nabla G}(B_\tau)B_\tau^2 }{\sqrt{M}}\right).$$

Plugging \eqref{Hafer} into \eqref{eq:from-useful} gives, 

\begin{align*}
||  \cS_M \hat u_{T}||_{L^2} &
\leq 2 \; || \widehat{\Sigma}_M ^{\frac{1}{2}} \hat u_{T} ||_{\cH_M} + 2 \; \frac{|| \hat u_{T} ||_{\cH_M}}{\sqrt{\alpha T}}\\
&\leq  4\xi(M,\nX,B_\tau,\kappa,\delta) \left(\eta_{1}(T)+\frac{\eta_{1/2}(T)}{\sqrt{T}}+\frac{\eta_{1/2}(T)}{\sqrt{T}}+\frac{\eta_{0}(T)}{T}\right) \\
&=  4\xi(M,\nX,B_\tau,\kappa,\delta) \left(\log(T)+6\right).
\end{align*}

Note that, to derive this final bound we conditioned on four events, where each of the events holds true with probability at least $1-\delta$. By Proposition \ref{conditioning}, we therefore obtain the last inequality holds with probability at least $1-4\delta$. Redefining  $\delta=4\delta$ proves the claim.
\end{proof}


\subsection{Bounding $\mathcal{III}$}

For bounding the last term in our error decomposition we use the results obtained in \cite{nguyen2023random}.

\begin{proposition}[Theorem A.4. in \cite{nguyen2023random}]
\label{prop:random-feature-result}
Suppose Assumptions \ref{ass:neurons}, \ref{ass:source}, \ref{ass:input} and \ref{ass:dim} are satisfied. 
Let $\delta \in (0,1]$, $\alpha < 1/\kappa^2$,  $T \leq C_1\nU^{\frac{1}{2r+b}}$, $2r + b >1$ and $\nU\geq n_0:= e^{\frac{2r+b}{2r+b-1}}$. With probability at least $1-\delta,$ we have
\[ \| \mathcal{S}_Mf_{T}^M-  G^*\|_{L^2(\mu_u)} \leq C_2\; \log^3(2/\delta ) \; T^{-r } \;,\]
provided that 

\begin{align}
\label{eq:assumpt-M-3}
M\geq M_{III}(T,r):= C_3\log(\nU) \cdot \begin{cases}
T&: r\in\left(0,\frac{1}{2}\right)\\
T^{1+b(2r-1)}  &: r\in\left[\frac{1}{2},1\right] \\
T^{2r} &: r \in(1,\infty)\,\\
\end{cases},
\end{align}
where the constants $C_1,C_2,C_3>0$ do not depend on $\nU,\nX, \delta, M, T$.

\end{proposition}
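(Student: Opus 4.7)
The statement is essentially a direct invocation of Theorem A.4 in \cite{nguyen2023random}, which analyzes kernel gradient descent with random features for vector-valued targets. The plan is therefore to verify that the vvNTK setup in our paper matches the random-feature/vector-valued kernel framework of that reference, and then apply the result verbatim. Concretely, I will interpret $K_M$ as a random-feature approximation to $K_\infty$: recall from Section~2.2 that $K_M(u,u') = \frac{1}{M}\sum_{m=1}^M \phi_m(u) \otimes \phi_m(u')$ for explicit feature maps depending on the iid initialization $(b_m^{(0)},a_m^{(0)})\sim \pi_0$, and $K_\infty = \mathbb{E}_{\theta_0}[K_M]$ by construction. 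Thus $F_T^M$ is precisely the KGD iterate on the random-feature kernel $K_M$, which is the exact object studied in \cite{nguyen2023random}.

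\textbf{Matching assumptions.} I would then check each hypothesis of Theorem A.4 in \cite{nguyen2023random}. (i) Boundedness: Assumption~\ref{ass:input} and~\ref{ass:neurons} give $\|K_M(u,u')\| \leq \kappa^2$ uniformly in $M$, and the outputs $v$ are bounded by $1$ in $L^2(\rho_x)$; this yields the required uniform bound on features and noise level. (ii) Source condition: Assumption~\ref{ass:source} is stated directly in terms of powers of $\mathcal{L}_\infty$, matching the source condition in \cite{nguyen2023random}. (iii) Effective dimension: Assumption~\ref{ass:dim} provides the polynomial decay $\mathcal{N}_{\mathcal{L}_\infty}(\lambda) \leq c_b \lambda^{-b}$ needed for the capacity bound. (iv) Stepsize: $\alpha < 1/\kappa^2$ is required by the KGD stability analysis. (v) Concentration of $K_M$ around $K_\infty$: Proposition~\ref{OPbound2} supplies the Hilbert--Schmidt concentration needed to translate operator-theoretic bounds on $\mathcal{L}_\infty$ to their empirical $M$-neuron counterparts, and this is exactly the ingredient powering the neuron-count regimes in \eqref{eq:assumpt-M-3}.

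\textbf{Invocation and conclusion.} With the dictionary above, Theorem A.4 of \cite{nguyen2023random} applied to $F_T^M$ with the early-stopping rule $T \leq C_1 n_{\mathcal{U}}^{1/(2r+b)}$ and the neuron threshold $M \geq M_{III}(T,r)$ gives directly the bound
\[
\|\mathcal{S}_M F_T^M - G^*\|_{L^2(\mu_u)} \leq C_2 \log^3(2/\delta) \, T^{-r},
\]
with probability at least $1-\delta$. The three-case structure in \eqref{eq:assumpt-M-3} corresponds to the interplay between misspecified ($r<1/2$), intermediate ($1/2\leq r\leq 1$), and smooth ($r>1$) regimes in the bias--variance analysis of KGD; each regime dictates how accurately $K_M$ must approximate $K_\infty$ (hence how large $M$ must be) to avoid dominating the approximation error $T^{-r}$.

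\textbf{Main obstacle.} Since we are citing a prior result, there is no serious mathematical content beyond bookkeeping. The main subtlety to watch for is that \cite{nguyen2023random} is formulated for a scalar-output nonparametric regression with random features, while here the output space $L^2(\mathcal{X},\rho_x)$ is vector-valued (infinite-dimensional). One should either appeal to the vector-valued extension of the argument or note that Assumption~\ref{ass:input} ($\mathcal{V}$ naturally embeds into $L^2(\rho_x)$ with bounded pointwise values) together with the mercer property of $K_M$ and $K_\infty$ allows the scalar proof to transfer essentially unchanged, as the KGD recursion and all operator inequalities (e.g. Cordes inequality, Bernstein-type concentration of $\widehat{\Sigma}_M$ around $\Sigma_M$) are algebraic in nature and independent of the output dimension. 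Verifying this transfer is the only real step requiring care.
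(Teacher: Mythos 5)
Your proposal matches the paper exactly: the paper gives no independent proof of this proposition but simply invokes Theorem A.4 of \cite{nguyen2023random} (adding only a remark that the weaker condition $T \leq C_1 \nU^{1/(2r+b)}$ suffices, as shown in the proof of Theorem 3.5 there), which is precisely your strategy of matching hypotheses and citing the result. Your one caveat about the scalar-versus-vector-valued output is moot, since \cite{nguyen2023random} is already formulated for vector-valued kernels, as the paper notes when motivating Assumption \ref{ass:input}.
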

\begin{remark}
In Theorem A.4. of \cite{nguyen2023random}, different assumptions on $\nU$ were proposed, however in the proof of Theorem 3.5 from \cite{nguyen2023random}, it was shown that $T \leq C_1\nU^{\frac{1}{2r+b}}$ is enough to guarantee the bound of \ref{prop:random-feature-result}.
\end{remark}

\begin{proof}[Proof of Theorem \ref{maintheo}]

Following the error decomposition of \eqref{errordecomp2} and using the bounds of Proposition \ref{prop:taylor-remainder}, \ref{prop:second-term}, \ref{prop:random-feature-result}
we obtain with probability at least $1-\delta$,
\begin{align}
\label{finbound2}
\| G_{\theta_T}- G^*\|_{L_2(\mu_x)}
&\leq \frac{ C_{\nabla G}(B_\tau)B_\tau^2 }{\sqrt{M}}  + T^{-r} + C_2\; \log^3(2/\delta ) \; T^{-r } \;\\
&\leq C\; \log^3(2/\delta ) \; T^{-r } \;.
\end{align}

Collecting the restrictions on the number of neurons and samples from the above Propositions, we obtain the final lower bounds $T \leq C_1\nU^{\frac{1}{2r+b}}$, $2r + b >1$ and $\nU\geq n_0:= e^{\frac{2r+b}{2r+b-1}}$, $\,\nX\geq 32\kappa^2\left(B_\tau+1\right)T^{2r} \left(\log(T)+6\right)\log \left(\frac{16}{\delta}\right)$.
\begin{align}\label{M0final}
M\geq M_0(T,\delta,\tilde{d},r,b,\alpha,B_\tau)&:= \max\left\{M_{II},M_{III}\right\},
\end{align}

with $M_{II}$, $M_{III}$  defined in \ref{prop:second-term} and \ref{eq:assumpt-M-3}.

\end{proof}

\begin{proof}[Proof of Theorem \ref{approxtheo}]
 We prove the existence of some $H_\infty\in\mathcal{H}_\infty$ such that with probability at least $1-\delta$ (over the initialisation),
\begin{align}\label{hjdfsdffg}
\| H_\infty-G_{\theta^*}\|_{L^2_{\mu_u}}\leq \frac{2\kappa \sqrt{\log(2/\delta)}}{M^{\frac{1}{4}}}\left(\|a^{(0)}\|+R\right)+\frac{C_\sigma R^3}{\sqrt{M}}.
\end{align}

The rest of the statement then directly follows from
\begin{align}
\|H_\infty-G^*\|_{L^2_{\mu_u}}&\leq\| H_\infty-G_{\theta^*}\|_{L^2_{\mu_u}} +\|G_{\theta^*}-G^*\|_{L^2_{\mu_u}}\\
&\leq\varepsilon+\frac{2\kappa \sqrt{\log(2/\delta)}}{M^{\frac{1}{4}}}\left(\|a^{(0)}\|+R\right)+\frac{C_\sigma R^3}{\sqrt{M}}.
\end{align}
To prove \eqref{hjdfsdffg}, we once again use a Taylor approximation to obtain
\begin{align*}
G_{\theta^*}(u)(x)&= G_{\theta_0}(u)(x)+\left\langle\nabla G_{\theta_{0}}(u)(x), \theta^*-\theta_{0}\right\rangle_{\Theta}+r_{(\theta^*,\theta_{0})}(u)(x)\\
&=H_M+r_{(\theta^*,\theta_{0})}(u)(x),
\end{align*}
with $H_M:=G_{\theta_0}(u)(x)+\left\langle\nabla G_{\theta_{0}}(u)(x), \theta^*-\theta_{0}\right\rangle_{\Theta}$.
Note that $G_{\theta_0}(u)(x)= \left\langle\nabla G_{\theta_{0}}(u)(x), (a^{(0)},0)\right\rangle_{\Theta}$, where $0\in\mathbb{R}^{M\times \tilde{d}}$ denotes the zero matrix and $(a^{(0)},0)\in \Theta$.
Therefore we have that
$$H_M \in \mathcal{H}_{M}=\left\{ H \in\, \mathcal{F}( L^2(\mathcal{X},\rho_x); L^2(\mathcal{X},\rho_x))\,\mid \,H (u)(x)\,= \, \left\langle\nabla G_{\theta_{0}}(u)(x),\tilde{\theta}\right\rangle_{\Theta} ,\, \tilde{\theta} \in \Theta\right\}.$$ Since $\mathcal{H}_M=ran\left(\mathcal{L}_M^{\frac{1}{2}}\right)$ we have some operator $F\in\, \mathcal{F}( L^2(\mathcal{X},\rho_x); L^2(\mathcal{X},\rho_x))$ with 
$H_M =  \mathcal{L}_M^{\frac{1}{2}}F$. Further Note that 
\begin{align}\label{Fnormbb}
\|F\|_{L^2_{\mu_u}}=\|\mathcal{L}_M^{\frac{1}{2}}F\|_{\mathcal{H}_M}=\|H_M\|_{\mathcal{H}_M}\leq \|a^{(0)}\|+R.
\end{align}

Now we set $H_\infty:=\mathcal{L}_\infty^{\frac{1}{2}}F$ and obtain from Proposition \ref{prop6} and \eqref{Fnormbb}

\begin{align*}
\| H_\infty-G_{\theta^*}\|_{L^2_{\mu_u}}&\leq \| H_\infty-H_M\|_{L^2_{\mu_u}}+\|r_{(\theta^*,\theta_{0})}\|_{L^2_{\mu_u}}\\
&\leq \|H_\infty-H_M\|_{L^2_{\mu_u}}+\frac{C_\sigma R^3}{\sqrt{M}} \\
&\leq \|\mathcal{L}^{\frac{1}{2}}_\infty-\mathcal{L}^{\frac{1}{2}}_M\|\left(\|a^{(0)}\|+R\right)+\frac{C_\sigma R^3}{\sqrt{M}} .
\end{align*}

From Proposition \ref{ineq1}  we have $\|\mathcal{L}^{\frac{1}{2}}_\infty-\mathcal{L}^{\frac{1}{2}}_M\|\leq \|\mathcal{L}_\infty-\mathcal{L}_M\|^{\frac{1}{2}} $
and from \cite{nguyen2023random} (Proposition A.21.) we have with probability at least $1-\delta$,
$$
\left\|\mathcal{L}_\infty-\mathcal{L}_M\right\|_{H S} \leq \frac{4\kappa^2}{\sqrt{M}} \log \frac{2}{\delta}.
$$

Therefore we have
\begin{align*}
\| H_\infty-G_{\theta^*}\|_{L^2_{\mu_u}}&\leq \frac{2\kappa \sqrt{\log(2/\delta)}}{M^{\frac{1}{4}}}\left(\|a^{(0)}\|+R\right)+\frac{C_\sigma R^3}{\sqrt{M}} ,
\end{align*}

what proves the claim.

\end{proof}

\section{Bounds on the Weights}
\label{app:weight-bounds}


\begin{proof}[Proof of Theorem \ref{weighttheo}] 
We prove this by induction over $T \in \mbn$.

Assume that the claim holds for $T \in \mbn$. We show that this implies the claim for $T+1$, for all $T \in \mbn$.

A short calculation, similar to \ref{shortcalc} shows that the weights $(\theta_t)_t$ follow the recursion 
\[ \theta_{T+1} - \theta_0 = \alpha \sum_{t=0}^{T}(Id - \alpha \widehat{\cC}_M)^t ( \zeta^{(1)}_{T-t} + 
 \zeta ^{(2)}_{T-t} + \zeta ^{(3)}_{T-t} + \zeta ^{(4)}_{T-t} + \zeta ^{(5)}_{T-t} )\;, \]
with
\begin{align*}
\zeta^{(1)}_{s} &= \frac{1}{\nU\nX}\sum_{i=1}^{\nU} \sum_{j=1}^{\nX}\left( G_{\theta_s}(u_i)(x_j) - G^*(u_i)(x_j)\right) \left(\nabla G_{\theta_s} (u_i)(x_j) - \nabla G_{\theta_0}(u_i)(x_j )\right) \;, \\
\zeta^{(2)}_{s} &= \frac{1}{\nU\nX}\sum_{i=1}^{\nU} \sum_{j=1}^{\nX} (G^*(u_i) (x_j)- v_i(x_j)) (\nabla G_{\theta_s} (u_i)(x_j) - \nabla G_{\theta_0}(u_i) (x_j))\;, \\
\zeta^{(3)}_{s} &= \widehat{\cZ}^*_M \bar r_{(\theta_ 0 , \theta_{s}) }, \,\quad \quad\quad \quad\quad \quad  (\bar r_{(\theta_ 0 , \theta_{s}) })_{i,j}:= r_{(\theta_ 0 , \theta_{s}) }(u_i)(x_j)\\
\zeta^{(4)}_{s} &=  \widehat{\cZ}^*_M \bv - \cZ_M^* G^* , \quad \quad\quad \quad \bv_{i,j}:= v_i(x_j) \\
\zeta^{(5)}_{s} &=  \cZ_M^* G^* \;. 
\end{align*}
Hence, 
\begin{align*}
||  \theta_{T+1} - \theta_0  ||_\Theta 
&\leq \alpha \sum_{t=0}^{T}\left\|  (Id - \alpha \widehat{\cC}_M)^s\zeta^{(1)}_{T-t} \right\|_\Theta   
  + \alpha  \sum_{t=0}^{T} \left\|  (Id - \alpha \widehat{\cC}_M)^s\zeta^{(2)}_{T-t} \right\|_\Theta   \nonumber \\
& \;\;\; + \alpha   \sum_{t=0}^{T} \left\|   (Id - \alpha \widehat{\cC}_M)^s\zeta^{(3)}_{T-t} \right\|_\Theta    
+   \alpha \sum_{t=0}^{T} \left\|   (Id - \alpha \widehat{\cC}_M)^s\zeta^{(4)}_{T-t} \right\|_\Theta \nonumber \\
& \;\;\; + \alpha \sum_{t=0}^{T} \left\|   (Id - \alpha \widehat{\cC}_M)^s\zeta^{(5)}_{T-t} \right\|_\Theta \;.
\end{align*}

\vspace{0.3cm}

{\bf Bounding $\alpha \sum_{s=0}^{t}\left\|  (Id - \alpha \widehat{\cC}_M)^s\zeta^{(1)}_{t-s} \right\|_\Theta  $.}

First, note that 
\begin{align*}
 \alpha \sum_{t=0}^{T}\left\|  (Id - \alpha \widehat{\cC}_M)^t\zeta^{(1)}_{T-t} \right\|_\Theta 
&\leq \alpha \sum_{t=0}^{T} \left\| (Id - \alpha \widehat{\cC}_M)^{T-t}  \right\| \cdot ||\zeta^{(1)}_{t}||_\Theta \;\\
&\leq \alpha \sum_{t=0}^{T}  ||\zeta^{(1)}_{t}||_\Theta \;.
\end{align*}
Note that we already bounded $\zeta^{(1)}_{t}$ in the proof of Theorem \ref{prop:second-term}. However, to bound the weights, we need a more refined bound. From Proposition \ref{xi1bound} we obtain for $T \leq C_{\alpha,r,\kappa} \nU^{\frac{1}{2r+b}}$ , $\,\nX \geq t^{2\max\{0, \frac{1}{2}-r\}}$ and $M \geq M_{III}$ with $M_{III}$ from Theorem \ref{prop:random-feature-result}, that there exists some $\tilde{C}_{\alpha,\tau,\sigma,r,\kappa}>0$, such that with probability at least $1-\delta$,
\begin{align*}
 &\alpha \sum_{t=0}^{T}  ||\zeta^{(1)}_{t}||_\Theta\\
 &\leq \tilde{C}_{\alpha,\sigma,r,\kappa} \left(\frac{T B_\tau^5+TB_\tau^3 T^{\max\{0, \frac{1}{2}-r\}}}{M}\,+\,\left(\frac{T\log(T)B_\tau^2}{\sqrt{\nU M}} +\frac{T\log(T)B_\tau^2}{\sqrt{\nX M}}\right) \log \left(\frac{12}{\delta} \right) \,+\,\frac{\eta_r(T)B_\tau^2}{\sqrt{M}} \right)\,.
\end{align*}

If further
\begin{align}\label{Mzeta1}
&M\geq M(\zeta^{(1)}):=\max\{M_{III},M_{\alpha,\tau,\sigma,r,\kappa}\},  \\
&\nU \geq  \nU (\zeta^{(1)}):=\max\{C_{\alpha,r,\kappa}^{-1}T^{2r+b},T\log^2(T)\log(12/\delta)\},\\
&\nX \geq  \nX (\zeta^{(1)}):=\max\{T^{\max\{0,1-2r\}},T\log^2(T)\log(12/\delta)\}\\
\end{align}

where $M_{\alpha,\tau,\sigma,r,\kappa}:=C_{\alpha,\sigma,r,\kappa} B_\tau^4 T $ for some $C_{\alpha,\sigma,r,\kappa} >0$, we have
\begin{align*}
 &\alpha \sum_{t=0}^{T}  ||\zeta^{(1)}_{t}||_\Theta \leq B_\tau/5\,.
\end{align*}
\vspace{0.3cm}

{\bf Bounding $\alpha \sum_{t=0}^{T}\left\|  (Id - \alpha \widehat{\cC}_M)^t\zeta^{(2)}_{T-t} \right\|_\Theta  $.} 

From Lemma \ref{xi2bound}, we obtain with probability at least $1-\delta$, 
\[ \max_{t\in [T]}\| \zeta_{t}^{(2)} \|_{\Theta}  \leq \frac{\tilde{C}_{ \sigma}   \,\tilde{d}\, B^2_\tau}{\sqrt{\nU  \cdot M}} \; \;\sqrt{\log(\tilde{d}/\delta)}  \;,\] 
Hence, 
\begin{align}
\label{eq:the-second}
 \alpha \sum_{t=0}^{T}\left\|  (Id - \alpha \widehat{\cC}_M)^t\zeta^{(2)}_{T-t} \right\|_\Theta 
&\leq \alpha \sum_{t=0}^{T} \left\| (Id - \alpha \widehat{\cC}_M)^{T-t}  \right\| \cdot ||\zeta^{(2)}_{t}||_\Theta \nonumber \\
&\leq  \frac{\alpha \tilde{C}_{ \sigma}   \,\tilde{d}\, B^2_\tau T}{\sqrt{\nU  \cdot M}} \; \;\sqrt{\log(\tilde{d}/\delta)}    \nonumber \\
&\leq \frac{1}{5} B_\tau, 
\end{align}

provided that $\nU \geq  \nU (\zeta^{(1)})$, $M \geq M(\zeta^{(2)})$, with  
\begin{align}\label{Mzeta2}
M(\zeta^{(2)}) &= C_{ \alpha,\sigma}   \, B^2_\tau  \,T\,\tilde{d}^2\,\log(\tilde{d}/\delta) \;, 
\end{align}
for some $C_{ \alpha,\sigma}  >0$.

\vspace{0.3cm}

{\bf Bounding $\alpha \sum_{t=0}^{T}\left\|  (Id - \alpha \widehat{\cC}_M)^t\zeta^{(3)}_{T-t} \right\|_\Theta  $.}

Since with probability at least $1-\delta/5$, for all $t \in [T]$, 
\[ \| \theta_t - \theta_0 \|_\Theta \leq B_\tau \;, \]

we obtain from Proposition \ref{sumbound} and \ref{prop6},

\begin{align}
\label{eq:the-third}
\alpha \sum_{t=0}^{T}\left\|  (Id - \alpha \widehat{\cC}_M)^t\zeta^{(3)}_{T-t} \right\|_\Theta
&=  \alpha \sum_{t=0}^{T}\left\|  (Id - \alpha \widehat{\cC}_M)^{T-t}\zeta^{(3)}_{t} \right\|_\Theta \nonumber \\
&\leq  \frac{\sqrt{\alpha}}{\sqrt {\nU\nX}} \sum_{t=0}^{T} \left\| (Id - \alpha \widehat{\cC}_M)^{T-t} (\sqrt{\alpha}\widehat{\cZ}_M^*)\right\| 
\cdot \| \bar r _{(\theta_0 , \theta_s)}\|_F \nonumber \\ 
&\leq   \frac{2\sqrt{\alpha T}}{\sqrt {\nU\nX}} 
\cdot \max_{s \in [T]} \| \bar r _{(\theta_0 , \theta_s)}\|_F \nonumber \\ 
&\leq   2\sqrt{\alpha T} \frac{C_{\nabla G}(B_\tau)}{\sqrt{M}} \max_{s \in [T]}\|\theta_s-\theta_{0}\|^2_{\Theta}\nonumber \\ 
&\leq \frac{1}{5} \;  B_\tau \;,
\end{align}

provided that $M \geq M(\zeta^{(3)})$, with  
\begin{align}\label{Mzeta3}
M(\zeta^{(3)}) &= C_{ \alpha,\sigma}   \, B^4_\tau  T\;, 
\end{align}
for some $C_{ \alpha,\sigma}>0$.
\vspace{0.3cm}

{\bf Bounding $\alpha \sum_{s=0}^{t}\left\|  (Id - \alpha \widehat{\cC}_M)^s\zeta^{(4)}_{t-s} \right\|_\Theta  $.}

\begin{align*}
 \alpha \sum_{t=0}^{T}\left\|  (Id - \alpha \widehat{\cC}_M)^t\zeta^{(4)}_{T-t} \right\|_\Theta 
&= \alpha \sum_{t=0}^{T}\left\|  (Id - \alpha \widehat{\cC}_M)^t ( \widehat{\cZ}^*_M \by - \cZ_M^* G^* )   \right\|_\Theta  \\
&\leq  \alpha \sum_{t=0}^{T}\left\|  (Id - \alpha \widehat{\cC}_M)^t \widehat{\cC}_{M, \lam}^{1/2} \right\| 
\cdot || \widehat{\cC}_{M, \lam}^{-1/2} \cC_{M, \lam}^{1/2} || \nonumber \\ 
& \;\;\;\; \cdot  
\left\|\mathcal{C}_{M, \lambda}^{-1 / 2}\left(\widehat{\mathcal{Z}}_M^* y-\mathcal{Z}_M^* G^*\right) \right\|_\Theta \;.
\end{align*}

For the first term we  obtain from \eqref{Spinatknödel},
\begin{align*}
 \alpha \sum_{t=0}^{T}\left\|  (Id - \alpha \widehat{\cC}_M)^t \widehat{\cC}_{M, \lam}^{1/2} \right\| 
&\leq \sqrt{ \alpha} \sum_{t=0}^{T}\left\|  (Id - \alpha \widehat{\cC}_M)^t (\alpha\widehat{\cC}_{M})^{1/2} \right\| + 
 \alpha \sqrt{\lambda} \sum_{t=0}^{T}\left\|  (Id - \alpha \widehat{\cC}_M)^t\right\| \\
&\leq 2 \; \sqrt{ \alpha T}  + \sqrt{\lam } \cdot (\alpha T) \\
&= 4 \; \sqrt{ \alpha T}\;,
\end{align*}
if we assume that $\lam = 1/(\alpha T)$.

From Proposition \ref{OPboundevents0}, with probability at least $1-\delta$ we have for $\lam = 1/(\alpha T)$,
\[ || \widehat{\cC}_{M, \lam}^{-1/2} \cC_{M, \lam}^{1/2} || \leq 2 \;,\]
if $M\geq C_{\kappa,\|\mathcal{L}_{\infty}\|}\log^2 \frac{1}{\delta}$ and  $\nU,\nX\geq C_{\kappa,\|\mathcal{L}_{\infty}\|}T \log(T)\log^2\left(1/\delta\right)$ for some $C_{\kappa,\|\mathcal{L}_{\infty}\|}>0$.

Applying Proposition \ref{prop:intermediate} for $\lam = 1/(\alpha T)$ gives with probability at least $1-\delta$,

\begin{align*}
&\left\|\mathcal{C}_{M, \lambda}^{-1 / 2}\left(\widehat{\mathcal{Z}}_M^* \mathbf{v}-\mathcal{Z}_M^* G^*\right)\right\|\\
 &\leq 
2 \left(\kappa\sqrt{\alpha T} \left(\frac{1}{\nU}+\frac{1}{\nX}\right)+5\sqrt{\cN_{\mathcal{L}_{\infty}}((\alpha T)^{-1})}\left(\frac{1}{\sqrt{\nU}}+\frac{1}{\sqrt{\nX}}\right)\right) \log ^{3/2}\left(\frac{12}{\delta} \right)\;  \\
&\leq 
\frac{1}{8\sqrt{\alpha T}}+10\sqrt{\cN_{\mathcal{L}_{\infty}}((\alpha T)^{-1})}\left(\frac{1}{\sqrt{\nU}}+\frac{1}{\sqrt{\nX}}\right)\log ^{3/2}\left(\frac{12}{\delta} \right)\; ,
\end{align*}

if $M\geq C_{\alpha,\kappa,\|\mathcal{L}_{\infty}\|}\tilde{d}T\log(T)\log(1/\delta),\,\nU,\nX\geq C_{\alpha,\kappa}T\log^2\left(1/\delta\right)$ for some $C_{\alpha,\kappa,\|\mathcal{L}_{\infty}\|},C_{\alpha,\kappa}>0$.

Collecting all pieces gives with probability at least $1-3\delta$
\begin{align}
\label{eq:the-fourth}
&\alpha \sum_{t=0}^{T}\left\|  (Id - \alpha \widehat{\cC}_M)^t\zeta^{(4)}_{T-t} \right\|_\Theta \\
&\leq 8\sqrt{\alpha T}\left(\frac{1}{8\sqrt{\alpha T}}+10\sqrt{\cN_{\mathcal{L}_{\infty}}((\alpha T)^{-1})}\left(\frac{1}{\sqrt{\nU}}+\frac{1}{\sqrt{\nX}}\right)\log ^{3/2}\left(\frac{12}{\delta} \right)\right)\\
&\leq B_\tau/5  \;,
\end{align}
under the assumptions,

\begin{align}\label{Mzeta4}
&M\geq M(\zeta^{(4)}):=C_{\alpha,\kappa,\|\mathcal{L}_{\infty}\|}\tilde{d}T\log(T)\log^2(1/\delta),  \\
&\nU \geq  \nU (\zeta^{(4)}):=C_{\alpha,\kappa,\|\mathcal{L}_{\infty}\|}T \log(T)\log^2\left(1/\delta\right),\\
&\nX \geq  \nX (\zeta^{(4)}):= C_{\alpha,\kappa,\|\mathcal{L}_{\infty}\|}T \log(T)\log^2\left(1/\delta\right)
\end{align}
for some for some $C_{\alpha,\kappa,\|\mathcal{L}_{\infty}\|}>0$.

\vspace{0.3cm}

{\bf Bounding $\alpha \sum_{s=0}^{t}\left\|  (Id - \alpha \widehat{\cC}_M)^s\zeta^{(5)}_{t-s} \right\|_\Theta  $.}

We have 
\begin{align*}
 \alpha \sum_{t=0}^{T}\left\|  (Id - \alpha \widehat{\cC}_M)^t\zeta^{(5)}_{T-t} \right\|_\Theta 
&= \alpha \sum_{t=0}^{T}\left\|  (Id - \alpha \widehat{\cC}_M)^t \cZ_M^* G^*   \right\|_\Theta  \nonumber \\
&\leq  \alpha \sum_{t=0}^{T}\left\|  (Id - \alpha \widehat{\cC}_M)^t \widehat{\cC}_{M, \lam}  \right\|
\cdot \left\|  \widehat{\cC}_{M, \lam}^{-1}\cC_{M, \lam}   \right\|  \nonumber \\
& \;\;
\cdot\left\|  \cC_{M, \lam} ^{-1} \cZ^*_M \cL_{M, \lam}^{\frac{1}{2}}  \right\|
\cdot \left\| \cL_{M, \lam}^{-\frac{1}{2}}\cL_{\infty, \lam}^{\frac{1}{2}} \right\|  
\cdot \left\|  \cL_{\infty, \lam}^{-\frac{1}{2}}  G^* \right\|_{L^2}  \;.
\end{align*}
For the first term we obtain 
\begin{align*}
\alpha \sum_{t=0}^{T}\left\|  (Id - \alpha \widehat{\cC}_M)^t \widehat{\cC}_{M, \lam}  \right\|_\Theta 
&\leq  \sum_{t=0}^{T}\left\|  (Id - \alpha \widehat{\cC}_M)^t (\alpha \widehat{\cC}_{M}) \right\|_\Theta + 
   \alpha \lam \cdot  \sum_{t=0}^{T}\left\|  (Id - \alpha \widehat{\cC}_M)^t\right\|_\Theta \\
&\leq 2\log(T) + \lam (\alpha T )\\
&\leq 3\log(T) \;,
\end{align*}
if we again let $\lam = 1/(\alpha T)$.


From Proposition \ref{Opbound5} we find with probability at least $1-4\delta$ 
\begin{align}
\left\|\widehat{\mathcal{C}}_{M,\lambda}^{-1} \mathcal{C}_{M, \lambda}^{1}\right\|&\leq  2 \left(\kappa^2\left(\frac{\alpha T}{\nU}+\frac{\alpha T}{\nX}\right)+5\kappa\sqrt{\cN_{\mathcal{L}_{\infty}}((\alpha T)^{-1})}\left(\sqrt{\frac{\alpha T}{\nU}}+\sqrt{\frac{\alpha T}{\nX}}\right)\right) \log ^{3/2}\left(\frac{12}{\delta} \right)\\
&\leq 1+ 10\kappa\sqrt{\cN_{\mathcal{L}_{\infty}}((\alpha T)^{-1})}\left(\sqrt{\frac{\alpha T}{\nU}}+\sqrt{\frac{\alpha T}{\nX}}\right)\log ^{3/2}\left(\frac{12}{\delta} \right)
\end{align}
if $M\geq M(\zeta^{(4)}),\nU \geq  \nU (\zeta^{(4)}),\nX \geq  \nX (\zeta^{(4)})$.

From Proposition \ref{Opbound1} we obtain almost surely 
\[ \|\mathcal{C}_{M,\lambda}^{-1}\mathcal{Z}^*_M\mathcal{L}_{M,\lambda}^{\frac{1}{2}}\|\leq 2 \;. \]

Furthermore, \cite[Proposition A.14]{nguyen2023random} gives with $1-\delta$ 
\[ \| \cL_{M, \lam}^{-\frac{1}{2}}\cL_{\infty, \lam}^{\frac{1}{2}}  \| \leq 2\;. \]

By Assumption \ref{ass:source} and Proposition \ref{rboundpropo},

\[ \left\| \cL_{\infty, \lam}^{-\frac{1}{2}}  G^* \right\|_{L^2} \leq  R\kappa^r(\alpha T)^{(\frac{1}{2}-r)^{+}} .\]

Combining the previous bounds gives for $\lam = 1/(\alpha T)$ with probability at least $1-5\delta$
\begin{align}
\label{eq:the-fith}
 \alpha \sum_{t=0}^{T}\left\|  (Id - \alpha \widehat{\cC}_M)^t\zeta^{(5)}_{T-t} \right\|_\Theta 
&\leq 12\log(T) R\kappa^r(\alpha T)^{(\frac{1}{2}-r)^{+}}\cdot \cB_\delta\left(\frac{1}{\alpha T}\right)  \;,\\
&= \frac{1}{5}C_{R,r,\kappa,\alpha} \log(T) T^{(\frac{1}{2}-r)^{+}}\cdot \cB_\delta\left(\frac{1}{\alpha T}\right)\\
& = \frac{B_\tau}{5}
\end{align}
where we set 
\begin{align}
\label{eq:def-Bdelta}
 \cB_\delta\left(\frac{1}{\alpha T} \right) &:=  1+ 10\kappa\sqrt{\cN_{\mathcal{L}_{\infty}}((\alpha T)^{-1})}\left(\sqrt{\frac{\alpha T}{\nU}}+\sqrt{\frac{\alpha T}{\nX}}\right)\log ^{3/2}\left(\frac{12}{\delta} \right)\;,\\
 C_{R,r,\kappa,\alpha}&:=60R\kappa^r\alpha^{(\frac{1}{2}-r)^+}
\end{align}
\vspace{0.3cm}


{\bf Collecting everything.}
Finally, combining the bounds of all error terms and plugging in $\lam = 1/(\alpha T)$ gives with probability at least $1-10\delta$,  for all $t \in [T+1]$
\begin{align*}
\| \theta_ t  - \theta_0\|_\Theta 
&\leq  \;  B_\tau  
\end{align*}

provided that 
\begin{align}\label{Mnfinbounds}
M &\geq \max \left\{M(\zeta^{(j)}) \; : \; j =1, 2,3, 4 \right\} ,\\
\nU &\geq \max \left\{\nU(\zeta^{(j)}) \; : \; j =1, 4 \right\} ,\\
\nX &  \geq \max \left\{\nX(\zeta^{(j)}) \; : \; j =1, 4 \right\} .
\end{align}

\end{proof}


\vspace{0.4cm}

\begin{proof}[Proof of Corollary \ref{cor:weights}]
Recall that we assume $ T = c \nU^{\frac{1}{2r +b}}$.
First we have to check if in this case the assumption of Theorem \ref{weighttheo}:
$\nU \geq \max \left\{\nU(\zeta^{(j)}) \; : \; j =1,  4 \right\}$ is fullfilled for our choice of $T$. Plugging in the definitions of  $\nU(\zeta^{(j)})$ shows that $\nU$ must be larger than
\begin{align*}
\nU &\geq \max\{C_{\alpha,r,\kappa}^{-1}T^{2r+b},T\log(T)\log(12/\delta),C_{\alpha,\kappa,\|\mathcal{L}_{\infty}\|}T \log(T)\log^2\left(1/\delta\right)\}
\end{align*}
Therefore it is enough if 
$$
\nU\geq C_{\alpha,r,\kappa,\|\mathcal{L}_{\infty}\|}T^{2r+b} = C_{\alpha,r,\kappa,\|\mathcal{L}_{\infty}\|} \nU .
$$

for some constant $C_{\kappa,r,b,\alpha,\|\mathcal{L}_{\infty}\|}>0$ as long as $\nU\geq n_1(\delta)$ is large enough so that 
$$
\left(\log( T_{n_1})^2 T_{n_1} \log^2\left(\frac{1}{\delta}\right)\right) \leq  T_{n_1}^{2r+b}
$$ holds true. Therefore if $c\leq C_{\kappa,r,b,\alpha,\|\mathcal{L}_{\infty}\|}^{-1}$ our choice of $T$ fulfills the assumption of $\nU$. Now we can prove the statement.

\begin{enumerate}
\item  Let $r > \frac{1}{2}$. A short calculation shows that
\begin{align*}
 \cB_\delta\left(\frac{1}{\alpha T} \right) =  1+ 10\kappa\sqrt{\cN_{\mathcal{L}_{\infty}}((\alpha T)^{-1})}\left(\sqrt{\frac{\alpha T}{\nU}}+\sqrt{\frac{\alpha T}{\nX}}\right)\log ^{3/2}\left(\frac{12}{\delta} \right)\\
 \leq 1+ C_{\alpha,b,\kappa}\left(\sqrt{\frac{T^{1+b}}{\nU}}+\sqrt{\frac{T^{1+b}}{\nX}}\right)\log ^{3/2}\left(\frac{12}{\delta} \right)\\
\leq 1+ C_{\alpha,b,\kappa}\left(\nU^{\frac{1-2r}{4r+2b}}+\sqrt{\frac{T^{1+b}}{\nX}}\right)\log ^{3/2}\left(\frac{12}{\delta} \right)\leq 2
\end{align*}
if we let $c\leq 1$ and 
\begin{align*}
\nU &\geq n_2:= \left(2C_{\alpha,b,\kappa}\log ^{3/2}\left(\frac{12}{\delta} \right)\right)^{\frac{4r+2b}{2r-1}},\\
\nX & \geq \left(2C_{\alpha,b,\kappa}\log ^{3/2}\left(\frac{12}{\delta} \right)\right)^{2}T^{1+b}.
\end{align*}
Hence, $B_\tau(\delta , T) \leq C_{R,r,\kappa,\alpha} \log(T) $. 

For $r=\frac{1}{2}$ we similar obtain $\cB_\delta(T) \leq 2$ if we let $c\leq C_{\alpha,\kappa,b,\delta}$ be small enough.

We finally bound the number of neurons that are required for  achieving this rate from Theorem \ref{weighttheo}. 
We have
\begin{align*}
&\max \left\{M(\zeta^{(j)}) \; : \; j =1, 2,3, 4 \right\}\\
&\leq\max\Biggl\{C_3\log(T)T^{2r},\,C_{\alpha,\sigma,r,\kappa} B_\tau^4 T , \,C_{ \alpha,\sigma}   \, B^2_\tau  \,T\,\tilde{d}^2\,\log(\tilde{d}/\delta) ,\, C_{\alpha,\kappa,\|\mathcal{L}_{\infty}\|}\tilde{d}T\log(T)\log^2(1/\delta)\Biggr\}
\end{align*}
Bounding the maximum gives that it is enough if
\[M \geq C_{\kappa, \alpha ,r,\sigma,\|\mathcal{L}_\infty\|}\; \tilde{d}^2 \log^4(T) T^{2r} \log^2(\tilde{d}/\delta)  \;,\]

for some $ C_{\kappa, \alpha ,r,\sigma,\|\mathcal{L}_\infty\|}>0$. Similar we obtain for number of second stage samples that are required,
\begin{align*}
\nX\geq \max\left\{\,T\log(T)^2\log(12/\delta),\, C_{\alpha,\kappa,\|\mathcal{L}_{\infty}\|}T \log(T)\log^2\left(1/\delta\right),\left(2C_{\alpha,b,\kappa}\log ^{3/2}\left(\frac{12}{\delta} \right)\right)^{2}T^{1+b}\right\}
\end{align*}
Bounding the maximum gives that it is enough if
\begin{align*}
\nX\geq C_{\alpha,b,\kappa,\|\mathcal{L}_{\infty}\|} T^{1+b}\log^3(1/\delta),
\end{align*}

if $\nU\geq n_3(\delta)$ is large enough so that 
$$
\log^2( T_{n_3}) T_{n_3}  \leq  T_{n_3}^{1+b}
$$ holds true.

\item If $r < \frac{1}{2}$, then similar as before we have
\begin{align*}
 \cB_\delta (\alpha T) 
& \; \leq C_{\alpha,b,\kappa}T^{\frac{1}{2}-r}\\
\end{align*} 
for some $C_{\alpha,b,\kappa}>0$, if $c\leq \log^{-3}(1/\delta)$ and  $c^{-1}T^{2r+b} \leq  \nX$.
Hence, $B_\tau(T) \leq C_{R,r,\kappa,\alpha,b} \log(T) T^{1-2r}$. 
A lower bound for the number of neurons follows the same way as in the other case:   
\begin{align*}
M &\geq C_{\kappa, \alpha ,r,\sigma,\|\mathcal{L}_\infty\|}\; \tilde{d}^2 \log^4(T) T^{5-8r} \log^2(\tilde{d}/\delta) ,\\
\nX&\geq C_{\alpha,b,\kappa,\|\mathcal{L}_{\infty}\|}T^{2r+b}\log^3(1/\delta).
\end{align*}
To sum up the statement holds true if we set 
\begin{align*}
c&:=\min\left\{1,C_{\kappa,r,b,\alpha,\|\mathcal{L}_{\infty}\|}^{-1}, C_{\alpha,\kappa,b,\delta}\right\},\\
n&\geq n_0:=\max\{n_1,n_2,n_3\}.
\end{align*}

\end{enumerate}
\end{proof}

\subsection{Error bounds of the weights}

\begin{lemma}[\cite{nguyen2023random}]
\label{lem:norm-fstar}
Let
 \begin{align}\label{Fstar}
F^*_t := \cS_M^* \phi_t(\cL_M) G^*  \in \cH_M \;,
\end{align}
where $\phi_t$ denotes the spectral regularization function associated to gradient descent , see e.g. \cite{Muecke2017op.rates}. Then 
\begin{align*}
||F^*_t ||_\mathcal{H_M} \leq C'_{\kappa, \alpha}\;   t^{\max\{0, \frac{1}{2}-r\}} \;, 
\end{align*}

for some $C'_{\kappa, R,\alpha} < \infty$. 
\end{lemma}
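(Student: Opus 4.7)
The plan is to reduce the $\cH_M$-norm of $F^*_t$ to an operator-norm estimate on $L^2(\cU,\mu_u)$, combine the source condition with the standard spectral-filter bounds for gradient descent, and use the concentration of $\cL_M$ towards $\cL_\infty$ to absorb the operator mismatch. The starting identity is $\|\cS_M^* f\|_{\cH_M}^2 = \|\cL_M^{1/2} f\|_{L^2_{\mu_u}}^2$, which follows from $\cL_M = \cS_M\cS_M^*$. Applied to $f=\phi_t(\cL_M) G^*$, this gives
\[
\|F^*_t\|_{\cH_M} \;=\; \|\cL_M^{1/2}\phi_t(\cL_M) G^*\|_{L^2_{\mu_u}}.
\]

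Inserting the source condition $G^* = \cL_\infty^r H^*$ with $\|H^*\|_{L^2_{\mu_u}}\leq R$, and introducing the regularisation scale $\lambda := 1/(\alpha t)$, I would split
\[
\|F^*_t\|_{\cH_M}\;\leq\; \|\cL_M^{1/2}\phi_t(\cL_M)(\cL_M+\lambda)^r\|\cdot \|(\cL_M+\lambda)^{-r}\cL_\infty^r\|\cdot R.
\]
The first factor is a scalar Borel function of $\cL_M$. Using the GD filter inequalities $\phi_t(\mu)\leq \alpha t$ and $\mu\phi_t(\mu)\leq 1$, together with $\|\cL_M\|\leq \kappa^2$, a short spectral analysis of $\mu\mapsto \sqrt{\mu}\,\phi_t(\mu)(\mu+\lambda)^r$ on $[0,\kappa^2]$ gives a bound of the form $C_{\kappa,r}(\alpha t)^{\max\{0,\,\frac{1}{2}-r\}}$ (the supremum being attained near $\mu\approx\lambda$ when $r\leq 1/2$ and near $\mu\approx\kappa^2$ otherwise), which already produces the target $t$-rate.

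The second factor $\|(\cL_M+\lambda)^{-r}\cL_\infty^r\|$ is where the non-commuting mismatch between $\cL_M$ and $\cL_\infty$ enters. My plan is to decompose $\cL_\infty^r = \cL_M^r + (\cL_\infty^r - \cL_M^r)$: the matched piece $(\cL_M+\lambda)^{-r}\cL_M^r$ is a scalar function of $\cL_M$ with operator norm at most $1$, while the remainder is controlled via a Powers--St\"{o}rmer type inequality $\|\cL_\infty^r-\cL_M^r\|\leq \|\cL_\infty-\cL_M\|^{\min(r,1)}$ combined with the concentration bound $\|\cL_M-\cL_\infty\|\lesssim M^{-1/2}$ from Proposition \ref{OPbound2}. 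Under the standing lower bound on $M$, this residual is of lower order and can be absorbed into the constant $C'_{\kappa,\alpha}$.

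The hard part will be precisely this last balance: ensuring that the residual in the second factor does not inflate the rate $t^{\max\{0,\frac{1}{2}-r\}}$, despite the growing prefactor $\lambda^{-r}=(\alpha t)^r$ multiplying the small quantity $\|\cL_\infty^r-\cL_M^r\|$. This is handled as in the real-valued random-feature analysis of \cite{nguyen2023random}, whose arguments transfer to the vector-valued setting because $\cL_M$ and $\cL_\infty$ both act on $L^2(\cU,\mu_u)$ with the uniform operator bound $\kappa^2$.
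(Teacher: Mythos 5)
The paper does not prove this lemma itself --- it imports it from \cite{nguyen2023random} --- so I am comparing your plan against the standard argument that reference (and the surrounding machinery of this paper) relies on. Your opening reduction $\|F^*_t\|_{\cH_M}=\|\cL_M^{1/2}\phi_t(\cL_M)G^*\|_{L^2_{\mu_u}}$ and your spectral analysis of the matched factor $\mu\mapsto\sqrt{\mu}\,\phi_t(\mu)(\mu+\lambda)^r$ are correct and give the right rate $(\alpha t)^{\max\{0,\frac12-r\}}$. The gap is in the mismatch factor $\|(\cL_M+\lambda)^{-r}\cL_\infty^{r}\|$. Your decomposition $\cL_\infty^r=\cL_M^r+(\cL_\infty^r-\cL_M^r)$ leaves a remainder bounded by $\|(\cL_M+\lambda)^{-r}\|\cdot\|\cL_\infty^r-\cL_M^r\|\le\lambda^{-r}\|\cL_\infty-\cL_M\|^{\min(r,1)}\sim(\alpha t)^{r}M^{-\min(r,1)/2}$. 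For this to be $O(1)$ when $r\le 1$ you need $M\gtrsim t^{2}$, which is strictly stronger than the neuron counts under which the lemma is actually used (e.g.\ $M\gtrsim T$ for $r<\tfrac12$ and $M\gtrsim T^{1+b(2r-1)}$ for $r\in[\tfrac12,1]$ in Theorem \ref{maintheo}). If instead $M\sim t$, your remainder is of order $t^{r/2}$ and the product degrades to $t^{\frac12-\frac{r}{2}}$, so the claimed rate is not recovered. Deferring this to ``as in \cite{nguyen2023random}'' does not close it, because the mechanism there is different from the one you propose.

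The step that actually works, and is consistent with the tools this paper invokes elsewhere (e.g.\ $\|\cL_{M,\lambda}^{-1/2}\cL_{\infty,\lambda}^{1/2}\|\le 2$ from Proposition A.14 of \cite{nguyen2023random}, used in the proof of Theorem \ref{weighttheo}), is: for $r\le\tfrac12$ bound $\|(\cL_M+\lambda)^{-r}(\cL_\infty+\lambda)^{r}\|\le\|(\cL_M+\lambda)^{-1/2}(\cL_\infty+\lambda)^{1/2}\|^{2r}\le 2^{2r}$ by the Cordes inequality, and absorb $\|(\cL_\infty+\lambda)^{-r}\cL_\infty^r\|\le 1$; for $r>\tfrac12$ peel off the excess smoothness via $G^*=\cL_\infty^{1/2}(\cL_\infty^{r-1/2}H^*)$ with $\|\cL_\infty^{r-1/2}H^*\|\le\kappa^{2r-1}R$, after which the matched factor $\|\cL_M^{1/2}\phi_t(\cL_M)(\cL_M+\lambda)^{1/2}\|\le\sup_\mu(\mu+\lambda)\phi_t(\mu)\le 2$ is already a constant. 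This avoids the $\lambda^{-r}$ amplification entirely and only needs $M\gtrsim\alpha t$. Finally, note that any such proof is necessarily a high-probability statement over the initialization and requires a lower bound on $M$ (the purely deterministic bound is only $\|F^*_t\|_{\cH_M}\le\kappa^{2r}R\sqrt{\alpha t}$); neither qualifier appears in your concluding claim, and both should.
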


\begin{proposition}
\label{xi1bound}
For all $t \in [T]$, let $\|\theta_t-\theta_{0}\|\leq B_\tau$, and $T \leq C_{\alpha,r,\kappa} \nU^{\frac{1}{2r+b}}$ , $\,\nX \geq t^{2\max\{0, \frac{1}{2}-r\}}$ and $M \geq M_{III}$ with $M_{III}$ from Theorem \ref{prop:random-feature-result}. Then there exists some $C_{\alpha,\tau,\sigma,r,\kappa}>0$, such that with probability at least $1-\delta$,
\begin{align*}
 &\alpha \sum_{t=0}^{T}  ||\zeta^{(1)}_{t}||_\Theta\\
 &\leq C_{\alpha,\tau,\sigma,r,\kappa} \left(\frac{T B_\tau^5+TB_\tau^3 T^{\max\{0, \frac{1}{2}-r\}}}{M}\,+\,\left(\frac{T\log(T)B_\tau^2}{\sqrt{\nU M}} +\frac{T\log(T)B_\tau^2}{\sqrt{\nX M}}\right) \log \left(\frac{12}{\delta} \right) \,+\,\frac{\eta_r(T)B_\tau^2}{\sqrt{M}} \right)\,.
\end{align*}
\end{proposition}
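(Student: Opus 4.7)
\textbf{Proof proposal for Proposition \ref{xi1bound}.} The strategy is to factor $\zeta^{(1)}_t$ into a uniformly small ``gradient increment'' and a controllable ``residual'' $G_{\theta_t}-G^*$, and then to bound the residual via the same three-piece decomposition that drives Theorem \ref{maintheo}. First I would observe that
\[
\|\zeta^{(1)}_t\|_\Theta \;\leq\; \Bigl(\sup_{u,x}\|\nabla G_{\theta_t}(u)(x) - \nabla G_{\theta_0}(u)(x)\|_\Theta\Bigr)\cdot\frac{1}{\nU\nX}\sum_{i,j}|G_{\theta_t}(u_i)(x_j) - G^*(u_i)(x_j)|,
\]
so that Proposition \ref{prop:LipschitzGradient} together with the standing hypothesis $\|\theta_t-\theta_0\|_\Theta\leq B_\tau$ extracts a clean $C_{\nabla G}(B_\tau) B_\tau/\sqrt{M}$ factor that is common to every term in the target bound. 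All that remains is to estimate the empirical $L^1$-average of $|G_{\theta_t}-G^*|$.

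Next I would write
\[
G_{\theta_t}-G^* = r_{(\theta_0,\theta_t)} + \mathcal{S}_M(H_t-F_t^M) + (\mathcal{S}_M F_t^M - G^*)
\]
and handle each summand. The Taylor remainder is uniformly bounded by $C_{\nabla G}(B_\tau)B_\tau^2/\sqrt{M}$ by Proposition \ref{prop6}; after summing the prefactor over $t\leq T$ this produces the $TB_\tau^5/M$ term. For the middle summand, I would apply Cauchy--Schwarz to pass from the empirical $L^1$-average to an empirical $L^2$-average, then use Theorem \ref{prop:second-term} combined with a Bernstein-type concentration of the empirical $L^2$-norm around $\|\cdot\|_{L^2_{\mu_u}}$; this yields a per-step bound of the order $\xi(M,\nX,B_\tau,\kappa,\delta)\log(T)$ whose sum over $t$ furnishes the $\frac{T\log(T)B_\tau^2}{\sqrt{\nU M}}$ and $\frac{T\log(T)B_\tau^2}{\sqrt{\nX M}}$ contributions. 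For the third summand, I would split $\mathcal{S}_M F_t^M - G^* = \mathcal{S}_M(F_t^M - F_t^*) + (\mathcal{S}_M F_t^* - G^*)$ with $F_t^*$ from Lemma \ref{lem:norm-fstar}; Proposition \ref{prop:random-feature-result} controls the ``variance'' part at rate $t^{-r}$, while the ``bias'' part is handled by $\|F_t^*\|_{\mathcal{H}_M}\lesssim t^{(1/2-r)^+}$. After summing, the usual spectral regularization calculus yields both the $TB_\tau^3 T^{(1/2-r)^+}/M$ and the $\eta_r(T) B_\tau^2/\sqrt{M}$ contributions in the statement.

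\textbf{Main obstacle.} The critical step is to avoid paying an extra $B_\tau$ by using a crude sup bound on $G_{\theta_t}-G^*$. This is what makes the Cauchy--Schwarz-plus-concentration step essential: it converts the empirical $L^1$-residual into a quantity dominated by the population $L^2_{\mu_u}$-error, which in turn is governed by the already established rates from Theorem \ref{prop:second-term} and Proposition \ref{prop:random-feature-result}. The second-stage-sample restriction $\nX\geq t^{2\max\{0,1/2-r\}}$ enters precisely here, ensuring that the concentration error does not dominate the $t^{-r}$ rate of $\|\mathcal{S}_M F_t^M - G^*\|_{L^2_{\mu_u}}$. Once this reduction is in place, the remaining work is careful bookkeeping of the $B_\tau$-powers and of the spectral sums $\sum_{t\leq T} t^{(1/2-r)^+}$ across the three pieces of the decomposition, which is routine.
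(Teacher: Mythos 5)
Your overall architecture (extract the Lipschitz gradient increment $C_{\nabla G}(B_\tau)B_\tau/\sqrt{M}$, then control the empirical average of the residual $G_{\theta_t}-G^*$ by a three-piece decomposition) is close in spirit to the paper, but there is a genuine gap at the concentration step. You decompose the residual around the \emph{empirical} KGD iterate $F_t^M$ and then invoke ``Bernstein-type concentration'' to relate the empirical average $\frac{1}{\nU\nX}\sum_{i,j}|\,\cdot\,(u_i)(x_j)|$ to the population $L^2_{\mu_u}$-norms controlled by Theorem \ref{prop:second-term} and Proposition \ref{prop:random-feature-result}. But $F_t^M$ (and $H_t$) are built from the very samples $(u_i,v_i)$ and $(x_j)$ over which you average, so the summands are not independent and no Bernstein inequality applies; this is precisely the obstruction the paper flags in the proof of Theorem \ref{prop:second-term}. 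The paper's proof avoids it by centering the decomposition on the \emph{data-independent} population spectral-regularization path $F_t^*=\cS_M^*\phi_t(\cL_M)G^*$ of Lemma \ref{lem:norm-fstar}: the quantities $|F_t^*(u_i)(x_j)-G^*(u_i)(x_j)|$ are i.i.d.\ conditional on the initialization, so the two-stage Bernstein bound (Proposition \ref{noise}) applies and produces exactly the $\nU^{-1/2}$ and $\nX^{-1/2}$ terms, while the middle piece $H_t-F_t^*$ is handled not by concentration at all but by the rank-one operator-norm estimate $\|(\nabla G_{\theta_t}-\nabla G_{\theta_0})(\nabla G_{\theta_0})^\top\|\lesssim \kappa C_{\nabla G}(B_\tau)\|\theta_t-\theta_0\|/M$ together with $\|F_t^*\|_{\cH_M}\lesssim t^{(\frac12-r)^+}$, yielding the $1/M$ (not $1/\sqrt{M}$) rate. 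Your proposal supplies no mechanism replacing this data-independence.

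Even if the concentration step could be repaired (e.g.\ by a uniform bound over $\cH_M$-balls), your route is quantitatively lossy: Theorem \ref{prop:second-term} contributes $\xi\sim B_\tau^3/\sqrt{M}+\kappa^2(B_\tau+1)\nX^{-1/2}\log(1/\delta)$ per step times the prefactor $B_\tau^2/\sqrt{M}$, which after summation gives $T\log(T)B_\tau^3/\sqrt{M\nX}$ rather than the stated $T\log(T)B_\tau^2/\sqrt{M\nX}$. Since $B_\tau$ grows polynomially in $T$ in the misspecified regime (Corollary \ref{cor:weights}), this extra factor is not absorbable and would propagate into weaker requirements on $M$ and $\nX$ in Theorem \ref{weighttheo}. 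Finally, your treatment of the third summand misattributes the roles of the two references: Proposition \ref{prop:random-feature-result} bounds $\|\cS_MF_T^M-G^*\|_{L^2}$ in its entirety, whereas the bound actually needed here is $\|\cS_MF_t^*-G^*\|_{L^2}\lesssim t^{-r}$ for the population path (the paper cites Proposition A.1 of \cite{nguyen2023random} for this), and $\|F_t^*\|_{\cH_M}\lesssim t^{(\frac12-r)^+}$ is used only to obtain the sup-norm constant in the Bernstein bound and the term $II_t$, not to control any bias in $L^2$.
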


\begin{proof}
By using once again a Taylor expansion we obtain  for $t \in [T]$,
\begin{align}\label{startxi1}
\left\|\zeta^{(1)}_{t}\right\| &= \left\|\frac{1}{\nU\nX}\sum_{i=1}^{\nU} \sum_{j=1}^{\nX}\left( G_{\theta_t}(u_i)(x_j) - G^*(u_i)(x_j)\right) \left(\nabla G_{\theta_t} (u_i)(x_j) - \nabla G_{\theta_0}(u_i)(x_j )\right)\right\|_\Theta\\
&\leq \left\|\frac{1}{\nU\nX}\sum_{i=1}^{\nU} \sum_{j=1}^{\nX}r_{(\theta_t,\theta_0)}(u_i)(x_j)\left(\nabla G_{\theta_t} (u_i)(x_j) - \nabla G_{\theta_0}(u_i)(x_j )\right)\right\|_\Theta+\\
&\quad\left\|\frac{1}{\nU\nX}\sum_{i=1}^{\nU} \sum_{j=1}^{\nX}\left( H_t(u_i)(x_j) - F^*_t(u_i)(x_j)\right) \left(\nabla G_{\theta_t} (u_i)(x_j) - \nabla G_{\theta_0}(u_i)(x_j )\right)\right\|_\Theta+\\
&\quad \left\|\frac{1}{\nU\nX}\sum_{i=1}^{\nU} \sum_{j=1}^{\nX}\left( F_{t}^*(u_i)(x_j) - G^*(u_i)(x_j)\right) \left(\nabla G_{\theta_t} (u_i)(x_j) - \nabla G_{\theta_0}(u_i)(x_j )\right)\right\|_\Theta\\[3pt]
&:=I_t+II_t+III_t\,,
\end{align}
where $F^*_t$ was defined in \eqref{Fstar}.

\textbf{Bound $I$ :}
By using Proposition \ref{prop:LipschitzGradient} and \ref{prop6}, we obtain  for $t \in [T]$
\begin{align*}
I_t\leq \frac{C_{\nabla G}(B_\tau)^2}{M}\|\theta_t-\theta_{0}\|^3_{\Theta}\leq\frac{C_{\nabla G}(B_\tau)^2B_\tau^3}{M}
\end{align*}

\textbf{Bound $II$ :}
Note that $F^*_t \in \mathcal{H}_M$, therefore there exist some $\theta_{t}^*$ such that $F_t^*(u)(x)=\langle \nabla G_{\theta_0}(u)(x), \theta_{t}^*\rangle$ and $\|F_t^*\|_{\mathcal{H}_M}=\|\theta_{t}^*\|_\Theta$ (see for example \cite{Ingo} Theorem 4.21). 
Using these equations we obtain

\begin{align}\label{dkfskur0}
II_t\leq \frac{1}{\nU\nX}\sum_{i=1}^{\nU} \sum_{j=1}^{\nX}\left\| \left(\nabla G_{\theta_t} (u_i)(x_j) - \nabla G_{\theta_0}(u_i)(x_j )\right)\left(\nabla G_{\theta_0}(u_i)(x_j )\right)^T\right\| \left(\|\theta_{t}-\theta_{0}\|_\Theta+\|F^*_t\|_{\mathcal{H}_M}\right).
\end{align}
By definition of the operator norm and Proposition \ref{prop:LipschitzGradient}, we have
\begin{align}\label{gijidd}
\nonumber&\left\| \left(\nabla G_{\theta_t} (u_i)(x_j) - \nabla G_{\theta_0}(u_i)(x_j )\right)\left(\nabla G_{\theta_0}(u_i)(x_j )\right)^T\right\|\\
\nonumber&= \max_{k}\left\| \left(\nabla G_{\theta_t} (u_i)(x_j) - \nabla G_{\theta_0}(u_i)(x_j )\right)\left(\nabla G_{\theta_0}(u_i)(x_j )\right)^T e_k\right\|_2\\
\nonumber&\leq\left\| \nabla G_{\theta_t} (u_i)(x_j) - \nabla G_{\theta_0}(u_i)(x_j )\right\|_2 \frac{\kappa}{\sqrt{M}}\\
&\leq \frac{C_{\nabla G}(B_\tau)\kappa}{M}\|\theta_t-\theta_{0}\|_{\Theta},
\end{align}
where $e_k$ denotes the $k.$ unit vector in $\mathbb{R}^P$. Plugging \eqref{gijidd} into \eqref{dkfskur0} and applying Lemma \ref{lem:norm-fstar} leads to
\begin{align}\label{dfsdfjfgg}
II_t\leq \frac{C_{\nabla G}(B_\tau)\kappa}{M}\|\theta_t-\theta_{0}\|_{\Theta} \left(\|\theta_{t}-\theta_{0}\|_{\Theta}+\|F^*_t\|_{\mathcal{H}_M}\right)\leq \frac{C_{\nabla G}(B_\tau)B_\tau\kappa}{M}\left(B_\tau+C'_{\kappa, \alpha}\;   t^{\max\{0, \frac{1}{2}-r\}}\right).
\end{align}

\textbf{Bound $III$ :}

By Proposition \ref{prop:LipschitzGradient} we have
\begin{align}\label{dgfjdsff}
III_t &= \left\|\frac{1}{\nU\nX}\sum_{i=1}^{\nU} \sum_{j=1}^{\nX}\left( F_{t}^*(u_i)(x_j) - G^*(u_i)(x_j)\right) \left(\nabla G_{\theta_t} (u_i)(x_j) - \nabla G_{\theta_0}(u_i)(x_j )\right)\right\|_\Theta\\
&\leq  \frac{C_{\nabla G}(B_\tau)B_\tau}{\sqrt{M}} \cdot\left(III^{(i)}_t+III^{(ii)}_t\right),
\end{align}
where
\begin{align*}
III^{(i)}_t&=\left|\frac{1}{\nU\nX}\sum_{i=1}^{\nU} \sum_{j=1}^{\nX}\left| F_{t}^*(u_i)(x_j) - G^*(u_i)(x_j)\right| -\mathbb{E} \left| F_{t}^*(u)(x) - G^*(u)(x)\right|\right|\,,\\
III^{(ii)}_t&=\mathbb{E} \left| F_{t}^*(u)(x) - G^*(u)(x)\right|\,.
\end{align*}

\textbf{To sum up:}
If we plug the bounds of $I,II,III$ into \eqref{startxi1} we get for some $C_{\alpha,\sigma,r,\kappa}>0$,

\begin{align}\label{dfjhjjjd}
 \alpha \sum_{t=0}^{T}  ||\zeta^{(1)}_{t}||_\Theta\leq C_{\alpha,\sigma,r,\kappa} \left(\frac{T B_\tau^5+TB_\tau^3 T^{\max\{0, \frac{1}{2}-r\}}}{M}\,+\,\frac{C_{\nabla G}(B_\tau)B_\tau}{\sqrt{M}} \cdot\left(\sum_{t=0}^{T} III^{(i)}_t+III^{(ii)}_t\right)\right)\,.
\end{align}

From Proposition \ref{noise} we have for all $t\in[T]$ with $T \leq C_{\alpha,r,\kappa} \nU^{\frac{1}{2r+b}}$ , $\,\nX \geq t^{2\max\{0, \frac{1}{2}-r\}}$ and $M \geq M_{III}$ with $M_{III}$ from Theorem \ref{prop:random-feature-result}, that with probability at least $1-\delta$, 
\begin{align}
III^{(i)}_t&=\left|\frac{1}{\nU\nX}\sum_{i=1}^{\nU} \sum_{j=1}^{\nX}\left| F_{t}^*(u_i)(x_j) - G^*(u_i)(x_j)\right| -\mathbb{E} \left| F_{t}^*(u)(x) - G^*(u)(x)\right|\right|\\
&\leq \tilde{C}_{\alpha,\kappa}\left(\frac{1}{\sqrt{\nU}} +\frac{1}{\sqrt{\nX}}\right) \log^{3/2} \left(\frac{12}{\delta} \right).
\end{align}
Combining this bound with Proposition \ref{conditioning}, we obtain,

\begin{align}\label{gzzztzgu}
\sum_{t=0}^{T}III^{(i)}_t \leq \tilde{C}_{\alpha,\kappa}\left(\frac{T\log(T)}{\sqrt{\nU}} +\frac{T\log(T)}{\sqrt{\nX}}\right) \log\left(\frac{12}{\delta} \right).
\end{align}
With the same assumptions as for $III^{(i)}_t$ we have from \cite[Proposition A.1]{nguyen2023random}  ( with $\lam = (\alpha t)^{-1}$ )
\begin{align}
\label{eq:bound-RF}
III^{(ii)}_t&=\mathbb{E} \left| F_{t}^*(u)(x) - G^*(u)(x)\right| \leq || \cS_M  F^*_t -   G^*  ||_{L^2(\mu_u)} \leq C_{\alpha,r}\;   t^{-r} \;, \\
\sum_{t=0}^{T}III^{(ii)}_t& \leq C_{\alpha,r}\;  \eta_r(T) \;, 
\end{align}
for some $C_{r,\alpha} < \infty$ and where we used Jensen inequality in the first equation.

Plugging \eqref{eq:bound-RF} and \eqref{gzzztzgu} into \eqref{dfjhjjjd}, implies

\begin{align*}
 &\alpha \sum_{t=0}^{T}  ||\zeta^{(1)}_{t}||_\Theta\\
 &\leq C_{\alpha,\sigma,r,\kappa} \left(\frac{T B_\tau^5+TB_\tau^3 T^{\max\{0, \frac{1}{2}-r\}}}{M}\,+\,\left(\frac{T\log(T)B_\tau^2}{\sqrt{\nU M}} +\frac{T\log^{3/2}(T)B_\tau^2}{\sqrt{\nX M}}\right) \log \left(\frac{12}{\delta} \right) \,+\,\frac{\eta_r(T)B_\tau^2}{\sqrt{M}} \right)\,.
\end{align*}

\end{proof}

\begin{proposition}
\label{xi2bound}
For all $t \in [T]$ and some $T \in \mathbb{N}$, let $\|\theta_t-\theta_{0}\|\leq B_\tau$. Then there exists some $\tilde{C}_{ \sigma}>0$, such that with probability at least $1-\delta$,

\begin{align*}
\max_{t\in [T]}\| \zeta_{t}^{(2)} \|_{\Theta}\leq \frac{\tilde{C}_{ \sigma}   \,\tilde{d}\, B^2_\tau}{\sqrt{\nU  \cdot M}} \; \;\sqrt{\log(\tilde{d}/\delta)} .
\end{align*}
\end{proposition}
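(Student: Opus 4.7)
The strategy exploits two structural facts. First, the two-layer architecture decouples by neuron: for each $m \in [M]$, the gradient block $[\nabla G_\theta]_m$ depends only on $\theta[m] = (a_m, b_m) \in \mathbb{R}^{\tilde d+1}$, so that $\|\zeta_t^{(2)}\|_\Theta^2 = \sum_{m=1}^M \|\zeta_t^{(2)}[m]\|_2^2$ and each block can be analyzed separately. Second, $\epsilon_i(x) := G^*(u_i)(x) - v_i(x)$ satisfies $\mathbb{E}[\epsilon_i(x) \mid u_i] = 0$ by definition of $G^*$, enabling Hoeffding-type concentration along the $\nU$ axis.

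For each $m$, I Taylor-expand the block gradient difference to first order in $\theta_t[m] - \theta_0[m]$:
\[ [\nabla G_{\theta_t}(u)(x) - \nabla G_{\theta_0}(u)(x)]_m \;=\; H_m(u,x)(\theta_t[m] - \theta_0[m]) \;+\; R_m(u, x, \theta_t[m]), \]
where $H_m = \nabla^2_{(a_m, b_m)} G_{\theta_0}(u)(x)$ has entries of order $1/\sqrt M$ (by the form of $G_\theta$ and $\|J\|_1 \le 1$) and the remainder obeys $\|R_m\|_2 \lesssim \|\theta_t[m] - \theta_0[m]\|^2 / \sqrt M$ via the Lipschitz continuity of $\sigma''$ from Assumption~\ref{ass:neurons}. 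Substituting, $\zeta_t^{(2)}[m] = A_m(\theta_t[m] - \theta_0[m]) + r_{m,t}$ with $A_m := (\nU\nX)^{-1} \sum_{i,j} \epsilon_i(x_j) H_m(u_i, x_j)$, and the two pieces are treated separately.

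For the linear term, I apply Hoeffding entry-wise to each of the $M(\tilde d+1)^2$ scalar entries of the $A_m$'s: conditional on $\{u_i, x_j\}$, the $\nU$ summands (after collapsing the $j$-sum) are zero-mean and bounded by $O(1/(\nU\sqrt M))$, giving $|A_m[p,q]| \lesssim \sqrt{\log(1/\delta)/(\nU M)}$ with high probability. A union bound yields $\max_m \|A_m\|_F \lesssim \tilde d \sqrt{\log(M\tilde d/\delta)/(\nU M)}$, and Cauchy--Schwarz across blocks bounds the linear part of $\|\zeta_t^{(2)}\|_\Theta$ by $\max_m \|A_m\|_F \cdot B_\tau$, of order $\tilde d B_\tau \sqrt{\log}/\sqrt{\nU M}$.

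The remainder $\sum_m r_{m,t}$ is the main obstacle, since a naive deterministic bound only gives $B_\tau^2/\sqrt M$, lacking the critical $1/\sqrt{\nU}$ gain. To recover it I perform a \emph{per-block} covering of the ball $\{\phi \in \mathbb{R}^{\tilde d + 1} : \|\phi - \theta_0[m]\| \le B_\tau\}$ by an $\varepsilon$-net of size $(3B_\tau/\varepsilon)^{\tilde d+1}$, applying Hoeffding at each net point and Lipschitz-bridging via the $O(1/\sqrt M)$-Lipschitz continuity of $\phi \mapsto \zeta^{(2)}(\phi)[m]$ (Proposition~\ref{prop:LipschitzGradient}). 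Choosing $\varepsilon \asymp B_\tau/\sqrt{\nU}$ absorbs the covering's $\tilde d \log \nU$ contribution into the log factor, and each net point contributes $\|r_{m,t}\|_2 \lesssim \|\theta_t[m] - \theta_0[m]\|^2 \sqrt{\log}/\sqrt{\nU M}$. Summing blockwise and using
\[ \sum_m \|\theta_t[m] - \theta_0[m]\|^4 \;\le\; \bigl(\max_m \|\theta_t[m] - \theta_0[m]\|^2\bigr) \cdot \sum_m \|\theta_t[m] - \theta_0[m]\|^2 \;\le\; B_\tau^4, \]
the total remainder contribution is $\lesssim B_\tau^2 \sqrt{\log}/\sqrt{\nU M}$. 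Combining both bounds and noting $B_\tau \ge 1$ yields the stated estimate; both are uniform over $\{\|\theta-\theta_0\|\le B_\tau\}$, so the $\max_{t \in [T]}$ requires no additional union bound. The critical point is that the covering must be performed at the \emph{block} level ($\tilde d+1$ dimensions), not in the ambient $P = M(\tilde d+1)$-dimensional parameter space, to avoid a prohibitive $M\tilde d \log B_\tau$ factor in the log.
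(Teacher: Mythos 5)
Your decomposition into a linear (Hessian) term plus a second-order remainder, handled per neuron block, is a genuinely different route from the paper's: the paper instead applies the mean-value theorem to each $\sigma$- and $\sigma'$-difference, factors out $\|b_m^{(t)}-b_m^{(0)}\|$ by Cauchy--Schwarz, and reduces everything to suprema over the ball $\{\|b\|_2\le B_\tau+1\}$ of empirical processes of the form $\sup_b|\frac{1}{\nU\nX}\sum_{i,j}h_{k,j}(z_i)\sigma'(b^\top J(u_i)(x_j))|$, which it controls by Rademacher complexity with a contraction argument (Propositions \ref{Rademacher-Mohri}--\ref{unifboundE_{j,k}}). That route gets uniformity over the weight ball for free, with no covering number and hence only the $\sqrt{\log(\tilde d/\delta)}$ from the union over the $\tilde d^2+\tilde d$ coordinate events.

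There is, however, a genuine calibration error in your remainder step. Per block, the map $\phi\mapsto\zeta^{(2)}(\phi)[m]$ is Lipschitz with constant $O(1/\sqrt M)$, so the bridging error to the nearest net point is $O(\varepsilon/\sqrt M)$ per block; but summing these in $\ell^2$ over the $M$ blocks gives $\bigl(\sum_{m=1}^M(\varepsilon/\sqrt M)^2\bigr)^{1/2}=\varepsilon$ --- the $\sqrt M$ from the block sum exactly cancels the per-block $1/\sqrt M$. With your choice $\varepsilon\asymp B_\tau/\sqrt{\nU}$ the total bridging term is therefore $\asymp B_\tau/\sqrt{\nU}$, which is larger than the target $\tilde d B_\tau^2/\sqrt{\nU M}$ by a factor of order $\sqrt M$; the crucial $1/\sqrt M$ is lost. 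The fix is to take $\varepsilon\asymp B_\tau^2/\sqrt{\nU M}$, which inflates the per-block covering number to $(C\sqrt{\nU M}/B_\tau)^{\tilde d+1}$ and hence the log in the union bound to $O(\tilde d\log(\nU M))$. Even after this repair (and accounting for the $\log M$ you already incur from the union over $M$ blocks and over the $M(\tilde d+1)^2$ Hessian entries in the linear term), your route yields a bound with an extra $\sqrt{\tilde d\,\log(\nU M\tilde d/\delta)}$ in place of the stated $\sqrt{\log(\tilde d/\delta)}$. This is only a polylogarithmic (and $\sqrt{\tilde d}$) degradation and would not break the downstream results, but it does not prove the proposition as stated; the Rademacher-contraction argument is what removes the covering-number penalty.
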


\begin{proof}[Proof of Lemma \ref{xi2bound}]

We start with the following decomposition for any $t\in [T]$,  
\begin{align}
\nonumber&|| \zeta_{t}^{(2)} ||_{\Theta} = \left\| \frac{1}{\nU\nX}\sum_{i=1}^{\nU} \sum_{j=1}^{\nX} (G^*(u_i) (x_j)- v_i(x_j)) (\nabla G_{\theta_s} (u_i)(x_j) - \nabla G_{\theta_0}(u_i) (x_j))\right\|_\Theta\\
\nonumber&=\Biggl[\frac{1}{M}\sum_{m=1}^M\left(\frac{1}{\nU\nX}\sum_{i=1}^{\nU} \sum_{j=1}^{\nX}(G^*(u_i) (x_j)- v_i(x_j)) \left(\sigma((b_m^{(t)})^\top J(u_i)(x_j))-\sigma((b_m^{(0)})^\top J(u_i)(x_j)))\right)\right)^2+\\
\nonumber&\,\,\frac{1}{M}\sum_{m=1}^M\sum_{k=1}^{\tilde{d}}\Biggl(\frac{1}{\nU\nX}\sum_{i=1}^{\nU} \sum_{j=1}^{\nX}(G^*(u_i) (x_j)- v_i(x_j))\\
\nonumber&\,\,\,\,\,\,\,\,\,\,\,\,\,\,\,\,\,\,\,\,\,\,\,\,\,\,\,\,\,\,\cdot\biggl(a_m^{(t)}\sigma'((b_m^{(t)})^\top J(u_i)(x_j)) J(u_i)(x_j)^{(k)}-a_m^{(0)}\sigma'((b_m^{(0)})^\top J(u_i)(x_j)) J(u_i)(x_j)^{(k)}\biggr)\Biggr)^2\Biggr]^{\frac{1}{2}}\\
&=: \sqrt{I + II}\label{xi2zerleg}
\end{align}
We will now bound $I$ and $II$ separately.

\textbf{$I)$} Using the mean-value theorem for $b_m^{(t)}$ we obtain for some $\tilde{b}_m$ on the line between $b_m^{(t)}$ and $b_m^{(0)}$:
\begin{align*}
I=&\frac{1}{M}\sum_{m=1}^M\left(\frac{1}{\nU\nX}\sum_{i=1}^{\nU} \sum_{j=1}^{\nX}(G^*(u_i) (x_j)- v_i(x_j)) \left(\sigma((b_m^{(t)})^\top J(u_i)(x_j))-\sigma((b_m^{(0)})^\top J(u_i)(x_j)))\right)\right)^2\\
=&\frac{1}{M}\sum_{m=1}^M\left(\frac{1}{\nU\nX}\sum_{i=1}^{\nU} \sum_{j=1}^{\nX}(G^*(u_i) (x_j)- v_i(x_j))\sigma'(\tilde{b}_m^\top J(u_i)(x_j))\left\langle b_m^{(t)}-b_m^{(0)}, \,J(u_i)(x_j)\right\rangle \right)^2\\
\leq &\frac{1}{M}\sum_{m=1}^M\left\|b_m^{(t)}-b_m^{(0)}\right\|^2_2\left\|\frac{1}{\nU\nX}\sum_{i=1}^{\nU} \sum_{j=1}^{\nX}(G^*(u_i) (x_j)- v_i(x_j))\sigma'(\tilde{b}_m^\top J(u_i)(x_j)) J(u_i)(x_j) \right\|^2_2\\
\leq &\frac{1}{\sqrt{M}}\left\|\theta_{t}-\theta_{0}\right\|_\Theta^2 \sup_{\|b\|_2\leq C_{B_\tau}}\left\|\frac{1}{\nU\nX}\sum_{i=1}^{\nU} \sum_{j=1}^{\nX}(G^*(u_i) (x_j)- v_i(x_j))\sigma'(b^\top J(u_i)(x_j)) J(u_i)(x_j) \right\|_2^2\\
\leq &\frac{B_\tau^2}{M} \sum_{k=1}^{\tilde{d}}\sup_{\|b\|_2\leq C_{B_\tau}}\left(\frac{1}{\nU\nX}\sum_{i=1}^{\nU} \sum_{j=1}^{\nX}(G^*(u_i) (x_j)- v_i(x_j))\sigma'(b^\top J(u_i)(x_j)) J(u_i)(x_j)^{(j)} \right)^2,
\end{align*}

with $C_{B_\tau}:=B_\tau+1$. For the penultimate inequality we used that $max_{i\in[M], t\in[T]}\{|a_t^{(i)}-a_0^{(i)}|, \|b_t^{(i)}-b_0^{(i)}\|_2\} \leq B_\tau$ and therefore 
\begin{align}
\max_{i\in[M]}\{|\tilde a^{(i)}|, \|\tilde b^{(i)}\|_2 \} \leq \max_{i\in[M]}\{|a_t^{(i)}-a_0^{(i)}|+|a_0^{(i)}|, \|b_t^{(i)}-b_0^{(i)}\|_2+\|b_0^{(i)}\|_2\}\leq B_\tau+1 .\label{initialugl}
\end{align}

\textbf{$II)$} Similar we obtain with \eqref{initialugl} and the Cauchy-Schwarz inequality

\begin{align}
\nonumber II=&\,\,\frac{1}{M}\sum_{m=1}^M\sum_{k=1}^{\tilde{d}}\Biggl(\frac{1}{\nU\nX}\sum_{i=1}^{\nU} \sum_{j=1}^{\nX}(G^*(u_i) (x_j)- v_i(x_j))\\
\nonumber&\,\,\,\,\,\,\,\,\,\,\,\,\,\,\,\,\,\,\,\,\,\,\,\,\,\,\,\,\,\,\cdot\biggl(a_m^{(t)}\sigma'((b_m^{(t)})^\top J(u_i)(x_j)) J(u_i)(x_j)^{(k)}-a_m^{(0)}\sigma'((b_m^{(0)})^\top J(u_i)(x_j)) J(u_i)(x_j)^{(k)}\biggr)\Biggr)^2\Biggr]^{\frac{1}{2}}\\
\nonumber\leq&\frac{2}{M}\sum_{m=1}^M\sum_{k=1}^{\tilde{d}}\left(\frac{1}{\nU\nX}\sum_{i=1}^{\nU} \sum_{j=1}^{\nX}(G^*(u_i) (x_j)- v_i(x_j))\left(a_m^{(t)}-a_m^{(0)}\right)\sigma'((b_m^{(t)})^\top J(u_i)(x_j))J(u_i)(x_j)^{(k)}\right)^2+\\
\nonumber&\frac{2}{M}\sum_{m=1}^M\sum_{k=1}^{\tilde{d}}\Biggl(\frac{1}{\nU\nX}\sum_{i=1}^{\nU} \sum_{j=1}^{\nX}(G^*(u_i) (x_j)- v_i(x_j))a_m^{(0)}\\
\nonumber&\,\,\,\,\,\,\,\,\,\,\,\,\,\,\,\,\,\,\,\,\,\,\,\,\,\,\,\,\,\,\cdot \left(\sigma'((b_m^{(t)})^\top J(u_i)(x_j))J(u_i)(x_j)^{(k)}-\sigma'((b_m^{(0)})^\top J(u_i)(x_j))J(u_i)(x_j)^{(k)}\right)\Biggr)^2\\
\nonumber\leq&\frac{2B_\tau^2}{M}\sum_{k=1}^{\tilde{d}}\sup_{\|b\|_2\leq C_{B_\tau}}\left(\frac{1}{\nU\nX}\sum_{i=1}^{\nU} \sum_{j=1}^{\nX}(G^*(u_i) (x_j)- v_i(x_j))\sigma'(b^\top J(u_i)(x_j))J(u_i)(x_j)^{(k)}\right)^2+\\
\nonumber&\frac{2}{M}\sum_{m=1}^M\sum_{k=1}^{\tilde{d}}\Biggl(\frac{1}{\nU\nX}\sum_{i=1}^{\nU} \sum_{j=1}^{\nX}(G^*(u_i) (x_j)- v_i(x_j))\\
&\,\,\,\,\,\,\,\,\,\,\,\,\,\,\,\,\,\,\,\,\,\,\,\,\,\,\,\,\,\,\cdot\left(\sigma'((b_m^{(t)})^\top J(u_i)(x_j))J(u_i)(x_j)^{(k)}-\sigma'((b_m^{(0)})^\top J(u_i)(x_j))J(u_i)(x_j)^{(k)}\right)\Biggr)^2. \label{IIsecsum}
\end{align}

For the second sum \eqref{IIsecsum} we use again the mean-value theorem to obtain
\begin{align*}
&\frac{2}{M}\sum_{m=1}^M\sum_{k=1}^{\tilde{d}}\Biggl(\frac{1}{\nU\nX}\sum_{i=1}^{\nU} \sum_{j=1}^{\nX}(G^*(u_i) (x_j)- v_i(x_j))\\
&\,\,\,\,\,\,\,\,\,\,\,\,\,\,\,\,\,\,\,\,\,\,\,\,\,\,\,\,\,\,\cdot\left(\sigma'((b_m^{(t)})^\top J(u_i)(x_j))J(u_i)(x_j)^{(k)}-\sigma'((b_m^{(0)})^\top J(u_i)(x_j))J(u_i)(x_j)^{(k)}\right)\Biggr)^2\\
&\leq \frac{2}{M}\sum_{m=1}^M\sum_{k=1}^{\tilde{d}}\left(\frac{1}{\nU\nX}\sum_{i=1}^{\nU} \sum_{j=1}^{\nX}(G^*(u_i) (x_j)- v_i(x_j))\sigma''(\tilde{b}_m^\top J(u_i)(x_j))\left\langle b_m^{(t)}-b_m^{(0)}, \,J(u_i)(x_j)\right\rangle J(u_i)(x_j)^{(k)}\right)^2\\
&\leq \frac{2}{M}\sum_{m=1}^M \left\|  b_m^{(t)}-b_m^{(0)}\right\|^2_2\sum_{k=1}^{\tilde{d}}\left\|\frac{1}{\nU\nX}\sum_{i=1}^{\nU} \sum_{j=1}^{\nX}(G^*(u_i) (x_j)- v_i(x_j))\sigma''(\tilde{b}_m^\top J(u_i)(x_j))\,J(u_i)(x_j) J(u_i)(x_j)^{(k)}\right\|^2_2\\
&\leq \frac{2B_\tau^2}{M}\sum_{k,l=1}^{\tilde{d}} \sup_{\|b\|_2\leq C_{B_\tau}} \left(\frac{1}{\nU\nX}\sum_{i=1}^{\nU} \sum_{j=1}^{\nX}(G^*(u_i) (x_j)- v_i(x_j))\sigma''(b^\top J(u_i)(x_j)) J(u_i)(x_j)^{(k)}J(u_i)(x_j)^{(l)}\right)^2.
\end{align*}

To sum up, we obtain by plugging the bounds of $I$ and $II$ into \eqref{xi2zerleg},
\begin{align}
\nonumber \max_{t\in [T]}\| \zeta_{t}^{(2)} \|_{\Theta}\leq &\frac{B_\tau}{\sqrt{M}}\Biggl[3\sum_{k=1}^{\tilde{d}}\sup_{\|b\|_2\leq C_{B_\tau}}\left(\frac{1}{\nU\nX}\sum_{i=1}^{\nU} \sum_{j=1}^{\nX}(G^*(u_i) (x_j)- v_i(x_j))\sigma'(b^\top J(u_i)(x_j)) J(u_i)(x_j)^{(k)} \right)^2+\\
&2\sum_{k,l=1}^{\tilde{d}}   \sup_{\|b\|_2\leq C_{B_\tau}} \left(\frac{1}{\nU\nX}\sum_{i=1}^{\nU} \sum_{j=1}^{\nX}(G^*(u_i) (x_j)- v_i(x_j))\sigma''(b^\top J(u_i)(x_j)) J(u_i)(x_j)^{(k)}J(u_i)(x_j)^{(l)}\right)^2\Biggr]^{\frac{1}{2}}.\label{root_I+II}
\end{align}

Setting $z=(u, v) \in \cZ:=\cU \times \cV$ we define for all $k,l \in[\tilde{d}]$, 
\begin{align*}
h_{k,j}(z)&= (G^*(u) (x_j)- v(x_j))J(u)(x_j)^{(k)},\,\\
h_{k,l,j}(z)&= (G^*(u) (x_j)- v(x_j))J(u)(x_j)^{(k)}J(u)(x_j)^{(l)},
\end{align*}
and the following events,
\begin{align*}
E_{j}&:= \left\{ \sup_{\|b\|_2\leq C_{B_\tau}}\left|\frac{1}{\nU\nX}\sum_{i=1}^{\nU} \sum_{j=1}^{\nX} h_{k,j}(z_i)\sigma'(b^\top J(u_i)(x_j)) \right| \leq \frac{24  C_\sigma }{\sqrt{\nU}}\left(2C_{B_\tau}C_\sigma  + \sqrt{\log \frac{2}{\delta}}\right)\right\},\\
E_{j,k}&:= \left\{ \sup_{\|b\|_2\leq C_{B_\tau}} \left|\frac{1}{\nU\nX}\sum_{i=1}^{\nU} \sum_{j=1}^{\nX}h_{k,l,j}(z_i)\sigma''(b^\top J(u_i)(x_j)) \right| \leq \frac{24  C_\sigma }{\sqrt{\nU}}\left(2C_{B_\tau}C_\sigma  + \sqrt{\log \frac{2}{\delta}}\right)\right\}.
\end{align*}
If we can prove that each of the above events holds true with probability at least $1-\delta$ we obtain from 
Proposition \ref{conditioning}  together with \ref{root_I+II} that with probability at least $1-\delta$
\begin{align*}
 \max_{t\in [T]}\| \zeta_{t}^{(2)} \|_{\Theta} &\leq  \frac{ B_\tau}{\sqrt{M}}(\sqrt{3\tilde{d}}  + \sqrt{2\tilde{d}^2} ) \frac{24  C_\sigma }{\sqrt{\nU }}\left(2C_{B_\tau}C_\sigma  + \sqrt{\log \frac{2(\tilde{d}^2+\tilde{d})}{\delta}}\right)  \\
&\leq \frac{\tilde{C}_{ \sigma}   \,\tilde{d}\, B^2_\tau}{\sqrt{\nU \cdot M}} \; \;\sqrt{\log(\tilde{d}/\delta)} ,
\end{align*}
with $\tilde{C}_{\sigma}>0$, depending only on $C_\sigma$. This proves the claim.

To see that the events $E_j, E_{j,k}$ occur with probability at least $1-\delta$, note that $\|h_{k,j}\|_{\infty}, \|h_{k,l,j}\|_{\infty} \leq 2$, 
$$\mathbb{E}_\mu [h_{k,j}(z)\sigma'(b^\top x) ] =  \mathbb{E}_\mu [h_{k,l,j}(z)\sigma''(b^\top x)] = 0 $$
and that $\sigma', \sigma''$ are $C_\sigma$-Lipschitz. The claim now follows from Proposition \ref{unifboundE_{j,k}}.

\vspace{0.3cm}

\end{proof}

\section{Technical Inequalities}


\subsection{Lipschitz Bounds}

In this section we provide some technical inequalities. 
We start proving that the gradient of our neural operator is pointewisely Lipschitz-continuous 
on a ball around $\theta_0$ which we define as 
$$ 
Q_{\theta_0}(R)\coloneqq \left\{\theta=(a,B)\in\Theta:\left\|\theta-\theta_0\right\|_{\Theta}\leq R\right\}.
$$

\begin{proposition}
\label{prop:LipschitzGradient}
Suppose the Assumptions \ref{ass:neurons}, \ref{ass:input},  and  is satisfied. 
We have for any $x\in\mathcal{X},\,u\in\mathcal{U}$ with $\|u\|_\infty\leq 1$, where $\|u\|_{\infty}:= \sup_{x\in supp(\rho_X)}\|u(x)\|_2$ and for any $\theta, \tilde{\theta} \in Q_{\theta_0}(R)$,
\begin{align*}
\left\|\nabla G_{\theta}(u)(x)-\nabla G_{\tilde{\theta}}(u)(x)\right\|_\Theta
\leq \frac{C_{\nabla G}(R)}{\sqrt{M}}\|\theta-\tilde{\theta}\|_\Theta,
\end{align*}
where
\begin{align*}
C_{\nabla G}(R)&\coloneqq  2 \max \left\{C_\sigma,C_\sigma\left(R+\tau \right)\right\}\; , 
\end{align*}
for some $C_\sigma < \infty$.
\end{proposition}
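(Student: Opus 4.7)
The plan is a direct calculation of the two component blocks of $\nabla_\theta G_\theta$ with respect to $a$ and $B$, followed by component-wise Lipschitz estimates using Assumption \ref{ass:neurons}.

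First I would compute, from the definition of $G_\theta$, the two pieces of the gradient at a single pair $(u,x)$:
\begin{align*}
\partial_{a_m} G_\theta(u)(x) &= \tfrac{1}{\sqrt M}\,\sigma\bigl(\langle b_m , J(u)(x)\rangle\bigr), \\
\partial_{b_m} G_\theta(u)(x) &= \tfrac{1}{\sqrt M}\,a_m \sigma'\bigl(\langle b_m , J(u)(x)\rangle\bigr)\,J(u)(x).
\end{align*}
Since $\|\nabla G_\theta(u)(x)-\nabla G_{\tilde\theta}(u)(x)\|_\Theta^2$ decomposes as the sum $\sum_m |\partial_{a_m}G_\theta-\partial_{a_m}G_{\tilde\theta}|^2+\sum_m\|\partial_{b_m}G_\theta-\partial_{b_m}G_{\tilde\theta}\|_2^2$, I only need to control each block.

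For the $a$-block I would use that $\sigma$ is $C_\sigma$-Lipschitz (from $\|\sigma'\|_\infty\leq C_\sigma$) and that $\|J(u)(x)\|_2\leq\|J(u)(x)\|_1\leq 1$ by Assumption \ref{ass:input}, so
\[
|\partial_{a_m}G_\theta-\partial_{a_m}G_{\tilde\theta}|\leq \tfrac{C_\sigma}{\sqrt M}\,\|b_m-\tilde b_m\|_2,
\]
and summing squares gives $\tfrac{C_\sigma^2}{M}\|B-\tilde B\|_F^2$. For the $b$-block I would first add and subtract $\tilde a_m \sigma'(\langle b_m,J\rangle) J$ to split the difference into one term responsible for the change in $a_m$ and one term responsible for the change in $b_m$. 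The first is bounded by $\tfrac{C_\sigma}{\sqrt M}|a_m-\tilde a_m|$ (using $\|\sigma'\|_\infty\leq C_\sigma$ and $\|J\|_2\leq 1$); the second uses $\|\sigma''\|_\infty\leq C_\sigma$ together with the key observation that
\[
|\tilde a_m|\leq |\tilde a_m - a^{(0)}_m|+|a^{(0)}_m|\leq \|\tilde\theta-\theta_0\|_\Theta+\tau\leq R+\tau,
\]
by the symmetric initialization $|a^{(0)}_m|=\tau$ and $\tilde\theta\in Q_{\theta_0}(R)$. This yields the mean-value bound $\tfrac{C_\sigma(R+\tau)}{\sqrt M}\|b_m-\tilde b_m\|_2$ for the second term.

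Finally I would combine both blocks by $(p+q)^2\leq 2p^2+2q^2$, sum over $m$, and identify $\|a-\tilde a\|_2^2+\|B-\tilde B\|_F^2=\|\theta-\tilde\theta\|_\Theta^2$. The resulting constant is of the form $c\cdot\max\{C_\sigma^2,C_\sigma^2(R+\tau)^2\}/M$ for an absolute $c$, which after taking square roots absorbs into $C_{\nabla G}(R)=2\max\{C_\sigma,C_\sigma(R+\tau)\}$ as stated. The calculation is essentially routine; the only point requiring care is using the initialization to turn the radius condition $\tilde\theta\in Q_{\theta_0}(R)$ into the uniform bound $|\tilde a_m|\leq R+\tau$ that controls the $\sigma''$ term, so that the Lipschitz constant depends only on $R$ and not on $M$.
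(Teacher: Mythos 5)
Your proof is correct and follows essentially the same route as the paper: computing the two gradient blocks explicitly, Lipschitz bound for the $a$-block via $\|\sigma'\|_\infty\leq C_\sigma$ and $\|J(u)(x)\|_2\leq 1$, and a two-term split of the $B$-block where $|\tilde a_m|$ is controlled by the symmetric initialization plus the ball condition. The only cosmetic difference is that the paper bounds $\tilde a_m^2\leq 2(\tilde a_m-a_m^{(0)})^2+2(a_m^{(0)})^2\leq 2(R^2+\tau^2)$ whereas you use the triangle inequality $|\tilde a_m|\leq R+\tau$ directly; both yield a constant of the stated form.
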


\begin{proof}[Proof of Proposition \ref{prop:LipschitzGradient}]
Recall the definition of our operator class 
\begin{eqnarray*}
\mathcal{F}_{M} &:=\left\{ G_\theta :\mathcal{U} \to \mathcal{V}  \mid \; G_\theta(u) =  
\frac{1}{\sqrt M} \sum_{m=1}^M a_m \sigma\left( \langle b_m, J(u)\rangle \right)\;, \right. \\
& \quad \left. \theta =(a, B) \in \mathbb{R}^M \times \mathbb{R}^{ M \times (d_k+d_y+d_b)}   \right\}\;.
\end{eqnarray*}

 The partial derivatives are given by
\[
\partial_{a_{m}} G_{\theta}(u)(x)=\frac{1}{\sqrt{M}} \sigma\left(\langle b_m,  J(u)(x)\rangle\right), \quad \partial_{b_{m}^j} G_{\theta}(u)(x)=\frac{a_{m}}{\sqrt{M}} \sigma^{\prime}\left(\langle b_m,  J(u)(x)\rangle\right) (J(u)(x))^{(j)}).
\]
Denoting further $\nabla G_{\theta}(u)=\left(\partial_{a} G_{\theta}(u), \partial_{B} G_{\theta}(u)\right)$
\[
\partial_{a} G_{\theta}(u)=\left(\partial_{a_{1}} G_{\theta}(u), \ldots, \partial_{a_{M}} G_{\theta}(u)\right) \in \mathbb{R}^{M}, \quad \partial_{B} G_{\theta}(u)=\left(\partial_{b_{m}^j} G_{\theta}(u)(x)\right)_{m,j} \in \mathbb{R}^{M \times(d_k +d_y+d_b)}
\]
we then have for any $u \in\mathcal{U}$, $x\in supp(\rho_X)$,

\begin{align*}
\left\|\partial_{a} G_{\theta}(u)(x)-\partial_{\tilde{a}} G_{\tilde{\theta}}(u)(x)\right\|_{2}^2&=\sum_{m=1}^{M}\left(\partial_{a_{m}} G_{\theta}(u)(x)-\partial_{\tilde{a}_{m}} G_{\tilde{\theta}}(u)(x)\right)^{2} \\
& =  \frac{1}{M}\sum_{m=1}^M\left(\sigma(\langle b_m,  J(u)(x)\rangle)-\sigma(\langle \tilde b_m, J(u)(x)\rangle )\right)^2\\
& \leq \frac{\|\sigma'\|_\infty^2}{M}\sum_{m=1}^M\left\|b_m-\tilde b_m\right\|_2^2\|J(u)(x)\|_2^2\\
& \leq \frac{C_\sigma^2}{M}\|B-\tilde{B}\|_F^2,
\end{align*}

where we used the assumption $\|J(u)(x)\|_2\leq\|J(u)(x)\|_1\leq 1$.

In addition, 
\begin{align*}
&\left\|\partial_{B} G_{\theta}(u)(x)-\partial_{\tilde{B}} G_{\tilde{\theta}}(u)(x)\right\|_{2}^2=\sum_{m=1}^{M}\sum_{j=1}^{d_k+d_y+d_b} \left(\partial_{b_{m}} G_{\theta}(u)(x)-\partial_{\tilde{b}_{m}} G_{\tilde{\theta}}(u)(x)\right)^{2} \\
& =  \frac{1}{M}\sum_{m=1}^M \sum_{j=1}^{d_k+d_y+d_b} \left(a_m\sigma'(\langle b_m, J(u)(x)\rangle) (J(u)(x))^{(j)})-\tilde{a}_M\sigma'(\langle \tilde b_m, J(u)(x)\rangle )(J(u)(x))^{(j)})\right)^2\\
& \leq (I)+(II)
\end{align*}

where
\[
(I)=\frac{2}{M} \sum_{m=1}^{M}\sum_{j=1}^{d_k+d_y+d_b} \left(a_{m}-\tilde{a}_{m}\right)^{2} \left(\sigma^{\prime}\left\langle b_m,  J(u)(x)\right\rangle\right)^{2} ((J(u)(x))^{(j)})^{2} \leq \frac{2C_\sigma^2}{M}\|a-\tilde{a}\|_{2}^{2}
\]

and where
\begin{align*}
(II)&=\frac{2}{M}\sum_{m=1}^M\sum_{j=1}^{d_k+d_y+d_b} (\tilde{a}_m-a_{0,m})^2\left(\sigma'(\langle b_m,  J(u)(x)\rangle)-\sigma'(\langle \tilde b_m,  J(u)(x)\rangle)\right)^2((J(u)(x))^{(j)}))^2\\
&\,\,\,\,\,\,\,\,\,+\frac{2}{M}\sum_{m=1}^M \sum_{j=1}^{d_k+d_y+d_b} a_{0,m}^2\left(\sigma'(\langle b_m ,J(u)(x)\rangle)-\sigma'(\langle\tilde{b}_m, J(u)(x)\rangle)\right)^2((J(u)(x))^{(j)}))^2\\
&\leq\frac{2R^2C_\sigma^2}{M}\sum_{m=1}^M\sum_{j=1}^{d_k+d_y+d_b} \left(\langle b_m-\tilde{b}_m, J(u)(x)\rangle\right)^2((J(u)(x))^{(j)}))^2\\
&\,\,\,\,\,\,\,\,\,+\frac{2\tau^2C_\sigma^2}{M}\sum_{m=1}^M \sum_{j=1}^{d_k+d_y+d_b} \left(\langle b_m-\tilde{b}_m,  J(u)(x)\rangle\right)^2((J(u)(x))^{(j)}))^2\\
&\leq\frac{2C_\sigma^2(R^2+\tau^2)}{M}\|B-\tilde{B}\|^2_F
\end{align*}

Finally,

\[
\begin{aligned}
\left\|\nabla G_{\theta}(u)(x)-\nabla g_{\tilde{\theta}}(u)(x)\right\|_{\Theta}^{2} &=\left\|\partial_{a} G_{\theta}(u)(x)-\partial_{\tilde{a}} G_{\tilde{\theta}}(u)(x)\right\|_{2}^{2}+\left\|\partial_{B} G_{\theta}(u)(x)-\partial_{\tilde{B}} G_{\tilde{\theta}}(u)(x)\right\|_{F}^{2} \\
& \leq \frac{C_\sigma^2}{M}\|B-\tilde{B}\|_F^2+\frac{2C_\sigma^2}{M}\|a-\tilde{a}\|_{2}^{2}+\frac{162C_\sigma^2(R^2+\tau^2)}{M}\|B-\tilde{B}\|^2_F\\
& \leq \frac{2}{M} \max \left\{C_\sigma^2,C_\sigma^2\left(R^2+\tau^2 \right)\right\}\left(\|a-\tilde{a}\|_{2}^{2}+\|B-\tilde{B}\|_{F}^{2}\right)\\
&\leq \frac{C_{\nabla G}^2}{M}\|\theta-\tilde{\theta}\|_{F}^2.
\end{aligned}
\]
\end{proof}

\vspace{0.3cm}

\begin{proposition}
\label{prop6}
Denote by $r_{(\theta,\bar{\theta})}$ the remainder term of the Taylor expansion from
\begin{align*}
G_{\theta}(u)(x)-G_{\bar{\theta}} (u)(x)&= \left\langle\nabla G_{\bar{\theta}}(u)(x), \theta-\bar{\theta}\right\rangle_{\Theta}+r_{(\theta,\bar{\theta})}(u)(x).
\end{align*}
Suppose Assumption \ref{ass:neurons} holds and $\theta, \bar{\theta} \in Q_{\theta_0}(R)$. 
Then, for all $x\in\cX$, $u \in\mathcal{U}$, we have 
\begin{align*}
&a)\,\,\,\,\,\,\, |r_{(\theta,\bar{\theta})}(u)(x)| \leq \frac{C_{\nabla G}(R)}{\sqrt{M}} \|\theta-\bar{\theta}\|^2_\Theta,\\[10pt]
&b) \,\,\,\,\,\,\, \left\langle\nabla G_{\theta}(u)(x)-\nabla G_{\bar{\theta}}(u)(x), \theta-\bar{\theta}\right\rangle_{\Theta}\leq \frac{C_{\nabla G}(R)}{\sqrt{M}}\|\theta-\bar{\theta}\|^2_{\Theta}.
\end{align*}
where $C_{\nabla G}(R)$  is defined in Proposition \ref{prop:LipschitzGradient}. 
\end{proposition}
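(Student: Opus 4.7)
Both parts follow in a straightforward way from the pointwise Lipschitz bound for $\nabla G_{\theta}(u)(x)$ established in Proposition \ref{prop:LipschitzGradient}, combined with the Cauchy--Schwarz inequality. The only care needed is to verify that the line segment between $\theta$ and $\bar\theta$ stays inside $Q_{\theta_0}(R)$, so that the Lipschitz estimate applies along the whole segment.

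\textbf{Part (b).} This is a one-line application of Cauchy--Schwarz in $\Theta$ followed by Proposition \ref{prop:LipschitzGradient}:
\[
\left\langle\nabla G_{\theta}(u)(x)-\nabla G_{\bar{\theta}}(u)(x),\,\theta-\bar{\theta}\right\rangle_{\Theta}
\leq \left\|\nabla G_{\theta}(u)(x)-\nabla G_{\bar{\theta}}(u)(x)\right\|_{\Theta}\,\|\theta-\bar\theta\|_{\Theta}
\leq \frac{C_{\nabla G}(R)}{\sqrt{M}}\|\theta-\bar\theta\|_{\Theta}^{2}.
\]

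\textbf{Part (a).} I would first observe that $Q_{\theta_0}(R)$ is a ball, hence convex, so $\theta_{t}:=\bar\theta+t(\theta-\bar\theta)\in Q_{\theta_0}(R)$ for every $t\in[0,1]$: indeed $\|\theta_t-\theta_0\|_\Theta\leq (1-t)\|\bar\theta-\theta_0\|_\Theta+t\|\theta-\theta_0\|_\Theta\leq R$. Then, using the integral form of the Taylor remainder along this segment (applied pointwise in $x$), I write
\[
r_{(\theta,\bar\theta)}(u)(x)=G_\theta(u)(x)-G_{\bar\theta}(u)(x)-\langle\nabla G_{\bar\theta}(u)(x),\theta-\bar\theta\rangle_\Theta
=\int_{0}^{1}\langle\nabla G_{\theta_t}(u)(x)-\nabla G_{\bar\theta}(u)(x),\,\theta-\bar\theta\rangle_\Theta\,dt.
\]
Applying Cauchy--Schwarz under the integral and the Lipschitz bound from Proposition \ref{prop:LipschitzGradient} (with $\|\theta_t-\bar\theta\|_\Theta=t\|\theta-\bar\theta\|_\Theta$) yields
\[
|r_{(\theta,\bar\theta)}(u)(x)|\leq \int_{0}^{1}\frac{C_{\nabla G}(R)}{\sqrt M}\,t\,\|\theta-\bar\theta\|_\Theta^{2}\,dt=\frac{C_{\nabla G}(R)}{2\sqrt M}\,\|\theta-\bar\theta\|_\Theta^{2},
\]
which is sharper than the claimed bound (the factor $1/2$ is absorbed into $C_{\nabla G}(R)$).

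\textbf{Main obstacle.} There is no substantive obstacle: the proof is purely a Taylor + Cauchy--Schwarz argument on top of Proposition \ref{prop:LipschitzGradient}. The only subtlety worth flagging is the convexity check ensuring the Lipschitz constant $C_{\nabla G}(R)$ is valid uniformly along the segment, and confirming that pointwise differentiability in $\theta$ of $G_\theta(u)(x)$ holds (which is clear from the explicit formula in Proposition \ref{prop:LipschitzGradient}, since $\sigma\in C^{2}$).
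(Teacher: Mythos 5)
Your proof is correct. Part~(b) is identical to the paper's argument: Cauchy--Schwarz in $\Theta$ followed by the pointwise Lipschitz bound of Proposition~\ref{prop:LipschitzGradient}. For part~(a), however, you take a genuinely different and arguably cleaner route. The paper writes the remainder in Lagrange form, $r_{(\theta,\bar\theta)}(u)(x)=(\theta-\bar\theta)^\top\nabla^2 G_{\tilde\theta}(u)(x)(\theta-\bar\theta)$ for some intermediate $\tilde\theta$ (note that they drop the usual factor $1/2$, which is harmless for an upper bound but sloppy), and then bounds the operator norm of the Hessian by the Lipschitz constant of the gradient via a difference-quotient limit. You instead use the integral form $r_{(\theta,\bar\theta)}(u)(x)=\int_0^1\langle\nabla G_{\theta_t}(u)(x)-\nabla G_{\bar\theta}(u)(x),\theta-\bar\theta\rangle_\Theta\,dt$, which avoids any reference to second derivatives of $G_\theta$, only needs Lipschitz continuity of $\nabla_\theta G_\theta(u)(x)$ (already established in Proposition~\ref{prop:LipschitzGradient}), and yields the sharper constant $C_{\nabla G}(R)/(2\sqrt M)$. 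Your explicit convexity check that the segment $\theta_t=\bar\theta+t(\theta-\bar\theta)$ stays inside $Q_{\theta_0}(R)$ is a detail the paper leaves implicit, and you are right to flag it as the one hypothesis-verification step on which the Lipschitz constant depends.
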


\begin{proof}[Proof of Proposition \ref{prop6}]
$a)\,$ Recall that the remainder term $r_{(\theta,\bar{\theta})}$ of the Taylor expansion is given by
\[
r_{(\theta,\bar{\theta})}(u)(x) =\left(\theta-\bar{\theta}\right)^T \nabla^2 G_{\tilde{\theta}}(u)(x)\left(\theta-\bar{\theta}\right).
\]
where $\nabla^2 G_{\tilde{\theta}}(u)(x)$ denotes the Hessian matrix evaluated at some $\tilde{\theta}$ on the line between $\theta$ and $\bar{\theta}$. 
Note that the remainder term can be bounded by the Lipschitz constant $L$ of the gradients i.e. for any $\mathbf{v} \in \Theta$ we have,
\begin{align*}
    \begin{aligned}
 \nabla^2 G_{\tilde{\theta}}(u)(x) \mathbf{v} &= \lim _{h \rightarrow 0} \frac{\nabla G_{\tilde{\theta}+h\mathbf{v}}(u)(x)-\nabla G_{\tilde{\theta}}(u)(x)}{h} \\
 &\leq \lim _{h \rightarrow 0} \frac{\|\nabla G_{\tilde{\theta}+h\mathbf{v}}(x)-\nabla G_{\tilde{\theta}}(u)(x)\|_2}{|h|} \\
 &\leq \lim _{h \rightarrow 0} L \frac{|h|\|\mathbf{v}\|_2}{|h|} \\
 &\leq L\|\mathbf{v}\|.
\end{aligned}
\end{align*}
Proposition \ref{prop:LipschitzGradient} shows that for the set $Q_{\theta_0}(R)$ this Lipschitz constant is given by $L = C_{\nabla G}/\sqrt{M}$.
Therefore by setting $\mathbf{v}= \theta-\bar{\theta}$ we obtain for the remainder term,
\[
|r_{(\theta,\bar{\theta})}(u)(x)| \leq \frac{C_{\nabla G}(R)}{\sqrt{M}} \|\theta-\bar{\theta}\|^2_\Theta.
\]
$b)\,$ Using again Proposition \ref{prop:LipschitzGradient} together with Cauchy - Schwarz inequality, we obtain for the second inequality,
\[\left\langle\nabla G_{\theta}(u)(x)-\nabla G_{\bar{\theta}}(u)(x), \theta-\bar{\theta}\right\rangle_{\Theta}
\leq \frac{C_{\nabla G}(R)}{\sqrt{M}}\|\theta-\bar{\theta}\|^2_{\Theta}.\]
\end{proof}

\vspace{0.3cm}
\begin{proposition}
\label{NNObound}
Suppose Assumption \ref{ass:neurons} holds and $\theta, \bar{\theta} \in Q_{\theta_0}(R)$. Then for any $u \in \mathcal{U},x\in\mathcal{X}$, $t\geq0$ we have
\begin{align*}
\left|G_{\theta} (u)(x)-G_{\bar{\theta}} (u)(x)\right| \leq \kappa\|\theta-\bar{\theta}\|_{\Theta}+ \frac{C_{\nabla G}(R)}{\sqrt{M}}\|\theta-\bar{\theta}\|^2_{\Theta}.
\end{align*}

\end{proposition}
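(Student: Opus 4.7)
The plan is to apply the first-order Taylor expansion of $G_{\cdot}(u)(x)$ around $\bar{\theta}$ already recorded in Proposition \ref{prop6}, namely
\[
G_{\theta}(u)(x) - G_{\bar{\theta}}(u)(x) = \langle \nabla G_{\bar{\theta}}(u)(x),\, \theta - \bar{\theta}\rangle_{\Theta} + r_{(\theta,\bar{\theta})}(u)(x),
\]
and then to estimate the linear and remainder parts separately via the triangle inequality. The quadratic Taylor remainder is handled instantly by Proposition \ref{prop6}(a), which yields $|r_{(\theta,\bar{\theta})}(u)(x)| \le \tfrac{C_{\nabla G}(R)}{\sqrt{M}}\|\theta - \bar{\theta}\|_{\Theta}^{2}$; this is precisely the second summand in the target inequality.

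The remaining task is to show $|\langle \nabla G_{\bar{\theta}}(u)(x),\, \theta - \bar{\theta}\rangle_{\Theta}| \le \kappa\,\|\theta - \bar{\theta}\|_{\Theta}$. By Cauchy--Schwarz this reduces to the pointwise gradient bound $\|\nabla G_{\bar{\theta}}(u)(x)\|_{\Theta} \le \kappa$. Writing out the partial derivatives
\[
\partial_{a_{m}} G_{\bar{\theta}}(u)(x) = \tfrac{1}{\sqrt{M}} \sigma\bigl(\langle \bar{b}_{m}, J(u)(x)\rangle\bigr), \qquad \partial_{b_{m}^{j}} G_{\bar{\theta}}(u)(x) = \tfrac{\bar{a}_{m}}{\sqrt{M}} \sigma'\bigl(\langle \bar{b}_{m}, J(u)(x)\rangle\bigr)\,(J(u)(x))^{(j)},
\]
squaring and summing over $m$ and $j$, and then invoking the activation bounds $|\sigma(t)| \le 1+|t|$, $\|\sigma'\|_{\infty} \le C_{\sigma}$ from Assumption \ref{ass:neurons} together with $\|J(u)(x)\|_{1} \le 1$ from Assumption \ref{ass:input}, I would recover exactly the estimate already used in Section \ref{sec:setting} to derive $\sup_{u,u'}\|K_{M}(u,u')\| \le 4 + \tau^{2}C_{\sigma}^{2} = \kappa^{2}$.

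The main obstacle I anticipate is that this gradient bound is cleanest at the initialization $\theta_{0}$, where $\|b_{m}^{(0)}\|_{2} = 1$ and $|a_{m}^{(0)}| = \tau$ hold for each neuron individually; at a perturbed $\bar{\theta} \in Q_{\theta_{0}}(R)$ only the aggregate Frobenius control $\|\bar{B} - B^{(0)}\|_{F} \le R$ and $\|\bar{a} - a^{(0)}\|_{2} \le R$ is available. To keep the leading constant equal to $\kappa$ I would pass through the averaged estimates $\tfrac{1}{M}\sum_{m} \|\bar{b}_{m}\|_{2}^{2} \le (1 + R/\sqrt{M})^{2}$ and $\tfrac{1}{M}\sum_{m} \bar{a}_{m}^{2} \le (\tau + R/\sqrt{M})^{2}$ and absorb the resulting lower-order $O(R/\sqrt{M})$ correction into the quadratic remainder (which already carries the Lipschitz-constant prefactor $C_{\nabla G}(R)/\sqrt{M}$). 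Combining the linear and remainder bounds via the triangle inequality then delivers the claim.
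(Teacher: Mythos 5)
Your proof follows exactly the paper's route: Taylor-expand around $\bar{\theta}$, bound the remainder via Proposition \ref{prop6}(a), and bound the linear term by Cauchy--Schwarz together with the pointwise gradient bound $\|\nabla G_{\bar{\theta}}(u)(x)\|_{\Theta}\le\kappa$. You are in fact more careful than the paper's two-line argument, which silently uses the constant $\kappa$ for the gradient at the perturbed point $\bar{\theta}$ even though the clean bound only holds at $\theta_0$; the $O(R/\sqrt{M})$ correction you identify is harmless in practice because the proposition is only ever invoked with $\bar{\theta}=\theta_0$.
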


\begin{proof}
From \ref{prop6} we have
\begin{align*}
G_{\theta}(u)(x)-G_{\bar{\theta}} (u)(x)&= \left\langle\nabla G_{\bar{\theta}}(u)(x), \theta-\bar{\theta}\right\rangle_{\Theta}+r_{(\theta,\bar{\theta})}(u)(x)\\
&\leq \kappa\|\theta-\bar{\theta}\|_{\Theta}+ \frac{C_{\nabla G}(R)}{\sqrt{M}}\|\theta-\bar{\theta}\|^2_{\Theta}.
\end{align*}
\end{proof}


\subsection{Uniform Bounds in Hilbert Spaces}

In this section we recall uniform convergence bounds for functions with values in a real separable Hilbert space. 


\begin{proposition}
\label{Rademacher-Mohri}
Let $\mathcal{G}$ be a family of functions mapping from $\mathcal{Z}:=\mathcal{U}\times\mathcal{V}$ to $[-C,C]$. Then, for any $\delta>0$, with probability at least $1-\delta$ over the draw of an i.i.d. sample $S$ of size $\nU$, the following holds

\begin{align*}
\sup_{g\in\mathcal{G}}\left| \frac{1}{\nU} \sum_{i=1}^n g\left(z_i\right)-\mathbb{E}[g(z)] \right|\leq 8C \widehat{\Re}_S(\mathcal{G})+12C \sqrt{\frac{\log \frac{2}{\delta}}{2 \nU}},
\end{align*}

where $$\widehat{\Re}_S(\mathcal{G}):=\mathbb{E}_r\sup_{g\in \mathcal{G}}\frac{1}{\nU}\sum_{i=1}^{\nU}g(x_i)r_i.$$
denotes the empirical Rademacher complexity with $r_i$ being iid rademacher variables. 
\end{proposition}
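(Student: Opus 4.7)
The plan is to follow the classical Mohri-style argument that combines McDiarmid's bounded-differences inequality with a symmetrization argument; the only mild nuance is that the functions take values in $[-C,C]$ (rather than $[0,1]$) and we are bounding the absolute deviation (rather than the one-sided deviation), which is what produces the constants $8$ and $12$ instead of the usual $2$ and $3$.

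First, I would define
\[
\Phi(S) \;:=\; \sup_{g\in\mathcal{G}} \left| \frac{1}{\nU}\sum_{i=1}^{\nU} g(z_i) - \mathbb{E}[g(z)] \right|.
\]
Because every $g$ is bounded by $C$ in absolute value, replacing one sample $z_i$ by $z_i'$ alters the empirical mean, and hence $\Phi(S)$, by at most $2C/\nU$. McDiarmid's inequality therefore yields, with probability at least $1-\delta/2$,
\[
\Phi(S) \;\leq\; \mathbb{E}_S[\Phi(S)] + 2C\sqrt{\frac{\log(2/\delta)}{2\nU}}.
\]

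Next, I would control $\mathbb{E}_S[\Phi(S)]$ by symmetrization. Introducing an independent ghost sample $S'=(z_1',\dots,z_{\nU}')$, Jensen's inequality gives
\[
\mathbb{E}_S[\Phi(S)] \;\leq\; \mathbb{E}_{S,S'}\sup_{g}\left|\frac{1}{\nU}\sum_{i=1}^{\nU}(g(z_i')-g(z_i))\right|.
\]
Because the $z_i'-z_i$ terms are symmetric, inserting independent Rademacher signs $r_i$ does not change the distribution, and the triangle inequality then bounds the right-hand side by
\[
2\,\mathbb{E}_{S,r}\sup_{g}\left|\frac{1}{\nU}\sum_{i=1}^{\nU} r_i\, g(z_i)\right|.
\]
Splitting $|\cdot|$ as the maximum of the positive and negative parts (equivalently, applying symmetrization separately to $\mathcal{G}$ and $-\mathcal{G}$) upgrades this to $4\,\mathbb{E}_S[\widehat{\Re}_S(\mathcal{G})]$, so that $\mathbb{E}_S[\Phi(S)] \leq 4\,\mathbb{E}_S[\widehat{\Re}_S(\mathcal{G})]$.

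Finally, I would pass from the expected Rademacher complexity to its empirical counterpart by a second application of McDiarmid: the map $S \mapsto \widehat{\Re}_S(\mathcal{G})$ again has bounded differences of order $2C/\nU$, so with probability at least $1-\delta/2$,
\[
\mathbb{E}_S[\widehat{\Re}_S(\mathcal{G})] \;\leq\; \widehat{\Re}_S(\mathcal{G}) + 2C\sqrt{\frac{\log(2/\delta)}{2\nU}}.
\]
A union bound over the two exceptional events, together with the inequality $4+2+2\cdot 4 \leq 12$ for the slack coefficients, then yields the stated inequality. The only real obstacle is book-keeping the constants: the factor $8$ in front of $\widehat{\Re}_S$ comes from the ``$4$'' in the absolute-value symmetrization doubled again by the bounded-differences step for $\widehat{\Re}_S$, and the $12$ is the sum of the two McDiarmid slacks (each of size $2C$) plus the extra $2C$ coming from replacing expected by empirical Rademacher complexity. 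Everything else is standard.
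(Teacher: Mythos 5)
Your proof is correct and is essentially the standard McDiarmid-plus-symmetrization argument from Mohri--Rostamizadeh--Talwalkar, which is precisely what the paper relies on: it does not reprove the proposition but simply cites \cite{Mohri}. One small remark: your closing constant bookkeeping is garbled (``$4+2+2\cdot 4\leq 12$'' does not match the computation you actually carried out), but the bound your steps genuinely yield, namely $4\,\widehat{\Re}_S(\mathcal{G})+10C\sqrt{\log(2/\delta)/(2\nU)}$, is stronger than the stated one with constants $8$ and $12$, so the claim follows (modulo the routine care needed in passing from $\mathbb{E}_r\sup_g|\cdot|$ to $2\,\widehat{\Re}_S(\mathcal{G})$, which the generous constants absorb).
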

\begin{proof}
The proof can be found in \cite{Mohri}.
\end{proof}


\begin{proposition}

\label{prop:Rademacher-Lipschitz}
Let $\tilde\sigma$ be Lipschitz continuous with Lipschitz constant $c_\sigma$, $\,h_j:\mathcal{Z}=\mathcal{U}\times\mathcal{V}\rightarrow\mathbb{R}$ and define 
$$
\mathcal{G}_C=\{g:B_{\tilde{d}}(C)\times\mathcal{Z} \rightarrow \mathbb{R}: g(b,z)=\frac{1}{\nX}\sum_{j=1}^{\nX}h_j(z)\tilde\sigma(\langle b,J(u)(x_j) \rangle)\}.
$$
where $B_{\tilde{d}}(C)$ denotes the closed ball of radius $C$ in $\mbr^{\tilde{d}}$. 
Provided the Assumptions \ref{ass:input} and \ref{ass:neurons} we have
$$
\widehat{\Re}_S(\mathcal{G}_C) \leq \frac{CC_hc_\sigma }{\sqrt{\nU}},
$$
with $C_h:=\max_{j\in[\nX]}\|h_j\|_\infty.$
\end{proposition}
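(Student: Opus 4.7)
\textbf{Proof plan for Proposition \ref{prop:Rademacher-Lipschitz}.}
The plan is to bound $\widehat{\Re}_S(\mathcal{G}_C)$ by (i) pulling the average over the second-stage index $j$ outside the supremum, (ii) using the symmetry of the Rademacher variables to absorb the factor $h_j(z_i)$ into a Rademacher sequence with weights in $[0,1]$, (iii) peeling off the Lipschitz nonlinearity $\tilde\sigma$ via the Ledoux--Talagrand contraction principle, and (iv) evaluating the Rademacher complexity of the resulting linear class over a Euclidean ball in closed form.

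First I would write out the definition of $\widehat{\Re}_S(\mathcal{G}_C)$, and, since the supremum is subadditive and the factor $1/\nX$ does not depend on the supremum variable, swap the sum over $j$ with the supremum to obtain
\begin{equation*}
\widehat{\Re}_S(\mathcal{G}_C)
\leq \frac{1}{\nX}\sum_{j=1}^{\nX}\, \mathbb{E}_r \sup_{\|b\|_2\leq C}\frac{1}{\nU}\sum_{i=1}^{\nU} r_i\, h_j(z_i)\,\tilde\sigma(\langle b,J(u_i)(x_j)\rangle).
\end{equation*}
It then suffices to bound each summand uniformly in $j$ by $CC_h c_\sigma/\sqrt{\nU}$.

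Second, fix $j$. Because $r_i$ is symmetric, conditionally on the sample the variable $r_i h_j(z_i)$ has the same distribution as $|h_j(z_i)|\tilde r_i$ for a Rademacher sequence $\tilde r_i$. Setting $c_i := |h_j(z_i)|/C_h \in [0,1]$ and applying the elementary weighted-Rademacher comparison (shown by a one-line induction over $i$ which is valid precisely because $c_i \in [0,1]$) yields
\begin{equation*}
\mathbb{E}_r \sup_{\|b\|_2\leq C}\sum_i r_i\, h_j(z_i)\,\tilde\sigma(\langle b,J(u_i)(x_j)\rangle)
\leq C_h\,\mathbb{E}_r \sup_{\|b\|_2\leq C}\sum_i r_i\, \tilde\sigma(\langle b,J(u_i)(x_j)\rangle).
\end{equation*}
Next, since $\tilde\sigma$ is $c_\sigma$-Lipschitz, the Ledoux--Talagrand contraction principle strips off $\tilde\sigma$ at the cost of a factor $c_\sigma$, and the standard Cauchy--Schwarz/Jensen evaluation for a linear class indexed by a Euclidean ball gives
\begin{align*}
\mathbb{E}_r \sup_{\|b\|_2\leq C}\sum_i r_i\, \tilde\sigma(\langle b,J(u_i)(x_j)\rangle)
&\leq c_\sigma\, \mathbb{E}_r \sup_{\|b\|_2\leq C}\sum_i r_i\, \langle b,J(u_i)(x_j)\rangle\\
&= c_\sigma C\,\mathbb{E}_r \Bigl\|\sum_i r_i J(u_i)(x_j)\Bigr\|_2
\leq c_\sigma C \sqrt{\sum_i \|J(u_i)(x_j)\|_2^2} \leq c_\sigma C \sqrt{\nU},
\end{align*}
where the last inequality uses $\|J(u)(x)\|_2 \leq \|J(u)(x)\|_1 \leq 1$ from Assumption \ref{ass:input}. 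Collecting the three bounds and dividing by $\nU$ delivers the announced estimate.

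The main obstacle is the second step: one must absorb the data-dependent, $i$-indexed factor $h_j(z_i)$ without incurring any extra dimension-dependent or $\log$ factor. Reducing via the symmetry of $r_i$ to a Rademacher process with deterministic weights $c_i\in[0,1]$ and invoking the weighted comparison is the clean route; a direct application of contraction with the $i$-dependent Lipschitz maps $t\mapsto h_j(z_i)\tilde\sigma(t)$ would instead produce an unwanted factor of $2$ and thus fail to match the stated constant.
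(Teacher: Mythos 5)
Your proof is correct and the constants come out exactly as stated. It takes a somewhat different route from the paper's: the paper keeps the full function $g(b,z)=\frac{1}{\nX}\sum_j h_j(z)\tilde\sigma(\langle b,J(u)(x_j)\rangle)$ intact and re-derives a contraction step from scratch by induction over the Rademacher variables, comparing $g(\cdot,z_i)$ directly to $c_\sigma\tilde g(z_i,\cdot)$ with $\tilde g(z,b)=\frac{1}{\nX}\sum_j h_j(z)\langle b,J(u)(x_j)\rangle$, and then evaluates the resulting linear class by Cauchy--Schwarz exactly as you do. You instead first push the average $\frac{1}{\nX}\sum_j$ outside the supremum, reducing the problem to $\nX$ single-index Rademacher complexities, and then apply two clean black-box comparisons: the weighted comparison for the data-dependent factors $h_j(z_i)$ (obtained via the symmetry of $r_i$ and the coordinatewise contraction $t\mapsto c_i t$ with $c_i\in[0,1]$), followed by the one-sided Ledoux--Talagrand contraction for $\tilde\sigma$. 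Your decoupling over $j$ costs nothing, since each summand admits the same uniform bound, and it buys rigor: the paper's inductive step tacitly bounds $g(b_1,z)-g(b_2,z)$ by $c_\sigma\,|\tilde g(z,b_1-b_2)|$, which for $\nX>1$ does not follow from the Lipschitz property alone because the terms inside the absolute value on the right can cancel while those on the left do not; your route sidesteps this entirely (for $\nX=1$ the two arguments coincide). Your closing remark about using the one-sided contraction inequality, rather than the two-sided version with its factor of $2$, is also exactly the point needed to match the stated constant.
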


\begin{proof}[Proof of Proposition \ref{prop:Rademacher-Lipschitz}]
By definition of $\widehat{\Re}_S(\mathcal{G}_C)$ we have,
$$
\widehat{\Re}_S(\mathcal{G}_C)=\frac{1}{\nU} \mathbb{E}_r\left[\sup _{ \|b\|_2\leq C} \sum_{i=1}^{\nU} g\left(b,z_i\right)r_i\right]=\frac{1}{\nU} {\mathbb{E}}_{r_1, \ldots, r_{\nU-1}}\left[\mathbb{E}_{r_{\nU}}\left[\sup _{ \|a\|_2\leq C} q_{\nU-1}(b)+g\left(b,z_{\nU}\right)r_{\nU}\right]\right],
$$
where $q_{{\nU}-1}(b)=\sum_{i=1}^{{\nU}-1} g\left(b,z_i\right)r_i$. By definition of the supremum, for any $\epsilon>0$, there exist $\|b_{1}\|_2, \|b_{2}\|_2\leq C $ such that

\begin{align*}
& q_{{\nU}-1}\left(b_{1}\right)+g(b_{1},z_{\nU}) \geq(1-\epsilon)\left[\sup _{ \|b\|_2\leq C} q_{{\nU}-1}(b)+g\left(b,z_{\nU}\right)\right] \\
\text { and } & q_{{\nU}-1}\left(b_{2}\right)-g\left(b_{2},z_{\nU}\right) \geq(1-\epsilon)\left[\sup _{ \|b\|_2\leq C} q_{\nU-1}(b)-g\left(b,z_{\nU}\right)\right] .
\end{align*}

Thus by definition of $\mathbb{E}_{r_{\nU}}$,
\begin{align*}
& (1-\epsilon) \mathbb{E}_{r_{\nU}}\left[\sup _{ \|b\|_2\leq C} q_{{\nU}-1}(b)+r_{\nU}g(b,z_{\nU})\right] \\
= & (1-\epsilon)\left[\frac{1}{2}\sup _{ \|b\|_2\leq C}\left[q_{{\nU}-1}(b)+g(b,z_{\nU})\right]+\frac{1}{2}\sup _{\|b\|_2\leq C} \left[q_{{\nU}-1}(b)-g(b,z_{\nU})\right]\right] \\
\leq & \frac{1}{2}\left[q_{{\nU}-1}\left(b_{1}\right)+g(b_{1},z_{\nU})\right]+\frac{1}{2}\left[q_{{\nU}-1}\left(b_{2}\right)-g(b_{2},z_{\nU})\right] .
\end{align*}

Now set  $s^{{\nU}} =\operatorname{sgn}\tilde{g}(z_{\nU},b_1-b_2)$ with $\tilde{g}(z,b):=\frac{1}{\nX}\sum_{j=1}^{\nX}h_j(z)\langle b,J(u)(x_j) \rangle$ (note that $s^{\nU}$ is independent of $r_{\nU}$). Then, the previous inequality implies together with the Lipschitz property:

\begin{align*}
& (1-\epsilon) \mathbb{E}_{r_{\nU}}\left[\sup _{ \|b\|_2\leq C} q_{{\nU}-1}(b)+r_{\nU}g(b,z_{\nU})\right] \\
& \leq \frac{1}{2}\left[q_{{\nU}-1}\left(b_{1}\right)+q_{{\nU}-1}\left(b_{2}\right)+c_\sigma  s^{\nU} \tilde{g}(z_{\nU},b_1-b_2)  \right] \\
& \leq \frac{1}{2} \sup _{ \|b\|_2\leq C}\left[q_{{\nU}-1}(b)+c_\sigma  s^{\nU} \tilde{g}(z_{\nU},b)\right]+\frac{1}{2} \sup _{ \|b\|_2\leq C}\left[q_{{\nU}-1}(b)-c_\sigma  s^{\nU} \tilde{g}(z_{\nU},b)\right] \\
& = \frac{1}{2} \sup _{ \|b\|_2\leq C}\left[q_{{\nU}-1}(b)+c_\sigma  \tilde{g}(z_{\nU},b)\right]+\frac{1}{2} \sup _{ \|b\|_2\leq C}\left[q_{{\nU}-1}(b)-c_\sigma  \tilde{g}(z_{\nU},b)\right] \\
& = \mathbb{E}_{r_n}\left[\sup _{ \|b\|_2\leq C} q_{{\nU}-1}(b)+r_{\nU}c_\sigma  \tilde{g}(z_{\nU},b)\right].
\end{align*}
Therefore we proved
\begin{align}
(1-\epsilon) \mathbb{E}_{r}\left[\sup _{ \|b\|_2\leq C} q_{{\nU}}(b)\right]\leq\mathbb{E}_{r_{\nU}}\left[\sup _{ \|b\|_2\leq C} q_{{\nU}-1}(b)+r_{\nU}c_\sigma  \tilde{g}(z_{\nU},b)\right].\label{radecondition}
\end{align}

Proceeding in the same way for all other $q_i$ with  $i\leq {\nU}-1$ leads to 
\begin{align*}
(1-\epsilon)^{\nU}\frac{1}{{\nU}} \mathbb{E}_r\left[\sup _{ \|b\|_2\leq C} \sum_{i=1}^{\nU} g\left(b,z_i\right)r_i\right]&\leq \frac{1}{{\nU}} \mathbb{E}_r\left[\sup _{ \|b\|_2\leq C} \sum_{i=1}^{\nU} r_ic_\sigma  \tilde{g}(z_{i},b)\right]\\
&=\frac{c_\sigma}{{\nU}} \mathbb{E}_r\left[\sup _{ \|b\|_2\leq C} \left\langle b,\sum_{i=1}^{\nU} r_i  \frac{1}{\nX}\sum_{j=1}^{\nX}h_j(z)J(u)(x_j)\right\rangle \right]\\
&\leq \frac{Cc_\sigma}{{\nU}} \mathbb{E}_r\left[\left\|\sum_{i=1}^{\nU} r_i  \frac{1}{\nX}\sum_{j=1}^{\nX}h_j(z)J(u)(x_j)\right\|_2\right]\\
&\leq \frac{Cc_\sigma}{{\nU}} \sqrt{\sum_{k=1}^{\tilde{d}} \mathbb{E}_r\left(\sum_{i=1}^{\nU} r_i  \frac{1}{\nX}\sum_{j=1}^{\nX}h_j(z)J(u)(x_j)^{(k)}\right)^2}\\
&= \frac{Cc_\sigma}{{\nU}} \sqrt{\sum_{k=1}^{\tilde{d}} \sum_{i=1}^{\nU}  \left(\frac{1}{\nX}\sum_{j=1}^{\nX}h_j(z)J(u)(x_j)^{(k)}\right)^2 }\\
&\leq \frac{CC_hc_\sigma}{{\nU}} \sqrt{\frac{1}{\nX}\sum_{j=1}^{\nX} \sum_{i=1}^{\nU}\sum_{k=1}^{\tilde{d}}  \left(J(u)(x_j)^{(k)}\right)^2 }\\
&\leq \frac{CC_hc_\sigma}{\sqrt{{\nU}}}.
\end{align*}

Therefore we obtain for $\epsilon \rightarrow 0$
$$\frac{1}{{\nU}} \mathbb{E}_r\left[\sup _{ \|b\|_2\leq C} \sum_{i=1}^{\nU} g\left(b,z_i\right)r_i\right]\leq \frac{CC_hc_\sigma}{\sqrt{{\nU}}}.$$

\end{proof}


\begin{proposition}
\label{unifboundE_{j,k}}
Let $\tilde\sigma$ be Lipschitz continuous with Lipschitz constant $c_\sigma$ and  $h_j:\mathcal{Z}=\mathcal{U}\times\mathcal{V}\rightarrow\mathbb{R}$  such that $\mathbb{E} [h_j(z)\tilde\sigma(b^\top J(u)(x)) ]=0$ for any $\|b\|_2\leq C_{B}$ and $\max_{j\in[\nX]}\|h_j\|_\infty\leq c_h$. Then we have with probability at least $1-\delta$,
$$
 \sup_{\|b\|_2\leq C_{B}}\left|\frac{1}{{\nU}\nX}\sum_{i=1}^{\nU}\sum_{j=1}^{\nX}h_j(z_i)\tilde\sigma(b^\top J(u_i)(x_j)) \right|\leq \frac{12C_h\|\tilde{\sigma}\|_\infty }{\sqrt{{\nU}}}\left(C_{B}c_\sigma C_h + \sqrt{\log \frac{2}{\delta}}\right)\,.
$$

\end{proposition}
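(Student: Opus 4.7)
The plan is to recognize this as a standard uniform deviation bound that follows almost immediately by combining the two preceding propositions. Define the function class
\[
\mathcal{G}_{C_B} = \left\{ g_b : z \mapsto \tfrac{1}{n_X}\sum_{j=1}^{n_X} h_j(z)\,\tilde\sigma(b^\top J(u)(x_j)) \;:\; \|b\|_2 \leq C_B \right\}.
\]
Under the hypotheses, each $g_b \in \mathcal{G}_{C_B}$ is bounded by $C_h \|\tilde\sigma\|_\infty$, and by the assumption $\mathbb{E}[h_j(z)\tilde\sigma(b^\top J(u)(x))]=0$ for all admissible $b$ we have $\mathbb{E}[g_b(z)] = 0$. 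Hence the quantity we want to bound is exactly $\sup_{g \in \mathcal{G}_{C_B}} \bigl|\frac{1}{n_U}\sum_{i} g(z_i) - \mathbb{E}[g(z)]\bigr|$.

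Next, I would apply Proposition \ref{Rademacher-Mohri} to $\mathcal{G}_{C_B}$ with constant $C := C_h \|\tilde\sigma\|_\infty$, obtaining with probability at least $1-\delta$ the bound
\[
\sup_{\|b\|_2 \leq C_B}\left|\tfrac{1}{n_U n_X}\sum_{i,j} h_j(z_i)\tilde\sigma(b^\top J(u_i)(x_j))\right|
\;\leq\; 8\,C_h \|\tilde\sigma\|_\infty\,\widehat{\Re}_S(\mathcal{G}_{C_B}) + 12\,C_h \|\tilde\sigma\|_\infty\sqrt{\tfrac{\log(2/\delta)}{2 n_U}}.
\]
Finally, Proposition \ref{prop:Rademacher-Lipschitz} directly gives $\widehat{\Re}_S(\mathcal{G}_{C_B}) \leq C_B C_h c_\sigma / \sqrt{n_U}$; plugging this in and absorbing numerical constants (using $8 \leq 12$ and $1/\sqrt{2} \leq 1$) yields the claimed inequality
\[
\sup_{\|b\|_2 \leq C_B}\left|\tfrac{1}{n_U n_X}\sum_{i,j} h_j(z_i)\tilde\sigma(b^\top J(u_i)(x_j))\right| \;\leq\; \frac{12 C_h \|\tilde\sigma\|_\infty}{\sqrt{n_U}}\bigl(C_B c_\sigma C_h + \sqrt{\log(2/\delta)}\bigr).
\]

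There is no real obstacle here beyond verifying the two hypotheses of the preceding results: boundedness of the class (which uses $\|\tilde\sigma\|_\infty < \infty$ together with $\|h_j\|_\infty \leq C_h$) and the mean-zero property (which is exactly the assumption imposed on $h_j$). The contraction/peeling argument that turns the Rademacher complexity of the $\tilde\sigma$-composition into that of the linear class $b \mapsto \langle b, \cdot \rangle$ is already carried out inside Proposition \ref{prop:Rademacher-Lipschitz}, so the present proof is essentially a bookkeeping step.
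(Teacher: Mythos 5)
Your proof is correct and follows the same route as the paper: verify that the class $\mathcal{G}_{C_B}$ is bounded by $C_h\|\tilde\sigma\|_\infty$ and mean-zero, apply Proposition \ref{Rademacher-Mohri}, then bound the empirical Rademacher complexity via Proposition \ref{prop:Rademacher-Lipschitz} and absorb the numerical constants. No gaps.
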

\begin{proof}
From Proposition \ref{Rademacher-Mohri} we have with probability $1-\delta$
\begin{align}
\sup_{\|b\|_2\leq C_{B}}\left|\frac{1}{{\nU}\nX}\sum_{i=1}^{\nU}\sum_{j=1}^{\nX}h_j(z_i)\tilde\sigma(b^\top J(u_i)(x_j)) \right|\leq 8C_h\|\tilde{\sigma}\|_\infty \widehat{\Re}_S(\mathcal{G}_{C_B})+12C_h\|\tilde{\sigma}\|_\infty \sqrt{\frac{\log \frac{2}{\delta}}{2 {\nU}}}, \label{mohribound}
\end{align}

where $$\widehat{\Re}_S(\mathcal{G}):=\mathbb{E}_r\sup_{g\in \mathcal{G}}\frac{1}{{\nU}}\sum_{i=1}^{\nU}g(x_i)r_i.$$
denotes the empirical Rademacher complexity with $r_i$ being iid rademacher variables and the function class $\mathcal{G}_{C_B}$ is defined as 
$$
\mathcal{G}_{C_B}=\left\{g:B_{\tilde{d}}(C_B)\times\mathcal{Z} \rightarrow \mathbb{R}: g(b,z)=\sum_{j=1}^{\nX}h_j(z_i)\tilde\sigma(b^\top J(u_i)(x_j))\right\}.
$$
From Proposition \ref{prop:Rademacher-Lipschitz} we have 
$$
\widehat{\Re}_S(\mathcal{G}_{C_B}) \leq \frac{C_{B}c_\sigma C_h}{\sqrt{{\nU}}}.
$$
Plugging this inequality into \eqref{mohribound} proves the claim.

\end{proof}

\vspace{0.3cm}


\subsection{Elementary Inequalities}


\begin{lemma}
\label{prop0}
The function $f(x)=(1-x)^jx^r$ for $j,r >0$ has a global maximum 
on $[0,1]$ at $\frac{r}{r+j}$. 
Therefore we have $\sup_{x\in[0,1]}f(x)\leq \left(\frac{r}{r+j}\right)^r$.
\end{lemma}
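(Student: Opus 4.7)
The plan is to locate the critical points of $f$ on $[0,1]$ by standard calculus and then compare the critical value with the boundary values. First, I would compute the derivative by the product rule:
\[
f'(x) = -j(1-x)^{j-1}x^{r} + r(1-x)^{j}x^{r-1} = (1-x)^{j-1}x^{r-1}\bigl[r - (r+j)x\bigr],
\]
which is valid for $x \in (0,1)$. Setting $f'(x)=0$ on the open interval gives the unique interior critical point $x^{*} = r/(r+j)$, since the factor $(1-x)^{j-1}x^{r-1}$ is strictly positive there.

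Next, I would verify that this critical point is a global maximum. At the boundary we have $f(0) = f(1) = 0$ (using $j, r > 0$), whereas $f$ is strictly positive on $(0,1)$, so the supremum is attained in the interior. The sign of $f'$ shows $f$ is increasing on $(0, x^{*})$ and decreasing on $(x^{*}, 1)$, confirming that $x^{*}$ is the unique global maximizer.

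Finally, for the stated bound I would evaluate
\[
f(x^{*}) = \left(\frac{j}{r+j}\right)^{j}\left(\frac{r}{r+j}\right)^{r},
\]
and observe that $j/(r+j) \in (0,1)$, so $\bigl(j/(r+j)\bigr)^{j} \le 1$. This immediately gives
\[
\sup_{x \in [0,1]} f(x) = f(x^{*}) \le \left(\frac{r}{r+j}\right)^{r},
\]
as required. There is no real obstacle here; the only mild subtlety is that when $j \in (0,1)$ or $r \in (0,1)$ the derivative formula involves a possibly singular factor at the endpoints, but since we only use the formula on $(0,1)$ and check boundary behavior separately via continuity of $f$, this causes no issue.
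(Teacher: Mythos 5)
Your proof is correct and follows essentially the same route as the paper: the paper differentiates $\log f$ instead of $f$ directly (using concavity of $\log f$ to identify the global maximum), but both arguments locate the same critical point $x^*=r/(r+j)$ and conclude via $\bigl(j/(r+j)\bigr)^j\le 1$. Your explicit treatment of the boundary values and the sign of $f'$ is, if anything, slightly more complete than the paper's.
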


\begin{proof}[Proof of Lemma \ref{prop0}]
If $f(x)$ is continuous and has an maximum, then $\log f(x)$ will at the same point because $\log x$ is a monotonically increasing function.
\begin{align*} \ln f(x) & = r \log x + j\log(1 - x) \\
\Rightarrow \frac{\operatorname{d} \ln f(x)}{\operatorname{d} x} & = \frac{r}{x} - \frac{j}{1-x} \\
\Rightarrow \frac{\operatorname{d}^2 \ln f(x)}{\operatorname{d} x^2} & = -\frac{r}{x^2} - \frac{j}{(1-x)^2}.
\end{align*}
The second derivative is obviously less than zero everywhere, and the first derivative is equal zero at $\frac{r}{r+j}$. Therefore it is a global maximum.
\end{proof}

\vspace{0.3cm}

\begin{lemma}
\label{prop1}
Let $v\geq0, t \in \mathbb{N}$. Then
\[
\sum_{i=1}^t i^{-v} \leq \eta_v(t)\,\,\,\,\,\, \text{ with } \,\,\eta_v(t)\coloneqq
\begin{cases}
\frac{v}{v-1} &v>1\\
1+\log(t) & v=1\\
\frac{t^{1-v}}{1-v}& v\in[0,1[ 
\end{cases}
\]
\end{lemma}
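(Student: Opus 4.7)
The plan is to treat the three cases separately and in each case compare the partial sum $\sum_{i=1}^t i^{-v}$ with an integral of the decreasing function $x\mapsto x^{-v}$. The only care needed is around $x=0$, where the integral diverges for $v\geq 1$.

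\textbf{Case $v\in[0,1)$.} Here I would use that $x^{-v}$ is (weakly) decreasing on $(0,\infty)$, so for each $i\geq 1$ and each $x\in[i-1,i]$ we have $x^{-v}\geq i^{-v}$, giving $i^{-v}\leq \int_{i-1}^{i} x^{-v}\,dx$. Summing from $i=1$ to $t$ telescopes to
\[
\sum_{i=1}^{t} i^{-v} \;\leq\; \int_{0}^{t} x^{-v}\,dx \;=\; \frac{t^{1-v}}{1-v}\,,
\]
which is the required bound. The improper integral at $0$ is finite precisely because $v<1$, so nothing is lost.

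\textbf{Case $v=1$.} Now $\int_0^1 x^{-1}\,dx=\infty$, so I would isolate the first term and only compare from $i=2$ onward, using $i^{-1}\leq \int_{i-1}^{i} x^{-1}\,dx$ for $i\geq 2$:
\[
\sum_{i=1}^{t} i^{-1} \;=\; 1+\sum_{i=2}^{t} i^{-1} \;\leq\; 1+\int_{1}^{t} x^{-1}\,dx \;=\; 1+\log(t)\,,
\]
as claimed.

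\textbf{Case $v>1$.} Same isolation trick, but now the tail integral is uniformly bounded:
\[
\sum_{i=1}^{t} i^{-v} \;\leq\; 1+\int_{1}^{t} x^{-v}\,dx \;=\; 1+\frac{1-t^{1-v}}{v-1} \;\leq\; 1+\frac{1}{v-1} \;=\; \frac{v}{v-1}\,,
\]
where the last inequality uses $t^{1-v}\geq 0$. This completes all three cases.

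There is essentially no obstacle — the only subtle point is remembering to peel off the $i=1$ term when $v\geq 1$ so that the comparison integral starts at $1$ rather than $0$. Everything else is routine monotonicity and calculus.
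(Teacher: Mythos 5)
Your proof is correct and takes essentially the same approach as the paper: integral comparison of $\sum i^{-v}$ with $\int x^{-v}\,dx$, peeling off the $i=1$ term when $v\geq 1$. The paper only writes out the $v>1$ case and leaves the other two to the reader, so your write-up is in fact a bit more complete.
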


\begin{proof}[Proof of Lemma \ref{prop1}]
 We will only prove the first case $v>1$. The other cases can be bounded with the same arguments.
 Since
$$
 i^{-v} \leq \int_{i-1}^{i} x^{-v} d x
$$
we have for $v>1$,
$$
\sum_{i=1}^{t} i^{-v} \leq 1+\int_{1}^{t} x^{-v} d x \leq \frac{v}{v-1} .
$$
\end{proof}

\vspace{0.3cm}

\begin{lemma}
\label{lem:calcs}
Let $a\geq b\geq 0$. Then we have
\begin{align*}
\sum_{s=1}^{T-1} \frac{1}{s^a}\frac{1}{(T-s)^b} \leq \frac{2^a}{T^a}\eta_b(T-1) + \frac{2^b}{T^b}\eta_a(T-1).\\
\end{align*}

\end{lemma}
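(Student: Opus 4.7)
The plan is to split the sum at the midpoint $s = T/2$ and bound each piece by one of the two terms on the right hand side, exploiting the monotonicity of $s\mapsto s^{-a}$ and $s\mapsto(T-s)^{-b}$.

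More precisely, I will write
\[
\sum_{s=1}^{T-1} \frac{1}{s^a (T-s)^b} \;=\; \sum_{s=1}^{\lfloor T/2\rfloor} \frac{1}{s^a (T-s)^b} \;+\; \sum_{s=\lfloor T/2\rfloor+1}^{T-1} \frac{1}{s^a (T-s)^b}.
\]
For the first sum, the condition $s \leq T/2$ implies $T-s \geq T/2$, so that $(T-s)^{-b} \leq (2/T)^b$. Factoring this out and invoking Lemma \ref{prop1} yields
\[
\sum_{s=1}^{\lfloor T/2\rfloor} \frac{1}{s^a (T-s)^b} \;\leq\; \frac{2^b}{T^b} \sum_{s=1}^{\lfloor T/2\rfloor} \frac{1}{s^a} \;\leq\; \frac{2^b}{T^b}\,\eta_a(T-1).
\]
For the second sum, the condition $s \geq \lfloor T/2\rfloor+1 \geq T/2$ implies $s^{-a} \leq (2/T)^a$, so after reindexing $u = T-s$ (which ranges over $1,\ldots,T-1-\lfloor T/2\rfloor$) I get
\[
\sum_{s=\lfloor T/2\rfloor+1}^{T-1} \frac{1}{s^a (T-s)^b} \;\leq\; \frac{2^a}{T^a} \sum_{u=1}^{T-1-\lfloor T/2\rfloor} \frac{1}{u^b} \;\leq\; \frac{2^a}{T^a}\,\eta_b(T-1).
\]
Summing the two bounds gives the claim.

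There is no real obstacle here: the argument is just a standard ``split at the midpoint'' estimate for convolution-type sums, and the hypothesis $a \geq b$ is not actually needed for the estimate itself, only (perhaps) to make the two terms on the right comparable in size. The only minor care is to handle the integer split $\lfloor T/2\rfloor$ cleanly so that both halves cover $\{1,\ldots,T-1\}$ exactly once, which the decomposition above does.
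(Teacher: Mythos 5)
Your proof is correct and follows essentially the same route as the paper: split the sum at $s\approx T/2$, bound the "far" factor by its value at the midpoint on each half ($(T-s)^{-b}\leq(2/T)^b$ on the first half, $s^{-a}\leq(2/T)^a$ on the second), and control the remaining one-variable sums with $\eta_a(T-1)$ and $\eta_b(T-1)$ via Lemma \ref{prop1}. Your handling of the integer split via $\lfloor T/2\rfloor$ is in fact slightly cleaner than the paper's "set $t=T/2$", and your observation that $a\geq b$ is not needed for the estimate is also accurate.
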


\begin{proof}[Proof of Lemme \ref{lem:calcs}]
For any $t\in [T]$ we have
\begin{align*}
\sum_{s=1}^{T-1}  \frac{1}{s^a}\frac{1}{(T-s)^b} &= \sum_{s=t+1}^{T-1} \frac{1}{s^a}\frac{1}{(T-s)^b} +\sum_{s=1}^t \frac{1}{s^a}\frac{1}{(T-s)^b} \\
&\leq \frac{1}{t^a}\sum_{s=t+1}^{T-1} \frac{1}{(T-s)^b} + \frac{1}{(T-t)^b}\sum_{s=1}^t \frac{1}{s^a}\\
&\leq \frac{1}{t^a}\sum_{s=1}^{T-1} \frac{1}{s^b} + \frac{1}{(T-t)^b}\sum_{s=1}^{T-1} \frac{1}{s^a}\\
&\leq \frac{1}{t^a}\eta_b(T-1) + \frac{1}{(T-t)^b}\eta_a(T-1).\\
\end{align*}
Setting $t=T/2$ the result follows. 
\end{proof}

\vspace{0.3cm}

\begin{proposition}
\label{sumbound}
For $T\in\mathbb{N}$, $a\in\{0,\frac{1}{2}\}$, $\alpha \in(0,\frac{1}{\kappa^2}]$ we have,
\begin{align*}
i)\,\,\,\,& \sum_{s=0}^{T-1} \| (\alpha\widehat{\Sigma}_M )^a (Id - \alpha \widehat{\Sigma}_M)^s\|\leq   \eta_a(T),\\
ii)\, \,&\sum_{s=0}^{T-1} \| (\alpha\widehat{\Sigma}_M )^a (Id - \alpha \widehat{\Sigma}_M)^s   \widehat{\cS}^*_M \|\leq   \eta_{a+\frac{1}{2}}(T)/\sqrt{\alpha},\\
iii)\,\,&  \sum_{s=0}^{T-1} \left\| (Id - \alpha \widehat{\cC}_M)^{s} (\sqrt{\alpha}\widehat{\cZ}_M^*)\right\|\leq \eta_{\frac{1}{2}}(T)=2\sqrt{T}.
\end{align*}
Further we have for any 
$M\geq 8 \,\tilde{d}\,\kappa^2 \beta_\infty T\vee 8\kappa^4\|\mathcal{L}_\infty\|^{-1}\log^2 
\frac{2}{\delta}$ with  $\beta_\infty=\log \frac{4 \kappa^2(\mathcal{N}_{\mathcal{L}_\infty}(T)+1)}{\delta\|\mathcal{L}_\infty\|} $ ,  
$\tilde{d}:=d_k+d_y+d_b$ that with probability at least $1-\delta/2$,
\begin{align*}
iv)\,\,\,&\sum_{s=0}^{T-1} \| (\alpha\widehat{\Sigma}_M )^a (Id - \alpha \widehat{\Sigma}_M)^s \mathcal{Z}_M\|\leq  \frac{2\eta_{a+\frac{1}{2}}(T)}{\sqrt{\alpha}}+\frac{2\eta_{a}(T)}{\sqrt{\alpha T}}, \,\\
\end{align*}
\end{proposition}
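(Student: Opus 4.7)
All four bounds follow the same template: reduce each quantity to the operator norm of a polynomial in a single positive self-adjoint operator whose spectrum lies in $[0,1]$, apply the pointwise spectral bound $\sup_{\lambda\in[0,1]}\lambda^{r}(1-\lambda)^s \leq (r/(r+s))^r \lesssim (s+1)^{-r}$ supplied by Lemma \ref{prop0}, and then sum in $s$ via Lemma \ref{prop1}. Since $\alpha\leq \kappa^{-2}$ and $\|\widehat{\Sigma}_M\|\leq \kappa^2$, the operator $\alpha\widehat{\Sigma}_M$ has spectrum in $[0,1]$, which legitimises the spectral calculus throughout.

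For i), the expression is already a polynomial in $\alpha\widehat{\Sigma}_M$, so the spectral bound directly yields a summand of order $(s+1)^{-a}$, whose sum is $\eta_a(T)$. For ii), I would exploit the identity $\widehat{\cS}_M^*\widehat{\cS}_M = \widehat{\Sigma}_M$ together with $\|T\|^2 = \|TT^*\|$ to collapse the product into a single polynomial:
\[
\|(\alpha\widehat{\Sigma}_M)^a(Id-\alpha\widehat{\Sigma}_M)^s\widehat{\cS}_M^*\|^2 \;=\; \alpha^{-1}\,\|(\alpha\widehat{\Sigma}_M)^{2a+1}(Id-\alpha\widehat{\Sigma}_M)^{2s}\|,
\]
so that the pointwise spectral bound followed by summation gives $\eta_{a+1/2}(T)/\sqrt{\alpha}$. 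Part iii) is identical in spirit using $\widehat{\cC}_M=\widehat{\cZ}_M^*\widehat{\cZ}_M$, yielding $\eta_{1/2}(T)=2\sqrt{T}$.

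Part iv) is genuinely harder because $\mathcal{Z}_M$ is a \emph{population} object and cannot be absorbed into a polynomial in $\widehat{\Sigma}_M$; a regularisation-and-concentration step is needed. Using the identity $\mathcal{Z}_M\mathcal{Z}_M^*=\Sigma_M$ from the Remark after \eqref{Zoperators}, one obtains $\|A\mathcal{Z}_M\|=\|A\Sigma_M^{1/2}\|\leq \|A\Sigma_{M,\lambda}^{1/2}\|$ for every $\lambda>0$. Inserting $\widehat{\Sigma}_{M,\lambda}^{1/2}\widehat{\Sigma}_{M,\lambda}^{-1/2}$ splits this as
\[
\|A\mathcal{Z}_M\| \;\leq\; \|A\,\widehat{\Sigma}_{M,\lambda}^{1/2}\|\cdot\|\widehat{\Sigma}_{M,\lambda}^{-1/2}\Sigma_{M,\lambda}^{1/2}\|,
\]
where the factor $\|\widehat{\Sigma}_{M,\lambda}^{-1/2}\Sigma_{M,\lambda}^{1/2}\|\leq 2$ holds with probability $1-\delta/2$ under the stated lower bound on $M$, by the operator-Bernstein / random-feature concentration established in \cite{nguyen2023random}. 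Setting $\lambda=(\alpha T)^{-1}$ and splitting $\|A\,\widehat{\Sigma}_{M,\lambda}^{1/2}\| \leq \|A\,\widehat{\Sigma}_M^{1/2}\| + \sqrt{\lambda}\,\|A\|$, the first summand is handled exactly as in ii) (producing $\eta_{a+1/2}(T)/\sqrt{\alpha}$ after summation) while the second is controlled via i) (producing $\sqrt{\lambda}\,\eta_a(T)=\eta_a(T)/\sqrt{\alpha T}$); the overall factor of $2$ from the concentration then recovers the stated constants.

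The main obstacle is therefore the concentration step in iv): the $M$-threshold $8\tilde d\kappa^2\beta_\infty T \vee 8\kappa^4\|\mathcal{L}_\infty\|^{-1}\log^2(2/\delta)$ is calibrated precisely so that $\|\widehat{\Sigma}_{M,\lambda}^{-1/2}\Sigma_{M,\lambda}^{1/2}\|\leq 2$ holds at scale $\lambda=(\alpha T)^{-1}$, and this is the only genuinely probabilistic ingredient of the proposition. Everything else, i)--iii) and the algebraic steps of iv), is just spectral calculus.
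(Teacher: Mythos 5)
Your proposal is correct and follows essentially the same route as the paper: parts i)--iii) via the spectral bound $\sup_{x\in[0,1]}x^r(1-x)^s\le (r/(r+s))^r$ (Lemma \ref{prop0}) combined with the summation Lemma \ref{prop1} and the identities $\widehat{\cS}_M^*\widehat{\cS}_M=\widehat{\Sigma}_M$, $\widehat{\cZ}_M^*\widehat{\cZ}_M=\widehat{\cC}_M$; and part iv) by inserting the regularised empirical covariance at $\lambda=(\alpha T)^{-1}$ and invoking the concentration bound $\|\widehat{\Sigma}_{M,\lambda}^{-1/2}\Sigma_{M,\lambda}^{1/2}\|\le 2$ from \cite{nguyen2023random}. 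The only cosmetic difference is that the paper splits off a third factor $\|\Sigma_{M,\lambda}^{-1/2}\mathcal{Z}_M\|\le 1$ (Proposition \ref{Opbound1}) where you absorb it directly via $\|A\mathcal{Z}_M\|=\|A\Sigma_M^{1/2}\|\le\|A\Sigma_{M,\lambda}^{1/2}\|$; the two are equivalent.
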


\begin{proof}
Note that the spectrum  of $\alpha\widehat{\Sigma}_M $ is contained in $[0,1]$, therefore by  Lemma \ref{prop0} we have for any $r\geq 0$,  
\begin{align}
\label{mandelmuß}
|| (\alpha\widehat{\Sigma}_M )^r (Id - \alpha \widehat{\Sigma}_M)^s || 
&= \sup_{x \in [0,1]} (1-x)^s x^r 
\leq \left(\frac{r}{r+s}\right)^r \;,
\end{align} 
where we use the convention $(0/0)^0 :=1$.\\
From \eqref{mandelmuß} and Lemma \ref{prop1} we further obtain for any $r\in[0,1]$
\begin{align}
\label{Spinatknödel}
\sum_{s=0}^{T-1} || (\alpha\widehat{\Sigma}_M )^r (Id - \alpha \widehat{\Sigma}_M)^s||&\leq  \sum_{s=0}^{T-1}\left(\frac{r}{r+s}\right)^r\\
&\leq  \sum_{s=1}^{T}\left(\frac{1}{s}\right)^r \leq\eta_r(T).
\end{align}

This already proves part $i)$.  For part $ii)$ we have from \eqref{mandelmuß}  Lemma \ref{prop1}
\begin{align*}
&\sum_{s=0}^{T-1} \|  (Id - \alpha \widehat{\Sigma}_M)^s \widehat{\cS}^*_M \|\\
&=\sum_{s=0}^{T-1} \sqrt{\| (\alpha\widehat{\Sigma}_M )^{2a+1} (Id - \alpha \widehat{\Sigma}_M)^{2s} \|/\alpha}\\
&\leq\frac{1}{\sqrt{\alpha}}\sum_{s=0}^{T-1} \sqrt{\left(\frac{2a+1}{2a+1 +2s}\right)^{2a+1}}
\leq\frac{1}{\sqrt{\alpha}}\sum_{s=1}^{T} \left(\frac{1}{s}\right)^{a+\frac{1}{2}} \leq \eta_{a+\frac{1}{2}}(T)/\sqrt{\alpha}.\\
\end{align*}

Similar as in part $ii)$ we have for $iii)$ we have from \eqref{mandelmuß}  Lemma \ref{prop1}

\begin{align*}
&\sum_{s=0}^{T-1} \|  (Id - \alpha \widehat{\cC}_M)^{s} (\sqrt{\alpha}\widehat{\cZ}_M^*) \|\\
&=\sum_{s=0}^{T-1} \sqrt{\|  (Id - \alpha \widehat{\cC}_M)^{2s} (\sqrt{\alpha}\widehat{\cC}_M) \|}\\
&\leq\frac{1}{\sqrt{\alpha}}\sum_{s=0}^{T-1} \sqrt{\left(\frac{1}{1 +2s}\right)}
\leq\sum_{s=0}^{T} \left(\frac{1}{s}\right)^{\frac{1}{2}} \leq \eta_{\frac{1}{2}}(T).\\
\end{align*}

For part $iv)$ we have with probability at least $1-\delta/2$
\begin{align*}
&\sum_{s=0}^{T-1} \| (\alpha\widehat{\Sigma}_M )^a (Id - \alpha \widehat{\Sigma}_M)^s \mathcal{Z}_M\|\\
&\leq\frac{1}{\sqrt{\alpha}}\sum_{s=0}^{T-1} \| (\alpha\widehat{\Sigma}_M )^{a} (Id - \alpha \widehat{\Sigma}_M)^s(\alpha\widehat{\Sigma}_{M,\lambda} )^{\frac{1}{2}}\|\|\widehat{\Sigma}_{M,\lambda}^{-\frac{1}{2}}\Sigma_{M,\lambda}^{\frac{1}{2}}\| \|\Sigma_{M,\lambda}^{-\frac{1}{2}}\mathcal{Z}_M\|\\
&\leq \frac{2\eta_{a+\frac{1}{2}}(T)}{\sqrt{\alpha}}+\frac{2\eta_{a}(T)}{\sqrt{\alpha T}},
\end{align*}
where we used in the second inequality \eqref{Spinatknödel},  \cite[Proposition A.15]{nguyen2023random}  (for  $\lambda:= (\alpha T)^{-1})$ and \ref{Opbound1}.
\end{proof}

\begin{proposition}
\label{conditioning}
Let  $E_i$ be events with probability at least $1-\delta_i$ and set
$$
E:=\bigcap^k_{i=1}E_i
$$
If we can show for some event $A$ that $\mathbb{P}(A|E)\geq 1-\delta$  then we also have
\begin{align*}
\mathbb{P}(A)&\geq\int_{E}\mathbb{P}(A|\omega)d\mathbb{P}(\omega)
\geq (1-\delta)\mathbb{P}(E)\\
&=(1-\delta)\left(1-\mathbb{P}\left(\bigcup_{i=1}^k (\Omega/E_i)\right)\right)\geq(1-\delta)\left(1-\sum_{i=1}^k\delta_i\right).
\end{align*}
\end{proposition}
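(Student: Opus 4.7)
The plan is to establish the inequality through a standard combination of conditioning on the good event and a union bound, both of which are elementary. The statement itself already sketches the argument, so my job is essentially to identify and justify the three steps that appear in the displayed chain of inequalities.

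First, I would observe that since $A \supseteq A \cap E$, we have $\mathbb{P}(A) \geq \mathbb{P}(A \cap E)$, and rewriting the latter as an integral over $E$ of the conditional probability gives $\mathbb{P}(A) \geq \int_E \mathbb{P}(A|\omega)\, d\mathbb{P}(\omega)$. This step is just the definition of conditional probability and the monotonicity of measure. Next, using the hypothesis $\mathbb{P}(A|E) \geq 1 - \delta$ (interpreted as the essential infimum of $\omega \mapsto \mathbb{P}(A|\omega)$ on $E$ being at least $1-\delta$, or equivalently using $\mathbb{P}(A \cap E) \geq (1-\delta)\mathbb{P}(E)$ directly), I bound the integral below by $(1-\delta)\mathbb{P}(E)$.

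The remaining step is to bound $\mathbb{P}(E)$ from below. Writing $E^c = \bigcup_{i=1}^k E_i^c$ (where $E_i^c = \Omega \setminus E_i$ denotes the complement) and applying the union bound gives $\mathbb{P}(E^c) \leq \sum_{i=1}^k \mathbb{P}(E_i^c) \leq \sum_{i=1}^k \delta_i$, so $\mathbb{P}(E) \geq 1 - \sum_{i=1}^k \delta_i$. Multiplying the two lower bounds together yields the claimed inequality $\mathbb{P}(A) \geq (1-\delta)(1 - \sum_{i=1}^k \delta_i)$.

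There is no genuine obstacle here; the result is a textbook union-bound-plus-conditioning argument. The only point requiring a small amount of care is interpreting the assumption $\mathbb{P}(A|E) \geq 1-\delta$ so that the inequality $\int_E \mathbb{P}(A|\omega)\,d\mathbb{P}(\omega) \geq (1-\delta)\mathbb{P}(E)$ is justified, but this follows immediately from the standard definition $\mathbb{P}(A|E) = \mathbb{P}(A \cap E)/\mathbb{P}(E)$ (assuming $\mathbb{P}(E)>0$; the case $\mathbb{P}(E)=0$ is trivial). Thus the proof is a short, three-line derivation matching the display in the proposition statement.
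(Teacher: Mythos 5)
Your proposal is correct and follows exactly the same chain of inequalities that the paper displays in the proposition itself: restrict to $A\cap E$, use the conditional-probability hypothesis to get the factor $(1-\delta)\mathbb{P}(E)$, and apply the union bound to lower-bound $\mathbb{P}(E)$ by $1-\sum_i\delta_i$. Your added remark about handling the case $\mathbb{P}(E)=0$ and interpreting $\mathbb{P}(A|E)$ is a reasonable clarification but does not change the argument.
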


\section{Operator Bounds and Concentration Inequalities}
\label{app:op-bounds}


\subsection{Operator Bounds}


\begin{proposition}
\label{Opbound1}
\begin{itemize}
\item[]
\item[a)]For any $\lambda>0$ we have
$$
\left\|\widehat{\mathcal{C}}^{-\frac{1}{2}}_{M,\lambda}\widehat{\mathcal{Z}}^*_M\right\|\leq1,\,\,\,\left\|\Sigma^{-\frac{1}{2}}_{M,\lambda}\mathcal{Z}_M\right\|\leq1,\,\,\,
$$
\item[b)]  For any $\lambda>0$ we have
$$
\|\mathcal{C}_{M,\lambda}^{-1}\mathcal{Z}^*_M\mathcal{L}_{M,\lambda}^{\frac{1}{2}}\|\leq 2.
$$

\end{itemize}

\end{proposition}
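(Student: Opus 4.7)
Both parts reduce to the algebraic identities $\widehat{\mathcal{C}}_M=\widehat{\mathcal{Z}}_M^*\widehat{\mathcal{Z}}_M$, $\mathcal{C}_M=\mathcal{Z}_M^*\mathcal{Z}_M$, and $\Sigma_M=\mathcal{Z}_M\mathcal{Z}_M^*$ recorded after \eqref{Zoperators}, combined with elementary resolvent calculus.

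For part (a), both inequalities are instances of the elementary fact that for any bounded operator $A$ between Hilbert spaces and any $\lambda>0$, $\|(A^*A+\lambda I)^{-1/2}A^*\|\le 1$; indeed, the nonzero singular values of this operator are $s/\sqrt{s^2+\lambda}\le 1$, where $s$ ranges over the singular values of $A$. I would apply this with $A=\widehat{\mathcal{Z}}_M$ for the first bound, and with $A=\mathcal{Z}_M^*$ (so that $A^*A=\mathcal{Z}_M\mathcal{Z}_M^*=\Sigma_M$) for the second. This step is routine.

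For part (b), I would chain two commutation identities. The first is the resolvent push-through $\mathcal{C}_{M,\lambda}^{-1}\mathcal{Z}_M^*=\mathcal{Z}_M^*\Sigma_{M,\lambda}^{-1}$, immediate from $\mathcal{C}_{M,\lambda}\mathcal{Z}_M^*=\mathcal{Z}_M^*(\mathcal{Z}_M\mathcal{Z}_M^*+\lambda I)=\mathcal{Z}_M^*\Sigma_{M,\lambda}$. The second is the square-root intertwining $\mathcal{S}_M^*\mathcal{L}_{M,\lambda}^{1/2}=\Sigma_{M,\lambda}^{1/2}\mathcal{S}_M^*$, which follows from $\mathcal{S}_M^*\mathcal{L}_M=\Sigma_M\mathcal{S}_M^*$ via functional calculus (transparent from the SVD of $\mathcal{S}_M$; the $\sqrt{\lambda}$ contribution on $\ker\mathcal{L}_M$ vanishes because $\mathcal{S}_M^*$ annihilates that subspace). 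Interpreting the $\mathcal{Z}_M^*$ appearing in the statement---which must act on $L^2(\mu_u)$ for the composition with $\mathcal{L}_{M,\lambda}^{1/2}$ to make sense---as $\mathcal{Z}_M^{\natural}\mathcal{S}_M^*$ with $\mathcal{Z}_M^{\natural}:\mathcal{H}_M\to\Theta$ the $\mathcal{H}_M$-adjoint, chaining the two identities yields $\mathcal{C}_{M,\lambda}^{-1}\mathcal{Z}_M^*\mathcal{L}_{M,\lambda}^{1/2}=\mathcal{Z}_M^{\natural}\Sigma_{M,\lambda}^{-1/2}\mathcal{S}_M^*$. Splitting the norm then gives $\|\mathcal{Z}_M^{\natural}\Sigma_{M,\lambda}^{-1/2}\|\cdot\|\mathcal{S}_M^*\|\le 1\cdot\sqrt{\|\mathcal{L}_M\|}$, the first factor by a part-(a)-type bound with $A=\mathcal{Z}_M$.

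\textbf{Main obstacle.} The main technical friction is extracting the explicit constant $2$ in (b), since the algebra naturally produces $\sqrt{\|\mathcal{L}_M\|}$; upgrading this to $\le 2$ requires the estimate $\|\mathcal{L}_M\|_{\mathrm{op}}\le 4$, which I would derive via a Schur-type Cauchy--Schwarz argument from the uniform kernel bound $\sup_{u,u'}\|K_M(u,u')\|\le\kappa^2$, with some care to match the displayed constant exactly. A secondary bookkeeping point is consistently disambiguating the two natural domains of $\mathcal{Z}_M^*$ along the chain.
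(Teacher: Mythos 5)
Part (a) of your proposal is correct and is exactly the paper's argument: the paper writes $\|\widehat{\mathcal{C}}^{-1/2}_{M,\lambda}\widehat{\mathcal{Z}}^*_M\|^2=\|\widehat{\mathcal{Z}}_M^*\widehat{\mathcal{Z}}_M(\widehat{\mathcal{Z}}_M^*\widehat{\mathcal{Z}}_M+\lambda)^{-1}\|\leq1$ and repeats it with $\Sigma_M=\mathcal{Z}_M\mathcal{Z}_M^*$. For part (b) the paper offers no proof at all beyond a citation to the proof of Lemma~3 in \cite{rudi2017generalization}, so your attempt to supply one is welcome, and the intertwining idea is the right one — but your particular route does not close.

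The problem is the factorization $\mathcal{Z}_M^*=\mathcal{Z}_M^{\natural}\mathcal{S}_M^*$ and the resulting terminal factor $\|\mathcal{S}_M^*\|=\sqrt{\|\mathcal{L}_M\|}$. You acknowledge that finishing requires $\|\mathcal{L}_M\|\leq 4$, but no such bound is available: the paper only gives $\sup_{u,u'}\|K_M(u,u')\|\leq\kappa^2=4+\tau^2C_\sigma^2$, hence $\|\mathcal{L}_M\|\leq\kappa^2$, which exceeds $4$ whenever $\tau C_\sigma>0$. No Schur-type argument will rescue this, so the constant $2$ cannot be recovered along your chain. There is also an internal inconsistency: your step-(1) push-through $\mathcal{C}_{M,\lambda}^{-1}\mathcal{Z}_M^*=\mathcal{Z}_M^*\Sigma_{M,\lambda}^{-1}$ uses the adjoint for which $\mathcal{C}_M=\mathcal{Z}_M^*\mathcal{Z}_M$ and $\Sigma_M=\mathcal{Z}_M\mathcal{Z}_M^*$ hold \emph{with the same} $*$, whereas in step (3) you replace that $\mathcal{Z}_M^*$ by the $\mathcal{H}_M$-adjoint $\mathcal{Z}_M^{\natural}$; since $\mathcal{Z}_M^{\natural}\mathcal{Z}_M$ is (essentially) a projection rather than $\mathcal{C}_M$, the commutation $\mathcal{C}_{M,\lambda}^{-1}\mathcal{Z}_M^{\natural}=\mathcal{Z}_M^{\natural}\Sigma_{M,\lambda}^{-1}$ you need there does not follow. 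The fix is to avoid $\mathcal{S}_M$ entirely: the paper's displayed formula for $\mathcal{Z}_M^*$ is precisely the adjoint of $\mathcal{Z}_M$ viewed as a map $\Theta\to L^2(\mu_u)$, for which $\mathcal{L}_M=\mathcal{Z}_M\mathcal{Z}_M^*$ and $\mathcal{C}_M=\mathcal{Z}_M^*\mathcal{Z}_M$ both hold on the nose. A single application of the functional-calculus intertwining then gives $\mathcal{Z}_M^*\mathcal{L}_{M,\lambda}^{1/2}=\mathcal{C}_{M,\lambda}^{1/2}\mathcal{Z}_M^*$, hence $\mathcal{C}_{M,\lambda}^{-1}\mathcal{Z}_M^*\mathcal{L}_{M,\lambda}^{1/2}=\mathcal{C}_{M,\lambda}^{-1/2}\mathcal{Z}_M^*$, whose norm is at most $1\leq 2$ by the part-(a) computation — no factor of $\|\mathcal{S}_M^*\|$ ever appears.
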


\begin{proof}
The bound of $b)$ can be found in \cite{rudi2017generalization} (see proof of lemma 3).
For $a)$ we have 
\begin{align*}
\left\|\widehat{\mathcal{C}}^{-\frac{1}{2}}_{M,\lambda}\widehat{\mathcal{Z}}^*_M\right\|^2&=
\left\|(\widehat{\mathcal{Z}}_M^*\widehat{\mathcal{Z}}_M+\lambda)^{-1/2}\widehat{\mathcal{Z}}_M^*\right\|^2\\
&=\left\|(\widehat{\mathcal{Z}}_M^*\widehat{\mathcal{Z}}_M+\lambda)^{-1/2}\widehat{\mathcal{Z}}_M^*\widehat{\mathcal{Z}}_M(\widehat{\mathcal{Z}}_M^*\widehat{\mathcal{Z}}_M+\lambda)^{-1/2}\right\|\\
&=\left\|\widehat{\mathcal{Z}}_M^*\widehat{\mathcal{Z}}_M(\widehat{\mathcal{Z}}_M^*\widehat{\mathcal{Z}}_M+\lambda)^{-1}\right\|\leq1.
\end{align*}
The second inequality follows the same way where we used that $\Sigma_M=\mathcal{Z}_M\mathcal{Z}^*_M$.
\end{proof}


\vspace{0.3cm}

\begin{proposition}
\label{OPboundnonconc}
Let $\mathcal{H}$ be a separable Hilbert space and let $A$ and $B$  be two bounded self-adjoint positive linear operators on $\mathcal{H}$ and $\lambda>0$. Then

$$
\left\|A_\lambda^{-\frac{1}{2}}B_\lambda ^{\frac{1}{2}}\right\| \leq(1-c)^{-\frac{1}{2}}, \quad \left\|A_\lambda ^{\frac{1}{2}}B_\lambda ^{-\frac{1}{2}}\right\| \leq (1+c)^{\frac{1}{2}}
$$
with
$$
c=\left\|B_\lambda ^{-\frac{1}{2}}(A-B)B_\lambda ^{-\frac{1}{2}}\right\|.
$$
\end{proposition}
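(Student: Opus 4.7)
The plan is to prove both inequalities by a single algebraic factorization of $A_\lambda$ in terms of $B_\lambda$ and then passing to operator norms via the standard identity $\|X\|^2=\|X^*X\|$. Define
\[
T := B_\lambda^{-1/2}(A-B)B_\lambda^{-1/2},
\]
so that $\|T\| = c$ by hypothesis, and note that $T$ is self-adjoint because $A$ and $B$ are. The key identity I will use is
\[
A_\lambda \;=\; B_\lambda + (A-B) \;=\; B_\lambda^{1/2}\,(I+T)\,B_\lambda^{1/2}.
\]
When $c<1$ the spectrum of $I+T$ sits inside $[1-c,\,1+c]\subset(0,2)$, so $I+T$ is invertible and $(I+T)^{-1}$ is self-adjoint with $\|(I+T)^{-1}\|=(1-c)^{-1}$, while $\|I+T\|\leq 1+c$.

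For the first bound, I compute
\[
\bigl\|A_\lambda^{-1/2}B_\lambda^{1/2}\bigr\|^2 \;=\; \bigl\|B_\lambda^{1/2}A_\lambda^{-1}B_\lambda^{1/2}\bigr\|.
\]
Inverting the factorization above yields $A_\lambda^{-1}=B_\lambda^{-1/2}(I+T)^{-1}B_\lambda^{-1/2}$, hence $B_\lambda^{1/2}A_\lambda^{-1}B_\lambda^{1/2}=(I+T)^{-1}$ and therefore $\|A_\lambda^{-1/2}B_\lambda^{1/2}\|^2\leq(1-c)^{-1}$, which is the first claim. For the second bound, the same identity gives
\[
\bigl\|A_\lambda^{1/2}B_\lambda^{-1/2}\bigr\|^2 \;=\; \bigl\|B_\lambda^{-1/2}A_\lambda B_\lambda^{-1/2}\bigr\| \;=\; \|I+T\| \;\leq\; 1+c,
\]
which is the second claim.

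There isn't really a hard step here; the only subtlety worth flagging is that the first inequality implicitly requires $c<1$ (otherwise the right hand side is meaningless), and that the self-adjointness of $T$ is what lets us convert the norm bound $\|T\|\leq c$ into the two-sided spectral control $\mathrm{spec}(I+T)\subset[1-c,1+c]$ needed for inverting $I+T$. Everything else is routine manipulation of positive self-adjoint operators and the identity $\|X\|^2=\|X^*X\|$.
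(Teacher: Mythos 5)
Your proof is correct. The paper states this proposition without giving a proof (it is a standard lemma in the kernel-methods literature), and your argument --- the factorization $A_\lambda = B_\lambda^{1/2}(I+T)B_\lambda^{1/2}$ with $T=B_\lambda^{-1/2}(A-B)B_\lambda^{-1/2}$ combined with $\|X\|^2=\|X^*X\|$ --- is exactly the standard one. Two small points: $\|(I+T)^{-1}\| = \bigl(\min\operatorname{spec}(I+T)\bigr)^{-1}$ is only \emph{bounded above} by $(1-c)^{-1}$, not necessarily equal to it, so write $\leq$ there; and you are right to flag that the first inequality is vacuous unless $c<1$, whereas the second holds for all $c\geq 0$.
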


\vspace{0.3cm}

\begin{proposition}
\label{rboundpropo} 
For $r\geq 0$ we have
\[ \left\| \cL_{\infty, \lam}^{-\frac{1}{2}}\cL_{\infty}^{r}    \right\| \leq \kappa^r\lambda^{-(\frac{1}{2}-r)^{+}}. \]

\end{proposition}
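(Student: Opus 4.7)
The plan is to reduce the operator-norm bound to a scalar optimization problem using the spectral theorem. Since $\cL_\infty$ is a self-adjoint, positive trace-class operator (as noted right after Assumption \ref{ass:dim}) with $\|\cL_\infty\| \leq \kappa^2$ (which follows from the uniform bound $\sup_{u,u'}\|K_M(u,u')\|\leq \kappa^2$ and the construction of $\cL_M,\cL_\infty$ via the kernel), its spectrum is contained in $[0,\kappa^2]$. Functional calculus applied to the scalar function $\mu\mapsto \mu^r/\sqrt{\mu+\lambda}$ then yields
\[
 \left\|\cL_{\infty,\lambda}^{-1/2}\cL_{\infty}^{r}\right\| \;=\; \sup_{\mu\in\sigma(\cL_\infty)}\frac{\mu^{r}}{\sqrt{\mu+\lambda}} \;\leq\; \sup_{\mu\in[0,\kappa^2]}\frac{\mu^{r}}{\sqrt{\mu+\lambda}}.
\]

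First I would handle the case $r\geq \tfrac{1}{2}$. Using $\mu+\lambda\geq \mu$, one gets $\mu^r/\sqrt{\mu+\lambda}\leq \mu^{r-1/2}\leq (\kappa^2)^{r-1/2}=\kappa^{2r-1}$, which is dominated by $\kappa^r$ for $r\in[1/2,1]$ (and absorbed into $\kappa^{r}$ in general whenever $\kappa\geq 1$, which holds since $\kappa^2=4+\tau^2 C_\sigma^2\geq 1$). In this regime the exponent $(1/2-r)^+$ vanishes, matching the claim. Second, for $0\leq r<\tfrac{1}{2}$, I would instead use $\mu+\lambda\geq \lambda$ to write $\mu^r/\sqrt{\mu+\lambda}\leq \mu^r\lambda^{-1/2}$, and then employ the sharper estimate obtained by combining the two lower bounds $\mu+\lambda\geq \mu$ and $\mu+\lambda\geq \lambda$ in a weighted product, $(\mu+\lambda)^{1/2}\geq \mu^{r}\lambda^{1/2-r}$ (which is just the weighted AM-GM / convexity inequality), giving
\[
 \frac{\mu^r}{\sqrt{\mu+\lambda}}\;\leq\;\frac{\mu^r}{\mu^{r}\lambda^{1/2-r}}\;=\;\lambda^{-(1/2-r)}\;\leq\;\kappa^{r}\lambda^{-(1/2-r)^+},
\]
using $\kappa\geq 1$. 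Combining the two regimes proves the claim.

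There is no real obstacle here: the entire argument is a one-line spectral estimate followed by a routine two-case scalar calculation. The only mild subtlety is justifying the spectral representation of $\cL_{\infty,\lambda}^{-1/2}\cL_{\infty}^{r}$, which follows because $\cL_\infty$ and $\cL_{\infty,\lambda}$ commute and are both non-negative self-adjoint, so both factors are defined through the same spectral measure and their product's operator norm equals the supremum of the product of the two scalar functions.
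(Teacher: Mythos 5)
Your proof is correct and follows essentially the same route as the paper's: reduce to the scalar supremum $\sup_{\mu}\mu^{r}(\mu+\lambda)^{-1/2}$ over the spectrum via functional calculus, then split into the cases $r\geq \tfrac12$ and $r<\tfrac12$; your weighted AM--GM step $(\mu+\lambda)^{1/2}\geq \mu^{r}\lambda^{1/2-r}$ is just a repackaging of the paper's case split on $\lambda\geq\mu_i$ versus $\mu_i\geq\lambda$. The only quibble is the constant in the regime $r\geq\tfrac12$: your bound $\kappa^{2r-1}$ is not dominated by $\kappa^{r}$ once $r>1$ and $\kappa>1$, but the paper's own proof is equally cavalier here (it asserts $\kappa^{r-1/2}$, which tacitly uses $\|\cL_\infty\|\leq\kappa$ rather than the available $\|\cL_\infty\|\leq\kappa^2$), so this is an imprecision inherited from the statement rather than a gap in your argument.
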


\begin{proof}
For $r\geq 1/2$ we have $\left\| \cL_{\infty, \lam}^{-\frac{1}{2}}\cL_{\infty}^{r}    \right\|\leq \kappa^{r-1/2}$.
For $r< 1/2$ we have
\begin{align*}
\left\| \cL_{\infty, \lam}^{-\frac{1}{2}}\cL_{\infty}^{r}    \right\|=\max_{i \in \mathbb{N}} \left|\frac{\mu_i^r}{(\mu_i+\lambda)^{1/2}}\right|\leq\lambda^{r-1/2}
\end{align*}
since
$\begin{aligned}
\left|\frac{\mu_i^r}{(\mu_i+\lambda)^{1/2}}\right|& \leq 
\begin{cases}
&\lambda^{r-1/2} \quad\quad\quad \lambda\geq \mu_i \\
&\mu_i^{r-1/2} \quad\quad\quad \mu_i\geq \lambda
\end{cases}\\
&\leq \lambda^{r-1/2}.
\end{aligned}$
\end{proof}

\begin{proposition}[ \cite{aleksandrov2009operatorholderzygmundfunctions}, \cite{Muecke2017op.rates} (Proposition B.1.) ]
\label{ineq1}
Let $B_{1}, B_{2}$ be two non-negative self-adjoint operators on some Hilbert space with $\left\|B_{j}\right\| \leq a, j=1,2$, for some non-negative a.
\begin{itemize}
\item[(i)] If $0 \leq r \leq 1$, then
$$
\left\|B_{1}^{r}-B_{2}^{r}\right\| \leq \left\|B_{1}-B_{2}\right\|^{r},
$$
for some $C_{r}<\infty$.
\item[(ii)] If $r>1$, then
$$
\left\|B_{1}^{r}-B_{2}^{r}\right\| \leq C_{a, r}\left\|B_{1}-B_{2}\right\|,
$$
for some $C_{a, r}<\infty$. 
\end{itemize}
\end{proposition}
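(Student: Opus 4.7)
The plan is to prove parts (i) and (ii) separately by reducing the operator inequalities to scalar ones via functional calculus, relying on the integral representation of fractional powers for (i) and on a combination of the integer telescoping identity with double operator integrals for (ii).

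\textbf{Part (i).} For $r \in (0,1)$, I would start from the scalar integral representation
$$x^r \;=\; \frac{\sin(\pi r)}{\pi}\int_0^\infty \frac{\lambda^{r-1}x}{x+\lambda}\,d\lambda, \qquad x\geq 0,$$
and lift it to positive self-adjoint operators by Borel functional calculus. Combined with the resolvent identity $(B_1+\lambda)^{-1}-(B_2+\lambda)^{-1}=(B_1+\lambda)^{-1}(B_2-B_1)(B_2+\lambda)^{-1}$ and $\|(B_i+\lambda)^{-1}\|\leq \lambda^{-1}$, this gives
$$B_1^r - B_2^r \;=\; \frac{\sin(\pi r)}{\pi}\int_0^\infty \lambda^{r}(B_1+\lambda)^{-1}(B_2-B_1)(B_2+\lambda)^{-1}\,d\lambda.$$
I would then split at $\lambda_0 := \|B_1-B_2\|$: on $(0,\lambda_0]$ bound the integrand by $2\lambda^{r-1}$, and on $(\lambda_0,\infty)$ by $\lambda^{r-2}\|B_1-B_2\|$. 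Each tail integral converges and the two contributions together yield the desired H\"older estimate $\|B_1^r-B_2^r\|\leq C_r\|B_1-B_2\|^r$. The extreme cases $r=0,1$ are immediate from the triangle inequality.

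\textbf{Part (ii).} For integer $r = n \in \mathbb{N}$, the telescoping identity
$$B_1^n - B_2^n \;=\; \sum_{k=0}^{n-1} B_1^{k}(B_1-B_2)B_2^{n-1-k}$$
together with $\|B_i\|\leq a$ gives $\|B_1^n-B_2^n\|\leq n a^{n-1}\|B_1-B_2\|$. For a non-integer $r>1$ one writes $r = n+s$ with $n=\lfloor r\rfloor$ and $s\in[0,1)$, and splits
$$B_1^r - B_2^r \;=\; B_1^{n}(B_1^{s}-B_2^{s}) \;+\; (B_1^{n}-B_2^{n})B_2^{s}.$$
The second summand is Lipschitz by the integer case. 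The first summand, however, only yields a H\"older bound of order $s<1$ when one plugs in part (i); this is not enough. To repair this I would appeal to the double operator integral representation
$$B_1^r - B_2^r \;=\; \iint_{[0,a]^2} \frac{\lambda^{r}-\mu^{r}}{\lambda-\mu}\;dE_{B_1}(\lambda)\,(B_1-B_2)\,dE_{B_2}(\mu),$$
where $E_{B_i}$ denotes the spectral measure of $B_i$ and the divided difference $(\lambda,\mu)\mapsto(\lambda^r-\mu^r)/(\lambda-\mu)$ is uniformly bounded on $[0,a]^2$ by $r a^{r-1}$. By the Birman--Solomyak / Aleksandrov--Peller theory, this divided difference is a bounded Schur multiplier on the operator norm, and the representation directly produces the Lipschitz bound $\|B_1^r-B_2^r\|\leq C_{a,r}\|B_1-B_2\|$ with a constant of order $r a^{r-1}$.

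\textbf{Main obstacle.} The hard step is the Lipschitz bound in (ii) for non-integer $r$: every elementary product-rule decomposition reduces the hardest piece to $\|B_1^{s}-B_2^{s}\|$ with $s=r-\lfloor r\rfloor\in(0,1)$, and part (i) only provides $\|B_1-B_2\|^{s}$, which is strictly weaker than $\|B_1-B_2\|$ when $B_1\to B_2$. Closing this gap requires genuine operator-Lipschitz theory -- either double operator integrals with Schur-multiplier bounds on divided differences, or the Aleksandrov--Peller characterisation of operator Lipschitz functions. This is why the authors prefer to cite \cite{aleksandrov2009operatorholderzygmundfunctions} and \cite{Muecke2017op.rates} rather than reproduce the argument in full.
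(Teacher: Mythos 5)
The paper offers no proof of Proposition \ref{ineq1} at all---it is imported verbatim from \cite{aleksandrov2009operatorholderzygmundfunctions} and \cite{Muecke2017op.rates}---so there is no internal argument to compare yours against; I can only assess your sketch on its own terms. Your part (i) is the standard Balakrishnan-integral argument and is essentially correct (up to a sign in the middle factor, immaterial for norms), but note that it proves a \emph{weaker} statement than the one displayed: splitting at $\lambda_0=\|B_1-B_2\|$ yields $\|B_1^r-B_2^r\|\le \tfrac{\sin(\pi r)}{\pi}\bigl(\tfrac{2}{r}+\tfrac{1}{1-r}\bigr)\|B_1-B_2\|^r$, whose constant exceeds $1$ (about $1.91$ at $r=1/2$), whereas the proposition asserts---and the proof of Theorem \ref{approxtheo} uses---the inequality with constant exactly $1$. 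The constant-$1$ version has a shorter proof via operator monotonicity: from $B_1\le B_2+\|B_1-B_2\|\,I$, the L\"owner--Heinz theorem gives $B_1^r\le (B_2+\|B_1-B_2\|\,I)^r\le B_2^r+\|B_1-B_2\|^r I$, where the last step is the scalar subadditivity $(x+c)^r\le x^r+c^r$ applied in the functional calculus of $B_2$ (which commutes with $cI$); symmetrizing and using that $B_1^r-B_2^r$ is self-adjoint gives the claim. For the purposes of this paper the discrepancy is harmless (it only inflates the constant in Theorem \ref{approxtheo}), but your proof does not establish the inequality as written.

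In part (ii) your diagnosis of the obstruction is exactly right, but one inference as written is false: uniform boundedness of the divided difference $(\lambda^r-\mu^r)/(\lambda-\mu)$ on $[0,a]^2$ does \emph{not} make it a bounded Schur multiplier---the absolute value function is the classical counterexample (bounded divided difference, yet not operator Lipschitz). What rescues $x^r$ for $r>1$ is its extra regularity ($C^1$ on $[0,a]$ with H\"older-continuous derivative, hence locally in the Besov class $B^1_{\infty,1}$), and the resulting constant is the Schur-multiplier norm of the divided difference, not merely its supremum $ra^{r-1}$. If you want to avoid operator-Lipschitz machinery entirely, there is an elementary route: for $s\in(1,2)$ use $x^s=\tfrac{\sin(\pi(s-1))}{\pi}\int_0^\infty\lambda^{s-2}\,x^2(x+\lambda)^{-1}\,d\lambda$; for $\lambda\le a$ bound the integrand via $x^2(x+\lambda)^{-1}=x-\lambda+\lambda^2(x+\lambda)^{-1}$, which gives $\bigl\|A^2(A+\lambda)^{-1}-B^2(B+\lambda)^{-1}\bigr\|\le 2\|A-B\|$, and for $\lambda>a$ via the decomposition $(A^2-B^2)(A+\lambda)^{-1}+B^2\bigl((A+\lambda)^{-1}-(B+\lambda)^{-1}\bigr)$, which decays like $\lambda^{-1}$; both pieces integrate to a Lipschitz bound, and general $r>1$ then reduces to this case and the integer case by your telescoping decomposition.
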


\subsection{Concentration Inequalities}
\label{app:conc-inequ}

In this subsection we generalize Bernstein type inequalities for double sums and apply them to our operators from section \ref{prelimss}. The first inequality was first established for matrices by  \cite{Tropp_2011}. For the general case including operators the proof can for example be found in  \cite{spectral.rates} (see Lemma 26):

\vspace{0.3cm}
\begin{proposition}
\label{OPbound0}
Let $\xi_1, \cdots, \xi_m$ be a sequence of independently and identically distributed selfadjoint Hilbert-Schmidt operators on a separable Hilbert space. Assume that $\left\|\xi_1\right\| \leq B/2$ almost surely for some $B>0$. Let $\mathcal{V}$ be a positive trace-class operator such that $\mathbb{E}\left[\xi_1^2\right] \preccurlyeq \mathcal{V}$. Then with probability at least $1-\delta,(\delta \in] 0,1[)$, there holds
$$
\left\|\frac{1}{m} \sum_{i=1}^m \xi_i-\mathbb{E}[\xi]\right\| \leq \frac{2 B \beta}{3 m}+\sqrt{\frac{2\|\mathcal{V}\| \beta}{m}}, \quad \beta=\log \frac{4 \operatorname{tr} \mathcal{V}}{\|\mathcal{V}\| \delta}\,\,.
$$
\end{proposition}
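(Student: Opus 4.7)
The plan is to establish this as an operator Bernstein inequality by adapting Tropp's matrix Laplace transform method to the Hilbert--Schmidt operator setting (essentially the Minsker intrinsic-dimension refinement, cf.\ the proof in \cite{spectral.rates}). First I would introduce the centered i.i.d.\ sum $S_m := \sum_{i=1}^m (\xi_i - \mathbb{E}[\xi_i])$, noting that each centered summand still satisfies $\|\xi_i - \mathbb{E}[\xi_i]\| \leq B$ (by the triangle inequality and Jensen, using $\|\xi_1\|\le B/2$) and $\mathbb{E}[(\xi_i - \mathbb{E}[\xi_i])^2] \preccurlyeq \mathbb{E}[\xi_i^2] \preccurlyeq \mathcal{V}$. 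The target then reduces to a tail bound on $\|S_m\|/m$.

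The core step is the operator Laplace transform estimate: for any $\theta>0$,
\[
\mathbb{P}\bigl(\lambda_{\max}(S_m) \geq t\bigr) \leq e^{-\theta t}\,\mathbb{E}\bigl[\operatorname{tr}\, e^{\theta S_m}\bigr].
\]
Invoking Lieb's concavity theorem (equivalently, subadditivity of the matrix cumulant generating function) factorizes the expected trace-exponential through the individual log-MGFs. For bounded centered self-adjoint operators a standard computation via spectral calculus yields
\[
\log \mathbb{E}\bigl[e^{\theta (\xi_i - \mathbb{E}\xi_i)}\bigr] \preccurlyeq \frac{\theta^2/2}{1 - \theta B/3}\,\mathcal{V}, \qquad \theta \in (0, 3/B),
\]
which is the Bernstein log-MGF bound lifted to operators. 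Substituting back gives
\[
\mathbb{P}\bigl(\lambda_{\max}(S_m) \geq t\bigr) \leq \operatorname{tr}\,\exp\!\left(\frac{m\theta^2/2}{1 - \theta B/3}\,\mathcal{V} - \theta t\,I\right).
\]

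The crucial trick that turns this into a dimension-free bound is to replace the ambient dimension (which is infinite) by the intrinsic dimension $d_{\mathrm{int}}:=\operatorname{tr}(\mathcal{V})/\|\mathcal{V}\|$. One uses the elementary estimate $\operatorname{tr}\,\exp(c\mathcal{V}-\eta I) \leq d_{\mathrm{int}}\,\exp(c\|\mathcal{V}\|-\eta)$, valid for $c>0$ and trace-class $\mathcal{V}$; this follows by monotone functional calculus on the eigenvalues of $\mathcal{V}$. After optimizing over $\theta \in (0,3/B)$ in the usual Bernstein way, one obtains the mixed sub-Gaussian/sub-exponential rate
\[
t \leq \frac{2B\beta}{3m} + \sqrt{\frac{2\|\mathcal{V}\|\beta}{m}}, \qquad \beta = \log\!\frac{4\operatorname{tr}\mathcal{V}}{\|\mathcal{V}\|\,\delta},
\]
where the extra factor $4$ and the use of $\operatorname{tr}\mathcal{V}/\|\mathcal{V}\|$ absorb both the intrinsic-dimension prefactor and the two-sided symmetrization (applying the same argument to $-S_m$ and taking a union bound to convert the spectral bound into an operator-norm bound).

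The principal technical obstacle is the rigorous passage from the finite-dimensional matrix setting, where Tropp's argument is native, to separable Hilbert spaces. I would handle this by approximating each $\xi_i$ by its compression to the $n$-dimensional spectral subspace associated with the top eigenvalues of $\mathcal{V}$ (which contains the essential range since $\mathcal{V}$ is trace class), applying the finite-dimensional matrix Bernstein bound there, and passing to the limit $n \to \infty$ using the Hilbert--Schmidt boundedness of $\xi_i$ and dominated convergence for the trace-exponential. Apart from this approximation step, every ingredient (Lieb, spectral calculus, Bernstein's log-MGF bound) transfers verbatim from the matrix case.
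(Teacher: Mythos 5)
The paper does not actually prove this proposition: it is imported from the literature (\cite{Tropp_2011} for matrices, Lemma~26 of \cite{spectral.rates} for the operator case), so there is no internal argument to compare against. Your strategy is the standard one from those references, but as written it contains one step that is false, and it is precisely the step where the finite-dimensional argument must be modified to survive in a Hilbert space. First, for a compact (in particular Hilbert--Schmidt) operator $S_m$ on an infinite-dimensional space, $\operatorname{tr} e^{\theta S_m}=+\infty$, since the spectrum accumulates at $0$ and $e^{0}=1$ is summed infinitely often; the master bound $\mathbb{P}(\lambda_{\max}(S_m)\ge t)\le e^{-\theta t}\,\mathbb{E}[\operatorname{tr}e^{\theta S_m}]$ is therefore vacuous as stated. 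Second, and independently of dimension, your ``elementary estimate'' $\operatorname{tr}\exp(c\mathcal{V}-\eta I)\le \frac{\operatorname{tr}\mathcal{V}}{\|\mathcal{V}\|}\exp(c\|\mathcal{V}\|-\eta)$ is false: in dimension $d$ with $\mathcal{V}=\operatorname{diag}(1,0,\dots,0)$ the left-hand side is $e^{-\eta}(e^{c}+d-1)$ while the right-hand side is $e^{-\eta}e^{c}$. The comparison $\operatorname{tr}\varphi(c\mathcal{V})\le d_{\mathrm{int}}\,\varphi(c\|\mathcal{V}\|)$ is only valid for convex $\varphi$ with $\varphi(0)=0$.

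Both defects are repaired by the same device, which is the actual content of Minsker's extension (and of Tropp's intrinsic-dimension theorem): replace $e^{x}$ by $\varphi(x)=e^{x}-x-1$ (or $e^{x}-1$) in the Laplace-transform step. Since $\varphi$ vanishes to second order at $0$ and $S_m$ is Hilbert--Schmidt, $\varphi(\theta S_m)$ is trace class, so the trace functional is finite directly in the infinite-dimensional setting; and since $\varphi$ is convex with $\varphi(0)=0$, the intrinsic-dimension bound is legitimate. With this substitution the rest of your outline (centering, Lieb/subadditivity of the operator cumulant generating function, the Bernstein log-MGF bound, optimization over $\theta$, and the two-sided union bound that produces the factor $4$ inside $\beta$) goes through, and the finite-dimensional compression-and-limit argument you append becomes unnecessary. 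As it stands, however, that compression argument inherits the false trace estimate (the ambient dimension $n$ of the compressed space, not $d_{\mathrm{int}}$, appears in the prefactor), so the bound would blow up as $n\to\infty$ and the optimization step would not close.
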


\vspace{0.3cm}

\begin{corollary}
\label{OPboundcorro0}
Let $X_1, \cdots, X_m$ and $Y_1,\dots Y_n$ be independent sequences of independently and identically distributed random variables such that $f(X_i,Y_j)$ defines a  selfadjoint Hilbert-Schmidt operator on a separable Hilbert space $\cH$, for some function $f:\mathcal{X}\times\mathcal{Y}\rightarrow \mathcal{F}(\cH,\cH)$. Assume that $\left\|f(X_1,Y_1)\right\| \leq B/2$ almost surely for some $B>0$. Let for any $(x,y)\in\mathcal{X}\times\mathcal{Y}$,  $f(x,y)^2$  and $\mathcal{V}$ be positive trace-class operators such that $\mathbb{E}\left[f(X,Y)^2\right] \preccurlyeq \mathcal{V}$ . Then with probability at least $1-2\delta$, there holds
$$
\left\|\frac{1}{mn} \sum_{i=1}^m\sum_{j=1}^n f(X_i,Y_j)-\mathbb{E}f(X,Y)\right\| \leq \frac{2 B \tilde{\beta}}{3 n}+\frac{2 B \beta}{3 m}+\sqrt{\frac{2B\|\tilde{\mathcal{V}}\| \tilde{\beta}}{n}}+\sqrt{\frac{2\|\mathcal{V}\| \beta}{m}}\,,
$$
with $\beta:=\log \frac{4 \operatorname{tr} \mathcal{V}}{\|\mathcal{V}\| \delta},\,\,\tilde{\beta}:=\log \frac{4 \operatorname{tr} \tilde{\mathcal{V}}}{\|\tilde{\mathcal{V}}\| \delta}$ and $\tilde{\mathcal{V}}:=\frac{1}{m}\sum_{i=1}^m\mathbb{E}_Y f(X_i,Y)$ .
\end{corollary}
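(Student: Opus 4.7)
The natural approach is the standard decomposition trick for U-statistic–type sums: split the error into a contribution from averaging over $Y$ (conditional on $X$) and a contribution from averaging over $X$, then apply Proposition \ref{OPbound0} to each piece, combining via a union bound that accounts for the two events (hence the $1-2\delta$ probability in the statement).

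Concretely, set $g(X_i) := \mathbb{E}_Y f(X_i,Y)$ and write
\begin{align*}
\frac{1}{mn}\sum_{i=1}^m\sum_{j=1}^n f(X_i,Y_j) - \mathbb{E} f(X,Y)
&= \underbrace{\frac{1}{n}\sum_{j=1}^n W_j}_{=:T_1} \;+\; \underbrace{\frac{1}{m}\sum_{i=1}^m g(X_i) - \mathbb{E} g(X)}_{=: T_2},
\end{align*}
where $W_j := \frac{1}{m}\sum_{i=1}^m\bigl(f(X_i,Y_j)-g(X_i)\bigr)$. For $T_2$ the summands $g(X_i)$ are i.i.d.\ self-adjoint Hilbert–Schmidt operators, and I would verify the two hypotheses of Proposition \ref{OPbound0}: the bound $\|g(X)\| \leq \mathbb{E}_Y\|f(X,Y)\| \leq B/2$ follows from Jensen, and the operator Jensen inequality gives $\mathbb{E}[g(X)^2] = \mathbb{E}_X[(\mathbb{E}_Y f(X,Y))^2] \preccurlyeq \mathbb{E}_X\mathbb{E}_Y f(X,Y)^2 \preccurlyeq \mathcal{V}$. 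Plugging in yields exactly the contribution $\tfrac{2B\beta}{3m}+\sqrt{2\|\mathcal{V}\|\beta/m}$ with probability at least $1-\delta$.

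For $T_1$ the plan is to condition on $X_1,\ldots,X_m$. Given $X$, the operators $(W_j)_{j=1}^n$ are i.i.d.\ with conditional mean zero, and the triangle inequality gives $\|W_j\| \leq \tfrac{1}{m}\sum_i(\|f(X_i,Y_j)\| + \|g(X_i)\|) \leq B$, so Proposition \ref{OPbound0} applies with ``$B$'' replaced by $2B$, producing the Bernstein term $\tfrac{2B\tilde\beta}{3n}$. For the variance term one needs to control $\mathbb{E}_Y[W_j^2\mid X]$, which is where $\tilde{\mathcal{V}}$ enters: using the operator inequality $W_j^2 \preccurlyeq \|W_j\|\cdot|W_j| \preccurlyeq B|W_j|$, and then bounding $\mathbb{E}_Y[|W_j|\mid X]$ by the conditional first-moment-type quantity in the statement yields $\mathbb{E}_Y[W_j^2\mid X] \preccurlyeq B\,\tilde{\mathcal{V}}$, which explains the $\sqrt{B}$ factor in the final $\sqrt{2B\|\tilde{\mathcal{V}}\|\tilde\beta/n}$ term. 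A union bound over the two events (one conditional, one unconditional) gives the claimed $1-2\delta$ confidence.

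The routine steps are the two invocations of Proposition \ref{OPbound0} and the Jensen-type reductions. The main obstacle, in my view, is the variance control for $T_1$: the summands $f(X_i,Y_j)-g(X_i)$ are \emph{not} independent in $i$ (they share the common $Y_j$), so one cannot just invoke an operator variance identity across $i$. This is precisely why the statement uses the asymmetric quantity $\tilde{\mathcal{V}} = \tfrac{1}{m}\sum_i \mathbb{E}_Y f(X_i,Y)$ rather than a genuine second-moment operator, and why the bound carries the extra factor $B$ under the square root—this is the price paid for passing from a second moment to a first moment via an $\|W_j\|\leq B$ sandwich. I would write out the $W_j^2 \preccurlyeq B|W_j|$ step carefully and then take the operator norm to match the form of the stated bound.
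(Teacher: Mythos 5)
Your overall strategy -- split into a conditional-on-$X$ term and a term involving $g(X_i):=\mathbb{E}_Y f(X_i,Y)$, apply Proposition~\ref{OPbound0} to each, and union-bound -- is exactly the paper's decomposition, and your treatment of $T_2$ is correct. However, your handling of $T_1$ contains a genuine gap, and it stems from an unnecessary centering step.

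You replace the inner sum by $W_j=\frac{1}{m}\sum_i\bigl(f(X_i,Y_j)-g(X_i)\bigr)$ and then try to verify the hypotheses of Proposition~\ref{OPbound0} for $W_j$. Two things go wrong. First, centering inflates the almost-sure bound to $\|W_j\|\leq B$, so Proposition~\ref{OPbound0} (which assumes $\|\xi\|\leq B/2$) would have to be invoked with $B$ replaced by $2B$, producing $\frac{4B\tilde\beta}{3n}$ rather than the claimed $\frac{2B\tilde\beta}{3n}$. Second, and more seriously, your variance control relies on $\mathbb{E}_Y[|W_j|\mid X]\preccurlyeq\tilde{\mathcal{V}}$, but $W_j$ is a centered (hence signed) operator, and the operator modulus does \emph{not} obey a triangle inequality: $|A+B|\preccurlyeq|A|+|B|$ fails in general for self-adjoint operators. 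So there is no straightforward way to dominate $\mathbb{E}_Y|W_j|$ by $\tilde{\mathcal{V}}=\frac{1}{m}\sum_i\mathbb{E}_Y f(X_i,Y)$, and the chain $W_j^2\preccurlyeq B|W_j|\preccurlyeq B\tilde{\mathcal{V}}$ does not close.

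The fix is simply not to center: apply Proposition~\ref{OPbound0}, conditionally on $\{X_i=x_i\}$, to the \emph{uncentered} operators $\xi_j:=\frac{1}{m}\sum_i f(x_i,Y_j)$; the proposition already handles subtraction of $\mathbb{E}_Y\xi_j=\frac{1}{m}\sum_i g(x_i)$. Then $\|\xi_j\|\leq B/2$ gives exactly the $\frac{2B\tilde\beta}{3n}$ Bernstein term, and for the variance one uses operator convexity of $t\mapsto t^2$ to get $\xi_j^2\preccurlyeq\frac{1}{m}\sum_i f(x_i,Y_j)^2$, followed by $f(x,y)^2\preccurlyeq B\,f(x,y)$ (which needs $f\geq 0$ and $\|f\|\leq B/2$, as implicitly assumed) to conclude $\mathbb{E}_Y[\xi_j^2]\preccurlyeq B\tilde{\mathcal{V}}$. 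This is the route the paper takes; it avoids both the factor-of-two loss and the modulus obstruction.
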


\begin{proof}
We start by proving that the following events hold true with probability $1-\delta$,
\begin{align}
A&:=\left\{\left\|\frac{1}{mn} \sum_{i=1}^m\sum_{j=1}^n f(X_i,Y_j)-\mathbb{E}_Yf(X_i,Y)\right\|\leq\frac{2 B \tilde{\beta}}{3 n}+\sqrt{\frac{2B\|\tilde{\mathcal{V}}\| \tilde{\beta}}{n}}\right\},\\
B&:=\left\{\left\|\frac{1}{m} \sum_{i=1}^m \mathbb{E}_Yf(X_i,Y)-\mathbb{E}f(X,Y)\right\| \leq\frac{2 B \beta}{3 m}+\sqrt{\frac{2\|\mathcal{V}\| \beta}{m}}\right\}\,.
\end{align}
\vspace{0.3cm}

\textbf{Bounding A:}
Using that $X_i,Y_j$ are independent, we can condition on $\{X_i=x_i\}_{i=1}^{m}$ as follow, 
\begin{align}\label{condprobeq}
\mathbb{P}\left(A\right)= \mathbb{E}_{\mu_{x}^{\otimes m}}\left[\mathbb{P}\left(A|x_1,\dots,x_{m}\right)\right].
\end{align}
We set $\xi_j:=\frac{1}{m}\sum_{i=1}^m f(x_i,Y_j)$. Note that conditioned on $\{X_i=x_i\}_{i=1}^{m}$, where $x_i$ is contained in the support of $\mu_X$, we have that $\xi_j$ are iid w.r.t. $\mu_Y$. By assumption we have $\|\xi_j\|\leq B/2$ and for the second moment we have by Jensen inequality and by assumption,  $\mathbb{E}_Y[\xi^2]\preccurlyeq \frac{1}{m}\sum_{i=1}^m\mathbb{E}_Y f(x_i,Y)^2\preccurlyeq B\frac{1}{m}\sum_{i=1}^m\mathbb{E}_Y f(x_i,Y):=B\tilde{\mathcal{V}}$.

From Proposition \ref{OPbound0} we therefore have, that with probability (over $\mu_Y^{\otimes n}$) at least $1-\delta$,
\begin{align*}
\left\|\frac{1}{mn} \sum_{i=1}^m\sum_{j=1}^n f(x_i,Y_j)-\mathbb{E}_Yf(x_i,Y)\right\|\leq\frac{2 B \tilde{\beta}}{3 n}+\sqrt{\frac{2B\|\tilde{\mathcal{V}}\| \tilde{\beta}}{n}}\,. 
\end{align*}

From \eqref{condprobeq} we therefore have that $P(A)\geq 1-\delta$.\\

\vspace{0.2cm}

\textbf{Bounding B:}
Again we set $\xi_i= \mathbb{E}_Y[f(X_i,Y)]$.
By assumption we have $\|\xi\|\leq B/2$ and  $\mathbb{E}_X \xi^2 \preccurlyeq \mathcal{V}$.
From Proposition \ref{OPbound0} we therefore have, that with probability at least $1-\delta$,
the event $B$ holds true.

\vspace{0.2cm}
\textbf{To sum up:}
Using the bounds of event $A$ and $B$, we therefore have with probability at least $1-2\delta$,
\begin{align}
\left\|\frac{1}{mn} \sum_{i=1}^m\sum_{j=1}^n f(X_i,Y_j)-\mathbb{E}f(X,Y)\right\|_{\cH} 
&\leq \left\|\frac{1}{mn} \sum_{i=1}^m\sum_{j=1}^n f(X_i,Y_j)-\mathbb{E}_Yf(X_i,Y)\right\| \\
&\,\,\,\,\,\,\,\,\,\,+\left\|\frac{1}{m} \sum_{i=1}^m \mathbb{E}_Yf(X_i,Y)-\mathbb{E}f(X,Y)\right\| \\[4pt]
&\leq \frac{2 B \tilde{\beta}}{3 n}+\frac{2 B \beta}{3 m}+\sqrt{\frac{2B\|\tilde{\mathcal{V}}\| \tilde{\beta}}{n}}+\sqrt{\frac{2\|\mathcal{V}\| \beta}{m}}\,.
\end{align}

This proves the claim.
\end{proof}

\vspace{0.3cm}

The following concentration result for Hilbert space valued random variables can be found in \cite{Caponetto}.
\vspace{0.2cm}

\begin{proposition}[Bernstein Inequality]
\label{concentrationineq0}
Let $W_{1}, \cdots, W_{n}$ be i.i.d random variables in a separable Hilbert space $\cH$ with norm $\|\cdot\|_{\cH}$. 
Suppose that there are two positive constants $B$ and $V$ such that
\begin{align}
\label{cons}
\mathbb{E}\left[\left\|W_{1}-\mathbb{E}\left[W_{1}\right]\right\|_{\cH}^{l}\right] \leq \frac{1}{2} l ! B^{l-2} V^{2}, 
\quad \forall l \geq 2 \;. 
\end{align}
Then for any $\delta \in (0,1]$, the following holds with probability at least $1-\delta$:
$$
\left\|\frac{1}{n} \sum_{k=1}^{n} W_{k}-\mathbb{E}\left[W_{1}\right]\right\|_{\cH} 
\leq 2\left(\frac{B}{n}+\frac{V}{\sqrt{n}}\right) \log \left(\frac{4}{\delta} \right)\;  .
$$
In particular, \eqref{cons} holds if
$$
\left\|W_{1}\right\|_{\cH} \leq B / 2 \quad \text { a.s., } \quad \text { and } 
\quad \mathbb{E}\left[\left\|W_{1}\right\|_{\cH}^{2}\right] \leq V^{2} \;.
$$
\end{proposition}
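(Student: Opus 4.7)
The plan is to follow the standard route for extending the scalar Bernstein inequality to separable Hilbert spaces, as pioneered by Yurinskii and Pinelis. First I would pass to the centered variables $Z_k := W_k - \mathbb{E}[W_1]$, which by construction satisfy $\mathbb{E}[Z_k]=0$ in $\cH$ together with the Bernstein-type moment control
\[
\mathbb{E}\bigl[\|Z_k\|_\cH^l\bigr]\le \tfrac{1}{2}\,l!\,B^{l-2}V^2, \qquad l\ge 2.
\]
The goal is then to bound the tail of $\bigl\|\frac{1}{n}\sum_{k=1}^n Z_k\bigr\|_\cH$ in terms of $B$, $V$, $n$ and the confidence level $\delta$. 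The auxiliary statement at the end of the proposition follows immediately from the inequality $\mathbb{E}\|W_1-\mathbb{E}W_1\|^l\le 2^l\mathbb{E}\|W_1\|^l$ combined with $\|W_1\|\le B/2$ a.s. and the $L^2$ bound, so it suffices to treat the general Bernstein moment condition.

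Next I would work at the level of the exponential moment generating function of the real-valued random variable $\|S_n\|_\cH$ where $S_n:=\sum_k Z_k$. The classical scalar argument bounds $\mathbb{E}[e^{\lambda(X-\mathbb{E}X)}]$ for $\lambda\in(0,1/B)$ via a Taylor expansion, using the factorial moment control to obtain a geometric series estimate of the form $\log\mathbb{E}[e^{\lambda Z}]\le \lambda^2 V^2/(2(1-B\lambda))$. The Hilbert-space analogue requires a vector-valued substitute: I would invoke Pinelis's smoothing/martingale inequality (or equivalently the Yurinskii coupling bound), which, for independent zero-mean $\cH$-valued summands satisfying the above moment condition, yields
\[
\mathbb{P}\Bigl(\|S_n\|_\cH\ge t\Bigr)\le 2\exp\!\left(-\frac{t^2}{2(nV^2+Bt)}\right), \qquad t>0.
\]
This is the Bennett--Bernstein form in the Hilbert setting, and it is the essential step that transfers a scalar Chernoff-type argument to the vector norm $\|S_n\|_\cH$, which, being convex but nonlinear, cannot be handled by a naive one-dimensional reduction.

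The final step is algebraic. Setting the right-hand side equal to $\delta$ and solving the quadratic $t^2 = 2\log(2/\delta)(nV^2+Bt)$ for $t$ gives $t\le 2B\log(2/\delta)+2V\sqrt{n\log(2/\delta)}$; dividing by $n$ and enlarging the logarithmic factor to $\log(4/\delta)$ produces exactly
\[
\Bigl\|\tfrac{1}{n}\sum_{k=1}^n W_k-\mathbb{E}[W_1]\Bigr\|_\cH\le 2\!\left(\tfrac{B}{n}+\tfrac{V}{\sqrt n}\right)\log\!\left(\tfrac{4}{\delta}\right)
\]
with probability at least $1-\delta$. The main obstacle, and the only part that is not bookkeeping, is the Hilbert-space exponential moment bound used to establish the Bennett-type tail; everything else is a standard inversion of a scalar Bernstein tail. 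If one prefers to avoid Pinelis's inequality directly, an equivalent route is to use the martingale decomposition $S_n=\sum_k (\mathbb{E}_k-\mathbb{E}_{k-1})S_n$ and iterate a vector-valued one-step exponential estimate, at the price of slightly worse numerical constants.
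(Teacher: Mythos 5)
The paper does not prove this proposition at all: it is imported verbatim from the cited reference (Caponnetto--De Vito), so there is no internal argument to compare against. Your sketch is the standard proof of that cited result, and it is sound in outline: center the variables, invoke a Pinelis/Yurinskii-type exponential tail bound for sums of independent zero-mean Hilbert-space-valued variables under the Bernstein moment condition, and invert the tail. Be aware, though, that the genuinely hard step — the Hilbert-space Bennett--Bernstein tail $\mathbb{P}(\|S_n\|_\cH\ge t)\le 2\exp(-t^2/(2(nV^2+Bt)))$ — is itself taken as a black box in your sketch, so in substance your proposal, like the paper, ultimately rests on a literature citation rather than a self-contained argument; that is acceptable here, but it should be stated as such.

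One concrete slip: your derivation of the ``in particular'' clause via $\mathbb{E}\|W_1-\mathbb{E}W_1\|_\cH^l\le 2^l\,\mathbb{E}\|W_1\|_\cH^l$ does not close for small $l$. Combining that bound with $\|W_1\|_\cH\le B/2$ gives $\mathbb{E}\|W_1-\mathbb{E}W_1\|_\cH^l\le 4\,B^{l-2}V^2$, which exceeds $\tfrac12 l!\,B^{l-2}V^2$ when $l=2$ or $l=3$. The correct elementary route is to use $\|W_1-\mathbb{E}W_1\|_\cH\le B$ a.s. together with $\mathbb{E}\|W_1-\mathbb{E}W_1\|_\cH^2\le\mathbb{E}\|W_1\|_\cH^2\le V^2$, whence $\mathbb{E}\|W_1-\mathbb{E}W_1\|_\cH^l\le B^{l-2}V^2\le\tfrac12 l!\,B^{l-2}V^2$ for all $l\ge 2$. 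The final inversion step is fine: solving $t^2=2\log(2/\delta)(nV^2+Bt)$ yields $t\le 2B\log(2/\delta)+2V\sqrt{n\log(2/\delta)}$, and replacing $\log(2/\delta)$ by $\log(4/\delta)\ge 1$ absorbs the square root into a single logarithmic factor, matching the stated constant.
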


\vspace{0.3cm}

\begin{corollary}
\label{OPboundcorro}
Let $X_1, \cdots, X_m$ and $Y_1,\dots Y_n$ be independent sequences of independently and identically distributed random variables such that $f(X_i,Y_j)$  is contained in a separable Hilbert space $\cH$ with norm $\|\cdot\|_{\cH}$, for some function $f:\mathcal{X}\times\mathcal{Y}\to\mathcal{H}$.  Suppose that there are two positive constants $B$ and $V$ such that
$$
\left\|f(X_1,Y_1)\right\|_{\cH} \leq B / 2 \quad \text { a.s., } \quad \text { and } 
\quad \mathbb{E}\left[\left\|f(X_1,Y_1)\right\|_{\cH}^{2}\right] \leq V^{2} \;.
$$
Then for any $\delta \in (0,1]$, the following holds with probability at least $1-\delta$:
$$
\left\|\frac{1}{mn} \sum_{i=1}^m\sum_{j=1}^n f(X_i,Y_j)-\mathbb{E}f(X,Y)\right\|_{\cH} 
\leq 2 \left(\frac{B}{m}+\frac{V}{\sqrt{m}} + \frac{B}{n}+\sqrt{\frac{Z}{n}}\right) \log \left(\frac{12}{\delta} \right)\;  ,
$$
with $Z:=\left(\frac{B^2}{m}+\frac{BV}{\sqrt{m}}\right) \log \left(\frac{12}{\delta} \right)+ V^2$.
\end{corollary}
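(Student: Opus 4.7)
The plan is to mimic the strategy used for Corollary \ref{OPboundcorro0}: split the double-sum error by adding and subtracting the partial average against $Y$, namely
\[
\frac{1}{mn}\sum_{i,j} f(X_i, Y_j) - \mbe f(X,Y) = A + B,
\]
where $A := \frac{1}{mn}\sum_{i,j} f(X_i, Y_j) - \frac{1}{m}\sum_i \mbe_Y f(X_i, Y)$ and $B := \frac{1}{m}\sum_i \mbe_Y f(X_i, Y) - \mbe f(X,Y)$, then bound each piece with Proposition \ref{concentrationineq0} and conclude via the triangle inequality and a union bound (Proposition \ref{conditioning}) with probability budget $\delta/3$ per event.

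For the $B$-term, I apply Proposition \ref{concentrationineq0} directly to the iid variables $W_i := \mbe_Y f(X_i, Y) \in \cH$. Jensen's inequality gives $\|W_i\|_{\cH} \leq B/2$ almost surely and $\mbe\|W_i\|_{\cH}^2 \leq \mbe\|f(X_1,Y_1)\|_{\cH}^2 \leq V^2$, so with probability at least $1-\delta/3$:
\[
\|B\|_{\cH} \leq 2\left(\frac{B}{m} + \frac{V}{\sqrt{m}}\right)\log(12/\delta).
\]

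For the $A$-term the argument is more delicate because one must first control the variance that appears after conditioning on the $X_i$. Conditional on $(X_i)_{i=1}^m$, the variables $\widetilde W_j := \frac{1}{m}\sum_i f(X_i, Y_j)$ are iid in $Y_j$ with $\|\widetilde W_j\|_{\cH} \leq B/2$, and Cauchy–Schwarz yields the conditional variance bound
\[
\mbe_Y \|\widetilde W_j\|_{\cH}^2 \;\leq\; \frac{1}{m}\sum_{i=1}^m g(X_i), \qquad g(x) := \mbe_Y \|f(x,Y)\|_{\cH}^2.
\]
The crucial auxiliary step is a second application of Proposition \ref{concentrationineq0} to the real-valued iid variables $g(X_i)$: since $0 \leq g(X_i) \leq B^2/4$ and $\mbe g(X_1)^2 \leq (B^2/4)\,\mbe g(X_1) \leq (B^2/4) V^2$, with probability at least $1-\delta/3$ one obtains
\[
\frac{1}{m}\sum_i g(X_i) \;\leq\; V^2 + \left(\frac{B^2}{m} + \frac{BV}{\sqrt{m}}\right)\log(12/\delta) \;=\; Z.
\]
On that event, the conditional variance of $\widetilde W_j$ is at most $Z$, so invoking Proposition \ref{concentrationineq0} conditionally on $(X_i)$ and integrating out (exactly as in the proof of Corollary \ref{OPboundcorro0}) gives, on the intersection,
\[
\|A\|_{\cH} \leq 2\left(\frac{B}{n} + \sqrt{\frac{Z}{n}}\right)\log(12/\delta).
\]

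Assembling the three events through Proposition \ref{conditioning} (total failure probability $\leq \delta$) and the triangle inequality $\|\cdot\| \leq \|A\|_{\cH} + \|B\|_{\cH}$ yields the stated bound. The only real obstacle is the handling of the conditional variance in the $A$-term: because $\mbe_Y\|\widetilde W_j\|^2$ is a random quantity depending on $(X_i)$, one cannot plug it straight into Bernstein, and must first replace it by the deterministic proxy $Z$ on a high-probability event — this is precisely what the second Bernstein application achieves.
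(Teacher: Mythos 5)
Your proposal is correct and follows essentially the same route as the paper's proof: the same split into the conditional (in $Y$) fluctuation and the marginal (in $X$) fluctuation, the same auxiliary Bernstein bound on the real-valued variables $\mathbb{E}_Y\|f(X_i,Y)\|_{\cH}^2$ to replace the random conditional variance by the deterministic proxy $Z$, and the same conditioning-and-integrating argument with a union bound. The constants and logarithmic factors also match (the paper works with budget $\delta$ per event and rescales at the end, which is equivalent to your $\delta/3$ allocation).
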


\begin{proof}
We start by proving that the following events hold true with high probability,
\begin{align}
A&:=\left\{\left|\frac{1}{m}\sum_{i=1}^m \mathbb{E}_Y[\|f(X_i,Y)\|_{\cH}^2]-\mathbb{E}[\|f(X,Y)\|_{\cH}^2]\right|\leq  \left(\frac{B^2}{m}+\frac{BV}{\sqrt{m}}\right) \log \left(\frac{4}{\delta} \right)\right\},\\
B&:=\left\{\left\|\frac{1}{mn} \sum_{i=1}^m\sum_{j=1}^n f(X_i,Y_j)-\mathbb{E}_Yf(X_i,Y)\right\|_{\cH}\leq2 \left(\frac{B}{n}+\sqrt{\frac{\frac{1}{m}\sum_{i=1}^m \mathbb{E}_Y[\|f(X_i,Y)\|_{\cH}^2]}{n}}\right) \log \left(\frac{4}{\delta} \right)\right\},\\
C&:=\left\{\left\|\frac{1}{mn} \sum_{i=1}^m\sum_{j=1}^n f(X_i,Y_j)-\mathbb{E}_Yf(X_i,Y)\right\|_{\cH}\leq2 \left(\frac{B}{n}+\sqrt{\frac{Z}{n}}\right) \log \left(\frac{4}{\delta} \right)\right\},\\
D&:=\left\{\left\|\frac{1}{m} \sum_{i=1}^m \mathbb{E}_Yf(X_i,Y)-\mathbb{E}f(X,Y)\right\|_{\cH} \leq2 \left(\frac{B}{m}+\frac{V}{\sqrt{m}}\right) \log \left(\frac{4}{\delta} \right)\right\},
\end{align}
with $Z:=\left(\frac{B^2}{m}+\frac{BV}{\sqrt{m}}\right) \log \left(\frac{4}{\delta} \right)+ V^2$.\\
\vspace{0.3cm}

\textbf{Bounding A:}
Set $W_i= \mathbb{E}_Y[\|f(X_i,Y)\|_{\cH}^2]$.  Then by assumption $|W_i|\leq B^2/4$ and $\mathbb{E}[|W_i|^2]\leq\frac{B^2V^2}{4} $. From Proposition \ref{concentrationineq0} we therefore have with probability at least $1-\delta$,
$$\left|\frac{1}{m}\sum_{i=1}^m \mathbb{E}_Y[\|f(X_i,Y)\|_{\cH}^2]-\mathbb{E}[\|f(X,Y)\|_{\cH}^2]\right|\leq  \left(\frac{B^2}{m}+\frac{BV}{\sqrt{m}}\right) \log \left(\frac{4}{\delta} \right).$$

\vspace{0.2cm}

\textbf{Bounding B:}
Using that $X_i,Y_j$ are independent, we can condition on $\{X_i=x_i\}_{i=1}^{m}$ as follow, 
\begin{align}\label{condprobeq2}
\mathbb{P}\left(B\right)= \mathbb{E}_{\mu_{x}^{\otimes m}}\left[\mathbb{P}\left(B|x_1,\dots,x_{m}\right)\right].
\end{align}
We set $W_j:=\frac{1}{m}\sum_{i=1}^m f(x_i,Y_j)$. Note that conditioned on $\{X_i=x_i\}_{i=1}^{m}$ we have that $W_j$ are iid w.r.t. $\mu_Y$.
By assumption we have $\|W_j\|_\cH\leq B/2$ and for the second moment we have
\begin{align*}
\mathbb{E}_Y[\|W\|_{\cH}^2]&\leq \frac{1}{m}\sum_{i=1}^m \mathbb{E}_Y[\|f(x_i,Y)\|_{\cH}^2].
\end{align*}

From Proposition \ref{concentrationineq0} we therefore have, that with probability (over $\mu_y^{\otimes n}$) at least $1-\delta$,
\begin{align*}
\left\|\frac{1}{mn} \sum_{i=1}^m\sum_{j=1}^n f(x_i,Y_j)-\mathbb{E}_Yf(x_i,Y)\right\|_{\cH}\leq 2 \left(\frac{B}{n}+\sqrt{\frac{\frac{1}{m}\sum_{i=1}^m \mathbb{E}_Y[\|f(x_i,Y)\|_{\cH}^2]}{n}}\right) \log \left(\frac{4}{\delta} \right).
\end{align*}

From \eqref{condprobeq2} we therefore have that $P(B)\geq 1-\delta$.\\

\vspace{0.2cm}
\textbf{Bounding C:} Note that $P(C|A,B)=1$, since conditioned on $A$ we have by assumption,
\begin{align*}
\frac{1}{m}\sum_{i=1}^m \mathbb{E}_Y[\|f(X_i,Y)\|_{\cH}^2]&\leq  \left(\frac{B^2}{m}+\frac{BV}{\sqrt{m}}\right) \log \left(\frac{4}{\delta} \right) +\mathbb{E}[\|f(X,Y)\|_{\cH}^2]\\
&\leq  \left(\frac{B^2}{m}+\frac{BV}{\sqrt{m}}\right) \log \left(\frac{4}{\delta} \right)+ V^2,
\end{align*}
and plugging this inequality into the bound of $B$, leads to the bound of $C$. By Proposition \ref{conditioning} we therefore have, $P(C)\geq 1-2\delta$.\\

\vspace{0.2cm}
\textbf{Bounding D:}
Again we set $W_i= \mathbb{E}_Y[\|f(X_i,Y)\|_{\cH}]$ and obtain by Proposition \ref{concentrationineq0} with probability at least $1-\delta$,
\begin{align*}
\left\|\frac{1}{m} \sum_{i=1}^m \mathbb{E}_Yf(X_i,Y)-\mathbb{E}f(X,Y)\right\|_{\cH} \leq 2 \left(\frac{B}{m}+\frac{V}{\sqrt{m}}\right) \log \left(\frac{4}{\delta} \right).
\end{align*}

\vspace{0.2cm}
\textbf{To sum up:}
Using the bounds of event $C$ and $D$, we therefore have with probability at least $1-3\delta$,
\begin{align}
\left\|\frac{1}{mn} \sum_{i=1}^m\sum_{j=1}^n f(X_i,Y_j)-\mathbb{E}f(X,Y)\right\|_{\cH} 
&\leq \left\|\frac{1}{mn} \sum_{i=1}^m\sum_{j=1}^n f(X_i,Y_j)-\mathbb{E}_Yf(X_i,Y)\right\|_{\cH} \\
&\,\,\,\,\,\,\,\,\,\,+\left\|\frac{1}{m} \sum_{i=1}^m \mathbb{E}_Yf(X_i,Y)-\mathbb{E}f(X,Y)\right\|_{\cH} \\
&\leq 2 \left(\frac{B}{m}+\frac{V}{\sqrt{m}} + \frac{B}{n}+\sqrt{\frac{Z}{n}}\right) \log \left(\frac{4}{\delta} \right).
\end{align}

This proves the claim.
\end{proof}

\vspace{0.3cm}

\begin{proposition}[\cite{rudi2017generalization} (Lemma 9)]
\label{CMbound}
For any $M\geq8\kappa^4\|\mathcal{L}_\infty\|^{-1}\log^2 \frac{2}{\delta}$  we have with probability at least $1-\delta$
$$
\|\mathcal{L}_M\|\geq\frac{1}{2}\|\mathcal{L}_\infty\|.
$$
\end{proposition}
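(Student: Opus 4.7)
The plan is to reduce the statement to a Bernstein-type concentration inequality for the operator-valued iid sum $\mathcal{L}_M-\mathcal{L}_\infty$, and then conclude via the reverse triangle inequality $\|\mathcal{L}_M\|\geq\|\mathcal{L}_\infty\|-\|\mathcal{L}_M-\mathcal{L}_\infty\|$. With this in hand, it suffices to guarantee $\|\mathcal{L}_M-\mathcal{L}_\infty\|\leq\tfrac{1}{2}\|\mathcal{L}_\infty\|$ with probability at least $1-\delta$ under the stated lower bound on $M$.

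The first step I would take is to rewrite $\mathcal{L}_M$ as an empirical mean of iid self-adjoint operators. Substituting the explicit formula for $K_M(u,u')$ given after Definition \ref{def-vvk} into the definition \eqref{Intoperator} of the integral operator yields
\[
\mathcal{L}_M \;=\; \frac{1}{M}\sum_{m=1}^{M} T_m, \qquad \mathbb{E}_{\theta_0}[T_m]=\mathcal{L}_\infty,
\]
where each $T_m$ is a positive finite-rank integral operator on $L^2(\mathcal{U},\mu_u)$ depending only on the $m$-th random feature $b_m^{(0)}$. The next step is to apply the Bernstein bound (Proposition \ref{OPbound0}) to the centered summands $\xi_m:=T_m-\mathcal{L}_\infty$: the almost-sure bound $\|T_m\|\leq\kappa^2$, hence $\|\xi_m\|\leq 2\kappa^2$, is inherited from the uniform kernel bound $\sup_{u,u'}\|K_M(u,u')\|\leq\kappa^2$ stated directly before Proposition \ref{OPbound2}, while the operator inequality $T_m^2\preceq\|T_m\|\,T_m\preceq\kappa^2 T_m$ gives, after taking expectation, $\mathbb{E}[\xi_m^2]\preceq\kappa^2\mathcal{L}_\infty=:\mathcal{V}$, so $\|\mathcal{V}\|=\kappa^2\|\mathcal{L}_\infty\|$. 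Proposition \ref{OPbound0} then produces, with probability at least $1-\delta$,
\[
\|\mathcal{L}_M-\mathcal{L}_\infty\| \;\leq\; \frac{4\kappa^2\beta}{3M}+\sqrt{\frac{2\kappa^2\|\mathcal{L}_\infty\|\,\beta}{M}},\qquad \beta=\log\frac{4\operatorname{tr}(\mathcal{V})}{\|\mathcal{V}\|\,\delta},
\]
where $\beta$ is finite because $\mathcal{L}_\infty$ is trace-class (as noted before Assumption \ref{ass:source}).

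The final step is calibration: the assumption $M\geq 8\kappa^4\|\mathcal{L}_\infty\|^{-1}\log^2(2/\delta)$ is exactly what is needed to make the dominant, variance-driven square-root term above bounded by $\tfrac{1}{4}\|\mathcal{L}_\infty\|$, while the $1/M$ deterministic term is then automatically bounded by $\tfrac{1}{4}\|\mathcal{L}_\infty\|$, and the reverse triangle inequality closes the argument. The only non-mechanical point will be absorbing the intrinsic-dimension factor $\operatorname{tr}(\mathcal{L}_\infty)/\|\mathcal{L}_\infty\|$ inside $\beta$ into the logarithm of $2/\delta$; this is essentially why the exponent on the log in the hypothesis is $2$ rather than $1$. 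Since the statement is quoted verbatim from Rudi and Rosasco and all the ingredients — the rank-one decomposition of $\mathcal{L}_M$, the domination $\mathbb{E}[\xi_m^2]\preceq\kappa^2\mathcal{L}_\infty$, and the matrix Bernstein inequality of Proposition \ref{OPbound0} — are already in place, no genuinely new technique is needed beyond this calibration bookkeeping.
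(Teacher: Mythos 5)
The paper itself offers no proof of Proposition \ref{CMbound}: it is imported verbatim from Rudi and Rosasco (their Lemma 9), so there is nothing internal to compare against. Your skeleton --- writing $\mathcal{L}_M=\frac{1}{M}\sum_{m}T_m$ with $\mathbb{E}_{\theta_0}[T_m]=\mathcal{L}_\infty$, each $T_m$ positive with $\|T_m\|\leq\operatorname{tr}(T_m)\leq\kappa^2$, concentrating the deviation, and closing with the reverse triangle inequality --- is the natural one, and the variance domination $\mathbb{E}[T_m^2]\preceq\kappa^2\mathcal{L}_\infty$ is correct. (Two small points: the uniform bound $\sup_{u,u'}\|K_M(u,u')\|\leq\kappa^2$ is a pointwise bound on the kernel values, so $\|T_m\|\leq\kappa^2$ should be justified via the trace of $T_m$; and under the symmetric initialization only $M/2$ of the $T_m$ are independent, which costs a harmless factor of $2$.)

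The genuine gap is in the final calibration, which you have correctly located but incorrectly dismissed as bookkeeping. Proposition \ref{OPbound0} gives a deviation bound involving $\beta=\log\frac{4\operatorname{tr}\mathcal{V}}{\|\mathcal{V}\|\delta}=\log\frac{2}{\delta}+\log\frac{2\operatorname{tr}\mathcal{L}_\infty}{\|\mathcal{L}_\infty\|}$, and forcing the variance term below $\frac{1}{4}\|\mathcal{L}_\infty\|$ requires $M\gtrsim\kappa^2\beta/\|\mathcal{L}_\infty\|$. The stated hypothesis supplies only $8\kappa^4\log^2(2/\delta)/\|\mathcal{L}_\infty\|$, and squaring $\log(2/\delta)$ buys nothing against the $\delta$-independent intrinsic-dimension term $\log(\operatorname{tr}\mathcal{L}_\infty/\|\mathcal{L}_\infty\|)$, which can be arbitrarily large; so your route only proves the proposition with an extra additive $\log(\operatorname{tr}\mathcal{L}_\infty/\|\mathcal{L}_\infty\|)$ in the requirement on $M$. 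The fix --- and the way the cited lemma is actually proved --- is to observe that a one-sided lower bound on $\|\mathcal{L}_M\|$ does not need operator-norm concentration: take the top unit eigenvector $v$ of $\mathcal{L}_\infty$, use $\|\mathcal{L}_M\|\geq\langle v,\mathcal{L}_M v\rangle=\frac{1}{M}\sum_m\langle v,T_m v\rangle$, and apply the scalar Bernstein inequality to the iid variables $\langle v,T_m v\rangle\in[0,\kappa^2]$ with mean $\|\mathcal{L}_\infty\|$ and second moment at most $\kappa^2\|\mathcal{L}_\infty\|$. This eliminates the trace factor entirely, and the stated threshold on $M$, together with $\kappa^2=4+\tau^2C_\sigma^2\geq4$, then yields a deviation strictly below $\frac{1}{2}\|\mathcal{L}_\infty\|$ as required.
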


\begin{proposition}[\cite{nguyen2023random} Proposition A.18.]
\label{Opbound3}
For any $\lam>0$ with $M\geq \frac{8 \tilde{d}\kappa^2 C_{\lambda,\delta,\kappa}}{\lam}$ with $C_{\lambda,\delta,\kappa}:=\log \frac{80\kappa^4}{\|\mathcal{L}_{\infty}\| \lambda\delta}$
we have with probability at least $1-\delta$
$$
\cN_{\cL_{M}}(\lam)
\leq  4\left(1+2\log\frac{2}{\delta}\right)\mathcal{N}_{\cL_{\infty}}(\lam).
$$
\end{proposition}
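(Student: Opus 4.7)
The plan is to bound $\cN_{\cL_M}(\lambda)$ by the product of two pieces: a conditioning factor relating $\cL_{M,\lambda}^{-1}$ to $\cL_{\infty,\lambda}^{-1}$, controlled by an operator-valued Bernstein inequality; and a trace $\tr(\cL_M\cL_{\infty,\lambda}^{-1})$ that concentrates around $\cN_{\cL_\infty}(\lambda)$ by a scalar Bernstein argument. Explicitly, for positive operators one has
\[
\cN_{\cL_M}(\lambda)=\tr\!\bigl(\cL_M\cL_{M,\lambda}^{-1}\bigr)\;\le\;\bigl\|\cL_{\infty,\lambda}\cL_{M,\lambda}^{-1}\bigr\|\cdot\tr\!\bigl(\cL_M\cL_{\infty,\lambda}^{-1}\bigr),
\]
so it suffices to bound each factor separately.

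For the first factor I decompose $\cL_M=\tfrac1M\sum_m \Xi_m$ into its iid rank-one summands coming from the activation and gradient parts of the NTK (the latter contributing $\tilde d$ terms per neuron, which accounts for the $\tilde d$ factor in the sample complexity). Setting $\xi_m=\cL_{\infty,\lambda}^{-1/2}(\Xi_m-\cL_\infty)\cL_{\infty,\lambda}^{-1/2}$, the bounds $\|\xi_m\|\le 2\kappa^2/\lambda$ and $\mathbb{E}[\xi_m^2]\preccurlyeq (\kappa^2/\lambda)\,\cL_{\infty,\lambda}^{-1/2}\cL_\infty\cL_{\infty,\lambda}^{-1/2}$, whose trace is at most $\kappa^2\cN_{\cL_\infty}(\lambda)/\lambda$, let me apply the operator Bernstein inequality of Proposition~\ref{OPbound0}. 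Under the assumption $M\ge 8\tilde d\kappa^2 C_{\lambda,\delta,\kappa}/\lambda$, this yields
\[
\bigl\|\cL_{\infty,\lambda}^{-1/2}(\cL_M-\cL_\infty)\cL_{\infty,\lambda}^{-1/2}\bigr\|\;\le\;\tfrac12
\]
with probability at least $1-\delta/2$; Proposition~\ref{OPboundnonconc} then gives $\|\cL_{\infty,\lambda}\cL_{M,\lambda}^{-1}\|\le 2$.

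For the second factor, writing
\[
\tr\!\bigl(\cL_M\cL_{\infty,\lambda}^{-1}\bigr)=\frac1M\sum_{m=1}^M Y_m,\qquad Y_m:=\tr\!\bigl(\Xi_m\cL_{\infty,\lambda}^{-1}\bigr),
\]
gives an iid empirical mean with $\mathbb{E}[Y_m]=\cN_{\cL_\infty}(\lambda)$, $|Y_m|\le\kappa^2/\lambda$, and $\mathbb{E}[Y_m^2]\le(\kappa^2/\lambda)\cN_{\cL_\infty}(\lambda)$. The scalar Bernstein inequality of Proposition~\ref{concentrationineq0} gives, with probability at least $1-\delta/2$,
\[
\tr\!\bigl(\cL_M\cL_{\infty,\lambda}^{-1}\bigr)\;\le\;\cN_{\cL_\infty}(\lambda)+\tfrac{2\kappa^2\log(4/\delta)}{\lambda M}+\sqrt{\tfrac{2\kappa^2\cN_{\cL_\infty}(\lambda)\log(4/\delta)}{\lambda M}}\;\le\;2\bigl(1+2\log(2/\delta)\bigr)\cN_{\cL_\infty}(\lambda),
\]
where the last step uses the lower bound on $M$ together with an AM--GM manipulation to absorb the square-root term. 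A union bound over the two good events and multiplication of the two factor bounds yields the stated constant $4(1+2\log(2/\delta))$.

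The principal obstacle is the first step: matching the precise logarithmic constant $C_{\lambda,\delta,\kappa}=\log(80\kappa^4/(\|\cL_\infty\|\lambda\delta))$ requires careful estimation of both $\|\cV\|$ and $\tr(\cV)$ for the variance proxy $\cV$ governing the preconditioned summands $\xi_m$, so that the $\log(4\tr\cV/(\|\cV\|\delta))$ factor inside Proposition~\ref{OPbound0} reproduces exactly that constant. The factor $\tilde d$ enters because the NTK integral operator is effectively an average over $M(1+\tilde d)$ iid rank-one contributions, inflating the variance proxy accordingly. Once this bookkeeping is carried out, the scalar concentration step is routine.
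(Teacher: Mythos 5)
The paper gives no proof of this proposition — it is imported verbatim from \cite{nguyen2023random} (Proposition A.18) — so there is nothing internal to compare against; your reconstruction is the standard random-features argument (the factorization $\cN_{\cL_M}(\lambda)\le\bigl\|\cL_{\infty,\lambda}^{1/2}\cL_{M,\lambda}^{-1}\cL_{\infty,\lambda}^{1/2}\bigr\|\cdot\tr\bigl(\cL_M\cL_{\infty,\lambda}^{-1}\bigr)$, operator Bernstein together with Proposition \ref{OPboundnonconc} giving the bound $2$ for the first factor, and scalar Bernstein giving $2(1+2\log(2/\delta))\cN_{\cL_\infty}(\lambda)$ for the second), and it correctly reproduces the constant $4\left(1+2\log\frac{2}{\delta}\right)$. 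The only cosmetic point is that the operator-norm factor must be taken in the symmetrized form above (or read as a spectral radius) so that the trace inequality $\tr(XY)\le\|X\|\tr(Y)$ for positive $Y$ applies.
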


\vspace{0.3cm}

\begin{proposition}
\label{OPboundevents0}
For any $\lambda\in(0,1]$, any $M\geq8\kappa^4\|\mathcal{L}_\infty\|^{-1}\log^2 \frac{2}{\delta}$ and $\nU,\,\nX\geq \frac{72\kappa^4C_{\lambda,\delta,\kappa}(1+\|\mathcal{L}_\infty\|)}{\lambda}$ with $C_{\lambda,\delta,\kappa}:=\log \frac{80\kappa^4}{\|\mathcal{L}_{\infty}\| \lambda\delta}$, define the following events,

\begin{align}
E_1=\left\{\left\|\mathcal{C}_{M,\lambda}^{-\frac{1}{2}}\left(\widehat{\mathcal{C}}_{M}-\mathcal{C}_{M}\right) \mathcal{C}_{M,\lambda}^{-\frac{1}{2}}\right\|\leq   3/4  \right\},  \\[7pt]
E_2=\left\{\left\|\widehat{\mathcal{C}}_{M,\lambda}^{-\frac{1}{2}}\mathcal{C}_{M, \lambda}^{\frac{1}{2}}\right\| \leq 2, \quad  \left\|\widehat{\mathcal{C}}_{M,\lambda}^{\frac{1}{2}}\mathcal{C}_{M, \lambda}^{-\frac{1}{2}}\right\| \leq 2 \right\}.
\end{align}
Providing Assumption \ref{ass:input}  we have that both events holds true with probability at least $1-3\delta$.
\end{proposition}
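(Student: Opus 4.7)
The plan is to reduce the statement to a single operator Bernstein-type concentration bound on
$\mathcal{C}_{M,\lambda}^{-1/2}\bigl(\widehat{\mathcal{C}}_M-\mathcal{C}_M\bigr)\mathcal{C}_{M,\lambda}^{-1/2}$ and then to deduce $E_{2}$ from $E_{1}$ via the operator inequality of Proposition \ref{OPboundnonconc}. Concretely, applying Proposition \ref{OPboundnonconc} with $A=\widehat{\mathcal{C}}_M$ and $B=\mathcal{C}_M$ yields, on $E_{1}$,
$\|\widehat{\mathcal{C}}_{M,\lambda}^{-1/2}\mathcal{C}_{M,\lambda}^{1/2}\|\le(1-3/4)^{-1/2}=2$ and $\|\widehat{\mathcal{C}}_{M,\lambda}^{1/2}\mathcal{C}_{M,\lambda}^{-1/2}\|\le(1+3/4)^{1/2}<2$. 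Hence $E_{1}\subseteq E_{2}$ deterministically, and it suffices to prove $E_{1}$ with probability at least $1-3\delta$.

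To establish $E_{1}$, I would apply the double-sum operator Bernstein inequality of Corollary \ref{OPboundcorro0} to the $\Theta$-valued self-adjoint operator
\[
 f(u,x) \;:=\; \mathcal{C}_{M,\lambda}^{-1/2}\,\nabla G_{\theta_{0}}(u)(x)\,\nabla G_{\theta_{0}}(u)(x)^{\top}\,\mathcal{C}_{M,\lambda}^{-1/2}\;,
\]
so that $\mathbb{E}[f(u,x)]=\mathcal{C}_{M,\lambda}^{-1/2}\mathcal{C}_M\mathcal{C}_{M,\lambda}^{-1/2}$ and $\frac{1}{\nU\nX}\sum_{i,j}f(u_i,x_j)=\mathcal{C}_{M,\lambda}^{-1/2}\widehat{\mathcal{C}}_M\mathcal{C}_{M,\lambda}^{-1/2}$. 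Using $\|\nabla G_{\theta_{0}}(u)(x)\|_{\Theta}^{2}\le\kappa^{2}$ (the same estimate that underlies the bound on $K_M$ stated after Definition \ref{def-vvk}), I get the deterministic bound $\|f(u,x)\|\le\kappa^{2}/\lambda$, giving $B=2\kappa^{2}/\lambda$. Since $f$ is rank-one nonnegative, $f^{2}\preceq(\kappa^{2}/\lambda)f$, so $\mathbb{E}[f^{2}]\preceq\mathcal{V}:=(\kappa^{2}/\lambda)\mathcal{C}_{M,\lambda}^{-1/2}\mathcal{C}_M\mathcal{C}_{M,\lambda}^{-1/2}$, with $\|\mathcal{V}\|\le\kappa^{2}/\lambda$ and $\operatorname{tr}\mathcal{V}=(\kappa^{2}/\lambda)\mathcal{N}_{\mathcal{L}_M}(\lambda)$, giving $\beta=\log(4\mathcal{N}_{\mathcal{L}_M}(\lambda)/\delta)$. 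The same type of reasoning (applied conditionally on the $u_i$'s, as in the proof of Corollary \ref{OPboundcorro0}) controls $\|\tilde{\mathcal{V}}\|$ and $\operatorname{tr}\tilde{\mathcal{V}}$ by $\kappa^{2}/\lambda$ uniformly, so $\tilde\beta$ is logarithmic in $\kappa^{2}/(\lambda\delta)$.

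Combining these, Corollary \ref{OPboundcorro0} gives, with probability at least $1-2\delta$, a bound of the form
\[
 \bigl\|\mathcal{C}_{M,\lambda}^{-1/2}(\widehat{\mathcal{C}}_M-\mathcal{C}_M)\mathcal{C}_{M,\lambda}^{-1/2}\bigr\|
 \;\lesssim\; \frac{\kappa^{2}\beta}{\lambda}\!\left(\frac{1}{\nU}+\frac{1}{\nX}\right)
 \;+\; \sqrt{\frac{\kappa^{2}\beta}{\lambda\nU}}
 \;+\; \frac{\kappa^{2}}{\lambda}\sqrt{\frac{\tilde\beta}{\nX}}\;.
\]
To turn the $\mathcal{N}_{\mathcal{L}_M}(\lambda)$ inside $\beta$ into the data-independent quantity $\mathcal{N}_{\mathcal{L}_\infty}(\lambda)$ I would invoke Proposition \ref{Opbound3}, which costs another event of probability $\delta$, and to absorb logarithms of $\|\mathcal{L}_M\|$ into logarithms of $\|\mathcal{L}_\infty\|$ I would use Proposition \ref{CMbound} under the assumption $M\ge 8\kappa^{4}\|\mathcal{L}_\infty\|^{-1}\log^{2}(2/\delta)$. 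A union bound over the two Bernstein events and the effective-dimension event gives the $1-3\delta$ probability claimed. Finally, the hypothesis $\nU,\nX\ge 72\kappa^{4}C_{\lambda,\delta,\kappa}(1+\|\mathcal{L}_\infty\|)/\lambda$ is tailored so that each of the three summands above is at most $1/4$, yielding the threshold $3/4$.

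\textbf{Main obstacle.} The delicate part is the $\nX$-side of the double Bernstein bound: because $f(u,x)$ is not independent across $x$ after one conditions on $u$ (the $u_i$'s enter through $\nabla G_{\theta_{0}}(u_i)(\,\cdot\,)$), the natural variance proxy $\tilde{\mathcal{V}}=\tfrac{1}{\nU}\sum_i \mathbb{E}_{x}f(u_i,x)$ is random and must be controlled uniformly. The cleanest route is to use the rank-one estimate $f^{2}\preceq B f$ conditionally and then the uniform bound $\|\tilde{\mathcal{V}}\|\le\kappa^{2}/\lambda$, which gives the somewhat weaker $\kappa^{2}/\lambda$ scaling on the $\nX$ term rather than an effective-dimension scaling, but this is precisely why the assumption on $\nX$ matches the stronger assumption on $\nU$ up to constants. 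Everything else is bookkeeping of constants and union bounds.
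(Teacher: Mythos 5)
Your overall architecture coincides with the paper's: the same choice $f(u,x)=\mathcal{C}_{M,\lambda}^{-1/2}\nabla G_{\theta_0}(u)(x)\nabla G_{\theta_0}(u)(x)^{\top}\mathcal{C}_{M,\lambda}^{-1/2}$, the same bounds $B\sim\kappa^2/\lambda$ and $\mathbb{E}[f^2]\preccurlyeq(\kappa^2/\lambda)\mathcal{C}_M\mathcal{C}_{M,\lambda}^{-1}$, the same appeal to Corollary \ref{OPboundcorro0}, the same passage from $\mathcal{N}_{\mathcal{L}_M}$ to $\mathcal{N}_{\mathcal{L}_\infty}$ via Propositions \ref{Opbound3} and \ref{CMbound}, and the same deduction of $E_2$ from $E_1$ via Proposition \ref{OPboundnonconc} with $c=3/4$. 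That part is correct.

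However, there is a genuine gap in your treatment of the conditional variance proxy $\tilde{\mathcal{V}}=\frac{1}{\nU}\sum_i\mathbb{E}_x f(u_i,x)$, and it is exactly the step you flag as the "main obstacle." Your proposed "cleanest route," the uniform bound $\|\tilde{\mathcal{V}}\|\le\kappa^2/\lambda$, makes the $\nX$-variance term in Corollary \ref{OPboundcorro0} equal to $\sqrt{2B\|\tilde{\mathcal{V}}\|\tilde\beta/\nX}\sim\frac{\kappa^2}{\lambda}\sqrt{\tilde\beta/\nX}$, which under the stated hypothesis $\nX\gtrsim\kappa^4 C_{\lambda,\delta,\kappa}/\lambda$ evaluates to order $\lambda^{-1/2}$ and is therefore \emph{not} bounded by a constant as $\lambda\to 0$; to make it $\le 1/4$ your route would need the strictly stronger requirement $\nX\gtrsim\kappa^4/\lambda^2$. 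The paper avoids this by applying a second operator Bernstein step (Proposition \ref{OPbound0}) to the $u$-average itself, obtaining $\|\tilde{\mathcal{V}}-\mathcal{C}_M\mathcal{C}_{M,\lambda}^{-1}\|\le 1$ and hence $\|\tilde{\mathcal{V}}\|\le 2$, which restores the effective scaling $\sqrt{\kappa^2\tilde\beta/(\lambda\nX)}$ that the hypothesis on $\nX$ is calibrated for. Relatedly, your claim that "$\tilde\beta$ is logarithmic in $\kappa^2/(\lambda\delta)$" needs a \emph{lower} bound on $\|\tilde{\mathcal{V}}\|$, since $\tilde\beta=\log\left(4\operatorname{tr}\tilde{\mathcal{V}}/(\|\tilde{\mathcal{V}}\|\delta)\right)$ has $\|\tilde{\mathcal{V}}\|$ in the denominator; the paper derives $\|\tilde{\mathcal{V}}\|\ge\|\mathcal{L}_\infty\|/(20\kappa^2)$ from the same concentration step together with the reverse triangle inequality and Proposition \ref{CMbound}, and this is where the factor $\|\mathcal{L}_\infty\|$ in $C_{\lambda,\delta,\kappa}$ comes from. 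Without these two additional estimates your argument does not close under the proposition's stated sample-size assumptions.
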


\begin{proof}
$E_1)$ Set $f(u,x):= \mathcal{C}_{M,\lambda}^{-\frac{1}{2}} \nabla G_{\theta_{0}}(u)(x) \nabla G_{\theta_{0}}(u)(x)^{\top}\mathcal{C}_{M,\lambda}^{-\frac{1}{2}}$.  We have $\|f(u,x)\|\leq \kappa^2/\lambda:=B$ and

$$
\mathbb{E}_{\rho_x}[f(u,x)^2]\preccurlyeq  \frac{\kappa^2}{\lambda}\mathcal{C}_{M}\mathcal{C}_{M,\lambda}^{-1}:=\mathcal{V}.
$$

From Proposition \ref{OPboundcorro0} we have with probability at least $1-\delta$,
\begin{align}\label{skyr}
\left\|\mathcal{C}_{M,\lambda}^{-\frac{1}{2}}\left(\widehat{\mathcal{C}}_{M}-\mathcal{C}_{M}\right) \mathcal{C}_{M,\lambda}^{-\frac{1}{2}}\right\|\leq \frac{2 \kappa^2 \tilde{\beta}}{3 \lambda\nX}+\frac{2 \kappa^2 \beta}{3\lambda \nU}+\sqrt{\frac{2\kappa^2\|\tilde{\mathcal{V}}\| \tilde{\beta}}{\lambda\nX}}+\sqrt{\frac{2\kappa^2 \beta}{\lambda\nU}}\,,
\end{align}

with $\beta:=\log \frac{4 \operatorname{tr} \mathcal{V}}{\|\mathcal{V}\| \delta},\,\,\tilde{\beta}:=\log \frac{4 \operatorname{tr} \tilde{\mathcal{V}}}{\|\tilde{\mathcal{V}}\| \delta}$ and $\tilde{\mathcal{V}}:=\frac{1}{\nU}\sum_{i=1}^{\nU}\mathbb{E}_{\rho_x} f(u_i,x)$  and where we used  $\|\mathcal{V}\|\leq\kappa^2/\lambda$ .

For $\beta$ we have
\begin{align}\label{skyr2}
\nonumber\beta&=\log \frac{4 \operatorname{tr} \mathcal{V}}{\|\mathcal{V}\| \delta} =\log \frac{4 \operatorname{Tr} \mathcal{C}_{M} \mathcal{C}_{M, \lambda}^{-1}}{\|\mathcal{C}_{M}\mathcal{C}_{M,\lambda}^{-1}\| \delta}\\
&= \log \frac{4 \operatorname{Tr} \mathcal{C}_{M} \mathcal{C}_{M, \lambda}^{-1}(\|\mathcal{C}_{M}\| +\lambda)}{\|\mathcal{C}_{M}\| \delta}\nonumber\\
&\leq \log \frac{4 \frac{\kappa^2}{\lambda}\|\mathcal{C}_{M}\| +4Tr(\mathcal{C}_M)}{\|\mathcal{C}_{M}\| \delta}\nonumber\\
&\leq \log \frac{8 \kappa^2}{\lambda\|\mathcal{C}_{M}\| \delta}\\
&\leq \log \frac{16 \kappa^4}{\lambda\|\mathcal{L}_\infty\| \delta}:=C_{\lambda,\delta},
\end{align}

where we used for the last equation, that we have from Proposition \ref{CMbound},
\begin{align}\label{skyr6}
\|\mathcal{C}_M\|=\|\mathcal{L}_M\|\geq\frac{1}{2}\|\mathcal{L}_\infty\|.
\end{align}

Now we want to further bound $\tilde{\mathcal{V}}$ and $\tilde{\beta}$. 

First we set $\xi_i=\mathbb{E}_{\rho_x} f(u_i,x)$. Note that $\|\xi\| \leq \kappa^2 /\lambda$ and 
$$\mathbb{E}\xi^2 \preccurlyeq \mathbb{E}f(u,x)^2 \preccurlyeq \mathcal{V} .$$ 

From Proposition \ref{OPbound0} and \eqref{skyr2} we therefore have with probability at least $1-\delta$,

$$
\left\|\tilde{\mathcal{V}}-\mathcal{C}_{M} \mathcal{C}_{M, \lambda}^{-1}\right\| 
\leq \frac{2 \kappa^2 C_{\lambda,\delta}}{3 \lambda\nU}+\kappa\sqrt{\frac{2C_{\lambda,\delta}}{\lambda\nU}}.
$$

Now let $\nU\geq \frac{8 C_{\lambda,\delta}\kappa^2}{\lambda}$, then we have $\left\|\tilde{\mathcal{V}}-\mathcal{C}_{M} \mathcal{C}_{M, \lambda}^{-1}\right\| 
\leq  1$ and therefore

\begin{align}\label{skyr4}
\left\|\tilde{\mathcal{V}}\right\|\leq\left\|\tilde{\mathcal{V}}-\mathcal{C}_{M} \mathcal{C}_{M, \lambda}^{-1}\right\|+\|\mathcal{C}_{M} \mathcal{C}_{M, \lambda}^{-1}\|\leq2. 
\end{align}

If we assume $\nU\geq \frac{72\kappa^4C_{\lambda,\delta}\|\mathcal{L}_\infty\|}{\lambda},$ we instead have
\begin{align}
\left\|\tilde{\mathcal{V}}-\mathcal{C}_{M} \mathcal{C}_{M, \lambda}^{-1}\right\| 
\leq \frac{\|\mathcal{L}_\infty\|}{5\kappa^2}.\nonumber
\end{align}
This implies together with the reversed triangle inequality, 
\begin{align}\label{skyr7}
\nonumber\|\tilde{\mathcal{V}}\|&\geq \|\mathcal{C}_{M} \mathcal{C}_{M, \lambda}^{-1} \|- \left\|\tilde{\mathcal{V}}-\mathcal{C}_{M} \mathcal{C}_{M, \lambda}^{-1}\right\|\nonumber \\
&\geq \|\mathcal{C}_{M} \mathcal{C}_{M, \lambda}^{-1} \|-  \frac{\|\mathcal{L}_\infty\|}{5\kappa^2}= \frac{\|\mathcal{C}_M\|}{\|\mathcal{C}_M\|+\lambda}-  \frac{\|\mathcal{L}_\infty\|}{5\kappa^2\nonumber}\\
&\geq \frac{\|\mathcal{L}_\infty\|}{4\kappa^2}-  \frac{\|\mathcal{L}_\infty\|}{5\kappa^2}\geq \frac{\|\mathcal{L}_\infty\|}{20\kappa^2},
\end{align}
where we used \eqref{skyr6}  and $\lambda\leq1\leq \kappa^2$ in the last step.

Plugging \eqref{skyr7} into $\tilde{\beta}$ gives,
\begin{align}\label{skyr8}
\tilde{\beta}=\log \frac{4 \operatorname{tr} \tilde{\mathcal{V}}}{\|\tilde{\mathcal{V}}\| \delta}\leq\log \frac{80\kappa^4}{\|\mathcal{L}_{\infty}\| \lambda\delta}:=C_{\lambda,\delta,\kappa}.
\end{align}
To sum up we have $\left\|\tilde{\mathcal{V}}\right\|\leq2$ and $\,\,\tilde{\beta},\,\beta\leq C_{\lambda,\delta,\kappa}$. Plugging these bounds into \eqref{skyr} leads to
\begin{align}
\left\|\mathcal{C}_{M,\lambda}^{-\frac{1}{2}}\left(\widehat{\mathcal{C}}_{M}-\mathcal{C}_{M}\right) \mathcal{C}_{M,\lambda}^{-\frac{1}{2}}\right\|\leq \frac{2 \kappa^2  C_{\lambda,\delta,\kappa}}{3 \lambda\nX}+\frac{2 \kappa^2  C_{\lambda,\delta,\kappa}}{3\lambda \nU}+\sqrt{\frac{4\kappa^2 C_{\lambda,\delta,\kappa}}{\lambda\nX}}+\sqrt{\frac{2\kappa^2  C_{\lambda,\delta,\kappa}}{\lambda\nU}}\,.
\end{align}
using the assumption $\nU,\,\nX\geq \frac{72\kappa^4C_{\lambda,\delta,\kappa}(1+\|\mathcal{L}_\infty\|)}{\lambda}$, proves the result.

$E_2)$ The result now follows from Proposition \ref{OPboundnonconc} and the event $E_1$.

\end{proof}

\vspace{0.3cm}

\begin{proposition}
\label{OPboundevents2}
For any $\lambda>0$ define the following events,

\begin{align}
&E_1=\left\{\left\|\mathcal{C}_{M,\lambda}^{-\frac{1}{2}}\left(\widehat{\mathcal{C}}_{M}-\mathcal{C}_{M}\right)\right\|_{HS}\leq  2 \left(\frac{\kappa^2}{\sqrt{\lambda}}\left(\frac{1}{\nU}+\frac{1}{\nX}\right)+\sqrt{25\kappa^2\cN_{\mathcal{L}_{\infty}}(\lam)}\left(\frac{1}{\sqrt{\nU}}+\frac{1}{\sqrt{\nX}}\right)\right) \log ^{3/2}\left(\frac{12}{\delta} \right)\;  \right\},\\
&E_2=\left\{\left\|\widehat{\mathcal{C}}_{M}-\mathcal{C}_{M}\right\|_{HS} \leq  
\left(2\kappa^2\left(\frac{1}{\nU}+\frac{1}{\nX}\right)+4\kappa^2\left(\frac{1}{\sqrt{\nU}}+\frac{1}{\sqrt{\nX}}\right)\right) \log ^{3/2}\left(\frac{12}{\delta} \right)   \right\}\,.
\end{align}
Providing Assumption \ref{ass:input}  we have for any $\delta \in(0,1)$ that each of the above events holds true with probability at least $1-\delta$  .
\end{proposition}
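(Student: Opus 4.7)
The plan is to realize both HS norms as deviations of a double empirical mean of Hilbert-Schmidt-operator-valued random variables, and to apply the double-sum Bernstein inequality of Proposition \ref{OPboundcorro} in the Hilbert space $\mathrm{HS}(\Theta)$. For event $E_1$, I would define the auxiliary map
\[
g_\lambda(u,x) \;:=\; \mathcal{C}_{M,\lambda}^{-1/2}\,\nabla G_{\theta_0}(u)(x)\,\nabla G_{\theta_0}(u)(x)^{\top},
\]
viewed as an element of $\mathrm{HS}(\Theta)$. By construction $\frac{1}{\nU\nX}\sum_{i,j}g_\lambda(u_i,x_j)=\mathcal{C}_{M,\lambda}^{-1/2}\widehat{\mathcal{C}}_M$ and $\mathbb{E}[g_\lambda(u,x)]=\mathcal{C}_{M,\lambda}^{-1/2}\mathcal{C}_M$, so the difference of interest is exactly the centred double mean.

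Next I would control the two Bernstein ingredients. Using $\|\nabla G_{\theta_0}(u)(x)\|_2\le\kappa$ (Assumption \ref{ass:input} combined with Assumption \ref{ass:neurons}), the rank-one factorization gives the almost sure bound $\|g_\lambda(u,x)\|_{HS}\le\kappa^2/\sqrt{\lambda}=:B$. For the second moment, writing $v=\nabla G_{\theta_0}(u)(x)$, one has $\|g_\lambda\|_{HS}^2=\|v\|_2^2\,v^\top\mathcal{C}_{M,\lambda}^{-1}v\le\kappa^2\,v^\top\mathcal{C}_{M,\lambda}^{-1}v$, so taking expectation yields $\mathbb{E}\|g_\lambda\|_{HS}^2\le\kappa^2\,\operatorname{Tr}(\mathcal{C}_{M,\lambda}^{-1}\mathcal{C}_M)=\kappa^2\,\mathcal{N}_{\mathcal{L}_M}(\lambda)=:V^2$ (here I use that $\mathcal{C}_M=\mathcal{Z}_M^*\mathcal{Z}_M$ and $\mathcal{L}_M=\mathcal{Z}_M\mathcal{Z}_M^*$ share nonzero eigenvalues). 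Plugging these $B$ and $V$ into Proposition \ref{OPboundcorro} with $m=\nU$, $n=\nX$ produces a bound of the shape $B(1/\nU+1/\nX)+V(1/\sqrt{\nU}+1/\sqrt{\nX})$ up to $\log$-factors, which matches the $E_1$-statement once $\mathcal{N}_{\mathcal{L}_M}(\lambda)$ is replaced by $\mathcal{N}_{\mathcal{L}_\infty}(\lambda)$ via Proposition \ref{Opbound3}; the constant $25$ in the statement absorbs the $4(1+2\log(2/\delta))$ conversion factor of that proposition, and the extra half-power of $\log$ in $\log^{3/2}(12/\delta)$ comes from the $\log$ inside $Z$ in Proposition \ref{OPboundcorro}.

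Event $E_2$ is easier and follows the same template without the preconditioner. Setting $g(u,x):=\nabla G_{\theta_0}(u)(x)\nabla G_{\theta_0}(u)(x)^{\top}$, the trivial bounds $\|g(u,x)\|_{HS}\le\kappa^2=B$ and $\mathbb{E}\|g\|_{HS}^2\le\kappa^4=V^2$ feed directly into Proposition \ref{OPboundcorro} and reproduce the $E_2$-bound with no need for spectral comparison. The only subtlety is keeping the $\log^{3/2}$ factor consistent, which again comes from combining the outer $\log$ in Bernstein with the $\log$ inside $Z$.

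The main obstacle I anticipate is the passage from $\mathcal{N}_{\mathcal{L}_M}(\lambda)$ to $\mathcal{N}_{\mathcal{L}_\infty}(\lambda)$ in the $E_1$ bound, because Proposition \ref{Opbound3} carries its own lower bound on $M$ and a $\log(1/\delta)$-dependent multiplicative factor that must be tracked through the square root and folded into the numeric constant $25$. A secondary bookkeeping issue is that Proposition \ref{OPboundcorro} gives a sum of four error terms with a $\sqrt{Z/n}$ piece in which $Z$ itself is an affine function of $V^2$ and $B^2\log(1/\delta)/m$; simplifying this cleanly into the two-term form stated in $E_1$ and $E_2$ requires using $B/\sqrt{m}\le V$-type majorizations (valid up to constants by the definitions of $B$ and $V$) and absorbing lower-order contributions into the leading ones. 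All other steps are routine applications of results already proved in the appendix.
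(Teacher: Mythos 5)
Your proposal is correct and follows essentially the same route as the paper: the paper also sets $f(u,x)=\mathcal{C}_{M,\lambda}^{-1/2}\nabla G_{\theta_0}(u)(x)\nabla G_{\theta_0}(u)(x)^{\top}$ (resp.\ the unpreconditioned version for $E_2$), uses $B=\kappa^2/\sqrt{\lambda}$ and $V^2=\kappa^2\mathcal{N}_{\mathcal{L}_M}(\lambda)$, converts to $\mathcal{N}_{\mathcal{L}_\infty}(\lambda)$ via Proposition \ref{Opbound3}, and absorbs the $Z$-term of Proposition \ref{OPboundcorro} into the constant $25$ under a mild lower bound on $\nU$. The bookkeeping obstacles you flag (the hidden $M$ and $\nU$ conditions and the $\log$-factor tracking) are exactly the ones the paper's own proof handles.
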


\begin{proof}

\textbf{$E_1)$} Set $f(u_i,x_j):= \mathcal{C}_{M,\lambda}^{-\frac{1}{2}} \nabla G_{\theta_{0}}(u_i)(x_j) \nabla G_{\theta_{0}}(u_i)(x_j)^{\top}$. Note that 
\begin{align*}
\|f(u_i,x_j)\|_{HS}&\leq \kappa^2/\sqrt{\lambda}:=B
\end{align*}

For the second moment we have, 

\begin{align*}
\mathbb{E}\left\|f(u,x)\right\|_{HS}^2&\leq\kappa^2 Tr(\mathcal{C}_{M,\lambda}^{-1}\mathbb{E}\left[ \nabla G_{\theta_{0}}(u)(x) \nabla G_{\theta_{0}}(u)(x)^{\top}\right])\\[5pt]
&= \kappa^2 Tr(\mathcal{C}_{M}\mathcal{C}_{M,\lambda}^{-1})=\kappa^2 \mathcal{N}_M(\lambda),
\end{align*}
where we used for the last equation, that  $\Sigma_M=Z_M Z_M^*$ and $\mathcal{C}_{M}=\mathcal{Z}_M^* \mathcal{Z}_M$, so therefore

\begin{align}\label{skyr3}
\mathcal{N}_M(\lambda)=\operatorname{Tr} \Sigma_M \Sigma_{M, \lambda}^{-1}=\operatorname{Tr} \mathcal{Z}_M^* \Sigma_{M, \lambda}^{-1} \mathcal{Z}_M=\operatorname{Tr} \mathcal{C}_{M} \mathcal{C}_{M, \lambda}^{-1}.
\end{align}

Further we have by Proposition \ref{Opbound3}:
\begin{align}\label{müslii3}
\mathcal{N}_{\mathcal{L}_{M}}(\lambda)\leq 
 \left(1+2\log\frac{4}{\delta}\right)4\mathcal{N}_{\mathcal{L}_{\infty}}(\lambda) 
 \leq 12 \log(4/\delta)\;\cN_{\mathcal{L}_{\infty}}(\lam) \;. 
\end{align}

From Proposition \ref{OPboundcorro} we therefore have for $V^2:= 12 \kappa^2\log(4/\delta)\;\cN_{\mathcal{L}_{\infty}}(\lam)$
$$\left\|\mathcal{C}_{M,\lambda}^{-\frac{1}{2}}\left(\widehat{\mathcal{C}}_{M}-\mathcal{C}_{M}\right)\right\|_{HS}\leq2 \left(\frac{B}{\nU}+\frac{V}{\sqrt{\nU}} + \frac{B}{\nX}+\sqrt{\frac{Z}{\nX}}\right) \log \left(\frac{12}{\delta} \right),$$
with $Z:=\left(\frac{B^2}{\nU}+\frac{BV}{\sqrt{\nU}}\right) \log \left(\frac{12}{\delta} \right)+ V^2$.

Note that if we let $\nU\geq \frac{\kappa^2}{\lambda}\log^2\left(12/\delta\right)$ we further obtain 
$Z\leq 25 \kappa^2 \log(4/\delta)\;\cN_{\mathcal{L}_{\infty}}(\lam)$, where we used that $ \log(4/\delta)\;\cN_{\mathcal{L}_{\infty}}(\lam)\geq1$. Therefore we have
\begin{align*}
&\left\|\mathcal{C}_{M,\lambda}^{-\frac{1}{2}}\left(\widehat{\mathcal{C}}_{M}-\mathcal{C}_{M}\right)\right\|_{HS}\\
 &\leq 
2 \left(\frac{\kappa^2}{\sqrt{\lambda}}\left(\frac{1}{\nU}+\frac{1}{\nX}\right)+\sqrt{25\kappa^2\cN_{\mathcal{L}_{\infty}}(\lam)}\left(\frac{1}{\sqrt{\nU}}+\frac{1}{\sqrt{\nX}}\right)\right) \log ^{3/2}\left(\frac{12}{\delta} \right)\;  .
\end{align*}

\textbf{$E_2)$}  Set $f(u_i,x_j):=  \nabla G_{\theta_{0}}(u_i)(x_j) \nabla G_{\theta_{0}}(u_i)(x_j)^{\top}$. Note that 
\begin{align*}
\|f(u_i,x_j)\|_{HS}&\leq \kappa^2=:B
\end{align*}

For the second moment we have, 

\begin{align*}
\mathbb{E}\left\|f(u,x)\right\|_{HS}^2&\leq\kappa^4=:V^2
\end{align*}
From Proposition \ref{OPboundcorro} we therefore have 
\begin{align*}
\left\|\widehat{\mathcal{C}}_{M}-\mathcal{C}_{M}\right\|_{HS}&
\leq2 \left(\frac{\kappa^2}{\nU}+\frac{\kappa^2}{\sqrt{\nU}} + \frac{\kappa^2}{\nX}+\sqrt{\frac{Z}{\nX}}\right) \log \left(\frac{12}{\delta} \right)\\
&\leq \left(2\kappa^2\left(\frac{1}{\nU}+\frac{1}{\nX}\right)+4\kappa^2\left(\frac{1}{\sqrt{\nU}}+\frac{1}{\sqrt{\nX}}\right)\right) \log ^{3/2}\left(\frac{12}{\delta} \right),
\end{align*}
with $Z:=\left(\frac{\kappa^4}{\nU}+\frac{\kappa^4}{\sqrt{\nU}}\right) \log \left(\frac{12}{\delta} \right)+ \kappa^4$.

\end{proof}


\vspace{0.3cm}

\begin{proposition}
\label{Opbound5}
For any $\lambda\in(0,1]$, any $M\geq8\kappa^4\|\mathcal{L}_\infty\|^{-1}\log^2 \frac{2}{\delta}$ and $\nU,\,\nX\geq \frac{72\kappa^4C_{\lambda,\delta,\kappa}(1+\|\mathcal{L}_\infty\|)}{\lambda}$ with $C_{\lambda,\delta,\kappa}:=\log \frac{80\kappa^4}{\|\mathcal{L}_{\infty}\| \lambda\delta}$, we have that with probability at least $1-4\delta$,
\begin{align}
\left\|\widehat{\mathcal{C}}_{M,\lambda}^{-1} \mathcal{C}_{M, \lambda}^{1}\right\|\leq  2 \left(\frac{\kappa^2}{\lambda}\left(\frac{1}{\nU}+\frac{1}{\nX}\right)+5\kappa\sqrt{\cN_{\mathcal{L}_{\infty}}(\lam)}\left(\frac{1}{\sqrt{\lambda\nU}}+\frac{1}{\sqrt{\lambda\nX}}\right)\right) \log ^{3/2}\left(\frac{12}{\delta} \right).
\end{align}
If in addition $ \nU,\,\nX\geq \frac{400\kappa^2\cN_{\mathcal{L}_{\infty}}(\lam)}{\lambda}\log ^{3}\left(\frac{12}{\delta}\right)$

\begin{align*}
\left\|\widehat{\mathcal{C}}_{M, \lambda}^{-1}\mathcal{C}_{M, \lambda}\right\| \leq 2.
\end{align*}
\end{proposition}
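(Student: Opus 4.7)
\textbf{Plan of proof for Proposition \ref{Opbound5}.}

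The natural starting point is the algebraic identity
\[
\widehat{\mathcal{C}}_{M,\lambda}^{-1}\mathcal{C}_{M,\lambda} \;=\; I + \widehat{\mathcal{C}}_{M,\lambda}^{-1}\bigl(\mathcal{C}_{M,\lambda} - \widehat{\mathcal{C}}_{M,\lambda}\bigr) \;=\; I + \widehat{\mathcal{C}}_{M,\lambda}^{-1}\bigl(\mathcal{C}_{M} - \widehat{\mathcal{C}}_{M}\bigr)\;,
\]
since the two $\lambda I$ terms cancel. Thus by the triangle inequality it suffices to control $\bigl\|\widehat{\mathcal{C}}_{M,\lambda}^{-1}\bigl(\mathcal{C}_{M} - \widehat{\mathcal{C}}_{M}\bigr)\bigr\|$, and both assertions of the proposition will follow — the first by producing exactly the stated bound on this quantity (which is, I believe, what the displayed inequality in the statement is really bounding), the second by checking that the stronger sample-complexity hypothesis forces this bound to be at most $1$, so that $\|\widehat{\mathcal{C}}_{M,\lambda}^{-1}\mathcal{C}_{M,\lambda}\| \leq 1 + 1 = 2$.

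The key step is a three-factor decomposition through $\mathcal{C}_{M,\lambda}^{1/2}$:
\[
\widehat{\mathcal{C}}_{M,\lambda}^{-1}\bigl(\mathcal{C}_{M} - \widehat{\mathcal{C}}_{M}\bigr)
\;=\; \widehat{\mathcal{C}}_{M,\lambda}^{-1/2}\cdot\bigl(\widehat{\mathcal{C}}_{M,\lambda}^{-1/2}\mathcal{C}_{M,\lambda}^{1/2}\bigr)\cdot\mathcal{C}_{M,\lambda}^{-1/2}\bigl(\mathcal{C}_{M} - \widehat{\mathcal{C}}_{M}\bigr)\;.
\]
I bound each factor in turn. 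The first is trivial: $\|\widehat{\mathcal{C}}_{M,\lambda}^{-1/2}\|\leq 1/\sqrt{\lambda}$ from the spectral theorem, since $\widehat{\mathcal{C}}_M$ is positive. The second is exactly event $E_2$ of Proposition \ref{OPboundevents0}, which yields $\|\widehat{\mathcal{C}}_{M,\lambda}^{-1/2}\mathcal{C}_{M,\lambda}^{1/2}\|\leq 2$ with probability at least $1-3\delta$, provided the assumptions on $M, \nU, \nX$ are in force (they are, by hypothesis of the present proposition). The third is event $E_1$ of Proposition \ref{OPboundevents2}, which with probability at least $1-\delta$ bounds $\|\mathcal{C}_{M,\lambda}^{-1/2}(\widehat{\mathcal{C}}_M - \mathcal{C}_M)\|_{HS}$ by
\[
2\left(\frac{\kappa^2}{\sqrt{\lambda}}\left(\frac{1}{\nU}+\frac{1}{\nX}\right)+\sqrt{25\kappa^2\cN_{\mathcal{L}_\infty}(\lambda)}\left(\frac{1}{\sqrt{\nU}}+\frac{1}{\sqrt{\nX}}\right)\right)\log^{3/2}(12/\delta)\;.
\]

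Taking a union bound over the two events gives probability at least $1-4\delta$, and multiplying the three factor bounds and collecting the $1/\sqrt{\lambda}$ into the parentheses produces the displayed inequality. For the second claim, I plug in the additional assumption $\nU,\nX \geq 400\kappa^2 \cN_{\mathcal{L}_\infty}(\lambda)\lambda^{-1}\log^{3}(12/\delta)$: a short calculation shows that $5\kappa\sqrt{\cN_{\mathcal{L}_\infty}(\lambda)/(\lambda\nU)}\,\log^{3/2}(12/\delta)\leq \tfrac{1}{4}$ (and similarly for $\nX$), while the $\kappa^2/(\lambda\nU)$ terms are of strictly smaller order because $\cN_{\mathcal{L}_\infty}(\lambda)\geq 1$; summing the four resulting contributions and multiplying by the leading $2$ gives a total at most $1$, whence $\|\widehat{\mathcal{C}}_{M,\lambda}^{-1}\mathcal{C}_{M,\lambda}\|\leq 2$ via the opening identity.

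The main obstacle here is not conceptual — all the hard concentration analysis is already packaged into Propositions \ref{OPboundevents0} and \ref{OPboundevents2}. What requires a bit of care is constant-tracking in the factorization, in particular verifying that the $1/\sqrt{\lambda}$ pulled out of $\|\widehat{\mathcal{C}}_{M,\lambda}^{-1/2}\|$ combines with the $1/\sqrt{\lambda}$ already present in the first summand of the $E_1$ bound to give $1/\lambda$, while it upgrades the $1/\sqrt{\nU}+1/\sqrt{\nX}$ term to $1/\sqrt{\lambda\nU}+1/\sqrt{\lambda\nX}$, matching the target expression up to an absolute constant.
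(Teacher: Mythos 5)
Your proposal is correct and follows essentially the same route as the paper: the identity $\widehat{\mathcal{C}}_{M,\lambda}^{-1}\mathcal{C}_{M,\lambda}=I+\widehat{\mathcal{C}}_{M,\lambda}^{-1}(\mathcal{C}_M-\widehat{\mathcal{C}}_M)$, the same three-factor split through $\mathcal{C}_{M,\lambda}^{1/2}$ with $\|\widehat{\mathcal{C}}_{M,\lambda}^{-1/2}\|\leq\lambda^{-1/2}$, the same appeals to Propositions \ref{OPboundevents0} and \ref{OPboundevents2}, and the same union bound giving $1-4\delta$. You also correctly observe that the displayed inequality really bounds the perturbation term $\|\widehat{\mathcal{C}}_{M,\lambda}^{-1}(\mathcal{C}_M-\widehat{\mathcal{C}}_M)\|$ and that a "$+1$" belongs on the right-hand side (the paper carries it through its intermediate steps and silently drops it in the final display, reinstating it where the proposition is applied).
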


\begin{proof}
Using Proposition \ref{OPboundevents2} we obtain with probability at least $1-3\delta$,
\begin{align}
\nonumber&\left\|\widehat{\mathcal{C}}_{M,\lambda}^{-1} \mathcal{C}_{M, \lambda}\right\|\\
\nonumber&\leq\left\|\widehat{\mathcal{C}}_{M,\lambda}^{-1}\left(\widehat{\mathcal{C}}_{M}-\mathcal{C}_{M}\right)\right\|_{HS}+1\\
\nonumber&\leq\frac{1}{\sqrt{\lambda}} \left\|\widehat{\mathcal{C}}_{M,\lambda}^{-\frac{1}{2}} \mathcal{C}_{M, \lambda}^{\frac{1}{2}}\right\|\left\|\mathcal{C}_{M, \lambda}^{-\frac{1}{2}}\left(\widehat{\mathcal{C}}_{M}-\mathcal{C}_{M}\right)\right\|_{HS}+1\\
&\leq  \left\|\widehat{\mathcal{C}}_{M,\lambda}^{-\frac{1}{2}} \mathcal{C}_{M, \lambda}^{\frac{1}{2}}\right\| \left(\frac{\kappa^2}{\sqrt{\lambda}}\left(\frac{1}{\nU}+\frac{1}{\nX}\right)+5\sqrt{\kappa^2\cN_{\mathcal{L}_{\infty}}(\lam)}\left(\frac{1}{\sqrt{\lambda\nU}}+\frac{1}{\sqrt{\lambda\nX}}\right)\right) \log ^{3/2}\left(\frac{12}{\delta} \right)+1.\label{cond12}
\end{align}
From Proposition \ref{OPboundevents0} we have with probability at least $1-\delta$,
\begin{align}
\left\|\widehat{\mathcal{C}}_{M,\lambda}^{-\frac{1}{2}}\mathcal{C}_{M, \lambda}^{\frac{1}{2}}\right\| \leq 2\label{cond23}
\end{align}

and therefore 
\begin{align}
\left\|\widehat{\mathcal{C}}_{M,\lambda}^{-1} \mathcal{C}_{M, \lambda}^{1}\right\|\leq  2\left(\frac{\kappa^2}{\lambda}\left(\frac{1}{\nU}+\frac{1}{\nX}\right)+5\sqrt{\kappa^2\cN_{\mathcal{L}_{\infty}}(\lam)}\left(\frac{1}{\sqrt{\lambda\nU}}+\frac{1}{\sqrt{\lambda\nX}}\right)\right)\log ^{3/2}\left(\frac{12}{\delta} \right).
\end{align}
The above inequality therefore holds if we condition on the events (\eqref{cond12},  \eqref{cond23}). 
Using Proposition \ref{conditioning} now shows that the above inequality holds with probability at least $1-4\delta$ .

If in addition $\nU,\,\nX\geq \frac{400\kappa^2\cN_{\mathcal{L}_{\infty}}(\lam)}{\lambda}\log ^{3}\left(\frac{12}{\delta}\right)$ we further obtain

\begin{align*}
\left\|\widehat{\mathcal{C}}_{M, \lambda}^{-1}\mathcal{C}_{M, \lambda}\right\|\leq2.
\end{align*}

\end{proof}

\vspace{0.3cm}

\begin{proposition}
\label{prop:intermediate}
Let $M\geq \frac{8 \tilde{d}\kappa^2 C_{\lambda,\delta,\kappa}}{\lambda},\,\nU\geq \frac{\kappa^2}{\lambda}\log^2\left(12/\delta\right)$ with $C_{\lambda,\delta,\kappa}:=\log \frac{80\kappa^4}{\|\mathcal{L}_{\infty}\| \lambda\delta}$. 
We have with probability at least $1-\delta$ that the following event holds true:

\begin{align*}
&\left\|\mathcal{C}_{M, \lambda}^{-1 / 2}\left(\widehat{\mathcal{Z}}_M^* \mathbf{v}-\mathcal{Z}_M^* G^*\right)\right\|\\
 &\leq 
2 \left(\frac{\kappa}{\sqrt{\lambda}}\left(\frac{1}{\nU}+\frac{1}{\nX}\right)+5\sqrt{\cN_{\mathcal{L}_{\infty}}(\lam)}\left(\frac{1}{\sqrt{\nU}}+\frac{1}{\sqrt{\nX}}\right)\right) \log ^{3/2}\left(\frac{12}{\delta} \right)\;  .
\end{align*}

\end{proposition}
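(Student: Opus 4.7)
\textbf{Proof Proposal for Proposition \ref{prop:intermediate}.}

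The plan is to recognize the quantity of interest as an empirical-mean-minus-expectation of a Hilbert space valued two-sample U-statistic and then apply the decoupled Bernstein bound from Corollary \ref{OPboundcorro}. Set $z_i = (u_i, v_i)$ and define
\[
 f(z, x) \,:=\, \mathcal{C}_{M,\lambda}^{-1/2}\,\nabla G_{\theta_0}(u)(x)\, v(x) \;\in\; \Theta.
\]
By the tower property together with $G^*(u)(x)=\int v(x)\,\rho(dv\mid u)$,
\[
 \mathbb{E}_{\mu,\rho_x}[f(z,x)] \,=\, \mathcal{C}_{M,\lambda}^{-1/2}\,\mathcal{Z}_M^* G^*,
\qquad
 \frac{1}{\nU \nX}\sum_{i=1}^{\nU}\sum_{j=1}^{\nX} f(z_i,x_j) \,=\, \mathcal{C}_{M,\lambda}^{-1/2}\,\widehat{\mathcal{Z}}_M^*\mathbf{v},
\]
so the quantity to be controlled is exactly the two-sample empirical error of $f$.

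The second step is to compute the almost-sure bound $B$ and the variance proxy $V^2$ required by Corollary~\ref{OPboundcorro}. Using $|v(x)|\leq 1$ and $\|\nabla G_{\theta_0}(u)(x)\|_\Theta\leq \kappa$ under Assumption~\ref{ass:input}, we have $\|f(z,x)\|_\Theta \leq \kappa/\sqrt{\lambda}=:B$. For the variance,
\[
 \mathbb{E}\|f(z,x)\|_\Theta^2 \,\leq\, \mathbb{E}\bigl[\nabla G_{\theta_0}(u)(x)^\top \mathcal{C}_{M,\lambda}^{-1}\nabla G_{\theta_0}(u)(x)\bigr] \,=\, \mathrm{Tr}\bigl(\mathcal{C}_M\mathcal{C}_{M,\lambda}^{-1}\bigr) \,=\, \mathcal{N}_{\mathcal{L}_M}(\lambda),
\]
where the last equality is \eqref{skyr3}. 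Since $M\geq 8\tilde{d}\kappa^2 C_{\lambda,\delta,\kappa}/\lambda$, Proposition~\ref{Opbound3} gives $\mathcal{N}_{\mathcal{L}_M}(\lambda)\leq 12\log(4/\delta)\,\mathcal{N}_{\mathcal{L}_\infty}(\lambda)$ with probability at least $1-\delta/2$, and we may set $V^2:=12\kappa^2\log(4/\delta)\,\mathcal{N}_{\mathcal{L}_\infty}(\lambda)$ (the $\kappa^2$ factor is benign and only absorbs the implicit rescaling).

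Applying Corollary~\ref{OPboundcorro} with these choices of $B,V$ yields, with probability at least $1-\delta/2$,
\[
 \bigl\|\mathcal{C}_{M,\lambda}^{-1/2}(\widehat{\mathcal{Z}}_M^*\mathbf{v}-\mathcal{Z}_M^* G^*)\bigr\|_\Theta \,\leq\, 2\left(\frac{B}{\nU}+\frac{V}{\sqrt{\nU}}+\frac{B}{\nX}+\sqrt{\frac{Z}{\nX}}\right)\log(12/\delta),
\]
with $Z=(B^2/\nU+BV/\sqrt{\nU})\log(12/\delta)+V^2$. The assumption $\nU\geq (\kappa^2/\lambda)\log^2(12/\delta)$ gives $B^2/\nU\cdot \log(12/\delta)\leq 1\leq \log(4/\delta)\,\mathcal{N}_{\mathcal{L}_\infty}(\lambda)$, and $BV/\sqrt{\nU}\cdot\log(12/\delta)$ can likewise be absorbed into $V^2$; this is exactly the bookkeeping used in the proof of event $E_1$ in Proposition~\ref{OPboundevents2}, leading to $Z\leq 25\kappa^2\log(4/\delta)\,\mathcal{N}_{\mathcal{L}_\infty}(\lambda)$. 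Finally, combining $\sqrt{\log(4/\delta)}\cdot\log(12/\delta)\leq \log^{3/2}(12/\delta)$ and grouping the $\nU$- and $\nX$-terms symmetrically produces the claimed bound. A union bound over the two events (the effective-dimension control and the Bernstein bound) preserves probability at least $1-\delta$.

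The main obstacle is not conceptual but bookkeeping: carefully verifying that $(u_i,v_i)$ may be treated as a single first-stage sample (so that Corollary~\ref{OPboundcorro} applies cleanly despite $v_i$ depending on $u_i$), and then tracking the implicit log-factors to land on the precise coefficient structure $\kappa/\sqrt{\lambda}(1/\nU+1/\nX)$ versus $\sqrt{\mathcal{N}_{\mathcal{L}_\infty}(\lambda)}(1/\sqrt{\nU}+1/\sqrt{\nX})$ with the single outer factor $\log^{3/2}(12/\delta)$.
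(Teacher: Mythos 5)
Your proof is essentially the paper's argument: you define the same $f(z,x)=\mathcal{C}_{M,\lambda}^{-1/2}\nabla G_{\theta_0}(u)(x)v(x)$, compute the same almost-sure bound $B=\kappa/\sqrt{\lambda}$, identify $\mathbb{E}\|f\|^2 \leq \mathrm{Tr}(\mathcal{C}_M\mathcal{C}_{M,\lambda}^{-1}) = \mathcal{N}_{\mathcal{L}_M}(\lambda)$, control it via Proposition~\ref{Opbound3}, and conclude with Corollary~\ref{OPboundcorro} plus the $\nU$-dependent absorption of $Z$. The one slip is the spurious $\kappa^2$ you insert into $V^2$: your own display already shows $V^2 = \mathcal{N}_{\mathcal{L}_M}(\lambda) \leq 12\log(4/\delta)\,\mathcal{N}_{\mathcal{L}_\infty}(\lambda)$ with no $\kappa$ factor (the $\kappa^2$ appearing in event $E_1$ of Proposition~\ref{OPboundevents2} comes from the second $\nabla G_{\theta_0}$ factor in that $f$, which is replaced here by $v(x)$ with $|v(x)|\le 1$); carrying the extra $\kappa^2$ through would give $5\kappa\sqrt{\mathcal{N}_{\mathcal{L}_\infty}(\lambda)}$ rather than the stated $5\sqrt{\mathcal{N}_{\mathcal{L}_\infty}(\lambda)}$, so it should simply be dropped.
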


\begin{proof}[Proof of Proposition \ref{prop:intermediate}]

By Definition \ref{weightops} we have 
$$
\left\|\mathcal{C}_{M, \lambda}^{-1 / 2}\left(\widehat{\mathcal{Z}}_M^* \mathbf{v}-\mathcal{Z}_M^* G^*\right)\right\|_2 =\left\|\frac{1}{\nU\nX} \sum_{i=1}^{\nU} \sum_{j=1}^{\nX} f(z_i,x_i) - \mathbb{E} f(z,x)\right\|_2
$$

where $f(z_i,x_j):=\mathcal{C}_{M, \lambda}^{-1 / 2}\nabla G_{\theta_{0}}(u_i)(x_j) v_i(x_j)$ and $z_i=(u_i,v_i)$.

Note that by Assumption \ref{ass:input} we have 
\begin{align*}
\|f(z,x)\|_2\leq & \left\|\mathcal{C}_{M, \lambda}^{-1 / 2} \nabla G_{\theta_{0}}(u)(x) \right\| _2\leq \frac{1}{\sqrt{\lambda}} \sup _{x \in \mathcal{X},u\in\mathcal{U}}\left\|\nabla G_{\theta_{0}}(u)(x) \right\|_2\leq \frac{\kappa}{\sqrt{\lambda}}
\end{align*}
and 
\begin{align*}
\mathbb{E}\|f(z,x)\|^2_2\leq & \int_{\mathcal{U}\times\mathcal{X}}\left\|\mathcal{C}_{M, \lambda}^{-1 / 2} \nabla G_{\theta_{0}}(u)(x) \right\|^2_2:=\nu.
\end{align*}

Now we need to prove that $\nu=\mathcal{N}_M(\lambda)$. Note that we have $\Sigma_M=Z_M Z_M^*$ and $\mathcal{C}_{M}=\mathcal{Z}_M^* \mathcal{Z}_M$, so
$$
\mathcal{N}_M(\lambda)=\operatorname{Tr} \Sigma_M \Sigma_{M, \lambda}^{-1}=\operatorname{Tr} \mathcal{Z}_M^* \Sigma_{M, \lambda}^{-1} \mathcal{Z}_M=\operatorname{Tr} \mathcal{C}_{M} \mathcal{C}_{M, \lambda}^{-1},
$$
since  $\mathcal{Z}_M^* \Sigma_{M, \lambda}^{-1} \mathcal{Z}_M=\mathcal{C}_{M} \mathcal{C}_{M, \lambda}^{-1}$. By the the ciclicity of the trace and the definition of $\mathcal{C}_{M}$, we have
$$
\operatorname{Tr} \mathcal{C}_{M} \mathcal{C}_{M, \lambda}^{-1}=\int_{\mathcal{X}\times \mathcal{U}} \operatorname{Tr}\left(\nabla G_{\theta_{0}}(u)(x)  \nabla G_{\theta_{0}}(u)(x) ^{\top} \mathcal{C}_{M, \lambda}^{-1}\right) =\int_{\mathcal{X}\times \mathcal{U}}\left\|\mathcal{C}_{M, \lambda}^{-1 / 2} \nabla G_{\theta_{0}}(u)(x) \right\|^2_2 =\nu .
$$Further we have by Proposition \ref{Opbound3}:
\begin{align}
\mathcal{N}_{\mathcal{L}_{M}}(\lambda)\leq 
 \left(1+2\log\frac{4}{\delta}\right)4\mathcal{N}_{\mathcal{L}_{\infty}}(\lambda) 
 \leq 12 \log(4/\delta)\;\cN_{\mathcal{L}_{\infty}}(\lam) \;. 
\end{align}
So by setting $B=\frac{\kappa}{\sqrt{\lambda}}$ and $V^2= 12 \log(4/\delta)\;\cN_{\mathcal{L}_{\infty}}(\lam)$, it follows from Proposition \ref{OPboundcorro}, that with probability at least $1-\delta$ ,
\begin{align*}
&\left\|\mathcal{C}_{M, \lambda}^{-1 / 2}\left(\widehat{\mathcal{Z}}_M^* \mathbf{v}-\mathcal{Z}_M^* G^*\right)\right\|\\
 &\leq 
2 \left(\frac{B}{\nU}+\frac{V}{\sqrt{\nU}} + \frac{B}{\nX}+\sqrt{\frac{Z}{\nX}}\right) \log \left(\frac{12}{\delta} \right)\;  ,
\end{align*}
with $Z:=\left(\frac{B^2}{\nU}+\frac{BV}{\sqrt{\nU}}\right) \log \left(\frac{12}{\delta} \right)+ V^2$. Note that if we let $\nU\geq \frac{\kappa^2}{\lambda}\log^2\left(12/\delta\right)$ we further obtain 
$Z\leq 25  \log(4/\delta)\;\cN_{\mathcal{L}_{\infty}}(\lam)$, where we used that $ \log(4/\delta)\;\cN_{\mathcal{L}_{\infty}}(\lam)\geq1$. Therefore we have
\begin{align*}
&\left\|\mathcal{C}_{M, \lambda}^{-1 / 2}\left(\widehat{\mathcal{Z}}_M^* \mathbf{v}-\mathcal{Z}_M^* G^*\right)\right\|\\
 &\leq 
2 \left(\frac{\kappa}{\sqrt{\lambda}}\left(\frac{1}{\nU}+\frac{1}{\nX}\right)+\sqrt{25\cN_{\mathcal{L}_{\infty}}(\lam)}\left(\frac{1}{\sqrt{\nU}}+\frac{1}{\sqrt{\nX}}\right)\right) \log ^{3/2}\left(\frac{12}{\delta} \right)\;  .
\end{align*}

\end{proof}

\begin{proposition}
\label{cilast}
Given that $\theta_{t}\in Q_{\theta_0}(R)$ and Assumption \ref{ass:neurons}, we have with probability at least $1-\delta$, 
\begin{align*}
&\left\|\frac{1}{\nU\nX} \sum_{i=1}^{\nU}\sum_{j=1}^{\nX} \nabla G_{\theta_{0}}(u_i)(x_j)G_{\theta_t}(u_i)(x_j)  - \mathbb{E}_{\rho_x}\nabla G_{\theta_{0}}(u_i)(x)G_{\theta_t}(u_i)(x) \right\|_{\Theta} \\
&\leq \frac{4\kappa^2}{\sqrt{\nX}} \log \left(\frac{4}{\delta} \right)\left\|\theta_t-\theta_0\right\|_{\Theta}+ \frac{2 \kappa C_{\nabla g}(R)}{\sqrt{M}}\|\theta_t-\theta_{0}\|^2_{\Theta}.
\end{align*}
\end{proposition}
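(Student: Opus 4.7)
\textbf{Proof proposal for Proposition \ref{cilast}.} The key difficulty is that $\theta_t$ depends on both the $u_i$'s and the $x_j$'s, so the summands inside the empirical mean are coupled with $\theta_t$ and a direct application of Bernstein-type concentration is unavailable. My plan is to decouple the $\theta_t$-dependence by Taylor-linearizing $G_{\theta_t}$ around $\theta_0$ and to exploit the symmetric initialization, which gives $G_{\theta_0}\equiv 0$. Concretely, by Proposition \ref{prop6} I write, for every $(u,x)$,
\[
G_{\theta_t}(u)(x) \;=\; \bigl\langle \nabla G_{\theta_0}(u)(x),\, \theta_t-\theta_0\bigr\rangle_{\Theta} + r_{(\theta_t,\theta_0)}(u)(x),
\]
with $|r_{(\theta_t,\theta_0)}(u)(x)| \leq \frac{C_{\nabla G}(R)}{\sqrt{M}}\|\theta_t-\theta_0\|^2_\Theta$ uniformly over $(u,x)$. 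Inserting this into both the empirical sum and the expectation inside the statement, the quantity to bound splits as $\mathcal{E}_{\mathrm{lin}}+\mathcal{E}_{\mathrm{rem}}$ where $\mathcal{E}_{\mathrm{lin}} = (\widehat A-\bar A)(\theta_t-\theta_0)$ with $\widehat A := \frac{1}{\nU\nX}\sum_{i,j}\nabla G_{\theta_0}(u_i)(x_j)\nabla G_{\theta_0}(u_i)(x_j)^\top$ and $\bar A := \frac{1}{\nU}\sum_i\mathbb{E}_{\rho_x}\nabla G_{\theta_0}(u_i)(x)\nabla G_{\theta_0}(u_i)(x)^\top$, and $\mathcal{E}_{\mathrm{rem}}$ contains only the remainder $r_{(\theta_t,\theta_0)}$. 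The decisive observation is that $\widehat A-\bar A$ no longer involves $\theta_t$, so the probabilistic work is done on a fixed (data-only) object.

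For $\mathcal{E}_{\mathrm{lin}}$ I plan to invoke the operator Bernstein inequality (Proposition \ref{OPbound0}) on the random self-adjoint operators $\Xi_j := \frac{1}{\nU}\sum_i\bigl[\nabla G_{\theta_0}(u_i)(x_j)\nabla G_{\theta_0}(u_i)(x_j)^\top - \mathbb{E}_{\rho_x}\nabla G_{\theta_0}(u_i)(x)\nabla G_{\theta_0}(u_i)(x)^\top\bigr]$, which are centered and iid in $j$ conditionally on $\{u_i\}$. The a.s.\ bound $\|\Xi_j\|\leq 2\kappa^2$ and the variance proxy $\mathbb{E}[\Xi_j^2]\preccurlyeq \kappa^2\bar A$ with $\|\bar A\|\leq \kappa^2$ give, after simplifying the log-factor, that with probability at least $1-\delta$
\[
\|\widehat A-\bar A\|\;\leq\; \frac{4\kappa^2}{\sqrt{\nX}}\log(4/\delta).
\]
Multiplying by $\|\theta_t-\theta_0\|_\Theta$ yields the first term of the claim.

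For $\mathcal{E}_{\mathrm{rem}}$ no concentration is needed: using the pointwise gradient bound $\|\nabla G_{\theta_0}(u)(x)\|_\Theta\leq \kappa$ together with the deterministic Taylor remainder bound from Proposition \ref{prop6}, the triangle inequality applied to both the empirical average and the expectation gives
\[
\|\mathcal{E}_{\mathrm{rem}}\|_\Theta \;\leq\; 2\kappa\cdot\frac{C_{\nabla G}(R)}{\sqrt{M}}\|\theta_t-\theta_0\|^2_\Theta,
\]
which matches the second term of the claim. Summing the two contributions completes the proof.

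The main obstacle -- the coupling of $\theta_t$ with the samples -- is entirely absorbed by the Taylor step; what remains is a single operator Bernstein bound on an object that is independent of $\theta_t$. The only care needed is to verify that the variance proxy for $\Xi_j$ gives the clean $\kappa^2/\sqrt{\nX}$ rate (so that the log factor simplifies to $\log(4/\delta)$ and the numerical constant matches $4$), which follows from the uniform kernel bound $\sup_u\|K_M(u,u)\|\leq \kappa^2$ already established before Proposition \ref{OPbound2}.
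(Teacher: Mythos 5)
Your proposal is correct and follows essentially the same route as the paper: Taylor-linearize $G_{\theta_t}$ around $\theta_0$ (using $G_{\theta_0}\equiv 0$), bound the linear part by concentrating $\frac{1}{\nX}\sum_j W_j-\mathbb{E}_{\rho_x}W_j$ with $W_j=\frac{1}{\nU}\sum_i\nabla G_{\theta_0}(u_i)(x_j)\nabla G_{\theta_0}(u_i)(x_j)^\top$ conditionally on the $u_i$'s, and bound the remainder deterministically via Proposition \ref{prop6}. The only cosmetic difference is that the paper applies the Hilbert-space Bernstein inequality (Proposition \ref{concentrationineq0}) in Hilbert--Schmidt norm, which yields the clean $\log(4/\delta)$ factor directly, whereas your appeal to the operator Bernstein bound (Proposition \ref{OPbound0}) would carry a $\log(\operatorname{tr}\mathcal{V}/\|\mathcal{V}\|)$ term that does not simplify as readily.
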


\begin{proof}
Using a Taylor expansion in $\theta_{0}$ we have 
$$
G_{\theta_t}(u)(x) (u)(x)= \left\langle\nabla G_{\theta_{0}}(u)(x), \theta_t-\theta_0\right\rangle_{\Theta}+r_{(\theta_t,\theta_0)}(u)(x).
$$

Therefore
\begin{align}
&\left\|\frac{1}{\nU\nX} \sum_{i=1}^{\nU}\sum_{j=1}^{\nX} \nabla G_{\theta_{0}}(u_i)(x_j)G_{\theta_t}(u_i)(x_j)  - \mathbb{E}_{\rho_x}\nabla G_{\theta_{0}}(u_i)(x)G_{\theta_t}(u_i)(x) \right\|_{\Theta} \\
&\leq \left\|\frac{1}{\nU\nX} \sum_{i=1}^{\nU}\sum_{j=1}^{\nX} \nabla G_{\theta_{0}}(u_i)(x_j)\left\langle\nabla G_{\theta_{0}}(u_i)(x_j), \theta_t-\theta_0\right\rangle_{\Theta}  - \mathbb{E}_{\rho_x}\nabla G_{\theta_{0}}(u_i)(x)\left\langle\nabla G_{\theta_{0}}(u_i)(x), \theta_t-\theta_0\right\rangle_{\Theta} \right\|_{\Theta}\,+ \\
&\quad  \left\|\frac{1}{\nU\nX} \sum_{i=1}^{\nU}\sum_{j=1}^{\nX} \nabla G_{\theta_{0}}(u_i)(x_j)r_{(\theta_t,\theta_0)}(u_i)(x_j)  - \mathbb{E}_{\rho_x}\nabla G_{\theta_{0}}(u_i)(x)r_{(\theta_t,\theta_0)}(u_i)(x) \right\|_{\Theta}\,\\
&:=I+II.
\end{align}

For $I$ we have 
\begin{align*}
I\leq \left\|\frac{1}{\nX} \sum_{j=1}^{\nX} W_j-\mathbb{E}_{\rho_x} W_j\right\|_{HS} \left\|\theta_t-\theta_0\right\|_{\Theta}
\end{align*}
with $W_j:=\frac{1}{\nU}\sum_{i=1}^{\nU}\nabla G_{\theta_{0}}(u_i)(x_j)\nabla G_{\theta_{0}}(u_i)(x_j)^T$.
Note that $\|W\|_{HS}\leq\kappa^2$, $\|W\|_{HS}^2\leq\kappa^4$. Therefore we have from Proposition \ref{concentrationineq0} with probability at least $1-\delta$,
$$
I\leq \frac{4\kappa^2}{\sqrt{\nX}} \log \left(\frac{4}{\delta} \right)\left\|\theta_t-\theta_0\right\|_{\Theta}.
$$

For $II$ we have from Proposition \ref{prop6},
\begin{align*}
II\leq 2 \kappa \frac{C_{\nabla g}(R)}{\sqrt{M}}\|\theta_t-\theta_{0}\|^2_{\Theta}.
\end{align*}

\end{proof}

\begin{proposition}
\label{noise}
Let $T \leq C \nU^{\frac{1}{2r+b}}$ , $2r+b>1$, $\,\nX \geq t^{2\max\{0, \frac{1}{2}-r\}}$ and $M \geq M_{III}$ with $M_{III}$ from Theorem \ref{prop:random-feature-result} and some $C >0$, defined in the proof.
With probability at least $1-\delta$, we have for $t\in[T]$,
\begin{align*}
&\left|\frac{1}{\nU\nX}\sum_{i=1}^{\nU} \sum_{j=1}^{\nX}\left| F_{t}^*(u_i)(x_j) - G^*(u_i)(x_j)\right| -\mathbb{E} \left| F_{t}^*(u)(x) - G^*(u)(x)\right|\right|\\
&\quad \leq \tilde{C}_{\alpha,\kappa}\left(\frac{1}{\sqrt{\nU}} +\frac{1}{\sqrt{\nX}}\right) \log^{3/2} \left(\frac{12}{\delta} \right).
\end{align*}
\end{proposition}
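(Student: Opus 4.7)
The plan is to view the scalar random variable
\[
h(u,x) := \abs{F_t^*(u)(x) - G^*(u)(x)}
\]
as a bounded real-valued function on $\cU \times \cX$ under $\mu_u \otimes \rho_x$, and to apply the double-sum Bernstein inequality in Corollary \ref{OPboundcorro} with $\cH = \mbr$. This reduces the task to establishing (i) a uniform bound $\norms{h}_\infty \le B$ and (ii) a second-moment bound $\mbe[h^2] \le V^2$, from which the stated concentration bound will be immediate after substituting $B, V$ and simplifying under the hypotheses on $T, \nU, \nX, M$.

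For the uniform bound, I use that $F_t^* \in \cH_M$ admits the representation $F_t^*(u)(x) = \inner{\nabla G_{\theta_{0}}(u)(x), \theta_t^*}_\Theta$ with $\|\theta_t^*\|_\Theta = \|F_t^*\|_{\cH_M}$. Cauchy--Schwarz together with Lemma \ref{lem:norm-fstar} yields $\abs{F_t^*(u)(x)} \le \kappa\,C'_{\kappa,\alpha}\, t^{(\frac{1}{2}-r)^+}$, and combining with $|G^*(u)(x)| \le 1$ from Assumption \ref{ass:input} gives
\[
B \;:=\; 1 + \kappa\, C'_{\kappa,\alpha}\, t^{(\frac{1}{2}-r)^+}.
\]
For the variance, the bound \eqref{eq:bound-RF} (valid because $M \ge M_{III}$ and $T \le C\nU^{1/(2r+b)}$) gives
\[
\mbe[h^2] \;=\; \|\cS_M F_t^* - G^*\|_{L^2(\mu_u)}^2 \;\le\; C_{\alpha,r}^2\, t^{-2r},
\]
so I set $V := C_{\alpha,r}\, t^{-r}$; note $V \le C_{\alpha,r}$ uniformly in $t \ge 1$.

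Applying Corollary \ref{OPboundcorro} (with $\cH = \mbr$, $m=\nU$, $n=\nX$) then yields, with probability at least $1-\delta$ after the usual relabeling,
\[
\left|\tfrac{1}{\nU\nX}\sum_{i,j} h(u_i,x_j) - \mbe[h]\right| \;\le\; 2\left(\tfrac{B}{\nU} + \tfrac{V}{\sqrt{\nU}} + \tfrac{B}{\nX} + \sqrt{\tfrac{Z}{\nX}}\right)\log\!\left(\tfrac{36}{\delta}\right),
\]
with $Z = \big(\tfrac{B^2}{\nU} + \tfrac{BV}{\sqrt{\nU}}\big)\log(36/\delta) + V^2$. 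The assumption $\nX \ge t^{2(\frac{1}{2}-r)^+}$ immediately gives $B/\nX \le \tilde{C}_\kappa/\sqrt{\nX}$, while $T \le C\nU^{1/(2r+b)}$ together with $2r+b>1$ gives $t^{(\frac{1}{2}-r)^+} \le T^{(\frac{1}{2}-r)^+} \le C'\,\nU^{1/2}$, hence $B/\nU \le \tilde{C}_\kappa/\sqrt{\nU}$ and $B^2/\nU$ bounded. Since $V$ is uniformly bounded, $V/\sqrt{\nU} \le \tilde{C}/\sqrt{\nU}$ and $Z = O(\log(36/\delta))$, so $\sqrt{Z/\nX}$ contributes $O(\sqrt{\log(36/\delta)/\nX})$; the outer $\log$ then produces the $\log^{3/2}(12/\delta)$ factor in the final bound.

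The main technical point is to verify that the boundedness term $B$ does not spoil the scale $1/\sqrt{\nU} + 1/\sqrt{\nX}$. This boils down to the inequality $(\tfrac{1}{2}-r)^+/(2r+b) \le \tfrac{1}{2}$, i.e.\ $4r+b \ge 1$, which follows from the standing assumption $2r+b > 1$ and $r \ge 0$, provided the absolute constant $C$ in $T \le C\nU^{1/(2r+b)}$ is chosen small enough to absorb the prefactors $C'_{\kappa,\alpha}, C^{(\frac{1}{2}-r)^+}$. A minor subtlety is that $F_t^*$ depends on the initialization $\theta_0$ through $\cL_M$ but not on the samples $(u_i,v_i)$ nor the grid $(x_j)$; the Bernstein step is therefore applied conditionally on $\theta_0$, which is justified because the hypothesis $M \ge M_{III}$ only enters through the deterministic-once-conditioned $L^2$ rate \eqref{eq:bound-RF}.
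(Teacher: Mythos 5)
Your proposal is correct and follows essentially the same route as the paper's proof: bound $|F_t^*|$ via its RKHS representation and Lemma \ref{lem:norm-fstar}, bound the second moment by the $L^2(\mu_u)$ rate from \cite[Proposition A.1]{nguyen2023random}, and apply the double-sum Bernstein inequality of Corollary \ref{OPboundcorro}, simplifying with the hypotheses on $T$ and $\nX$. Your explicit verification that $(\tfrac12-r)^+/(2r+b)\le\tfrac12$ and the remark about conditioning on $\theta_0$ are points the paper leaves implicit, but the argument is the same.
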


\begin{proof}
 First we start with bounding $F^*_t$ and $G^*$. 
 Note that $F^*_t \in \mathcal{H}_M$, therefore there exist some $\theta_{t}^*$ such that $F_t(u)(x)=\langle \nabla G_{\theta_0}(u)(x), \theta_{t}^*\rangle$ and $\|F_t^*\|_{\mathcal{H}_M}=\|\theta_{t}^*\|_\Theta$ (see for example \cite{Ingo} Theorem 4.21). Therefore we have from Lemma \ref{lem:norm-fstar}
\begin{align*}
|F_t(u)(x)|=|\langle \nabla G_{\theta_0}(u)(x), \theta_{t}^*\rangle| \leq \kappa \|F_t^*\|_{\mathcal{H}_M}\leq  C'_{\kappa, \alpha}\;   t^{\max\{0, \frac{1}{2}-r\}} .
\end{align*}
For $G^*$ we have from Assumption \ref{ass:input} 
$$
|G^*(u)(x)|\leq \int_{\mathcal{V}} |v(x)| \rho(dv|u) \leq 1.
$$

Now set $f(u_i,x_j):= \left| F_{t}^*(u_i)(x_j) - G^*(u_i)(x_j)\right|$.

Then we have $|f(u_i,x_j)|\leq C'_{\kappa, \alpha}\;   t^{\max\{0, \frac{1}{2}-r\}}+1:= C_{\kappa, \alpha}\;   t^{\max\{0, \frac{1}{2}-r\}}:=B $
and from  \cite[Proposition A.1]{nguyen2023random} we have ( with $\lam = (\alpha t)^{-1}$ )
\begin{equation}
\mathbb{E} f(u,x)^2 = || \cS_M  F^*_t -   G^*  ||_{L^2(\mu_u)}^2 \leq C_{\alpha,r}\;   t^{-2r} :=V^2\;, 
\end{equation}
for some $C_{r,\alpha} < \infty$ as long as $M \geq M_{III}, \,\,T \leq C\nU^{\frac{1}{2r+b}}$, with $C$ from Theorem \ref{prop:random-feature-result} .

From Proposition \ref{OPboundcorro} we therefore have,

\begin{align*}
&\left|\frac{1}{\nU\nX}\sum_{i=1}^{\nU} \sum_{j=1}^{\nX}\left| F_{t}^*(u_i)(x_j) - G^*(u_i)(x_j)\right| -\mathbb{E} \left| F_{t}^*(u)(x) - G^*(u)(x)\right|\right|\\
&\quad\leq 2 \left(\frac{B}{\nU}+\frac{V}{\sqrt{\nU}} + \frac{B}{\nX}+\sqrt{\frac{Z}{\nX}}\right) \log \left(\frac{12}{\delta} \right)
\end{align*}

with $Z:=\left(\frac{B^2}{\nU}+\frac{BV}{\sqrt{\nU}}\right) \log \left(\frac{12}{\delta} \right)+ V^2$. From $T \leq C_{\alpha,r}\nU^{\frac{1}{2r+b}}$ for some $C_{\alpha,r}>0$ we further have,

\begin{align*}
&\left|\frac{1}{\nU\nX}\sum_{i=1}^{\nU} \sum_{j=1}^{\nX}\left| F_{t}^*(u_i)(x_j) - G^*(u_i)(x_j)\right| -\mathbb{E} \left| F_{t}^*(u)(x) - G^*(u)(x)\right|\right|\\
&\quad\leq  \hat{C}_{\alpha,\kappa}\left(\frac{ t^{\max\{0, \frac{1}{2}-r\}}}{\nU}+\frac{1}{t^{r}\sqrt{\nU}} + \frac{ t^{\max\{0, \frac{1}{2}-r\}}}{\nX}+\frac{1}{\sqrt{\nX}}\right) \log^{3/2} \left(\frac{12}{\delta} \right)\\
&\quad\leq \tilde{C}_{\alpha,\kappa}\left(\frac{1}{\sqrt{\nU}} +\frac{1}{\sqrt{\nX}}\right) \log^{3/2} \left(\frac{12}{\delta} \right).
\end{align*}
\end{proof}

\vspace{0.3cm}


\begin{proof}[Proof of Proposition \ref{OPbound2}]

Recall that 
\begin{align*}
 K_M(u, u') &=  (\Phi^M_u)^*  \Phi^M_{u'} = \sum_{m=1}^{3M} \partial_mG_{\theta_{0}}(u)\otimes\partial_m G_{\theta_{0}}(u')\;\\
&=\frac{1}{M} \sum_{m=1}^{M} \sigma\left(\left\langle b_{m}^{(0) }, J(u)\right\rangle \right) \otimes\sigma\left(\left\langle b_{m}^{(0) }, J(u')\right\rangle\right)\,+\\
&\,\,\,\,\,\,\,\,\,\frac{1}{M} \sum_{m=1}^{M} \tau^2\sigma^{\prime}\left(\left\langle b_{m}^{(0) }, J(u) \right\rangle \right)J(u)\otimes \sigma^{\prime}\left(\left\langle b_{m}^{(0)}, J(u')\right \rangle \right)J(u'),
\end{align*}

We now set 
\begin{align*}
w_m &=  \sigma\left(\left\langle b^{(0) }_m, J(u)\right\rangle \right) \otimes\sigma\left(\left\langle b^{(0) }_m, J(u')\right\rangle\right)\,+\\
&\,\,\,\,\,\,\,\,\,\tau^2\sigma^{\prime}\left(\left\langle b^{(0) }_m, J(u)\right\rangle \right)J(u)\otimes \sigma^{\prime}\left(\left\langle b^{(0)}_m, J(u')\right \rangle \right)J(u') \,.
\end{align*}
Note that $\|w_1\|_{HS}\leq\kappa^2$and $\mathbb{E}\left[\left\|w_{1}\right\|^{2}_{HS}\right] \leq \kappa^4$. The result now follows from \ref{concentrationineq0}.
\end{proof}


\end{document}